\documentclass[smallextended,referee,envcountsect,
]{svjour3}
\smartqed
\usepackage{graphicx}

\usepackage{amsmath}
\usepackage{amssymb}
\usepackage{graphicx}
\usepackage{color}
\usepackage{ifpdf}
\usepackage{url}
\usepackage{hyperref}
\usepackage{algorithm}
\usepackage[usenames,dvipsnames]{xcolor}
\usepackage{paralist}
\usepackage[sort, numbers]{natbib}

\usepackage{algorithm}
\usepackage{algpseudocode}
\usepackage{algorithmicx}
\usepackage[ruled,vlined, linesnumbered, algo2e]{algorithm2e}
\usepackage[ruled,vlined]{algorithm2e}

\usepackage[T1]{fontenc} 
\usepackage[letterpaper, margin=1.2in]{geometry}
\usepackage{multicol} 
\usepackage[hang, small,labelfont=bf,up,textfont=it,up]{caption} 
\usepackage{booktabs} 
\usepackage{float} 
\newtheorem{assumption}{Assumption}[section]
\usepackage{todonotes}

\newcommand{\Id}{\mathbb{I}}

\newcommand{\R}{\mathbb{R}}

\newcommand{\set}[1]{\left\{#1\right\}}
\newcommand{\sets}[1]{\{#1\}}

\newcommand{\norms}[1]{\Vert#1\Vert}

\newcommand{\Eproof}{\hfill $\square$}

\newcommand{\dom}[1]{\mathrm{dom}(#1)}

\newcommand{\zero}[1]{{\boldsymbol{0}}}
\newcommand{\Exps}[2]{\mathbb{E}_{#1}\left[#2\right]}
\newcommand{\Expsn}[2]{\mathbb{E}_{#1}[#2]}
\newcommand{\Expsk}[2]{\mathbb{E}_{#1}\big[#2\big]}
\newcommand{\Expk}[1]{\mathbb{E}\big[#1\big]}

\newcommand{\zer}[1]{\mathrm{zer}(#1)}
\newcommand{\zerm}[2]{\mathrm{zer}_{#1}(#2)}

\newcommand{\gra}[1]{\mathrm{gra}(#1)}

\newcommand{\mcal}[1]{\mathcal{#1}}

\newcommand{\Ac}{\mathcal{A}}

\newcommand{\Bc}{\mathcal{B}}

\newcommand{\Xc}{\mathcal{X}}
\newcommand{\Yc}{\mathcal{Y}}
\newcommand{\Zc}{\mathcal{Z}}
\newcommand{\Sc}{\mathcal{S}}

\newcommand{\Dc}{\mathcal{D}}
\newcommand{\Lc}{\mathcal{L}}
\newcommand{\Qc}{\mathcal{Q}}

\newcommand{\Mc}{\mathcal{M}}
\newcommand{\Tc}{\mathcal{T}}

\newcommand{\Fc}{\mathcal{F}}

\newcommand{\Pc}{\mathcal{P}}

\newcommand{\Rc}{\mathcal{R}}

\newcommand{\iprods}[1]{\langle #1\rangle}
\newcommand{\Exp}[1]{\mathbb{E}\left[#1\right]}

\newcommand{\BigO}[1]{\mathcal{O}\left(#1\right)}
\newcommand{\BigOs}[1]{\mathcal{O}\big(#1\big)}

\newcommand{\mytbi}[1]{\textbf{\textit{#1}}}

\newcommand{\mbf}[1]{\mathbf{#1}}
\newcommand{\mbb}[1]{\mathbb{#1}}

\newcommand{\beforesubsec}{\vspace{-3ex}}
\newcommand{\aftersubsec}{\vspace{-1.5ex}}
\newcommand{\beforesec}{\vspace{-3ex}}
\newcommand{\aftersec}{\vspace{-2ex}}

\newcommand{\ncmt}[1]{{\color{black}#1}}
\newcommand{\revise}[1]{{\color{black}{#1}}}

\begin{document}

\titlerunning{Unbiased and Biased Variance-Reduced Forward-Reflected-Backward Splitting Methods}
\authorrunning{Q. Tran-Dinh \textit{and} N. Nguyen-Trung}

\title{Unbiased and Biased Variance-Reduced Forward-Reflected-Backward Splitting Methods for Stochastic Composite Inclusions}

\author{Quoc Tran-Dinh$^{*}$ \and Nghia Nguyen-Trung$^{*}$}

\institute{
$^{*}$Department of Statistics and Operations Research\\
The University of North Carolina at Chapel Hill\\
318 Hanes Hall, UNC-Chapel Hill, NC 27599-3260.\\
\textit{Email:} {quoctd@email.unc.edu, nghiant@unc.edu}.
}

\date{Received: date / Accepted: date}

\maketitle

\begin{abstract}
This paper develops new variance-reduction techniques for the forward-reflected-backward splitting (FRBS) method for solving possibly nonmonotone stochastic composite inclusions. 
While unbiased estimators such as mini-batching are well understood, incorporating biased estimators presents fundamental technical challenges and has not previously been studied for inclusion or fixed-point problems. 
We address this gap by developing a unified FRBS framework for both unbiased and biased estimators. Our main idea is to construct variance-reduced estimators directly for the forward-reflected direction.
First, we introduce a class of unbiased estimators that includes increasing mini-batch SGD, loopless SVRG, and SAGA. 
We establish an $\mathcal{O}(1/k)$ best-iterate convergence rate for the expected squared residual norm and almost-sure convergence of the iterates. 
The resulting oracle complexities are $\mathcal{O}(n^{2/3}\epsilon^{-2})$ for the $n$-finite-sum setting and $\mathcal{O}(\epsilon^{-10/3})$ for the expectation setting. 
Second, we introduce a class of biased estimators that includes SARAH, Hybrid SGD, and Hybrid SVRG. 
The same convergence rate remains valid, with oracle complexities of $\mathcal{O}(n^{3/4}\epsilon^{-2})$ and $\mathcal{O}(\epsilon^{-5})$ for the two settings, respectively. 
Finally, numerical experiments on AUC optimization and policy evaluation support our theoretical findings and demonstrate competitive performance.
\end{abstract}
\keywords{Variance-reduction \and forward-reflected-backward splitting method \and nonmonotone composite inclusion \and SGD \and SVRG \and SAGA \and SARAH}
\subclass{90C25 \and  90-08}

\beforesec
\section{Introduction}\label{sec:intro}
\aftersec
\noindent\textbf{$\mathrm{(a)}$~Problem statement and assumptions.}
The forward-reflected-backward splitting (FRBS) method introduced in \cite{malitsky2020forward} provides a general framework for finding a root of the following \textbf{\textit{composite inclusion}} (also known as a \textit{generalized equation}):
\begin{equation}\label{eq:GE}
\textrm{Find $x^{\star} \in \R^p$ such that:}~ 0 \in \Phi x^{\star} := Gx^{\star} + Tx^{\star}, 
\tag{CI}
\end{equation}
where $G : \R^p\to\R^p$ is single-valued  and $T : \R^p\rightrightarrows \R^p$ is a possibly multivalued mapping.
While the original FRBS method was developed for deterministic monotone inclusions of the form \eqref{eq:GE} in \cite{malitsky2020forward}, in this paper, we extend it to handle the following two stochastic settings:
\begin{compactitem}
\item[\textsf{(E)}] We cannot directly access the mapping $G$, but it is equipped with an unbiased stochastic oracle $\mbf{G} : \R^p\times\Omega\to\R^p$ defined on a given probability space $(\Omega,\mathbb{P},\Sigma)$ such that 
\begin{equation}\label{eq:expectation_form}
Gx = \Exps{\xi}{\mbf{G}(x, \xi)} \equiv \Exps{\xi\sim\mbb{P}}{\mbf{G}(x,\xi)}.
\end{equation}
\item[\textsf{(F)}] The mapping $G$ is given by the following finite-sum of $n$ summands:
\begin{equation}\label{eq:finite_sum_form}
Gx := \frac{1}{n} \sum_{i=1}^nG_ix,
\end{equation}
where $G_i : \R^p \to\R^p$ are given single-valued mappings for $i \in [n] := \sets{1, 2, \cdots, n}$.
\end{compactitem}
Note that the finite-sum setting \textsf{(F)} can be viewed as a special case of the expectation setting \textsf{(E)}.
However, we treat them separately since the corresponding methods have different oracle complexities.
Let us denote by $\zer{\Phi} = \zer{G+T} := \sets{x^{\star} \in \R^p : 0 \in Gx^{\star} + Tx^{\star}}$ the solution set of \eqref{eq:GE}.
Throughout this paper, we make the following assumptions.
\begin{assumption}\label{as:A1}
Both $G$ and $T$ in \eqref{eq:GE} satisfy the following conditions:
\begin{compactitem}
\item[$\mathrm{(a)}$]  The solution set $\zer{\Phi} = \zer{G+T} := \sets{x^{\star} \in \R^p : 0 \in Gx^{\star} + Tx^{\star}}$ is nonempty. 

\item[$\mathrm{(b)}$]$($\textbf{Bounded variance}$)$ 
For the expectation setting \emph{(\textsf{E})}, there exists $\sigma^2 \in [0, +\infty)$ such that 
\begin{equation*}
\Exps{\xi}{ \norms{\mbf{G}(x, \xi) - Gx }^2 } \leq \sigma^2, \quad\textrm{for all}~x\in\dom{G}.
\end{equation*}

\item[$\mathrm{(c)}$]$($\textbf{Lipschitz continuity}$)$ 
For the expectation setting \emph{(\textsf{E})}, $G$ is $L$-Lipschitz continuous in expectation, i.e., there exists $L \in [0, +\infty)$ such that
\begin{equation}\label{eq:Lipschitz_cond}
\Exps{\xi}{ \norms{\mbf{G}(x, \xi) - \mbf{G}(y, \xi) }^2 } \leq L^2\norms{x - y}^2, \quad \forall x, y \in \dom{G}.
\end{equation}
Alternatively, for the finite-sum setting \emph{(\textsf{F})}, $G$ is $L$-average Lipschitz continuous, i.e., there exists $L \in [0, +\infty)$ such that
\begin{equation}\label{eq:Lipschitz_cond2}
\frac{1}{n}\sum_{i=1}^n\norms{G_ix - G_iy}^2 \leq L^2\norms{x - y}^2, \quad \forall x, y \in \dom{G}.
\end{equation}

\item[$\mathrm{(d)}$]$($\textbf{Weak Minty solution}$)$
Problem \eqref{eq:GE} admits a \revise{$\rho$-weak-Minty} solution $x^{\star} \in \zer{\Phi}$ if there exists $\rho \geq 0$ such that 
\begin{equation}\label{eq:weak_Minty_cond}
\iprods{Gx + v, x - x^{\star}} \geq -\rho\norms{Gx + v}^2, \quad \forall (x, v)\in\gra{T}.
\end{equation}
\end{compactitem}
\end{assumption}
\revise{
Denote by $\zerm{M}{\Phi}$ the set of $\rho$-weak-Minty solutions of \eqref{eq:GE}. 
Clearly, we have $\zerm{M}{\Phi}\subseteq\zer{\Phi}$.
If $\Phi$ is $\rho$-co-hypomonotone, i.e., $\iprods{u - v, x - y} \geq -\rho\norms{u - v}^2$ for all $(x, u), (y, v) \in \gra{\Phi}$, then every $x^{\star} \in \zer{\Phi}$ is a $\rho$-weak-Minty solution, and hence $\zer{\Phi} = \zerm{M}{\Phi}$.
If $\rho = 0$, then a $\rho$-weak-Minty solution $x^{\star}$ reduces to a weak-Minty solution, i.e., $\iprods{Gx + v, x - x^{\star}} \geq 0$ for all $(x, v) \in \gra{T}$.
In particular, if $\Phi$ is monotone, then every $x^{\star} \in \zer{\Phi}$ is also a weak-Minty solution.
}

\revise{Assumption~\ref{as:A1}(c) is also called \textit{mean-square Lipschitz continuity}, which has been widely used in the literature.}
By Jensen's inequality, either \eqref{eq:Lipschitz_cond} or \eqref{eq:Lipschitz_cond2} implies that  $G$ is $L$-Lipschitz continuous, i.e., $\norms{Gx - Gy} \leq L\norms{x-y}$ for all $x, y \in \dom{G}$.

Assumption~\ref{as:A1}(a) requires the existence of solutions to \eqref{eq:GE}, which is standard. 
The bounded-variance condition in Assumption~\ref{as:A1}(b) is also standard in stochastic approximation, particularly in stochastic optimization \cite{lan2020first,Nemirovski2009}. 
This assumption can be extended or generalized in various ways.
However, in this paper, we adopt its classical form. 
The Lipschitz continuity in Assumption~\ref{as:A1}(c) is not new. 
It is often required for developing variance-reduction methods via control-variate techniques, and it has been widely used in stochastic optimization and, more recently, in inclusions and variational inequalities, see, e.g., \cite{alacaoglu2021stochastic,TranDinh2024}. 
Finally, the existence of a weak-Minty solution in Assumption~\ref{as:A1}(d) extends the classical one from $\rho = 0$ to $\rho \geq 0$. 
This condition appears to have been first used in \cite{diakonikolas2021efficient} and then in subsequent works \cite{pethick2023solving,bohm2022solving,tran2022accelerated,TranDinh2024}.
Note that \eqref{eq:weak_Minty_cond} does not imply the monotonicity of $\Phi$. 
Hence, it allows \eqref{eq:GE} to cover a class of nonmonotone problems, see \cite{diakonikolas2021efficient,pethick2023solving,TranDinh2024} for concrete examples.

\vspace{0.75ex}
\noindent$\textrm{(b)}$~\textbf{\textit{Motivation and goals.}}
Our motivation is manifold.
First, problem \eqref{eq:GE} provides a versatile template that includes fixed-point formulations, variational inequality problems (VIPs), and minimax optimization as special cases. These formulations, particularly in \mytbi{stochastic settings}, are fundamental for modeling a wide range of applications, from classical problems such as two-player games, traffic equilibrium, Nash equilibrium, and robust optimization \cite{Bauschke2011,Facchinei2003,phelps2009convex,Rockafellar2004,Rockafellar1976b,ryu2016primer,dafermos1980traffic} to modern ones including generative adversarial networks (GANs), fair machine learning, adversarial training, reinforcement learning, and distributionally robust optimization \cite{arjovsky2017wasserstein,goodfellow2014generative,du2021fairness,madry2018towards,azar2017minimax,zhang2021multi,Ben-Tal2009,Bertsimas2011,levy2020large}. 
These modern applications also motivate us to focus on certain tractable classes of nonmonotone inclusions. 

Second, while classical models covered by \eqref{eq:GE} typically assume \mytbi{monotone operators} \cite{Bauschke2011,Facchinei2003}, this assumption is often violated in modern machine learning settings, where \mytbi{nonmonotone structure} is inherent. 
Although general nonmonotone inclusions are often intractable, we further examine the structure of \eqref{eq:GE} to identify tractable subclasses. 
In particular, Assumption~\ref{as:A1}(d) covers a tractable class of nonmonotone problems for which efficient solution methods can be developed, and our work also builds on this assumption.

Third, modern applications of \eqref{eq:GE} are increasingly \mytbi{large-scale} and \mytbi{high-dimensional}, rendering traditional deterministic methods computationally expensive and often impractical because they require full-batch operator evaluations. 
To mitigate this bottleneck, stochastic methods are widely used. 
However, standard stochastic estimators such as SGD and its mini-batch variants suffer from non-vanishing variance, which can lead to unstable behavior and persistent oscillations around a solution rather than exact convergence. 
To stabilize the iterates, these methods typically require \mytbi{diminishing stepsizes}, resulting in \mytbi{slow progress} in later iterations. 
This limitation motivates us to develop variance-reduction methods with constant stepsizes to improve practical performance.
Our first goal is to incorporate novel variance-reduction techniques into  the FRBS method in \cite{malitsky2020forward} that cover a wide range of estimators, including unbiased and biased instances. 
In addition, we treat both the finite-sum setting \textsf{(F)} and the expectation setting \textsf{(E)} of \eqref{eq:GE} in a unified framework.

Fourth, most existing variance-reduction techniques developed for \eqref{eq:GE} and its special cases are \mytbi{unbiased}. 
In contrast, the theoretical understanding and algorithmic development of stochastic methods with \mytbi{biased estimators} remain largely open. 
In stochastic optimization, however, biased estimators have been extensively studied and shown to be highly effective. 
Methods such as SARAH \cite{nguyen2017sarah}, SPIDER \cite{fang2018spider}, STORM \cite{Cutkosky2019}, and Hybrid SGD \cite{Tran-Dinh2019a} deliberately introduce bias to recursively control variance, achieving optimal oracle complexities (e.g., $\mathcal{O}(\epsilon^{-3})$ in the expectation setting) for nonconvex optimization. 
Despite their success, their application to \eqref{eq:GE} and its special cases such as VIPs, particularly under nonmonotone assumptions, has not been explored. 
The main theoretical challenge is that, unlike optimization, \eqref{eq:GE} lacks an objective function for constructing a suitable Lyapunov function.
Together with nonmonotonicity, this makes controlling the accumulated bias error notoriously difficult.

Finally, studying \mytbi{biased estimators} is not merely of theoretical interest.
It is highly relevant in practice. 
Many common techniques for \eqref{eq:GE} in the literature inherently produce biased estimators but lack rigorous justification.
For example, data shuffling such as random reshuffling \cite{mishchenko2020random}, distribution shifts, and delayed or asynchronous updates in parallel and distributed systems naturally break the unbiasedness of stochastic oracles. 
Since existing analyses of stochastic variance-reduced FRBS algorithms for \eqref{eq:GE} cannot accommodate such bias \cite{TranDinh2024}, a critical gap remains between practical implementations and theoretical convergence guarantees. 
Our second goal is to bridge this gap by proposing a unified stochastic FRBS framework that systematically handles both unbiased and, for the first time, biased variance-reduced estimators for nonmonotone inclusions.

\vspace{0.75ex}
\noindent$\textrm{(c)}$~\textbf{\textit{Challenges.}}
Since we study both settings  \textsf{(F)}  and  \textsf{(E)} of \eqref{eq:GE} under Assumption~\ref{as:A1}, we face the following theoretical and technical challenges that differentiate our work from existing literature.
\begin{compactitem}[$\bullet$]
\item 
First, our setting lacks strong assumptions, e.g., co-coercivity or monotonicity. 
In many standard setups, the single-valued operator $G$ is assumed to be co-coercive or monotone, 
which allows the use of the classical forward method (or gradient method), the extragradient algorithm  \cite{korpelevich1976extragradient}, or the forward-backward splitting (FBS) scheme. 
However, in this paper, we only assume the operator is ``average Lipschitz continuous'' and admits a weak-Minty solution, which covers certain nonmonotone operators.
This absence of co-coercivity and monotonicity strictly prohibits the use of standard FBS, mirror descent, and dual averaging schemes, necessitating the adoption of the forward-reflected-backward splitting (FRBS) method to accommodate a mild assumption, Assumption~\ref{as:A1}. 

\item Second, most existing variance-reduced estimators are designed to directly approximate the operator value $Gx^k$, following the construction of gradient estimators in optimization.
Departing from this approach, our FRBS scheme uses estimators constructed directly for the \textit{forward-reflected direction} defined by $S^k := 2Gx^k - Gx^{k-1}$. 
Constructing a variance-reduced, and particularly a biased, estimator for this operator $S^k$ is significantly more involved, as it requires keeping track of and handling the noise across two or three consecutive iterates simultaneously.

\item Third, controlling the accumulated error terms for biased estimators is highly non-trivial.
The Lyapunov function for analyzing convergence involves the error terms $\iprods{e^k, x^k - x^{\star}}$ and $\iprods{e^k, x^k - x^{k-1}}$, where $x^k$ is the current iterate of the algorithm, $x^{\star} \in \zer{\Phi}$, and $e^k := \widetilde{S}^k - S^k$ is the error between the estimator $\widetilde{S}^k$ and its true value $S^k$.
If $\widetilde{S}^k$ is unbiased, these terms vanish after taking the conditional expectation.  
However, this is not the case when $\widetilde{S}^k$ is biased.
In stochastic optimization, these error terms can be handled through the objective function.
Since \eqref{eq:GE} lacks an objective function, it requires us to develop auxiliary algebraic techniques to carefully process these intertwining error terms within our Lyapunov analysis, ensuring that the bias errors dynamically vanish without destroying the convergence rate of the algorithm.
\end{compactitem}
As noted earlier, bias arises not only in stochastic methods but also naturally in settings such as random reshuffling, the use of stale or delayed oracles, or numerical and computational errors. 
Developing tools to handle such bias terms enables new and efficient algorithms for solving \eqref{eq:GE} with rigorous convergence guarantees, extending beyond the purely stochastic methods considered in this paper.

\vspace{0.75ex}
\noindent$\textrm{(d)}$~\textbf{\textit{Our contributions.}}
Our contributions in this paper can be summarized as follows.
\begin{compactitem}
\item[$\mathrm{(i)}$] 
First, in Section \ref{sec:iFRBS_method}, we propose an inexact forward-reflected-backward splitting (VrFRBS) framework where a stochastic estimator $\widetilde{S}^k$ of the search direction $S^k$ is used, and provide a one-iteration analysis of the dynamics of the algorithm (Lemma~\ref{le:key_estimate1}), depending on the error $e^k := \widetilde{S}^k - S^k$.

\item[$\mathrm{(ii)}$]
Next, in Section \ref{sec:VrFRBS}, we construct a class of unbiased variance-reduced estimators for $S^k$ (Definition \ref{de:Vr_estimator}), and show that increasing mini-batch SGD, the loopless SVRG, and SAGA fall within our definition.
Using the estimators covered by Definition \ref{de:Vr_estimator}, we leverage Lemma~\ref{le:key_estimate1} to obtain a $\BigOs{1/k}$ best-iterate convergence rate for $\Expk{\norms{Gx^k + \xi^k}^2}$ with $\xi^k \in Tx^k$ as well as the almost-sure asymptotic convergence of $\norms{Gx^k + \xi^k}^2$ and the almost-sure convergence of the iterate sequence $\sets{x^k}$ to a solution of \eqref{eq:GE}.
We can also prove an oracle complexity of $\BigOs{\epsilon^{-4}\ln(\epsilon^{-1})}$ if the increasing mini-batch SGD estimator is used, or of $\BigOs{n^{2/3}\epsilon^{-2}}$ and $\BigOs{\epsilon^{-10/3}}$ for the finite-sum setting \textsf{(F)} and the expectation setting \textsf{(E)}, respectively, if either the SVRG or the SAGA estimator is used.

\item[$\mathrm{(iii)}$]
 Finally, in Section \ref{sec:VrFRBS_method2}, we introduce a new class of biased variance-reduced estimators for $S^k$ (Definition~\ref{de:Vr_estimator2}). 
Within this class, we develop three concrete estimators: SARAH, Hybrid SGD, and a new Hybrid SVRG for $S^k$.
While the convergence rate of our method remains similar to that of the unbiased case, we obtain an oracle complexity of $\BigOs{\epsilon^{-5}}$ for the expectation setting  \textsf{(E)} if the first two estimators are used, and an oracle complexity of $\BigOs{n^{3/4}\epsilon^{-2}}$ for the finite-sum setting \textsf{(F)} if SARAH and Hybrid SVRG estimators are used.
\end{compactitem}
\noindent\textbf{Comparison.} 
The FRBS scheme was developed in \cite{malitsky2020forward} for monotone inclusions, while we apply it to handle \eqref{eq:GE} under the existence of a weak-Minty solution. 
Our method in Section~\ref{sec:VrFRBS} is different from existing works such as \cite{alacaoglu2021stochastic,alacaoglu2021forward,chavdarova2019reducing,gorbunov2022stochastic,iusem2017extragradient} essentially due to the construction of stochastic estimators.
For the finite-sum case \textsf{(F)}, our results for the class of unbiased estimators are similar to those in \cite{TranDinh2024}.
However, we cover also the expectation setting \textsf{(E)} and obtain almost-sure convergence results.
To the best of our knowledge, our method in Section \ref{sec:VrFRBS_method2} is novel and has not appeared in the literature. 

\vspace{0.75ex}
\noindent$\textrm{(e)}$~\textbf{\textit{Related work.}}
Problem \eqref{eq:GE} is classical.
Theory and methods for solving it have been widely developed for many decades, especially in the monotone setting, see, e.g., \cite{Bauschke2011,Facchinei2003,Rockafellar1997}. 
In this section, let us briefly review the literature most closely related to our work.

$\textrm{(i)}$~\textbf{Splitting schemes and extragradient-type methods.}
FRBS is an operator splitting method and can also be viewed as a variant of the extragradient method \cite{korpelevich1976extragradient}. 
Splitting algorithms solve \eqref{eq:GE} by working with the operators of $G$ and $T$ separately, rather than evaluating their sum directly. 
Both the classical forward-backward splitting (FBS) and backward-forward splitting (BFS) methods require one operator in \eqref{eq:GE} to be co-coercive, a condition stronger than monotonicity that does not cover Assumption~\ref{as:A1}(d). 
To avoid co-coercivity, the extragradient method \cite{korpelevich1976extragradient} and Tseng’s forward-backward-forward splitting (FBFS) \cite{tseng2000modified} use two evaluations of $G$ and perform two sequential steps. 
These sequential updates create a fundamental obstacle to developing variance-reduction methods, due to the nested dependence of the iterates.
Several works have aimed to reduce the number of evaluations of $G$ from two to one. 
Notably, Popov’s past-extragradient scheme \cite{popov1980modification} is an early example. 
Malitsky and collaborators further developed a series of single-call methods, including the projected reflected gradient method \cite{malitsky2015projected}, the forward-reflected-backward splitting (FRBS) algorithm \cite{malitsky2020forward}, and the golden-ratio scheme \cite{malitsky2019golden}. 
These methods were designed for solving monotone inclusions of the form \eqref{eq:GE}. 
Building on this line of work, many related schemes, such as optimistic gradient methods and accelerated variants, have been studied extensively in the last decade, see, e.g., \cite{daskalakis2018limit,gorbunov2022last,mertikopoulos2019optimistic,tran2022accelerated}.

$\textrm{(ii)}$~\textbf{Beyond monotonicity.}
Classical methods for \eqref{eq:GE} have relied heavily on monotonicity assumptions of $G$ and $T$. 
Note, however, that modern machine-learning applications such as GANs and adversarial training often do not satisfy these monotone structures. 
Many works have weakened monotonicity to broader conditions for \eqref{eq:GE}, including pseudo-monotonicity, quasi-monotonicity, star-monotonicity, and the two-sided Polyak-{\L}ojasiewicz (PL) condition \cite{Konnov2001,noor2003extragradient,noor1999wiener,vuong2018weak,yang2020global}. 
More recently, another line of work has focused on weak monotonicity (or, equivalently, hypomonotonicity) and co-hypomonotonicity \cite{Rockafellar1997,bauschke2020generalized}. 
In this work, we study \eqref{eq:GE} under the existence of a weak-Minty solution. This condition was arguably first used in \cite{diakonikolas2021efficient} and subsequently adopted in works such as \cite{bohm2022solving,gorbunov2022extragradient,luo2022last,pethick2023solving}. 
It can be viewed as a weaker assumption than co-hypomonotonicity or the classical Minty solution condition.

$\textrm{(iii)}$~\textbf{Stochastic approximation methods for \eqref{eq:GE}.}
Most classical stochastic methods for \eqref{eq:GE} and its VIP special case build on the Robbins-Monro stochastic approximation (SA) framework \cite{robbins1951stochastic}. 
They typically combine deterministic techniques, such as mirror-prox with averaging, or projection and extragradient schemes, with classical SA ideas, see, e.g., \cite{cui2021analysis,iusem2017extragradient,kannan2019optimal,mishchenko2020revisiting,juditsky2011solving,pethick2023solving,yousefian2018stochastic}. 
While these approaches reduce per-iteration cost by replacing exact evaluations with unbiased stochastic oracles, they also introduce non-vanishing variance. 
As a result, they usually require diminishing stepsizes, which leads to slow sublinear convergence rates.
To address this variance bottleneck and accelerate convergence, recent work has increasingly leveraged additional structure and more refined algorithmic designs. 
Examples include methods tailored to co-coercive operators \cite{beznosikov2023stochastic} and bilinear game models \cite{li2022convergence}. 
Moreover, to stabilize stochastic iterates and enable constant (or larger) stepsizes, several works incorporate variance-reduction mechanisms into extragradient or  Halpern-type iterations, yielding both accelerated and non-accelerated convergence rates for solving \eqref{eq:GE}, see, e.g.,  \cite{cai2022stochastic,cai2023variance}.

$\textrm{(iv)}$~\textbf{Variance reduction methods for \eqref{eq:GE}.}
Another approach to address the non-vanishing variance in classical SA methods and to avoid the slow convergence caused by diminishing stepsizes is to utilize recent  variance-reduction (VR) techniques.  
Several VR ideas from optimization have been adapted to \eqref{eq:GE}, including increasing mini-batch, SVRG \cite{johnson2013accelerating}, and SAGA \cite{Defazio2014}, see, e.g., \cite{alacaoglu2021stochastic,alacaoglu2021forward,carmon2019variance,chavdarova2019reducing,huang2022accelerated,bot2019forward,palaniappan2016stochastic,TranDinh2024,yu2022fast}. 
However, most of these works focus on the monotone setting of \eqref{eq:GE}, and many obtain oracle complexities worse than those known in optimization. 
Moreover, they largely restrict attention to unbiased estimators.
In contrast, biased VR techniques such as SARAH \cite{nguyen2017sarah}, SPIDER \cite{fang2018spider}, Hybrid-SGD \cite{Tran-Dinh2019a}, and STORM \cite{Cutkosky2019} have been developed and shown effective in optimization, but their use for \eqref{eq:GE} remains largely open due to some technical challenges mentioned earlier. 
To the best of our knowledge, only \cite{cai2023variance,TranDinh2025a} exploit biased estimators for a class of \eqref{eq:GE} under co-coercivity. 
Our work broadens the scope of biased stochastic methods to a wider class of \eqref{eq:GE} under Assumption~\ref{as:A1}, including nonmonotone operators, by accommodating a broad family of biased estimators.

\vspace{0.5ex}
\noindent$\textrm{(f)}$~\textbf{\textit{Paper organization.}}
The rest of this paper is organized as follows. 
Section~\ref{sec:iFRBS_method} presents an inexact variant of the FRBS framework originally developed in \cite{malitsky2020forward} and establishes a fundamental one-step estimate for this variant. 
Section~\ref{sec:VrFRBS} proposes a new class of unbiased variance-reduced estimators for the search direction $S^k := 2Gx^k - Gx^{k-1}$ and utilizes the results in Section~\ref{sec:iFRBS_method} to establish rigorous convergence guarantees and oracle complexities for the framework using these unbiased estimators. 
In Section~\ref{sec:VrFRBS_method2}, we introduce a novel class of biased estimators for $S^k$ and provide for the first time the theoretical convergence analysis and oracle complexity bounds for the framework when operating with these biased estimators. 
Finally, Section~\ref{sec:numerical_experiments} validates the proposed methods on two machine learning applications, benchmarking them against state-of-the-art algorithms.

\vspace{0.5ex}
\noindent$\textrm{(g)}$~\textbf{\textit{Notation and basic concepts.}}
We work with a finite-dimensional Euclidean space $\R^p$ equipped with the standard inner product $\iprods{\cdot, \cdot}$ and Euclidean norm $\norms{\cdot}$.
For a multivalued mapping $T : \R^p \rightrightarrows \R^p$, $\dom{T} = \set{x \in\R^p : Tx \not= \emptyset}$ denotes its domain and $\gra{T} = \set{(x, y) \in \R^p\times \R^p : y \in Tx}$ denotes its graph.
The inverse mapping of $T$ is defined as $T^{-1}y := \sets{x \in \R^p : y \in Tx}$.
We say that $T$ is \emph{closed} if $\gra{T}$ is closed.
For given functions $g(t)$ and $h(t)$ from $\R_{+}$ to $\R_{+}$, we say that $g(t) = \BigOs{h(t)}$ if there exists $M > 0$ and $t_0 \geq 0$ such that $g(t) \leq M h(t)$ for $t \geq t_0$.  

For a mapping $T : \R^p \rightrightarrows \R^p$ and $\rho \geq 0$, we say that $T$ is $\rho$-co-hypomonotone \cite{bauschke2020generalized,combettes2004proximal} if $\iprods{u - v, x - y} \geq -\rho\norms{u  - v}^2$ for all  $(x, u), (y, v)  \in \gra{T}$.
If $\rho = 0$, then we say that $T$ is monotone.
We say that $T$ is maximally $\rho$-co-hypomonotone if $\gra{T}$ is not properly contained in the graph of any other $\rho$-co-hypomonotone operator.
If $\rho = 0$, then $T$ is called maximally monotone.
Note that a co-hypomonotone operator can also be nonmonotone.
For a single-valued mapping $F$, we say that $F$ is $L$-Lipschitz continuous with a Lipschitz constant $L \geq 0$ if $\norms{Fx - Fy} \leq L\norms{x - y}$ for all $x, y\in\dom{F}$.
Given an operator $T$, $J_Tx := \set{y \in \R^p : x \in y + Ty}$ is called the resolvent of $T$, denoted by $J_Tx = (\Id + T)^{-1}x$, where $\Id$ is the identity mapping.
\revise{Note that $J_T$ is generally multivalued; however, if $T$ is $\rho$-co-hypomonotone and $0 \leq \rho < 1$, then $J_T$ is single-valued and nonexpansive \cite{bauschke2020generalized}.}

\revise{
Throughout this paper, $\Fc_k$ denotes the $\sigma$-algebra containing all the information generated by the algorithm before the fresh random variables at iteration $k$ are sampled. 
More precisely, if $\mathcal{U}_k$ denotes all the fresh randomness sampled at iteration $k$, then $\mathcal{F}_{k+1} := \mathcal{F}_k\vee\sigma(\mathcal{U}_k)$, where $x^{k+1}$ is measurable with respect to $\mathcal{F}_{k+1}$.
In particular, $x^k$, $x^{k-1}$, $\xi^k$, and $y^k$ are $\Fc_k$-measurable. After conditioning on $\Fc_k$, the algorithm samples the random variables used to construct $\widetilde{S}^k$, and then computes $x^{k+1}$. 
Consequently, $\widetilde{S}^k$, $e^k:=\widetilde{S}^k-S^k$, $\Delta_k$, and $x^{k+1}$ are $\mathcal{F}_{k+1}$-measurable. 
In what follows, we write $\mathbb{E}_k[\cdot] := \mathbb{E}[\cdot\mid\mathcal{F}_k]$.
We also denote by $\mbb{E}[\cdot]$ the full expectation.
For a random variable $\xi$ and a sample batch $\Bc$, we denote by $\mbb{E}_{\xi}[\cdot]$ and $\mbb{E}_{\Bc}[\cdot]$ the conditional expectations w.r.t. $\xi$ and $\Bc$, respectively.
}
 
\beforesec
\section{The Inexact Forward-Reflected-Backward Splitting Framework}\label{sec:iFRBS_method}
\aftersec

The FRBS scheme  was originally developed in \cite{malitsky2020forward} to solve \eqref{eq:GE} in the deterministic monotone setting.
This method can be described as follows:
\textit{
Starting from an initial point $x^0 \in \R^p$, we set $x^{-1} = x^0$, and at each iteration $k \geq 0$, we update:
\begin{equation}\label{eq:FRBS_scheme}
x^{k+1} := J_{\eta T}(x^k - \eta(2Gx^k - Gx^{k-1})),
\end{equation}
where $\eta > 0$ is a given stepsize and $J_{\eta T}$ is the resolvent of $\eta T$.}

In \cite{malitsky2020forward}, the authors proved that if $G$ is monotone and $L$-Lipschitz continuous and $T$ is maximally monotone, then for any $\eta \in (0, \frac{1}{L}]$, the sequence $\sets{x^k}$ generated by \eqref{eq:FRBS_scheme} converges to $x^{\star} \in \zer{\Phi}$.
The convergence of \eqref{eq:FRBS_scheme} under the existence of a weak-Minty solution was established in \cite{diakonikolas2021efficient} for the VIP special case.
Several subsequent works have studied this method in different scenarios, including optimistic gradient and accelerated methods, see, e.g., \cite{daskalakis2018training,daskalakis2018limit,cai2022tight,tran2025vfog}.

The key step in \eqref{eq:FRBS_scheme} is to replace the standard forward search direction $Gx^k$ in the classical FBS method, by the following forward-reflected search direction:
\begin{equation}\label{eq:Sk_term}
S^k := 2Gx^k - Gx^{k-1} 
\end{equation}
In this case, we can rewrite the scheme \eqref{eq:FRBS_scheme} as $x^{k+1} = J_{\eta T}(x^k - \eta S^k)$.
Our stochastic methods developed in the sequel will replace $S^k$ by a stochastic variance-reduced estimator $\widetilde{S}^k$.

\vspace{1ex}
\noindent\textbf{\textit{$\mathrm{(a)}$ Our proposed method.}}
\textit{
Starting from an initial point $x^0 \in \dom{\Phi}$, we set $x^{-1} = x^0$, and at each iteration $k \geq 0$, we construct $($an unbiased or a biased$)$ stochastic variance-reduced estimator $\widetilde{S}^k$ of $S^k := 2Gx^k - Gx^{k-1}$ defined by \eqref{eq:Sk_term} and then update
\begin{equation}\label{eq:iFRBS_scheme}
x^{k+1} := J_{\eta T}(x^k - \eta\widetilde{S}^k),
\tag{VrFRBS}
\end{equation}
where $\eta > 0$ is a given constant stepsize, determined later, and $J_{\eta T}$ is the resolvent of $\eta T$.}

\revise{
Now, suppose that we run \eqref{eq:iFRBS_scheme} through iteration $k$. 
Let $\ell_k$ be a random
index uniformly distributed over $\sets{0, 1, \cdots, k}$ and independent of all the algorithmic randomness. 
We define
\begin{equation*}
    \bar{x}^k := x^{\ell_k}  \qquad\text{and}\qquad  \bar{\xi}^k := \xi^{\ell_k},
\end{equation*}
where $\xi^{\ell} \in Tx^{\ell}$ for all $\ell = 0,1,\cdots,k$. 
Consequently, we get $\bar{\xi}^k\in T\bar{x}^k$, and
\begin{equation}\label{eq:iFRBS_output}
    \mathbb{E}\big[ \norms{ G\bar{x}^k + \bar{\xi}^k}^2 \big]  = \frac{1}{k+1}\sum_{\ell = 0}^k \mathbb{E}\big[ \norms{ Gx^{\ell} + \xi^{\ell}}^2 \big].
\end{equation}
}
For a given tolerance $\epsilon > 0$, we say that $\bar{x}^k$ is an $\epsilon$-solution of \eqref{eq:GE} if $\Exp{\norms{G\bar{x}^k + \bar{\xi}^k }^2 } \leq \epsilon^2$.
Our analysis in the sequel will estimate the oracle complexity of \eqref{eq:iFRBS_scheme} required to obtain such an $\bar{x}^k$.

\vspace{0.5ex}
\noindent\textbf{\textit{$\mathrm{(b)}$~Equivalent representation and key estimate.}}
To analyze the convergence of \eqref{eq:iFRBS_scheme}, we need to rewrite it into an equivalent form.
\revise{Since $x^{-1} := x^0$, for all $k \geq 0$}, let us first define
\begin{equation}\label{eq:w_notation}
e^k := \widetilde{S}^k - S^k, \quad w^k := Gx^k + \xi^k, \quad \text{and} \quad \hat{w}^k := Gx^{k-1} + \xi^k, \quad\text{where} \quad \xi^k \in Tx^k.
\end{equation}
Then, we can rewrite \eqref{eq:iFRBS_scheme} equivalently to 
\begin{equation}\label{eq:iFRBS_scheme2}
\arraycolsep=0.2em
\begin{array}{lcl}
x^{k+1} & = & x^k - \eta(S^k + e^k + \xi^{k+1}) = x^k - \eta(2Gx^k - Gx^{k-1} + \xi^{k+1} + e^k), \vspace{1ex}\\
& = &  x^k - \eta(\hat{w}^{k+1} + w^k - \hat{w}^k + e^k).
\end{array}
\end{equation}
Next, if we introduce $y^k := x^k + \eta\hat{w}^k$, then \eqref{eq:iFRBS_scheme2} can be rewritten equivalently to
\begin{equation}\label{eq:FRBS_reform} 
\arraycolsep=0.2em
\left\{\begin{array}{lcl}
x^k & = & y^k - \eta\hat{w}^k, \vspace{1ex}\\
y^{k+1} &= & y^k - \eta(w^k + e^k).
\end{array}\right.
\end{equation}
This representation is key to the convergence analysis summarized in the following lemma.

\begin{lemma}\label{le:key_estimate1}
Let $\sets{x^k}$ be generated by \eqref{eq:iFRBS_scheme}, $e^k$, $\hat{w}^k$, and $w^k$ be defined by \eqref{eq:w_notation}, and $y^k := x^k + \eta\hat{w}^k$ as in \eqref{eq:FRBS_reform}.
Then, for any $x^{\star} \in \zer{\Phi}$ and any $\gamma > 0$, we have
\begin{equation}\label{eq:key_estimate1}
\arraycolsep=0.2em
\begin{array}{lcl}
\norms{y^{k+1} - x^{\star}}^2  &\leq & \norms{y^k - x^{\star}}^2  - 2\eta\iprods{w^k, x^k - x^{\star}} - \norms{y^k - x^k}^2 + \frac{\eta}{\gamma}\norms{Gx^k -  Gx^{k-1} }^2 \vspace{1ex}\\
&& - {~} \ (1 - \gamma\eta)\norms{y^{k+1} - x^k}^2 - 2\eta\iprods{e^k, y^k - x^{\star}} + 2\eta^2\iprods{e^k, w^k} + 2\eta^2\norms{e^k}^2.
\end{array} 
\end{equation}
In addition, we also have
\begin{equation}\label{eq:A1_th1_proof5} 
\arraycolsep=0.2em
\begin{array}{lcl}
\eta^2\norms{w^k}^2 &\leq &  (2 + 3L^2\eta^2)\big[ \norms{y^k - x^k}^2 +  \norms{y^k - x^{k-1} }^2 \big].
\end{array} 
\end{equation}
\end{lemma}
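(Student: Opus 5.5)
The plan is to work entirely with the reformulation \eqref{eq:FRBS_reform}. Writing $\Delta^k := Gx^k - Gx^{k-1}$, the definitions \eqref{eq:w_notation} give $w^k - \hat{w}^k = \Delta^k$ and $y^k - x^k = \eta\hat{w}^k$. Both estimates are then obtained by expanding squares and applying Young's inequality. None of the monotonicity or weak-Minty structure is needed, since the term $\iprods{w^k, x^k - x^{\star}}$ is simply carried along.

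\textbf{Step 1 (exact expansion).} From $y^{k+1} - x^{\star} = (y^k - x^{\star}) - \eta(w^k + e^k)$ we get
\begin{equation*}
\norms{y^{k+1}-x^{\star}}^2 = \norms{y^k-x^{\star}}^2 - 2\eta\iprods{w^k + e^k, y^k - x^{\star}} + \eta^2\norms{w^k + e^k}^2 .
\end{equation*}
In the $w^k$-part of the inner product, substitute $y^k - x^{\star} = (x^k - x^{\star}) + \eta\hat{w}^k$; the $e^k$-part is left as $-2\eta\iprods{e^k, y^k - x^{\star}}$. This produces $-2\eta\iprods{w^k, x^k - x^{\star}} - 2\eta^2\iprods{w^k,\hat{w}^k}$. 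Combining with $\eta^2\norms{w^k}^2$ via $\norms{w^k}^2 - 2\iprods{w^k,\hat{w}^k} = \norms{\Delta^k}^2 - \norms{\hat{w}^k}^2$ gives the exact identity
\begin{equation*}
\begin{array}{lcl}
\norms{y^{k+1}-x^{\star}}^2 &=& \norms{y^k-x^{\star}}^2 - 2\eta\iprods{w^k, x^k-x^{\star}} - \norms{y^k-x^k}^2 + \eta^2\norms{\Delta^k}^2 \vspace{1ex}\\
&& - {~} 2\eta\iprods{e^k, y^k-x^{\star}} + 2\eta^2\iprods{e^k,w^k} + \eta^2\norms{e^k}^2 .
\end{array}
\end{equation*}

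\textbf{Step 2 (trading $\eta^2\norms{\Delta^k}^2$).} This is the only delicate step. Note that $y^{k+1} - x^k = \eta\hat{w}^k - \eta(w^k + e^k) = -\eta a^k$, where $a^k := \Delta^k + e^k$. Then use the identity $\norms{\Delta^k}^2 = -\norms{a^k}^2 + 2\iprods{a^k,\Delta^k} + \norms{e^k}^2$. Bound the cross term by Young's inequality with weights chosen so that the product matches:
\begin{equation*}
2\eta^2\iprods{a^k,\Delta^k} \leq \gamma\eta\cdot\eta^2\norms{a^k}^2 + \tfrac{\eta}{\gamma}\norms{\Delta^k}^2 .
\end{equation*}
This yields $\eta^2\norms{\Delta^k}^2 \leq -(1-\gamma\eta)\norms{y^{k+1}-x^k}^2 + \frac{\eta}{\gamma}\norms{\Delta^k}^2 + \eta^2\norms{e^k}^2$. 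Inserting this into Step 1 gives exactly \eqref{eq:key_estimate1}, with the two copies of $\eta^2\norms{e^k}^2$ combining into $2\eta^2\norms{e^k}^2$.

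\textbf{Step 3 (bound on $\eta^2\norms{w^k}^2$).} Write
\begin{equation*}
\eta w^k = \eta\hat{w}^k + \eta\Delta^k = (y^k - x^k) + \eta\Delta^k ,
\end{equation*}
and use $L$-Lipschitz continuity of $G$ (implied by Assumption~\ref{as:A1}(c) via Jensen) to get $\eta\norms{\Delta^k} \leq L\eta\norms{x^k - x^{k-1}}$. Then decompose $x^k - x^{k-1} = (y^k - x^{k-1}) - (y^k - x^k)$. Applying weighted Young inequalities $\norms{u+v}^2 \leq (1+t)\norms{u}^2 + (1+t^{-1})\norms{v}^2$ at both levels gives coefficient $(1+t) + (1+t^{-1})(1+s)L^2\eta^2$ on $\norms{y^k-x^k}^2$ and $(1+t^{-1})(1+s^{-1})L^2\eta^2$ on $\norms{y^k-x^{k-1}}^2$. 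The parameters $t, s$ are then tuned so that both coefficients are at most $2 + 3L^2\eta^2$; for instance $t = 1$, $s = \tfrac12$ makes the first coefficient exactly $2 + 3L^2\eta^2$.

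The main obstacle I expect is this last constant. With $t = 1$, $s = \tfrac12$, the second coefficient is $6L^2\eta^2$, which is at most $2 + 3L^2\eta^2$ only when $L^2\eta^2 \leq 2/3$. This holds under the stepsize restrictions used later, but the unconditional form of \eqref{eq:A1_th1_proof5} may require a sharper split, for example using $\norms{\cdot}^2$ of a three-term sum or exploiting the slack constant $2$ differently. I would first try optimizing over $t$ and $s$ jointly before resorting to a stepsize condition.
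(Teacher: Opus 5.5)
Your Steps 1--2 are correct and are essentially the paper's argument. Since $y^{k+1}-x^k=-\eta(\Delta^k+e^k)$, your Young step on $2\eta^2\iprods{a^k,\Delta^k}$ is the same inequality as the paper's Young step on $2\eta\iprods{w^k-\hat{w}^k, y^{k+1}-x^k}$, with the same weights. The only difference is bookkeeping: the paper expands $\norms{y^{k+1}-x^{\star}}^2$ around $y^{k+1}$ and uses polarization, whereas you first write the exact identity around $y^k$. So \eqref{eq:key_estimate1} is fully established by your proposal.

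The gap is in Step 3. The lemma asserts \eqref{eq:A1_th1_proof5} for every $\eta>0$, with no stepsize condition. Your choice $t=1$, $s=\tfrac12$ puts the coefficient $6L^2\eta^2$ on $\norms{y^k-x^{k-1}}^2$, and this exceeds $2+3L^2\eta^2$ once $L^2\eta^2>\tfrac23$. In fact no pair $(t,s)$ independent of $L\eta$ can work:
\begin{itemize}
\item Letting $L\eta\to 0$ in the first coefficient forces $t\le 1$.
\item Letting $L\eta\to\infty$ forces $(1+t^{-1})(1+s)\le 3$ and $(1+t^{-1})(1+s^{-1})\le 3$. Multiplying these and using $(1+s)(1+s^{-1})\ge 4$ gives $1+t^{-1}\le\tfrac32$, i.e., $t\ge 2$.
\end{itemize}

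The missing idea is to let $t$ scale with $L^2\eta^2$. Keep $s=\tfrac12$ and take $t=\tfrac32L^2\eta^2$. Then
\[
(1+t)+(1+t^{-1})(1+s)L^2\eta^2 = 2+3L^2\eta^2 \quad\text{and}\quad (1+t^{-1})(1+s^{-1})L^2\eta^2 = 2+3L^2\eta^2,
\]
so both coefficients equal $2+3L^2\eta^2$ unconditionally. This is exactly the paper's choice, $a=1/t=\tfrac{2}{3L^2\eta^2}$ and $b=1/s=2$. If $L=0$, the bound is immediate because $\eta w^k=y^k-x^k$. No three-term split is needed, and you do not need to borrow the later stepsize restrictions.
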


\begin{proof}
Using $y^{k+1} =  y^k - \eta(w^k + e^k)$ from \eqref{eq:FRBS_reform}, for any $x^{\star} \in \zer{\Phi}$, we can derive that
\begin{equation*}
\arraycolsep=0.1em
\begin{array}{lcl}
\norms{y^{k+1} - x^{\star}}^2 &= & \norms{y^k - x^{\star}}^2 + 2\iprods{y^{k+1} - y^k, y^{k+1} - x^{\star}} - \norms{y^{k+1} - y^k}^2 \vspace{1ex}\\
&\overset{\tiny\eqref{eq:FRBS_reform}}{=} & \norms{y^k - x^{\star}}^2 - 2\eta\iprods{w^k + e^k, y^{k+1} - x^{\star}} - \norms{y^{k+1} - y^k}^2 \vspace{1ex}\\
&= & \norms{y^k - x^{\star}}^2  - \norms{y^{k+1} - y^k}^2  - 2\eta\iprods{w^k, x^k - x^{\star}} \vspace{1ex}\\
&& - {~}   2\eta\iprods{w^k, y^{k+1} - x^k} - 2\eta\iprods{e^k, y^{k+1} - x^{\star}}.
\end{array} 
\end{equation*}
Next, utilizing the first line of \eqref{eq:FRBS_reform}, we can write $\eta w^k =  \eta \hat{w}^k + \eta(w^k - \hat{w}^k) = y^k - x^k + \eta(w^k - \hat{w}^k)$.
Hence, for any $\gamma > 0$, by  Young's inequality, we can show that
\begin{equation*}
\arraycolsep=0.2em
\begin{array}{lcl}
2\eta\iprods{w^k, y^{k+1} - x^k} &= & 2\iprods{y^k - x^k, y^{k+1} - x^k} + 2\eta\iprods{w^k - \hat{w}^k, y^{k+1} - x^k} \vspace{1ex}\\
& = & \norms{y^k - x^k}^2 + \norms{y^{k+1} - x^k}^2 - \norms{y^{k+1} - y^k}^2 + 2\eta\iprods{w^k - \hat{w}^k, y^{k+1} - x^k} \vspace{1ex}\\
&\geq & \norms{y^k - x^k}^2 + (1 - \gamma\eta)\norms{y^{k+1} - x^k}^2 - \norms{y^{k+1} - y^k}^2 - \frac{\eta}{\gamma}\norms{w^k - \hat{w}^k}^2.
\end{array} 
\end{equation*}
Using $y^{k+1} = y^k - \eta(w^k + e^k)$ from the second line of \eqref{eq:FRBS_reform}, we get
\begin{equation*}
\arraycolsep=0.2em
\begin{array}{lcl}
\iprods{e^k, y^{k+1} - x^{\star}} = \iprods{e^k, y^k - x^{\star} - \eta(w^k + e^k)} = \iprods{e^k, y^k - x^{\star}} - \eta\iprods{w^k, e^k} - \eta\norms{e^k}^2. 
\end{array} 
\end{equation*}
Combining the last three expressions, and then using $\norms{w^k - \hat{w}^k}^2 = \norms{Gx^k - Gx^{k-1}}^2$ from \eqref{eq:w_notation},  we can eventually derive that
\begin{equation*}
\arraycolsep=0.2em
\begin{array}{lcl}
\norms{y^{k+1} - x^{\star}}^2  &\leq & \norms{y^k - x^{\star}}^2  - 2\eta\iprods{w^k, x^k - x^{\star}} - \norms{y^k - x^k}^2 + \frac{\eta}{\gamma}\norms{Gx^k -  Gx^{k-1} }^2 \vspace{1ex}\\
&& - {~} \ (1 - \gamma\eta)\norms{y^{k+1} - x^k}^2 - 2\eta\iprods{e^k, y^k - x^{\star}} + 2\eta^2\iprods{w^k, e^k} + 2\eta^2\norms{e^k}^2,
\end{array} 
\end{equation*}
which proves \eqref{eq:key_estimate1}.

Finally, since $\eta w^k = y^k - x^k + \eta (w^k - \hat{w}^k)$ from \eqref{eq:FRBS_reform}, for any $a > 0$ and $b > 0$, by Young's inequality and $\norms{w^k - \hat{w}^k}^2 = \norms{Gx^k - Gx^{k-1}}^2$ above, we can show that
\begin{equation*} 
\arraycolsep=0.2em
\begin{array}{lcl}
\eta^2\norms{w^k}^2 &\leq & \eta^2(1+a)\norms{w^k - \hat{w}^k}^2 + \frac{1+a}{a}\norms{y^k - x^k}^2 \vspace{1ex}\\
& \leq & L^2\eta^2(1+a)\norms{x^k - x^{k-1} }^2 + \frac{1+a}{a}\norms{y^k - x^k}^2 \vspace{1ex}\\
& \leq & (1 + a) \big[ \frac{L^2\eta^2(1+b)}{b} + \frac{1}{a}\big] \norms{y^k - x^k}^2 + L^2\eta^2(1+a)(1+b) \norms{y^k - x^{k-1} }^2.
\end{array} 
\end{equation*}
If we choose $a = \frac{2}{3L^2\eta^2}$ and $b = 2$, then we get from the last estimate that
\begin{equation*} 
\arraycolsep=0.2em
\begin{array}{lcl}
\eta^2\norms{w^k}^2 &\leq &  (2 + 3L^2\eta^2)\big[ \norms{y^k - x^k}^2 +  \norms{y^k - x^{k-1} }^2 \big],
\end{array} 
\end{equation*}
which is exactly \eqref{eq:A1_th1_proof5}.
\Eproof
\end{proof}

The result of Lemma~\ref{le:key_estimate1} is not new in the exact case.
Indeed, if $\widetilde{S}^k = S^k$, i.e., without inexactness, then the last three terms of \eqref{eq:key_estimate1} vanish.
The resulting estimate of \eqref{eq:key_estimate1} becomes the standard estimate used to establish convergence of deterministic FRBS methods for solving \eqref{eq:GE}, see, e.g., \cite{tran2024revisiting}.
Here, we provide a very elementary and short proof of \eqref{eq:key_estimate1}.

\beforesec
\section{Unbiased Variance-Reduced FRBS Methods}\label{sec:VrFRBS}
\aftersec
This section specializes the \eqref{eq:iFRBS_scheme} framework to cover a broad class of unbiased variance-reduced estimators, including increasing mini-batch SGD, L-SVRG, and SAGA. 
Note that these estimators are constructed for $S^k$ defined in \eqref{eq:Sk_term}, rather than for $Gx^k$ itself, and thus differ from several works in the literature that relied on the estimators of $Gx^k$.

\beforesubsec
\subsection{\textbf{The class of unbiased variance-reduced estimators}}\label{subsec:unbiased_estimators}
\aftersubsec
We develop different variants of \eqref{eq:iFRBS_scheme} where $\widetilde{S}^k$ is an unbiased variance-reduced estimator of $S^k$.
This class of unbiased variance-reduced estimators satisfies the following definition.

\revise{
\begin{definition}\label{de:Vr_estimator}
Suppose that $\sets{x^k}_{k \geq 0}$ is generated by \eqref{eq:iFRBS_scheme} and that $S^k$ is defined by \eqref{eq:Sk_term}.
For each $k\geq 0$, let $\widetilde{S}^k$ be an $\mathcal{F}_{k+1}$-measurable stochastic estimator of $S^k$, constructed using the fresh randomness sampled at iteration $k$. 
Let  $e^k:=\widetilde{S}^k - S^k$ be the \textit{approximation error}.
Suppose that there exist an $\mathcal{F}_{k+1}$-measurable nonnegative random variable $\Delta_k$, three constants $\kappa\in(0,1]$,
 $\Theta \geq 0$, and $\widehat{\Theta} \geq 0$, and a deterministic nonnegative sequence $\sets{\delta_k}_{k\geq 0}$ such that, almost surely:
\begin{equation}\label{eq:Vr_estimator}
\arraycolsep=0.2em
\left\{\begin{array}{lcl}
    \mathbb{E}_k[ e^k ] & = & 0,\vspace{1ex}\\
    \mathbb{E}_k[ \norms{ e^k }^2 ] & \leq & \mathbb{E}_k[\Delta_k], \vspace{1ex}\\
    \mathbb{E}_k[ \Delta_k ] & \leq & (1 - \kappa)\Delta_{k-1} + \Theta \norms{x^k - x^{k-1} }^2 + \widehat{\Theta} \norms{ x^{k-1} - x^{k-2} }^2 +\delta_k.
\end{array}\right.
\end{equation}
Here, we choose $\Delta_{-1} := 0$ and $x^{-2}=x^{-1} := x^0$.
\end{definition}
}
From Definition~\ref{de:Vr_estimator}, we have $\mbb{E}_0[ \norms{e^0}^2 ] = \mbb{E}_0[ \norms{\widetilde{S}^0 - S^0}^2 ] \leq \delta_0$.
If $\delta_0 = 0$, then we have $\norms{\widetilde{S}^0 - S^0} = 0$, i.e., $\widetilde{S}^0 = S^0$.
Note that $\widetilde{S}^k$ satisfying Definition~\ref{de:Vr_estimator} forms a class of unbiased variance-reduced estimators for $S^k$.
Several existing estimators fit this definition.
We consider the following three examples of $\widetilde{S}^k$, but other constructions are also possible.
Definition~\ref{de:Vr_estimator} is not completely new, but it extends \cite[Definition~1]{TranDinh2024} to cover the expectation case \textsf{(E)} and an additional term $\delta_k$.

\vspace{1ex}
\noindent\textbf{$\mathrm{(a)}$~The increasing mini-batch SGD estimator.}
For the expectation setting (\textsf{E}), we construct the following increasing mini-batch SGD estimator for $S^k$:
\begin{equation}\label{eq:sgd_estimator}
\widetilde{S}^k := 2G_{\mcal{B}_k}x^k - G_{\mcal{B}_k}x^{k-1} = \frac{1}{b_k}\sum_{\xi\in\mcal{B}_k}\big[ 2\mbf{G}(x^k, \xi) - \mbf{G}(x^{k-1}, \xi) \big],
\end{equation}
where $\mcal{B}_k$ is a given i.i.d. mini-batch of size $b_k$, and $G_{\mcal{B}_k}x = \frac{1}{b_k}\sum_{\xi \in\mcal{B}_k}\mbf{G}(x,\xi)$.
Here, the mini-batch size $b_k$ is chosen to satisfy the following ``adaptive'' condition:
\begin{equation}\label{eq:b_choice}
b_k \geq \revise{9\sigma^2} \Big[ \tfrac{L^2}{2}\norms{x^k - x^{k-1}}^2 + \tfrac{L^2}{2}\norms{x^{k-1} - x^{k-2}}^2 + \delta_k \Big]^{-1},
\end{equation}
where $x^{-2} = x^{-1} = x^0$ and $\sets{\delta_k}_{k\geq 0}$ satisfies $S_{\infty} := \sum_{k=0}^{\infty}\delta_k < +\infty$.
We have the following result.

\begin{lemma}\label{le:sgd_estimator}
Let $\widetilde{S}^k$ be generated by \eqref{eq:sgd_estimator}, where the mini-batch size $b_k$ of $\mcal{B}_k$ satisfies \eqref{eq:b_choice}.
Then, $\widetilde{S}^k$ satisfies Definition~\ref{de:Vr_estimator} with $\kappa = 1$, $\Theta = \hat{\Theta} = \frac{L^2}{2}$, and $\set{\delta_k}$ in \eqref{eq:b_choice}. 
\end{lemma}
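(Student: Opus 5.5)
We will verify the three conditions of Definition~\ref{de:Vr_estimator} directly. We take $\Delta_k := \mathbb{E}_k[\norms{e^k}^2]$, or equivalently any $\mathcal{F}_{k+1}$-measurable quantity with that conditional mean; since $b_k$ is $\mathcal{F}_k$-measurable, the deterministic choice $\Delta_k := \tfrac{L^2}{2}\norms{x^k - x^{k-1}}^2 + \tfrac{L^2}{2}\norms{x^{k-1}-x^{k-2}}^2 + \delta_k$ is more convenient. With $\kappa = 1$, the term $(1-\kappa)\Delta_{k-1}$ disappears. The third condition then holds with equality, using $\Theta = \widehat{\Theta} = \tfrac{L^2}{2}$, and only the first two conditions need work.

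\textbf{Unbiasedness.} Conditioned on $\mathcal{F}_k$, the points $x^k, x^{k-1}$ and the size $b_k$ are fixed, and the samples in $\mathcal{B}_k$ are fresh and i.i.d.\ from $\mathbb{P}$. Linearity together with \eqref{eq:expectation_form} gives $\mathbb{E}_k[\widetilde{S}^k] = 2Gx^k - Gx^{k-1} = S^k$, hence $\mathbb{E}_k[e^k]=0$.

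\textbf{Variance.} Write $e^k = \frac{1}{b_k}\sum_{\xi\in\mathcal{B}_k} Z_\xi$ with $Z_\xi := 2(\mbf{G}(x^k,\xi)-Gx^k) - (\mbf{G}(x^{k-1},\xi)-Gx^{k-1})$. These are conditionally i.i.d.\ with zero mean, so $\mathbb{E}_k\norms{e^k}^2 = \frac{1}{b_k}\mathbb{E}_\xi\norms{Z_\xi}^2$. A crude bound follows from $\norms{2a-b}^2 \le 6\norms{a}^2+3\norms{b}^2$ (Young's inequality with weights $3/2$ and $3$) and Assumption~\ref{as:A1}(b): $\mathbb{E}_\xi\norms{Z_\xi}^2 \le 9\sigma^2$. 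Substituting \eqref{eq:b_choice} yields $\mathbb{E}_k\norms{e^k}^2 \le 9\sigma^2/b_k \le \tfrac{L^2}{2}\norms{x^k-x^{k-1}}^2+\tfrac{L^2}{2}\norms{x^{k-1}-x^{k-2}}^2+\delta_k = \Delta_k$. Because $\Delta_k$ is $\mathcal{F}_k$-measurable, this equals $\mathbb{E}_k[\Delta_k]$, and the third line then reads $\mathbb{E}_k[\Delta_k] = \Delta_k \le 0\cdot\Delta_{k-1} + \Theta\norms{x^k-x^{k-1}}^2+\widehat\Theta\norms{x^{k-1}-x^{k-2}}^2+\delta_k$.

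The only delicate point is the constant $9$. One alternative is to split $Z_\xi = (\mbf{G}(x^k,\xi)-Gx^k) + [(\mbf{G}(x^k,\xi)-\mbf{G}(x^{k-1},\xi)) - (Gx^k-Gx^{k-1})]$, but that would bring in Lipschitz terms and need not reach $9$. The cleanest route to exactly $9\sigma^2$ is the weighted Young inequality above, namely $\norms{2a-b}^2\le (1+\tfrac12)4\norms{a}^2+(1+2)\norms{b}^2$. At $k=0$ the convention $x^{-1}=x^{-2}=x^0$ makes \eqref{eq:b_choice} reduce to $b_0\ge 9\sigma^2/\delta_0$, and the same argument goes through. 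If some $\delta_k=0$ and the iterates coincide, \eqref{eq:b_choice} is read as requiring $\sigma=0$ (or $b_k=\infty$, i.e., the exact evaluation), and the bound holds trivially.
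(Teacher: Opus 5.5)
Your proof is correct and follows the paper's argument: unbiasedness by linearity, the bound $\mathbb{E}_k[\norms{e^k}^2]\le 9\sigma^2/b_k$ via $\norms{2a-b}^2\le 6\norms{a}^2+3\norms{b}^2$ together with Assumption~\ref{as:A1}(b), and then the batch-size condition \eqref{eq:b_choice}. The only difference is cosmetic. The paper sets $\Delta_k := 9\sigma^2/b_k$ and uses \eqref{eq:b_choice} to verify the third condition. You instead take $\Delta_k$ to be the right-hand side of \eqref{eq:b_choice}, so the third condition holds with equality and \eqref{eq:b_choice} enters in the second. Both choices are $\mathcal{F}_k$-measurable and equally valid.
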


\begin{proof}
First, since $\widetilde{S}^k$ is constructed by \eqref{eq:sgd_estimator}, it follows that $\mbb{E}_{\mcal{B}_k}[ \widetilde{S}^k ] = 2Gx^k - Gx^{k-1} = S^k$, verifying the first condition of \eqref{eq:Vr_estimator} after taking the conditional expectation $\Exps{k}{\cdot}$ on both sides of this relation.
Next, let us define $\Delta_k := \frac{9\sigma^2}{b_k}$.
\revise{Then, by Assumption~\ref{as:A1}(b) and the Cauchy-Schwarz inequality, it follows from \eqref{eq:sgd_estimator} that $\mathbb{E}_{\mcal{B}_k}[ \norms{\widetilde{S}^k - S^k}^2 ] \leq \frac{9\sigma^2}{b_k} = \Delta_k$.}
Taking the conditional expectation $\mathbb{E}_k[\cdot]$ on both sides of this relation, we verify the second condition of \eqref{eq:Vr_estimator}.

Finally, from \eqref{eq:b_choice} and the definition of $\Delta_k$, we have
\begin{equation*}
\Delta_k = \frac{\revise{9\sigma^2}}{b_k} \leq \frac{L^2}{2}\norms{x^k - x^{k-1}}^2 + \frac{L^2}{2}\norms{x^{k-1} - x^{k-2}}^2 + \delta_k.
\end{equation*}
Taking the conditional expectation $\mathbb{E}_k[\cdot]$ on both sides of this relation, we verify the last condition of \eqref{eq:Vr_estimator} with $\Theta = \hat{\Theta} = \frac{L^2}{2}$.
\Eproof
\end{proof}

\noindent\textbf{$\mathrm{(b)}$~The L-SVRG estimator.}
\revise{
The SVRG estimator was introduced in \cite{SVRG}, and its loopless version (L-SVRG) was then proposed in \cite{kovalev2019don}.
In this paper, we adapt L-SVRG to construct an estimator of $S^k := 2Gx^k - Gx^{k-1}$ defined by \eqref{eq:Sk_term}.

We first initialize $\tilde{x}^0 := x^0$.
Then, at iteration $k \geq 1$, let $\zeta_k$ be a Bernoulli random variable with $\mathbb{P}(\zeta_k=1\mid\Fc_k) = \mathbf{p}$ and $\mathbb{P}(\zeta_k=0\mid\Fc_k) = 1 - \mathbf{p}$, where $\mathbf{p} \in (0, 1)$ is a given probability.
First, we update the snapshot point according to
\begin{equation}\label{eq:svrg_wk}
    \tilde{x}^k := \begin{cases}
        x^{k-1}, & \text{if } \zeta_k=1,\\
        \tilde{x}^{k-1}, & \text{if }\zeta_k=0.
    \end{cases}
\end{equation}
Next,  we construct
\begin{equation}\label{eq:svrg_estimator}
	\widetilde{S}^k := \overline{G}\tilde{x}^k - G_{\mcal{B}_k}\tilde{x}^k + 2G_{\mcal{B}_k}x^k - G_{\mcal{B}_k}x^{k-1},
	\tag{L-SVRG}
\end{equation}
where
\begin{compactitem}
\item[(1)] the mini-batch estimator $G_{\mcal{B}_k}y := \frac{1}{b}\sum_{\xi\in\mcal{B}_k}\mbf{G}(y, \xi)$ for a given mini-batch $\mcal{B}_k$.
\item[(2)] $\overline{G}\tilde{x}^k$ satisfies $\Expsn{k}{\overline{G}\tilde{x}^k} = G\tilde{x}^k$ and $\Expsn{k}{\norms{\overline{G}\tilde{x}^k - G\tilde{x}^k}^2} \leq \sigma_k^2$ for a given $\sigma_k^2 \geq 0$.
\end{compactitem}
We have at least two options to construct the unbiased estimator $\overline{G}\tilde{x}^k$.
\begin{compactitem}
\item[(i)] (\textit{Exact evaluation}). 
We exactly evaluate $\overline{G}\tilde{x}^k = G\tilde{x}^k$.
For instance, if $G$ is given by the finite-sum form \eqref{eq:finite_sum_form}, then we can compute $\overline{G}\tilde{x}^k = G\tilde{x}^k = \frac{1}{n}\sum_{i=1}^nG_i\tilde{x}^k$ using a full-batch evaluation of $G$.

\item[(ii)] (\textit{Mega-batch evaluation}). 
If $\mbf{G}$ is given by the expectation (\textsf{E}), then we set $\overline{G}\tilde{x}^k := \frac{1}{n_k}\sum_{\xi \in \mcal{M}_k}\mbf{G}(\tilde{x}^k, \xi)$ for a given mega-batch $\mcal{M}_k$ of fixed size $n_k := n$ or increasing size $n_k  \geq n_{k-1}$. 
\end{compactitem}
Conditioned on $\Fc_k$, the Bernoulli random variable $\zeta_k$, the mini-batch $\Bc_k$, and, when applicable, the mega-batch $\mcal{M}_k$ defined below are mutually independent. 
Moreover, $\Bc_k$ consists of $b$ i.i.d.\ samples drawn from $\mbb{P}$, and $\mcal{M}_k$ consists of $n_k$ i.i.d.\ samples drawn from $\mbb{P}$. 
These random objects are also independent across iterations, conditioned on the past.
}

Finally, the following lemma shows that $\widetilde{S}^k$ satisfies Definition~\ref{de:Vr_estimator}.
Its proof is in Appendix~\ref{apdx:le:svrg_estimator}.

\begin{lemma}\label{le:svrg_estimator}
Let $\widetilde{S}^k$ be an L-SVRG estimator of $S^k$ constructed by \eqref{eq:svrg_estimator}.
Then:
\begin{compactitem}
\item[$\mathrm{(a)}$] 
If $\overline{G}\tilde{x}^k = G\tilde{x}^k$, then $\widetilde{S}^k$ satisfies Definition~\ref{de:Vr_estimator} with $\kappa = \frac{\mbf{p}}{2}$, $\Theta = \frac{16L^2}{b\mbf{p}}$, $\hat{\Theta} = \frac{4L^2}{b\mbf{p}}$, and $\delta_k = 0$.
\item[$\mathrm{(b)}$] 
If $\overline{G}\tilde{x}^k = \frac{1}{n_k}\sum_{\xi\in\mcal{M}_k} \mbf{G}(\tilde{x}^k, \xi)$ is a mega-batch estimator, then $\widetilde{S}^k$ satisfies Definition~\ref{de:Vr_estimator} with $\kappa = \frac{\mbf{p}}{2} \in (0, 1)$, $\Theta = \frac{16(1+\mu)L^2}{b\mbf{p}}$, $\hat{\Theta} = \frac{4(1+\mu)L^2}{b\mbf{p}}$, and $\delta_k = \frac{(1+\mu)\mbf{p}\sigma^2}{2\mu n_k}$ for any $\mu > 0$.
\end{compactitem}
\end{lemma}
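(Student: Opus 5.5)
We verify the three conditions of \eqref{eq:Vr_estimator}, choosing $\Delta_k$ to mirror the snapshot structure of L-SVRG. Unbiasedness is immediate. Conditioned on $\Fc_k$ and on $\zeta_k$ (which fixes $\tilde{x}^k$), the batch $\Bc_k$ and $\mcal{M}_k$ are independent. Hence $\mbb{E}[G_{\Bc_k}y\mid \Fc_k,\zeta_k] = Gy$ for $y\in\{\tilde{x}^k,x^k,x^{k-1}\}$ and $\mbb{E}[\overline{G}\tilde{x}^k\mid\Fc_k,\zeta_k]=G\tilde{x}^k$. Substituting into \eqref{eq:svrg_estimator} and using the tower property gives $\Exps{k}{e^k}=0$.

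For the variance, write
\[
e^k = (\overline{G}\tilde{x}^k - G\tilde{x}^k) + \big[(2G_{\Bc_k}x^k - G_{\Bc_k}x^{k-1} - G_{\Bc_k}\tilde{x}^k) - (2Gx^k - Gx^{k-1} - G\tilde{x}^k)\big].
\]
The two pieces are conditionally independent and zero-mean, so their second moments add. The second piece is an average of $b$ i.i.d.\ centered terms, so its second moment is at most $\frac{1}{b}\mbb{E}_\xi\norms{2\mbf{G}(x^k,\xi)-\mbf{G}(x^{k-1},\xi)-\mbf{G}(\tilde{x}^k,\xi)}^2$. Rewrite the integrand as $2[\mbf{G}(x^k,\xi)-\mbf{G}(x^{k-1},\xi)] + [\mbf{G}(x^{k-1},\xi)-\mbf{G}(\tilde{x}^k,\xi)]$, apply $\norms{a+b}^2\le 2\norms{a}^2+2\norms{b}^2$, and then use \eqref{eq:Lipschitz_cond}. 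This gives the bound $\frac{8L^2}{b}\norms{x^k-x^{k-1}}^2 + \frac{2L^2}{b}\norms{x^{k-1}-\tilde{x}^k}^2$.

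In case (a), set $\Delta_k := \frac{8L^2}{b}\norms{x^k-x^{k-1}}^2 + \frac{2L^2}{b}\norms{x^{k-1}-\tilde{x}^k}^2$. In case (b), include the mega-batch contribution, which is at most $\sigma^2/n_k$ by Assumption~\ref{as:A1}(b). The first term of $\Delta_k$ is $\Fc_k$-measurable, so we only need a recursion for $D_k:=\norms{x^{k-1}-\tilde{x}^k}^2$.

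This recursion is the main step and requires care with indices. By \eqref{eq:svrg_wk}, $\Exps{k}{D_k} = (1-\mbf{p})\norms{x^{k-1}-\tilde{x}^{k-1}}^2$. Split $x^{k-1}-\tilde{x}^{k-1} = (x^{k-1}-x^{k-2}) + (x^{k-2}-\tilde{x}^{k-1})$ and use Young's inequality with parameter $\frac{\mbf{p}}{2}$:
\[
(1-\mbf{p})\norms{x^{k-1}-\tilde{x}^{k-1}}^2 \le (1-\mbf{p})(1+\tfrac{\mbf{p}}{2})D_{k-1} + (1-\mbf{p})(1+\tfrac{2}{\mbf{p}})\norms{x^{k-1}-x^{k-2}}^2.
\]
Then $(1-\mbf{p})(1+\frac{\mbf{p}}{2})\le 1-\frac{\mbf{p}}{2}$ and $(1-\mbf{p})(1+\frac{2}{\mbf{p}})\le\frac{2}{\mbf{p}}$. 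Next, bound $\frac{8L^2}{b}\norms{x^k-x^{k-1}}^2$ by $(1-\frac{\mbf{p}}{2})$ times the corresponding term of $\Delta_{k-1}$ plus a multiple of the current difference. Since the current difference already carries a factor $\frac{16L^2}{b\mbf{p}}\ge\frac{8L^2}{b}$, this yields
\[
\Exps{k}{\Delta_k}\le (1-\tfrac{\mbf{p}}{2})\Delta_{k-1} + \tfrac{16L^2}{b\mbf{p}}\norms{x^k-x^{k-1}}^2 + \tfrac{4L^2}{b\mbf{p}}\norms{x^{k-1}-x^{k-2}}^2.
\]
If the bookkeeping of the $\frac{8L^2}{b}$ term does not close exactly with these constants, I would instead define $\Delta_k$ through $\frac{2L^2}{b}D_k$ plus an $\Fc_k$-measurable term with a tuned coefficient, and adjust so that $\Theta$ and $\hat\Theta$ match the stated values. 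The boundary case $k=0$ holds because $\tilde{x}^0=x^0=x^{-1}$ and $\Delta_{-1}=0$.

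In case (b), the extra $\sigma^2/n_k$ is handled by introducing $\mu$. Young's inequality splits the variance as $(1+\mu)\times$(the mini-batch part) $+\,(1+\mu^{-1})\sigma^2/n_k$. This scales $\Theta$ and $\hat\Theta$ by $(1+\mu)$ and produces the additive term $\delta_k$ in the recursion. The stated form $\frac{(1+\mu)\mbf{p}\sigma^2}{2\mu n_k}$ suggests placing the mega-batch error into $\Delta_k$ weighted so that it contributes $\frac{\mbf{p}}{2}\cdot\frac{(1+\mu)}{\mu}\frac{\sigma^2}{n_k}$ after absorption into the $(1-\frac{\mbf{p}}{2})$ contraction. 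Matching these constants is the fiddliest point, and it is where I expect most of the effort to go.
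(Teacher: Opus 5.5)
Your proof is correct, but its core recursion takes a different route from the paper's.

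\textbf{How the paper does it.} The paper never applies Lipschitz continuity to the snapshot gap. It takes $\Delta_k$ to be the operator-valued quantity $\hat\Delta_k=\frac1b\mathbb{E}_\xi\|2\mathbf{G}(x^k,\xi)-\mathbf{G}(x^{k-1},\xi)-\mathbf{G}(\tilde{x}^k,\xi)\|^2$ itself and conditions on $\zeta_k$. In the no-refresh branch it writes the integrand as its predecessor $2\mathbf{G}(x^{k-1},\xi)-\mathbf{G}(x^{k-2},\xi)-\mathbf{G}(\tilde{x}^{k-1},\xi)$ plus the increment $2[\mathbf{G}(x^k,\xi)-\mathbf{G}(x^{k-1},\xi)]-[\mathbf{G}(x^{k-1},\xi)-\mathbf{G}(x^{k-2},\xi)]$. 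It then applies Young's inequality with $c=\frac{\mathbf{p}}{2(1-\mathbf{p})}$, and Lipschitz continuity only to the increments. Together with the refresh branch $4\mathbb{E}_\xi\|\mathbf{G}(x^k,\xi)-\mathbf{G}(x^{k-1},\xi)\|^2$, this gives the coefficients $\frac{4(4-6\mathbf{p}+3\mathbf{p}^2)}{\mathbf{p}}\le\frac{16}{\mathbf{p}}$ and $\frac{2(2-3\mathbf{p}+\mathbf{p}^2)}{\mathbf{p}}\le\frac{4}{\mathbf{p}}$.

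\textbf{How yours differs.} You linearize first and run the contraction on $D_k=\|x^{k-1}-\tilde{x}^k\|^2$ alone. This closes with no extra bookkeeping, so your fallback is unnecessary. The current term $\frac{8L^2}{b}\|x^k-x^{k-1}\|^2$ goes directly into $\Theta$, and the term $\frac{8L^2}{b}\|x^{k-1}-x^{k-2}\|^2$ of $\Delta_{k-1}$ is simply dropped. You even get $\Theta=\frac{8L^2}{b}\le\frac{16L^2}{b\mathbf{p}}$. The paper's potential is pointwise smaller, i.e., closer to the true variance; yours is easier to recurse. Both deliver the $\mathcal{O}(L^2/(b\mathbf{p}))$ constants that the complexity analysis uses.

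\textbf{Case (b).} Your own orthogonality observation already makes the $\mu$-Young step superfluous; the paper uses that step. Adding $\sigma^2/n_k$ to $\Delta_k$ gives the $\mu$-free constants with $\delta_k=\frac{\mathbf{p}\sigma^2}{2n_k}$. These imply the stated ones for every $\mu>0$, since Definition~\ref{de:Vr_estimator} is monotone in $(\Theta,\hat\Theta,\delta_k)$. The ``fiddly'' absorption step is just $\frac{1}{n_k}\le(1-\frac{\mathbf{p}}{2})\frac{1}{n_{k-1}}+\frac{\mathbf{p}}{2n_k}$, which holds exactly when $n_k\ge n_{k-1}$. You should say explicitly that you use this monotonicity.

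\textbf{Boundary case.} Your $k=0$ check covers only case (a). In case (b), $e^0=\overline{G}x^0-Gx^0$ can have second moment as large as $\sigma^2/n_0$. The definition, with $\Delta_{-1}=0$, forces $\mathbb{E}_0[\|e^0\|^2]\le\delta_0$. So the stated $\delta_0$ is too small when $\mu>\frac{\mathbf{p}}{2-\mathbf{p}}$, and $\delta_0$ has to be enlarged to order $\sigma^2/n_0$. The paper's proof is silent on this as well, and fixing it only changes a constant.
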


\vspace{1ex}
\noindent\textbf{$\mathrm{(c)}$~The SAGA estimator for the finite-sum case (\textsf{F}).}
The SAGA estimator was proposed in \cite{Defazio2014} for convex optimization.
It has been widely used in both convex and nonconvex optimization as well as extended to other types of problems, including \eqref{eq:GE}, see, e.g., \cite{davis2022variance}.
Here, we adopt this SAGA idea to construct an estimator for $S^k$, which is similar to the one in \cite{TranDinh2024}.

Let $G$ be given by \eqref{eq:finite_sum_form} for the finite-sum case (\textsf{F}), and  $S^k$ be defined by \eqref{eq:Sk_term}.
Then, we construct a SAGA estimator $\widetilde{S}^k$ of $S^k$ as follows:
\begin{equation}\label{eq:saga_estimator}\tag{SAGA}
\widetilde{S}^k := \widehat{G}^k - \widehat{G}_{\mcal{B}_k} + 2G_{\mcal{B}_k}x^k - G_{\mcal{B}_k}x^{k-1},
\end{equation} 
where $\mcal{B}_k \subseteq [n]$ is an i.i.d. mini-batch of size $b$ (with replacement), $\widehat{G}^k := \frac{1}{n}\sum_{i=1}^n\widehat{G}^k_i$, $\widehat{G}_{\mcal{B}_k}^k := \frac{1}{b}\sum_{i \in \mcal{B}_k}\widehat{G}^k_i$,  and $G_{\mcal{B}_k}x := \frac{1}{b}\sum_{i \in \mcal{B}_k}G_ix$.
Here, we compute $\widehat{G}^0_i = G_ix^0$ for all $i \in [n]$, and at each iteration $k \geq 1$, we update $\widehat{G}_i^k$ for all $i \in [n]$ using a mini-batch $\hat{\mcal{B}}_k\subseteq [n]$ of size $b$ (without replacement and independent of $\mcal{B}_k$) as
\begin{equation}\label{eq:saga_update}
\widehat{G}_i^k := \begin{cases}
G_ix^{k-1} & \textrm{if $i \in \hat{\mcal{B}}_k$}, \\
\widehat{G}^{k-1}_i &\textrm{otherwise}.
\end{cases}
\end{equation}
Unlike the L-SVRG estimator, the SAGA estimator requires storing $n$ values $\widehat{G}^k_i$ in a table $\mbb{T}^k := [\widehat{G}^k_1, \cdots, \widehat{G}_n^k]$ of size $n\times p$.
Clearly, when both $n$ and $p$ are large, maintaining $\mbb{T}^k$ requires significant memory.
The following lemma shows that $\widetilde{S}^k$ constructed by \eqref{eq:saga_estimator} satisfies Definition~\ref{de:Vr_estimator}, whose proof is similar to that of \cite[Lemma~2]{TranDinh2024} and is therefore omitted.

\begin{lemma}\label{le:saga_estimator}
The SAGA estimator $\widetilde{S}^k$ constructed by \eqref{eq:saga_estimator} satisfies Definition~\ref{de:Vr_estimator} with $\kappa = \frac{b}{2n} \in (0, 1)$, $\Theta = \frac{16L^2n}{b^2}$, $\hat{\Theta} = \frac{4L^2n}{b^2}$, and $\delta_k = 0$.
\end{lemma}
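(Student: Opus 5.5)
We verify the three conditions of Definition~\ref{de:Vr_estimator} one at a time. Throughout, set $\kappa=\frac{b}{2n}$.

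\textbf{Unbiasedness.} Conditioned on $\Fc_k$, the table $\mbb{T}^k$ is determined once $\hat{\mcal{B}}_k$ is drawn, and $\mcal{B}_k$ is independent of $\hat{\mcal{B}}_k$. Since $\mcal{B}_k$ is i.i.d. uniform, $\mbb{E}_{\mcal{B}_k}[\widehat{G}^k_{\mcal{B}_k}]=\widehat{G}^k$ and $\mbb{E}_{\mcal{B}_k}[G_{\mcal{B}_k}x]=Gx$. Hence $\mbb{E}_{\mcal{B}_k}[\widetilde{S}^k]=S^k$, and the tower property gives $\mbb{E}_k[e^k]=0$.

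\textbf{Variance bound.} Write $e^k=\frac1b\sum_{i\in\mcal{B}_k}(X_i-\bar X)$ with $X_i:=2G_ix^k-G_ix^{k-1}-\widehat{G}^k_i$. Then the i.i.d. variance bound gives
\[
\mbb{E}_{\mcal{B}_k}\norms{e^k}^2\le \frac{1}{bn}\sum_{i=1}^n\norms{X_i}^2 .
\]
Split $X_i=2(G_ix^k-G_ix^{k-1})+(G_ix^{k-1}-\widehat{G}^k_i)$. Using $\norms{u+v}^2\le 2\norms{u}^2+2\norms{v}^2$ and \eqref{eq:Lipschitz_cond2},
\[
\frac{1}{bn}\sum_{i=1}^n\norms{X_i}^2\le \frac{8L^2}{b}\norms{x^k-x^{k-1}}^2+\frac{2}{bn}\sum_{i=1}^n\norms{G_ix^{k-1}-\widehat{G}^k_i}^2 .
\]
We take $\Delta_k$ to be a suitable multiple of the right-hand side, namely
\[
\Delta_k:=\frac{c_1L^2}{b}\norms{x^k-x^{k-1}}^2+\frac{2}{bn}\sum_{i=1}^n\norms{G_ix^{k-1}-\widehat{G}^k_i}^2 .
\]
This $\Delta_k$ is $\Fc_{k+1}$-measurable, since $\widehat{G}^k$ depends on $\hat{\mcal{B}}_k$.

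\textbf{Recursion.} Let $\mcal{D}_k:=\frac1n\sum_i\norms{G_ix^{k-1}-\widehat{G}^k_i}^2$. Each index is refreshed with probability $b/n$ (without replacement), so
\[
\mbb{E}_k[\mcal{D}_k]=\Big(1-\frac bn\Big)\frac1n\sum_i\norms{G_ix^{k-1}-\widehat{G}^{k-1}_i}^2 .
\]
Insert $G_ix^{k-2}$ and apply Young's inequality with parameter $\frac{b}{2n}$:
\[
\Big(1-\frac bn\Big)\Big(1+\frac{b}{2n}\Big)\le 1-\frac{b}{2n},\qquad \Big(1-\frac bn\Big)\Big(1+\frac{2n}{b}\Big)\le\frac{2n}{b}.
\]
This yields
\[
\mbb{E}_k[\mcal{D}_k]\le\Big(1-\frac{b}{2n}\Big)\mcal{D}_{k-1}+\frac{2nL^2}{b}\norms{x^{k-1}-x^{k-2}}^2 .
\]
Multiplying by $\frac2b$ produces the $\frac{4L^2n}{b^2}$ coefficient on $\norms{x^{k-1}-x^{k-2}}^2$. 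The $\norms{x^k-x^{k-1}}^2$ term in $\Delta_k$ is deterministic given $\Fc_k$. We combine $\mbb{E}_k[\Delta_k]$ with $(1-\kappa)\Delta_{k-1}$ so that $\Delta_{k-1}$'s own $\norms{x^{k-1}-x^{k-2}}^2$ term is absorbed. Since $\Delta_{k-1}\ge 0$ contributes positively, we only need to dominate $\mbb{E}_k[\Delta_k]$, so we simply bound it above.

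\textbf{Matching constants.} The remaining task is to fit the stated constants $\Theta=\frac{16L^2n}{b^2}$ and $\hat\Theta=\frac{4L^2n}{b^2}$. One checks that $\frac{8L^2}{b}\le\frac{16L^2n}{b^2}$ because $n\ge b$, so $c_1=8$ works with room to spare; this room can absorb cross terms if a looser Young split is used. The initial case $k=0$ is immediate: $\widehat{G}^0_i=G_ix^0$ and $x^{-1}=x^0$ give $e^0=0$, consistent with $\Delta_{-1}=0$ and $\delta_0=0$.

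The main obstacle is this bookkeeping of the Young's-inequality constants so that the contraction factor stays at least $\frac{b}{2n}$ while the coefficients do not exceed the stated $\Theta$ and $\hat\Theta$. Following \cite[Lemma~2]{TranDinh2024}, we would use the split with parameter $\frac{b}{2n}$ shown above.
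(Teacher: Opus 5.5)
Your proof is correct. It tracks a different quantity than the one implicit in the paper, and that gives a slightly sharper constant.

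The paper omits the proof, citing \cite[Lemma~2]{TranDinh2024}. Its constants are exactly what the L-SVRG computation in the proof of Lemma~\ref{le:svrg_estimator} produces with $\mathbf{p}=b/n$. In that route one sets $\Delta_k:=\frac{1}{bn}\sum_{i=1}^n\|2G_ix^k-G_ix^{k-1}-\widehat{G}^k_i\|^2$, the full second moment of the SAGA difference, and splits over indices:
\begin{itemize}
\item on refreshed indices the summand becomes $4\|G_ix^k-G_ix^{k-1}\|^2$;
\item on non-refreshed indices it is the old summand plus $2(G_ix^k-G_ix^{k-1})-(G_ix^{k-1}-G_ix^{k-2})$, handled by Young's inequality with parameter $\frac{\mathbf{p}}{2(1-\mathbf{p})}$.
\end{itemize}
The current increment then passes through the $\approx 2n/b$ Young constant, which is where $\Theta=\frac{16L^2n}{b^2}$ comes from.

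You instead peel off $2(G_ix^k-G_ix^{k-1})$ before recursing. Your $\Delta_k$ is an $\Fc_k$-measurable term $\frac{8L^2}{b}\|x^k-x^{k-1}\|^2$ plus the pure table-staleness $\frac{2}{b}\mathcal{D}_k$. Refreshed entries of $\mathcal{D}_k$ vanish exactly, so you need only one Young step, for the drift from $x^{k-2}$ to $x^{k-1}$. The current step never enters the recursion.

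This gives the sharper $\Theta=\frac{8L^2}{b}\le\frac{16L^2n}{b^2}$, with the same $\hat{\Theta}$ and $\kappa$. It does not change the complexity in Corollary~\ref{co:saga_estimator}, because $\hat{\Theta}=\frac{4L^2n}{b^2}$ still forces $b\sim n^{2/3}$. The paper's route, in turn, reuses the L-SVRG template verbatim.

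The hedging in your last paragraph is unnecessary: with $c_1=8$ there are no cross terms left to absorb. The argument closes using only:
\begin{itemize}
\item $b\le n$, which holds since $\hat{\mathcal{B}}_k$ is drawn without replacement;
\item $(1-\kappa)\frac{2}{b}\mathcal{D}_{k-1}\le(1-\kappa)\Delta_{k-1}$;
\item your separate check that $\Delta_0=0$, since the update rule for $\widehat{G}^k$ only applies for $k\ge 1$.
\end{itemize}
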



\beforesubsec
\subsection{\textbf{Convergence analysis of \ref{eq:iFRBS_scheme} with unbiased variance-reduced estimators}}\label{subsec:convergence_analysis1}
\aftersubsec
To prove our convergence results in this section, we need  the following technical lemma that defines a Lyapunov function and establishes its descent property.
Its proof is given in Appendix~\ref{apdx:le:key_estimate2}.

\begin{lemma}\label{le:key_estimate2}
Suppose that $\sets{(x^k, y^k)}$ is generated by \eqref{eq:iFRBS_scheme}, $x^{\star} \in \zerm{M}{\Phi}$, and $\widetilde{S}^k$ is a stochastic estimator of $S^k$ from \eqref{eq:Sk_term} that satisfies Definition~\ref{de:Vr_estimator}.
For any $\gamma > 0$, we define the following function:
\begin{equation}\label{eq:Pk_func}
\Pc_k := \norms{y^k - x^{\star}}^2 + (1-\gamma\eta)\norms{y^k - x^{k-1}}^2 +   \frac{2\eta^2(1-\kappa)}{\kappa} \Delta_{k-1} + \frac{2\eta^2 \hat{\Theta}}{\kappa}\norms{x^{k-1} - x^{k-2}}^2.
\end{equation}
Then, for any $c > 0$ and for all $k \geq 0$, we have
\begin{equation}\label{eq:key_estimate2}
\arraycolsep=0.2em
\begin{array}{lcl}
\Exps{k}{\Pc_{k+1} } & \leq & \Pc_k  - \Big[1 - \frac{4\rho}{\eta} - 2L^2\eta\big(4\rho + \frac{1}{\gamma} + c\big) \Big] \norms{y^k - x^k}^2  \vspace{1ex}\\
&& - {~} \Big[1 - \gamma\eta -  2L^2\eta\big(4\rho + \frac{1}{\gamma} + c\big) \Big] \norms{y^k - x^{k-1}}^2 \vspace{1ex}\\
&& - {~} \frac{\eta [ c\kappa L^2 - 2\eta(\Theta + \hat{\Theta})] }{\kappa} \norms{x^k - x^{k-1}}^2 + \frac{2\eta^2 \delta_k}{\kappa}.
\end{array} 
\end{equation}
\end{lemma}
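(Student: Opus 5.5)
The plan is to start from the one-step inequality \eqref{eq:key_estimate1} of Lemma~\ref{le:key_estimate1} and take the conditional expectation $\mathbb{E}_k[\cdot]$. Since $y^k$, $x^k$, $w^k$ are $\Fc_k$-measurable and $\mathbb{E}_k[e^k]=0$ by Definition~\ref{de:Vr_estimator}, both cross terms $-2\eta\iprods{e^k, y^k - x^{\star}}$ and $2\eta^2\iprods{e^k,w^k}$ vanish. The remaining error term satisfies $2\eta^2\mathbb{E}_k[\norms{e^k}^2]\le 2\eta^2\mathbb{E}_k[\Delta_k]$.

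For the term $-2\eta\iprods{w^k, x^k-x^{\star}}$, I use the weak-Minty condition \eqref{eq:weak_Minty_cond} with $(x^k,\xi^k)\in\gra{T}$, which gives $-2\eta\iprods{w^k,x^k-x^\star}\le 2\rho\eta\norms{w^k}^2$. I then bound $\eta^2\norms{w^k}^2$ directly rather than through \eqref{eq:A1_th1_proof5}, to get the exact constants. From $\eta w^k = (y^k-x^k)+\eta(Gx^k-Gx^{k-1})$ and Young's inequality,
\begin{equation*}
\eta^2\norms{w^k}^2\le 2\norms{y^k-x^k}^2+2L^2\eta^2\norms{x^k-x^{k-1}}^2 .
\end{equation*}
This yields $\frac{4\rho}{\eta}\norms{y^k-x^k}^2 + 4\rho L^2\eta\norms{x^k-x^{k-1}}^2$. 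Together with $\frac{\eta}{\gamma}\norms{Gx^k-Gx^{k-1}}^2\le \frac{L^2\eta}{\gamma}\norms{x^k-x^{k-1}}^2$, every $\norms{x^k-x^{k-1}}^2$ term carries the factor $L^2\eta(4\rho+\frac1\gamma)$. I add and subtract $cL^2\eta\norms{x^k-x^{k-1}}^2$ and split $\norms{x^k-x^{k-1}}^2\le 2\norms{y^k-x^k}^2+2\norms{y^k-x^{k-1}}^2$ on the portion $L^2\eta(4\rho+\frac1\gamma+c)$. This produces the factors $2L^2\eta(4\rho+\frac1\gamma+c)$ on both squared distances, and leaves $-cL^2\eta\norms{x^k-x^{k-1}}^2$ available.

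Next comes the bookkeeping for $\Pc_{k+1}$. The term $(1-\gamma\eta)\norms{y^{k+1}-x^k}^2$ from \eqref{eq:key_estimate1} moves to the left-hand side as the second component of $\Pc_{k+1}$. The component $(1-\gamma\eta)\norms{y^k-x^{k-1}}^2$ of $\Pc_k$ is added and subtracted, which gives the coefficient $1-\gamma\eta-2L^2\eta(\cdots)$. For the $\Delta$ terms, I write
\begin{equation*}
2\eta^2\mathbb{E}_k[\Delta_k]+\tfrac{2\eta^2(1-\kappa)}{\kappa}\mathbb{E}_k[\Delta_k]=\tfrac{2\eta^2}{\kappa}\mathbb{E}_k[\Delta_k]
\end{equation*}
and apply the recursion in \eqref{eq:Vr_estimator}. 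This gives $\frac{2\eta^2(1-\kappa)}{\kappa}\Delta_{k-1}+\frac{2\eta^2\Theta}{\kappa}\norms{x^k-x^{k-1}}^2+\frac{2\eta^2\hat\Theta}{\kappa}\norms{x^{k-1}-x^{k-2}}^2+\frac{2\eta^2\delta_k}{\kappa}$. The first and third of these are exactly the last two components of $\Pc_k$. The new component $\frac{2\eta^2\hat\Theta}{\kappa}\norms{x^k-x^{k-1}}^2$ of $\Pc_{k+1}$ is also added and subtracted. Collecting the $\norms{x^k-x^{k-1}}^2$ coefficients gives $-\frac{\eta[c\kappa L^2-2\eta(\Theta+\hat\Theta)]}{\kappa}$, as claimed.

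The main obstacle is purely this constant bookkeeping: the splitting of $\norms{x^k-x^{k-1}}^2$ and the order of the Young's inequalities must reproduce precisely the stated coefficients. If the factor $4\rho$ does not come out right, I will revisit the Young split for $\norms{w^k}^2$, since the coefficient $4\rho/\eta$ forces the constant $2$ there. The case $k=0$ is covered by the conventions $x^{-2}=x^{-1}=x^0$ and $\Delta_{-1}=0$.
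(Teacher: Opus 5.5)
Your proposal is correct and follows essentially the same route as the paper: you take the conditional expectation of \eqref{eq:key_estimate1} to kill both cross terms, use the weak-Minty bound with $\eta w^k = (y^k - x^k) + \eta(Gx^k - Gx^{k-1})$ to get the $\frac{4\rho}{\eta}$ and $4\rho L^2\eta$ terms, then add and subtract $cL^2\eta\norms{x^k-x^{k-1}}^2$ with the split $\norms{x^k-x^{k-1}}^2 \le 2\norms{y^k-x^k}^2 + 2\norms{y^k-x^{k-1}}^2$, and finally use the $\Delta$-recursion, which reproduces exactly the stated coefficients. The only cosmetic difference is that you absorb $\frac{2\eta^2(1-\kappa)}{\kappa}\mathbb{E}_k[\Delta_k]$ by adding it to both sides, whereas the paper rewrites the recursion as $\mathbb{E}_k[\Delta_k] \le \frac{1-\kappa}{\kappa}\big[\Delta_{k-1} - \mathbb{E}_k[\Delta_k]\big] + \cdots$; this is the same algebra.
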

Our first main result of this paper is the following theorem.

\begin{theorem}\label{th:iFRBS_convergence1}
Suppose that Assumption~\ref{as:A1} holds for \eqref{eq:GE} such that $L\rho < \frac{1}{24\sqrt{2}}$.
Let $\sets{(x^k, \bar{x}^k)}$ be generated by \eqref{eq:iFRBS_scheme} using an estimator $\widetilde{S}^k$ for $S^k$ satisfying Definition~\ref{de:Vr_estimator}.
Suppose further that the stepsize $\eta$ and the parameter triple $(\kappa, \Theta, \hat{\Theta})$ in Definition~\ref{de:Vr_estimator} satisfy
\begin{equation}\label{eq:iFRBS_cond1}
8\rho \leq \eta < \tfrac{1}{3\sqrt{2}L}  \quad \text{and} \quad \kappa L^2 \geq \Theta + \hat{\Theta}.
\end{equation}
Then, for any $K \geq 0$, \revise{any $x^{\star}\in \zerm{M}{\Phi}$}, and $\bar{x}^K$ constructed by \eqref{eq:iFRBS_output}, we have
\begin{equation}\label{eq:key_convergence_bound1}
\arraycolsep=0.2em
\begin{array}{lcl}
\Exp{\norms{G\bar{x}^K + \bar{\xi}^K}^2} &= & \frac{1}{K+1}\sum_{k=0}^K\Exp{\norms{Gx^k + \xi^k}^2} \vspace{1ex}\\
&\leq & \frac{\ncmt{13}}{\ncmt{3}C (K+1) } \Big[ \frac{2}{\eta^2} \norms{x^0 - x^{\star}}^2 + \frac{5}{2}\norms{Gx^0+\xi^0}^2 \Big]  + \frac{26}{3\kappa C(K+1)} S_K,
\end{array} 
\end{equation}
where $S_K := \sum_{k=0}^K\delta_k$, $C := 1 - 18L^2\eta^2 > 0$, $\xi^k \in Tx^k$ for all $0 \leq k\leq K$, and $\bar{\xi}^K \in T\bar{x}^K$.

If additionally $S_{\infty} := \sum_{k=0}^{\infty}\delta_k < +\infty$, then almost surely, we have
\begin{equation}\label{eq:key_convergence_bound1b}
\sum_{k=0}^{\infty} \norms{Gx^k + \xi^k}^2 < +\infty \quad \textrm{and} \quad \lim_{k\to\infty} \norms{Gx^k + \xi^k} = 0 \quad \textrm{for some}~\xi^k \in Tx^k.
\end{equation}
Moreover, if $\gra{T}$ is closed \revise{and $\zer{\Phi} = \zerm{M}{\Phi}$ $($in particular, if $\Phi$ is $\rho$-co-hypomonotone$)$}, then $\sets{x^k}$ converges almost surely to a $\zer{\Phi}$-valued random variable $x^{\star}$, a solution of \eqref{eq:GE}.
\end{theorem}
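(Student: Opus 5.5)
The plan is to derive everything from Lemma~\ref{le:key_estimate2}, choosing the free parameters $\gamma$ and $c$ so that all bracketed coefficients are nonnegative, with the coefficients of $\norms{y^k-x^k}^2$ and $\norms{y^k-x^{k-1}}^2$ bounded below by a multiple of $C = 1-18L^2\eta^2$. The choice is $c := \frac{2}{\kappa}\cdot\frac{\Theta+\hat{\Theta}}{L^2}\cdot\frac{\eta}{1}$ type, or more simply $c := 2\eta$. The condition $\kappa L^2 \ge \Theta+\hat\Theta$ then gives $c\kappa L^2 - 2\eta(\Theta+\hat\Theta) \ge 0$, so the $\norms{x^k-x^{k-1}}^2$ term can be dropped. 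I will take $\gamma := 1/(2\eta)$ or similar, so that $1-\gamma\eta = 1/2$ (keeping $\Pc_k \ge 0$) and $\frac{1}{\gamma} = 2\eta$. With $8\rho\le\eta$ we have $4\rho/\eta \le 1/2$ and $4\rho\le \eta/2$, so $2L^2\eta(4\rho+1/\gamma+c) \le 2L^2\eta\cdot\frac{9\eta}{2} = 9L^2\eta^2$. Both coefficients are then at least $\frac12 - 9L^2\eta^2 = \frac{C}{2}$. The numerics will need tuning to match exactly $13/3$ and $26/3$, but the structure is as described.

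Next I take full expectations in \eqref{eq:key_estimate2} and telescope from $k=0$ to $K$, using $\Pc_{K+1}\ge0$. This gives
\[
\frac{C}{2}\sum_{k=0}^K \Exp{\norms{y^k-x^k}^2+\norms{y^k-x^{k-1}}^2} \le \Pc_0 + \frac{2\eta^2}{\kappa}S_K .
\]
I then invoke \eqref{eq:A1_th1_proof5}: since $3L^2\eta^2<1/6$, it yields $\eta^2\norms{w^k}^2 \le \frac{13}{6}[\norms{y^k-x^k}^2+\norms{y^k-x^{k-1}}^2]$, which explains the factor $13$. For $\Pc_0$, with $x^{-1}=x^{-2}=x^0$ and $\Delta_{-1}=0$, we have $y^0 = x^0+\eta(Gx^0+\xi^0)$, so $\Pc_0 = \norms{y^0-x^\star}^2 + \frac12\eta^2\norms{w^0}^2 \le 2\norms{x^0-x^\star}^2 + \frac52\eta^2\norms{w^0}^2$. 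Dividing by $\eta^2(K+1)$ and using \eqref{eq:iFRBS_output} then gives \eqref{eq:key_convergence_bound1}.

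For the almost-sure claims I apply the Robbins--Siegmund supermartingale theorem to \eqref{eq:key_estimate2}, with nonnegative $\Pc_k$, summable deterministic perturbation $2\eta^2\delta_k/\kappa$, and decrease term $\frac{C}{2}[\norms{y^k-x^k}^2+\norms{y^k-x^{k-1}}^2]$. This shows that $\Pc_k$ converges a.s. and the decrease terms are a.s. summable, hence $\sum\norms{w^k}^2<\infty$ a.s. by \eqref{eq:A1_th1_proof5}. It also shows $\norms{y^k-x^k}\to0$, so $\norms{y^k-x^\star}^2$ converges a.s. because the other terms of $\Pc_k$ vanish. Here I need $\Delta_{k-1}\to0$, which follows by summing the third line of \eqref{eq:Vr_estimator} in expectation, using the summability of $\norms{x^k-x^{k-1}}^2$ (since $\norms{x^k-x^{k-1}}\le\norms{y^k-x^k}+\norms{y^k-x^{k-1}}$).

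The main obstacle is the iterate convergence, because the null set in "$\norms{y^k-x^\star}$ converges" depends on $x^\star$. I will handle it by the standard separability argument: take a countable dense subset of $\zer{\Phi}=\zerm{M}{\Phi}$, obtain a common probability-one event, and extend to all of $\zer{\Phi}$ by the triangle inequality. On this event $\{x^k\}$ is bounded and $w^k\to0$. Closedness of $\gra{T}$ and continuity of $G$ then put every cluster point in $\zer{\Phi}$, and Opial's lemma applied to $y^k$ (equivalently $x^k$, since $\norms{y^k-x^k}\to0$) yields a.s. convergence to a $\zer{\Phi}$-valued random variable. Measurability of that limit is automatic as a pointwise limit.
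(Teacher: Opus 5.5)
Your proposal is correct and matches the paper's proof step for step. It uses the same choice $c=2\eta$, $\gamma=\frac{1}{2\eta}$ in Lemma~\ref{le:key_estimate2}, the same $\frac{C}{2}$ lower bounds, the bound $2+3L^2\eta^2\le\frac{13}{6}$ from \eqref{eq:A1_th1_proof5}, the same estimate of $\Pc_0$, and Robbins--Siegmund plus an Opial argument for the almost-sure part; with these choices the constants $\frac{13}{3}$ and $\frac{26}{3}$ come out exactly, so no further tuning is needed. The only differences are cosmetic: you write out the separability argument that the paper delegates to Lemma~\ref{le:Opial_lemma}, and you get $\Delta_k\to 0$ by summing the recursion in full expectation rather than applying Robbins--Siegmund pathwise.
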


\begin{proof}
First, let us choose $c := 2\eta$ and $\gamma := \frac{1}{2\eta}$ in  \eqref{eq:key_estimate2} of Lemma~\ref{le:key_estimate2}.
Then, \eqref{eq:key_estimate2} reduces to
\begin{equation}\label{eq:key_estimate2b}
\arraycolsep=0.2em
\begin{array}{lcl}
\Exps{k}{\Pc_{k+1} } & \leq & \Pc_k  - \Big[1 - \frac{4\rho}{\eta} - 8L^2\eta\big(\rho + \eta \big) \Big] \norms{y^k - x^k}^2 - \frac{2\eta^2 [ \kappa L^2 - (\Theta + \hat{\Theta})] }{\kappa} \norms{x^k - x^{k-1}}^2 \vspace{1ex}\\
&& - {~} \frac{1}{2}\big[ 1 - 16L^2\eta(\rho + \eta ) \big] \norms{y^k - x^{k-1}}^2  + \frac{2\eta^2 \delta_k}{\kappa}.
\end{array} 
\end{equation}
Next, let us choose $\eta > 0$ such that $\frac{4\rho}{\eta} \leq \frac{1}{2}$.
Then, we have $\eta \geq 8\rho$.
In this case, we have $1 - \frac{4\rho}{\eta} - 8L^2\eta\big(\rho + \eta \big) \geq \frac{1}{2}\big(1 - 18L^2\eta^2\big) =: \frac{C}{2} > 0$ if we choose $8\rho \leq \eta < \frac{1}{3L\sqrt{2}}$.
Therefore, if $L\rho < \frac{1}{24\sqrt{2}} \approx 0.029463$, then by choosing $8\rho \leq \eta < \frac{1}{3\sqrt{2}L}$,  we can further simplify \eqref{eq:key_estimate2b} as follows:
\begin{equation}\label{eq:key_estimate2c}
\hspace{-1ex}
\arraycolsep=0.2em
\begin{array}{lcl}
\Exps{k}{\Pc_{k+1} } & \leq & \Pc_k  - \frac{C}{2} \big[ \norms{y^k - x^k}^2 +  \norms{y^k - x^{k-1}}^2 \big] - \frac{2\eta^2 [ \kappa L^2 - (\Theta + \hat{\Theta})] }{\kappa} \norms{x^k - x^{k-1}}^2  + \frac{2\eta^2 \delta_k}{\kappa}.
\end{array} 
\hspace{-1ex}
\end{equation}
Substituting the inequality \eqref{eq:A1_th1_proof5} into \eqref{eq:key_estimate2c} and assuming that $\Theta + \hat{\Theta} \leq \kappa L^2$ as in \eqref{eq:iFRBS_cond1}, we get
\begin{equation}\label{eq:key_estimate2d}
\arraycolsep=0.2em
\begin{array}{lcl}
\Exps{k}{\Pc_{k+1} } & \leq & \Pc_k  - \frac{C\eta^2}{2(2 + 3L^2\eta^2)} \norms{w^k}^2  + \frac{2\eta^2 \delta_k}{\kappa}.
\end{array} 
\end{equation}
Taking the full expectation on both sides of \eqref{eq:key_estimate2d}, we arrive at
\begin{equation*}
\arraycolsep=0.2em
\begin{array}{lcl}
\Exp{\Pc_{k+1} } & \leq & \Exp{\Pc_k}  - \frac{C\eta^2}{2(2 + 3L^2\eta^2)} \Exp{\norms{w^k}^2}  + \frac{2\eta^2 \delta_k}{\kappa}.
\end{array} 
\end{equation*}
Telescoping this inequality from $k=0$ to $K$ and noting that $\Pc_k \geq 0$, we obtain
\begin{equation}\label{eq:key_estimate2f}
\arraycolsep=0.2em
\begin{array}{lcl}
\frac{1}{K+1}\sum_{k=0}^K\Exp{\norms{w^k}^2} &\leq & \frac{2(2+3L^2\eta^2)}{C\eta^2 (K+1) }\Exp{\Pc_0} + \frac{4(2 + 3L^2\eta^2)}{\kappa C(K+1)}\sum_{k=0}^K\delta_k
\end{array} 
\end{equation}
Since $\gamma = \frac{1}{2\eta}$, $\Delta_{-1} = 0$, $x^{-2} = x^{-1} = x^0$ and $y^k = x^k + \eta\hat{w}^k$, we have $y^0 = x^0 + \eta w^0$.
Thus, we can show from \eqref{eq:Pk_func} that
\begin{equation*}
\arraycolsep=0.2em
\begin{array}{lcl}
\Pc_0 & = & \norms{x^0 + \eta w^0 - x^{\star}}^2 + (1-\gamma\eta)\eta^2\norms{w^0}^2 \leq 2\norms{x^0 - x^{\star}}^2 + \frac{5\eta^2}{2}\norms{w^0}^2.
\end{array}
\end{equation*}
Using this bound and $L^2\eta^2 \leq \frac{1}{18}$ into \eqref{eq:key_estimate2f}, we finally get 
\begin{equation*}
\arraycolsep=0.2em
\begin{array}{lcl}
\frac{1}{K+1}\sum_{k=0}^K\Exp{\norms{w^k}^2} &\leq & \frac{\ncmt{13}}{\ncmt{3}C (K+1) } \Big[ \frac{2}{\eta^2} \norms{x^0 - x^{\star}}^2 + \frac{5}{2}\norms{w^0}^2 \Big]  + \frac{26}{3\kappa C(K+1)} S_K,
\end{array} 
\end{equation*}
where $S_K := \sum_{k=0}^K\delta_k$.
This proves \eqref{eq:key_convergence_bound1} by noting that $w^k = Gx^k + \xi^k$ for $\xi^k \in Tx^k$.

Next, since $S_{\infty} := \sum_{k=0}^{\infty}\delta_k < +\infty$,  applying Lemma~\ref{le:RS_lemma} from Appendix~\ref{sec:useful_lemmas} to \eqref{eq:key_estimate2d}, we can show that $\sum_{k=0}^{\infty}\norms{w^k}^2 <+\infty$ almost surely.
Moreover, $\sets{\Pc_k}$ also converges to $\Pc^{*}$ almost surely.

If we apply again Lemma~\ref{le:RS_lemma} to \eqref{eq:key_estimate2c}, then we also obtain $\sum_{k=0}^{\infty}\norms{y^k - x^k}^2 <+\infty$ and $\sum_{k=0}^{\infty}\norms{y^k - x^{k-1}}^2 < +\infty$ almost surely.
In this case, we also have $\sum_{k=0}^{\infty}\norms{x^k - x^{k-1}}^2 < +\infty$ \ncmt{almost surely}.

From \eqref{eq:Vr_estimator},  we have
\begin{equation*}
\arraycolsep=0.2em
\begin{array}{lcl}
\Exps{k}{\Delta_k} & \leq & \Delta_{k-1} - \kappa\Delta_{k-1} + \Theta\norms{x^k - x^{k-1}}^2 + \hat{\Theta}\norms{x^{k-1} - x^{k-2}}^2 + \delta_k.
\end{array}
\end{equation*}
Since $\sum_{k=0}^{\infty}\delta_k < +\infty$ and $\sum_{k=0}^{\infty}\norms{x^k - x^{k-1}}^2 < +\infty$ almost surely, applying Lemma~\ref{le:RS_lemma} to the last estimate, we conclude that $\sum_{k=0}^{\infty}\Delta_{k-1} < +\infty$ almost surely.
Overall, we can conclude that the following limits hold almost surely:
\begin{equation}\label{eq:A1_th1_convergence1}
\lim_{k\to\infty}\norms{x^k - x^{k-1}}^2 = 0, \quad \lim_{k\to\infty}\norms{y^k - x^{k-1}}^2 = 0, \quad \textrm{and} \quad \lim_{k\to\infty}\Delta_k = 0.
\end{equation}
From \eqref{eq:Pk_func} and \ncmt{$\gamma = \frac{1}{2\eta}$}, we have
\begin{equation*} 
\Pc_k := \norms{y^k - x^{\star}}^2 + \frac{1}{2}\norms{y^k - x^{k-1}}^2 +   \frac{2\eta^2(1-\kappa)}{\kappa} \Delta_{k-1} + \frac{2\eta^2 \hat{\Theta}}{\kappa}\norms{x^{k-1} - x^{k-2}}^2.
\end{equation*}
Since $\lim_{k\to\infty}\Pc_k$ exists \ncmt{almost surely}, using this fact and \eqref{eq:A1_th1_convergence1} in the last relation, we conclude that $\lim_{k\to\infty}\norms{y^k - x^{\star}}$ exists almost surely.
Since $\lim_{k\to\infty}\norms{x^k - y^k} = 0$ almost surely, $\lim_{k\to\infty}\norms{x^k - x^{\star}}$ also exists almost surely for any \revise{$x^{\star} \in \zerm{M}{\Phi}$.} 
Note that we can also use standard separability argument as in \cite{tran2025vfog} if needed.

\revise{
Finally, since we assume that $\zerm{M}{\Phi} = \zer{\Phi}$, we also get $x^{\star} \in \zer{\Phi}$.
On the one hand, since  $\sum_{k=0}^{\infty}\norms{w^k}^2 <+\infty$ almost surely,  we have $\lim_{k\to\infty}\norms{w^k} = 0$ almost surely.
On the other hand, since $G$ is $L$-Lipschitz continuous and $\gra{T}$ is closed, $\gra{\Phi} = \gra{G+T}$ is also closed. 
Next, because $w^k = Gx^k + \xi^k \in Gx^k + Tx^k$ such that $w^k \to \mbf{0}$ as $k \to \infty$ and $\gra{\Phi}$ is closed, we conclude that any limit point  of $\sets{x^k}$ belongs to $\zer{\Phi}$.
Applying Lemma~\ref{le:Opial_lemma} from Appendix~\ref{sec:useful_lemmas}, we can conclude that $\sets{x^k}$ converges to a $\zer{\Phi}$-valued random variable $x^{\star}$ as a solution of \eqref{eq:GE}.
}
\Eproof
\end{proof}

Theorem \ref{th:iFRBS_convergence1} deserves some discussion.
First, note that we have not attempted to optimize the admissible range of $L\rho$ or the choice of $\eta$ in Theorem~\ref{th:iFRBS_convergence1}. 
We believe these ranges can be improved with a sharper analysis. 
Second, the almost-sure convergence of the iterates, as in Theorem~\ref{th:iFRBS_convergence1}, has been established for other methods, but primarily under co-coercive and monotone assumptions, see, e.g., \cite{davis2022variance}. 
\revise{
Third, the assumption $\zer{\Phi} = \zerm{M}{\Phi}$ in the last statement can be replaced by assuming that every cluster point $x^{\ast}$ of $\{x^k\}$ belongs to $\zerm{M}{\Phi}$.
}
Finally, the closedness of $\gra{T}$ is not restrictive. 
For instance, it is well-known that if $T$ is continuous, outer semicontinuous, or the subdifferential of a proper, closed, and convex function, then $\gra{T}$ is closed.

Let us compare Theorem~\ref{th:iFRBS_convergence1} with recent related work. 
First, Theorem~\ref{th:iFRBS_convergence1} differs fundamentally from many existing results, including \cite{alacaoglu2021stochastic,chavdarova2019reducing,gorbunov2022stochastic,iusem2017extragradient,kannan2019optimal}, because of our construction of the unbiased stochastic estimator $\widetilde{S}^k$. 
Second, our method also builds on the FRBS scheme as in \cite{alacaoglu2021forward}, but our construction of $\widetilde{S}^k$ is different. 
In addition, we cover a broad class of estimators rather than only SVRG as in \cite{alacaoglu2021forward}, and our Assumption~\ref{as:A1} is weaker, since it also allows for nonmonotone problems. 
Third, Theorem~\ref{th:iFRBS_convergence1} strengthens our recent results in \cite{TranDinh2024}, but for a different algorithm.
Note that the algorithm in \cite{TranDinh2024} does not reduce to the FRBS method we study here due to the choice of $\gamma \in (1/2, 1)$.
In addition, our method applies to both the finite-sum and expectation settings, permits more flexibility in constructing $\widetilde{S}^k$ in Definition~\ref{de:Vr_estimator} through the additional term $\delta_k$, and establishes almost-sure convergence of the iterates.

\revise{
\begin{remark}\label{re:range_of_Lrho}
The stepsize $\eta$ in Theorem~\ref{th:iFRBS_convergence1} satisfies $\eta \in \big[8\rho, \frac{1}{3\sqrt{2}L} \big)$.
Since $L\rho < \frac{1}{24\sqrt{2}}$, we have $8\rho < \frac{1}{3\sqrt{2}L}$, and hence such an $\eta$ exists.
If $\rho = 0$, i.e., \eqref{eq:GE} has a weak-Minty solution $x^{\star} \in \zer{\Phi}$ (in particular, if $\Phi$ is monotone), then the condition $L\rho < \frac{1}{24\sqrt{2}}$ is automatically satisfied.
In this case, we have $0 < \eta < \frac{1}{3\sqrt{2}L}$.
This range of $\eta$ is smaller than $\big(0, \frac{1}{2L}\big)$ for the deterministic FRBS method \cite{malitsky2020forward}.
\end{remark}
}

\beforesubsec
\subsection{\textbf{Oracle complexity of \ref{eq:iFRBS_scheme} with concrete estimators}}\label{subsec:oracle_complexity1}
\aftersubsec
We now estimate the oracle complexity of \eqref{eq:iFRBS_scheme} using concrete unbiased estimators given in Subsection~\ref{subsec:unbiased_estimators}.
Let us start with the increasing mini-batch SGD estimator \eqref{eq:sgd_estimator}.

\begin{corollary}\label{co:sgd_estimator}
Under the same conditions and setting as in Theorem~\ref{th:iFRBS_convergence1}, suppose that the SGD estimator \eqref{eq:sgd_estimator} is used in \eqref{eq:iFRBS_scheme} with an increasing mini-batch size $b_k := \BigOs{k}$.
Then, for any tolerance $\epsilon > 0$, to obtain $\bar{x}^K$ such that $\mbb{E}\big[\norms{G\bar{x}^K + \bar{\xi}^K }^2\big] \leq \epsilon^2$, the expected total number of evaluations of $\mbf{G}(\cdot,\xi)$ is at most $\Exp{\mcal{T}_G} = \BigOs{\epsilon^{-4}\ln(\epsilon^{-1})}$.
In addition, the expected total number of $J_{\eta T}$ evaluations is at most $\Exp{\mcal{T}_J}=\BigOs{\epsilon^{-2}\ln(\epsilon^{-1})}$.
\end{corollary}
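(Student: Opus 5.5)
We will combine the rate bound \eqref{eq:key_convergence_bound1} of Theorem~\ref{th:iFRBS_convergence1} with Lemma~\ref{le:sgd_estimator}. By Lemma~\ref{le:sgd_estimator}, the estimator \eqref{eq:sgd_estimator} satisfies Definition~\ref{de:Vr_estimator} with $\kappa = 1$ and $\Theta = \hat{\Theta} = \frac{L^2}{2}$. Hence $\kappa L^2 = L^2 = \Theta + \hat{\Theta}$, so the second condition in \eqref{eq:iFRBS_cond1} holds with equality, and we may fix any $\eta \in [8\rho, \frac{1}{3\sqrt{2}L})$, e.g., $\eta := \frac{1}{6L}$ when $8\rho \le \frac{1}{6L}$. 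We then choose $\delta_k := \frac{\delta_0}{k+1}$ and set $b_k := \lceil 9\sigma^2(k+1)/\delta_0 \rceil = \BigOs{k}$. Because the bracket in \eqref{eq:b_choice} is at least $\delta_k$, this $b_k$ satisfies \eqref{eq:b_choice} deterministically, whatever the iterates are. With this choice $S_K = \sum_{k=0}^K \delta_k \le \delta_0(1 + \ln(K+1))$.

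Plugging into \eqref{eq:key_convergence_bound1} gives
\[
\mathbb{E}\big[\norms{G\bar{x}^K+\bar{\xi}^K}^2\big] \le \frac{A}{K+1} + \frac{B\,(1+\ln(K+1))}{K+1},
\]
where $A$ and $B$ depend only on $\eta$, $C$, $\norms{x^0-x^\star}$, $\norms{Gx^0+\xi^0}$, and $\delta_0$. It therefore suffices to take $K+1 = \BigOs{\epsilon^{-2}\ln(\epsilon^{-1})}$. To justify this, set $K+1 = M\epsilon^{-2}\ln(\epsilon^{-1})$ and check that $\frac{\ln(K+1)}{K+1} = \BigOs{\epsilon^2}$, since $\ln(K+1) = \BigOs{\ln(\epsilon^{-1})}$. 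This inversion of $\ln K / K \le \epsilon^2$ is the only slightly delicate point. The number of resolvent calls is $K+1$, one per iteration, which gives $\mathcal{T}_J = \BigOs{\epsilon^{-2}\ln(\epsilon^{-1})}$.

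For the oracle count, each iteration evaluates $\mathbf{G}(x^k,\xi)$ and $\mathbf{G}(x^{k-1},\xi)$ for all $\xi \in \mathcal{B}_k$, i.e., $2b_k$ calls. The total is therefore
\[
\mathcal{T}_G = \sum_{k=0}^K 2b_k \le \sum_{k=0}^K 2\Big(\tfrac{9\sigma^2(k+1)}{\delta_0} + 1\Big) = \BigOs{K^2}.
\]
Since $b_k$ is deterministic here, the expectation bound is immediate, and we obtain $\mathbb{E}[\mathcal{T}_G] = \BigOs{\epsilon^{-4}\ln^2(\epsilon^{-1})}$.

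The main obstacle is that this falls short of the claimed $\ln(\epsilon^{-1})$ by one logarithmic factor. I would first fix it by tying the sequence to the horizon: choose $\delta_k := \frac{\delta_0}{K+1}$ for $k \le K$. Then $S_K = \delta_0$, the rate becomes a clean $\BigOs{1/K}$, and $K = \BigOs{\epsilon^{-2}}$, but $b_k = \BigOs{K}$ is constant in $k$ and the count is $\BigOs{\epsilon^{-4}}$, which is no longer of the form $b_k = \BigOs{k}$. Alternatively, keep $\delta_k = \delta_0/(k+1)$ but let $\delta_0$ scale with $\ln(\epsilon^{-1})$: taking $\delta_0 \propto 1/\ln(\epsilon^{-1})$ cancels the log in $S_K$, so $K = \BigOs{\epsilon^{-2}}$ with $b_k = \BigOs{k\ln(\epsilon^{-1})}$, giving a total of $\BigOs{\epsilon^{-4}\ln(\epsilon^{-1})}$ as stated. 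I would settle on this last choice. Its only subtlety is confirming that the constants in \eqref{eq:key_convergence_bound1} do not blow up as $\delta_0 \to 0$, and they do not, since $\delta_0$ appears only in the $S_K$ term.
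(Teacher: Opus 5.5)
Your proposal is correct. It follows the paper's skeleton: Lemma~\ref{le:sgd_estimator} with $\kappa=1$ and $\Theta=\hat{\Theta}=L^2/2$, a batch growing like $B(k+1)$ so that $\delta_k\propto 1/(k+1)$, a harmonic bound on $S_K$, substitution into \eqref{eq:key_convergence_bound1}, and the count $\mathcal{T}_G=2\sum_{k\le K}b_k$.

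The one real difference is the step you flagged as the ``main obstacle,'' and there your analysis is more accurate than the paper's. The paper keeps $B$ fixed and concludes $K=\BigOs{\epsilon^{-2}\ln(\epsilon^{-1})}$. It then writes $B(K+1)(K+2)=\BigOs{\epsilon^{-4}\ln(\epsilon^{-1})}$. With that $K$, the product is actually $\BigOs{\epsilon^{-4}\ln^2(\epsilon^{-1})}$, which is exactly your intermediate count. Moreover, a fixed $B$ forces $K\gtrsim\epsilon^{-2}\ln(\epsilon^{-1})$, so this extra log cannot be avoided along the paper's route.

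Your final choice is what actually delivers the stated bound: let the growth constant scale like $\ln(\epsilon^{-1})$, equivalently $\delta_0\propto 1/\ln(\epsilon^{-1})$. Then $S_K=\BigOs{1}$, so $K=\BigOs{\epsilon^{-2}}$ and $\mathcal{T}_G=\BigOs{K^2\ln(\epsilon^{-1})}=\BigOs{\epsilon^{-4}\ln(\epsilon^{-1})}$. As a bonus, $\mathcal{T}_J=\BigOs{\epsilon^{-2}}$, which is better than claimed.

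This choice is admissible under the statement. The paper's own proof only asks for $b_k=B(k+1)$ for some $B>0$, and, as you checked, the constants in \eqref{eq:key_convergence_bound1} do not depend on $\delta_0$.

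Two smaller points:
\begin{itemize}
\item Your pairing $b_k\ge 9\sigma^2/\delta_k$ is the right one for \eqref{eq:b_choice}. The paper's choice $\delta_k=\sigma^2/b_k$ is off by the factor $9$, which is harmless.
\item Your horizon-dependent alternative, a constant batch of size $\propto K$, correctly removes the log entirely. It simply gives up the anytime schedule $b_k\propto k$.
\end{itemize}
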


\begin{proof}
It is easy to check that $\widetilde{S}^k$ generated by \eqref{eq:sgd_estimator} satisfies Definition~\ref{de:Vr_estimator} if \eqref{eq:b_choice} holds.
However, \eqref{eq:b_choice} holds if we set $\delta_k = \frac{\sigma^2}{b_k}$.
In this case, we get $\kappa = 1$ in Definition~\ref{de:Vr_estimator}.
If we choose $b_k := B(k+1)$ for some $B > 0$, then we have $S_K := \sum_{k=0}^K\delta_k = \frac{\sigma^2}{B}\sum_{k=0}^K\frac{1}{k+1} \leq \frac{2\sigma^2}{B}\ln(K+1)$.
For a given tolerance $\epsilon > 0$, to guarantee $\Exp{\norms{G\bar{x}^K + \bar{\xi}^K}^2} \leq \epsilon^2$, from \eqref{eq:key_convergence_bound1}, we require $ \frac{\ncmt{13}\mcal{R}^2_0}{\ncmt{3}C (K+1) } + \frac{26}{3\kappa C(K+1)} S_K \leq \epsilon^2$, where $\mcal{R}^2_0 := \frac{2}{\eta^2} \norms{x^0 - x^{\star}}^2 + \frac{5}{2}\norms{Gx^0+\xi^0}^2$.
The last condition is satisfied if $K = \BigO{\frac{\mcal{R}_0^2}{\epsilon^2}}$ and $\BigOs{\frac{\sigma^2\ln(K+1)}{K+1}} \leq \epsilon^2$ hold simultaneously.
Clearly, both conditions lead to the choice of $K = \BigOs{ \epsilon^{-2} \ln(\epsilon^{-1}) }$.
Consequently, the expected total number of $\mbf{G}(\cdot,\xi)$ evaluations is at most
\begin{equation*} 
\arraycolsep=0.2em
\begin{array}{lcl}
\Exp{\mcal{T}_G} & = & 2\sum_{k=0}^Kb_k = 2B\sigma^2\sum_{k=0}^K(k+1) = B\sigma^2(K+1)(K+2) = \BigOs{\epsilon^{-4}\ln(\epsilon^{-1})}.
\end{array} 
\end{equation*}
The expected total number of $J_{\eta T}$ evaluations  is at most $\Exp{\mcal{T}_J} = K = \BigOs{ \epsilon^{-2} \ln(\epsilon^{-1}) }$.
\Eproof
\end{proof}

\begin{corollary}\label{co:svrg_estimator}
Under the same condition and setting as in Theorem~\ref{th:iFRBS_convergence1}, suppose that the L-SVRG estimator \eqref{eq:svrg_estimator} is used in \eqref{eq:iFRBS_scheme}.
Then, for a given tolerance $\epsilon > 0$, to obtain $\bar{x}^K$ such that $\mbb{E}[ \norms{G\bar{x}^K + \bar{\xi}^K }^2]  \leq \epsilon^2$, we require the expected total number $\Exp{\mcal{T}_G}$ of evaluations of $\mbf{G}(\cdot, \xi)$, where
\begin{compactitem}
\item[$\mathrm{(a)}$] $\Exp{\mcal{T}_G} = \BigOs{n^{2/3}\epsilon^{-2}}$  for the finite-sum setting $($\textsf{F}$)$, where $\mbf{p} = \BigOs{n^{-1/3}}$ and $b = \BigOs{n^{2/3}}$.
\item[$\mathrm{(b)}$] $\Exp{\mcal{T}_G} = \BigOs{\epsilon^{-10/3}}$  for the expectation setting $($\textsf{E}$)$, where $\mbf{p} = \BigOs{\epsilon^{2/3}}$ and $b = \BigOs{\epsilon^{-4/3}}$.
\end{compactitem}
In both cases, the expected total number of $J_{\eta T}$ evaluations  is at most $\Exp{\mcal{T}_J} = \BigOs{\epsilon^{-2}}$.
\end{corollary}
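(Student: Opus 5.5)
The proof combines Theorem~\ref{th:iFRBS_convergence1} with the parameters from Lemma~\ref{le:svrg_estimator}, then counts oracle calls per iteration. The first task is to enforce the condition $\kappa L^2 \geq \Theta + \hat{\Theta}$ in \eqref{eq:iFRBS_cond1}.

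\textbf{Finite-sum case (a).} Here $\overline{G}\tilde{x}^k = G\tilde{x}^k$ is computed by a full pass, so Lemma~\ref{le:svrg_estimator}(a) applies with $\kappa = \frac{\mbf{p}}{2}$, $\Theta+\hat{\Theta} = \frac{20L^2}{b\mbf{p}}$ and $\delta_k=0$. The condition becomes $\frac{\mbf{p}}{2} \geq \frac{20}{b\mbf{p}}$, that is, $b\mbf{p}^2 \geq 40$.

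Since $S_K=0$, the bound \eqref{eq:key_convergence_bound1} gives $\Exp{\norms{G\bar{x}^K+\bar{\xi}^K}^2} \leq \frac{13\mcal{R}_0^2}{3C(K+1)}$. Hence $K = \BigOs{\epsilon^{-2}}$ iterations suffice, which already gives $\Exp{\mcal{T}_J}=\BigOs{\epsilon^{-2}}$.

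Each iteration costs $3b$ component evaluations for the mini-batch terms. With probability $\mbf{p}$ it also costs $n$ evaluations for the snapshot refresh. The expected cost per iteration is therefore $\BigOs{b + \mbf{p} n}$, and the total is $\Exp{\mcal{T}_G} = \BigOs{n + K(b+\mbf{p}n)}$, counting one initial full pass.

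We minimize $b+\mbf{p}n$ subject to $b\mbf{p}^2 \geq 40$. Setting $b = 40\mbf{p}^{-2}$ and balancing $\mbf{p}^{-2} \sim \mbf{p} n$ gives $\mbf{p} = \Theta(n^{-1/3})$ and $b = \Theta(n^{2/3})$. The per-iteration cost is then $\BigOs{n^{2/3}}$, so $\Exp{\mcal{T}_G} = \BigOs{n + n^{2/3}\epsilon^{-2}}$. This equals $\BigOs{n^{2/3}\epsilon^{-2}}$ in the regime $\epsilon^{-2}\gtrsim n^{1/3}$; I would state that regime or absorb the $n$ term as the paper implicitly does.

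\textbf{Expectation case (b).} Lemma~\ref{le:svrg_estimator}(b) applies with, say, $\mu=1$ and a fixed mega-batch size $n_k\equiv N$. The step condition becomes $b\mbf{p}^2 \geq 80$, and $\delta_k = \frac{\mbf{p}\sigma^2}{N}$. Since $\kappa = \mbf{p}/2$, the bias term in \eqref{eq:key_convergence_bound1} is
\[
\frac{26}{3\kappa C(K+1)}S_K = \BigOs{\sigma^2/N}.
\]
The $\mbf{p}$ cancels, so it does not decay with $K$. We therefore need $N = \BigOs{\sigma^2\epsilon^{-2}}$ and $K = \BigOs{\epsilon^{-2}}$.

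The expected cost per iteration is $\BigOs{b + \mbf{p}N} = \BigOs{\mbf{p}^{-2} + \mbf{p}\epsilon^{-2}}$. Balancing gives $\mbf{p} = \Theta(\epsilon^{2/3})$ and $b = \Theta(\epsilon^{-4/3})$, so the per-iteration cost is $\BigOs{\epsilon^{-4/3}}$. The total is $\Exp{\mcal{T}_G} = \BigOs{N + K\epsilon^{-4/3}} = \BigOs{\epsilon^{-10/3}}$. The number of $J_{\eta T}$ calls is again $K=\BigOs{\epsilon^{-2}}$.

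\textbf{Main obstacle.} The delicate point is accounting the $\delta_k$ term correctly in (b). One must confirm that the $\mbf{p}$ in $\delta_k$ cancels the $1/\kappa$, so the mega-batch size depends only on $\epsilon$. One must also confirm that the expected snapshot cost is $\mbf{p}N$ per iteration, which follows by taking expectation over the Bernoulli variables $\zeta_k$, independent of $\Fc_k$. The remaining bookkeeping is routine.
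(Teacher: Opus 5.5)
Your argument is the paper's argument: the same batch conditions $b\mbf{p}^2 \geq 40$ and $b\mbf{p}^2 \geq 80$ from $\kappa L^2 \geq \Theta + \hat{\Theta}$, the same cancellation of the $\mbf{p}$ in $\delta_k$ against $1/\kappa$, and the same balancing of $\mbf{p}^{-2}$ against $\mbf{p}n$ (respectively $\mbf{p}\epsilon^{-2}$). Part (a) is fine, and your bookkeeping there is a bit more careful than the paper's, since you count the initial full pass and state when the $n$ term is absorbed. Reusing the stored value $G\tilde{x}^k$ between refreshes is harmless there, because it is an exact, deterministic function of $\tilde{x}^k$.

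The problem is in (b), at the step you called routine. Lemma~\ref{le:svrg_estimator}(b) gives $\Exps{k}{e^k} = 0$ only because $\overline{G}\tilde{x}^k$ is built from a mega-batch $\mcal{M}_k$ drawn fresh at every iteration, independently of $\Fc_k$. That costs $N$ evaluations per iteration, not $\mbf{p}N$. The bound also forces $N \gtrsim \sigma^2\epsilon^{-2}$ regardless of $\mbf{p}$, so the total is then $K(N+3b)$, which is of order $\epsilon^{-4}$.

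The expected cost $\mbf{p}N$ is obtained only if you draw $\mcal{M}_k$ at a refresh ($\zeta_k = 1$) and otherwise reuse $\overline{G}\tilde{x}^{k-1}$. But then $\overline{G}\tilde{x}^{k-1}$ is $\Fc_k$-measurable, and
\begin{equation*}
\Exps{k}{e^k} = (1-\mbf{p})\big(\overline{G}\tilde{x}^{k-1} - G\tilde{x}^{k-1}\big) \neq 0 .
\end{equation*}
So the estimator is not in the class of Definition~\ref{de:Vr_estimator}. It is not in the class of Definition~\ref{de:Vr_estimator2} either, since only the mega-batch part of $e^{k-1}$ is carried over. Consequently the cross terms $-2\eta\iprods{e^k, y^k - x^{\star}}$ and $2\eta^2\iprods{e^k, w^k}$ from Lemma~\ref{le:key_estimate1} no longer vanish in the proof of Lemma~\ref{le:key_estimate2}, on which Theorem~\ref{th:iFRBS_convergence1} rests.

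The paper's proof uses the same count $K(n\mbf{p} + 2(1-\mbf{p})b)$, so you inherited this gap rather than introduced it. As written, however, neither argument establishes $\BigOs{\epsilon^{-10/3}}$ in (b). That would need a separate analysis controlling the persistent bias $\overline{G}\tilde{x}^{k-1} - G\tilde{x}^{k-1}$ against $y^k - x^{\star}$, which is exactly the difficulty with biased estimators for \eqref{eq:GE} that the paper itself emphasizes.
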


\begin{proof}
(a)~For the finite-sum case (\textsf{F}), since $\widetilde{S}^k$ is constructed by \eqref{eq:svrg_estimator}, by Lemma~\ref{le:svrg_estimator}, we have $\kappa := \frac{\mbf{p}}{2}$, $\Theta = \frac{16L^2}{b\mbf{p}}$, $\hat{\Theta} = \frac{4L^2}{b\mbf{p}}$, and $\delta_k =  0$.
The second condition $\kappa L^2 \geq \Theta + \hat{\Theta}$ of \eqref{eq:iFRBS_cond1} holds if $\mbf{p} \geq \frac{40}{b\mbf{p}}$, which leads to the choice $b = \frac{40}{\mbf{p}^2}$.
For a given tolerance $\epsilon > 0$, to guarantee $\Exp{\norms{G\bar{x}^K + \bar{\xi}^K}^2} \leq \epsilon^2$, from \eqref{eq:key_convergence_bound1}, we require $ \frac{\ncmt{13}\mcal{R}^2_0}{\ncmt{3}C (K+1) } \leq \epsilon^2$, where $\mcal{R}^2_0 := \frac{2}{\eta^2} \norms{x^0 - x^{\star}}^2 + \frac{5}{2}\norms{Gx^0+\xi^0}^2$.
The last condition leads to $K = \BigOs{\frac{\mcal{R}^2_0}{\epsilon^2}}$.
The expected total number of evaluations of $G_i$  is at most
\begin{equation*} 
\arraycolsep=0.2em
\begin{array}{lcl}
\Exp{\mcal{T}_G} & = & K(n\mbf{p} + 2(1-\mbf{p})b) = \BigOs{\frac{\mcal{R}_0^2}{\epsilon^2}(n\mbf{p} + \frac{1}{\mbf{p}^2})}.
\end{array} 
\end{equation*}
The right-hand side achieves its minimum if we choose $p = \BigOs{n^{-1/3}}$ and $b = \BigOs{n^{2/3}}$, leading to $\Exp{\mcal{T}_G} =  \BigOs{\frac{n^{2/3}\mcal{R}_0^2}{\epsilon^2}}$, which completes the proof of (a).

(b)~For the expectation setting (\textsf{E}), since $\widetilde{S}^k$ is constructed by \eqref{eq:svrg_estimator}, by Lemma~\ref{le:svrg_estimator} again,  we have $\kappa := \frac{\mbf{p}}{2}$, $\Theta = \frac{32L^2}{b\mbf{p}}$, $\hat{\Theta} = \frac{8L^2}{b\mbf{p}}$, and $\delta_k =  \frac{\mbf{p}\sigma^2}{ n_k}$.
The second condition $\kappa L^2 \geq \Theta + \hat{\Theta}$ of \eqref{eq:iFRBS_cond1} holds if $\mbf{p} \geq \frac{80}{b\mbf{p}}$, which leads to the choice of  $b = \frac{80}{\mbf{p}^2}$.
Moreover, we have $S_K := \sum_{k=0}^K\delta_k = \mbf{p}\sigma^2 \sum_{k=0}^K\frac{1}{n_k} = \frac{\mbf{p}(K+1)\sigma^2}{n}$ \ncmt{if we use a fixed mega-batch size} $n_k = n > 0$.
For a given tolerance $\epsilon > 0$, to guarantee $\Exp{\norms{G\bar{x}^K + \bar{\xi}^K}^2} \leq \epsilon^2$, from \eqref{eq:key_convergence_bound1}, we require $ \frac{\ncmt{13}\mcal{R}^2_0}{\ncmt{3}C (K+1) } + \frac{26}{3\kappa C(K+1)} S_K \leq \epsilon^2$.
This condition holds if both conditions $K = \BigOs{\frac{\mcal{R}^2_0}{\epsilon^2}}$ and $\frac{104}{3\mbf{p} C(K+1)} S_K \leq \epsilon^2$ hold simultaneously.
Since $S_K =  \frac{\mbf{p}(K+1)\sigma^2}{n}$, the last \ncmt{condition holds} if $\frac{104\ncmt{\sigma^2}}{3Cn} \leq \epsilon^2$, leading to the choice of $n = \BigOs{\epsilon^{-2}}$.

The expected total number of  evaluations of $\mbf{G}(\cdot, \xi)$ is at most
\begin{equation*} 
\arraycolsep=0.2em
\begin{array}{lcl}
\Exp{\mcal{T}_G} & = & K(n\mbf{p} + 2(1-\mbf{p})b) = \BigOs{\frac{\mcal{R}_0^2}{\epsilon^2}(\frac{\mbf{p}}{\epsilon^2} + \frac{1}{\mbf{p}^2})}.
\end{array} 
\end{equation*}
The right-hand side achieves its minimum if we choose $p = \BigOs{\epsilon^{2/3}}$ and $b = \BigOs{\epsilon^{-4/3}}$, leading to $\Exp{\mcal{T}_G} =  \BigOs{\frac{1}{\epsilon^{10/3}}}$ as stated in Corollary~\ref{co:svrg_estimator}.
Finally, in both cases, the expected total number of $J_{\eta T}$ evaluations is at most $\Exp{\mcal{T}_J} = K =  \BigOs{\epsilon^{-2}}$.
\Eproof
\end{proof}

\begin{corollary}\label{co:saga_estimator}
Under the same conditions and setting as in Theorem~\ref{th:iFRBS_convergence1}, suppose that the SAGA estimator \eqref{eq:saga_estimator} is used in \eqref{eq:iFRBS_scheme}.
Then, for a given $\epsilon > 0$, to achieve $\bar{x}^K$ such that $\mbb{E}[ \norms{G\bar{x}^K + \bar{\xi}^K}^2]  \leq \epsilon^2$, we require the expected total number  of $G_i$ evaluations at most $\Exp{\mcal{T}_G} = \BigOs{ n + n^{2/3}\epsilon^{-2}}$, where $b = \BigOs{n^{2/3}}$.
The expected total number of $J_{\eta T}$ evaluations is at most $\Exp{\mcal{T}_J} = \BigOs{\epsilon^{-2}}$.
\end{corollary}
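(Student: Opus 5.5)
The argument follows the proof of Corollary~\ref{co:svrg_estimator}(a), with Lemma~\ref{le:saga_estimator} in place of Lemma~\ref{le:svrg_estimator}. By Lemma~\ref{le:saga_estimator}, the SAGA estimator satisfies Definition~\ref{de:Vr_estimator} with $\kappa = \frac{b}{2n}$, $\Theta = \frac{16L^2n}{b^2}$, $\hat{\Theta} = \frac{4L^2n}{b^2}$, and $\delta_k = 0$. Hence $S_K = 0$ for all $K$. The second condition in \eqref{eq:iFRBS_cond1}, $\kappa L^2 \geq \Theta + \hat{\Theta}$, reads $\frac{bL^2}{2n} \geq \frac{20L^2n}{b^2}$, i.e., $b^3 \geq 40n^2$. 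We therefore choose $b := \lceil 40^{1/3} n^{2/3}\rceil = \BigOs{n^{2/3}}$, capped at $n$ when $n$ is small so that $\kappa \in (0,1]$; in that regime the complexity is trivially $\BigOs{n\epsilon^{-2}} = \BigOs{n^{2/3}\epsilon^{-2}}$ up to constants. The stepsize condition $8\rho \leq \eta < \frac{1}{3\sqrt{2}L}$ is inherited from Theorem~\ref{th:iFRBS_convergence1}.

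Next, since $\delta_k = 0$, the bound \eqref{eq:key_convergence_bound1} reduces to $\Exp{\norms{G\bar{x}^K + \bar{\xi}^K}^2} \leq \frac{13\mcal{R}_0^2}{3C(K+1)}$, where $\mcal{R}_0^2 := \frac{2}{\eta^2}\norms{x^0 - x^{\star}}^2 + \frac{5}{2}\norms{Gx^0 + \xi^0}^2$. Choosing $K + 1 := \big\lceil \frac{13\mcal{R}_0^2}{3C\epsilon^2}\big\rceil = \BigOs{\epsilon^{-2}}$ gives the $\epsilon^2$ accuracy. Each iteration performs exactly one resolvent evaluation, so $\Exp{\mcal{T}_J} = K = \BigOs{\epsilon^{-2}}$.

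The last step counts $G_i$ evaluations. Initialization computes $\widehat{G}^0_i = G_ix^0$ for all $i\in[n]$, which costs $n$ evaluations. This is the source of the additive $n$ term, and it is the one difference from the L-SVRG count, where no such setup cost appears. At each iteration $k\geq 1$, the term $2G_{\mcal{B}_k}x^k - G_{\mcal{B}_k}x^{k-1}$ costs $2b$ evaluations. The table update \eqref{eq:saga_update} costs $b$ evaluations of $G_ix^{k-1}$ for $i\in\hat{\mcal{B}}_k$. The quantities $\widehat{G}^k$ and $\widehat{G}^k_{\mcal{B}_k}$ are read from the stored table, with $\widehat{G}^k$ maintained incrementally, so they require no new oracle calls. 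The per-iteration cost is thus at most $3b$, and
\begin{equation*}
\Exp{\mcal{T}_G} \leq n + 3bK = \BigOs{n + n^{2/3}\epsilon^{-2}}.
\end{equation*}

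The main obstacle is only bookkeeping. One must check that the table update adds $\BigOs{b}$ rather than $\BigOs{n}$ oracle calls per iteration, which the incremental maintenance of $\widehat{G}^k$ ensures. One must also confirm that the choice $b = \Theta(n^{2/3})$ satisfying $b^3 \geq 40n^2$ is compatible with $\kappa \leq 1$.
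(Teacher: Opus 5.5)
Your proposal is correct and is essentially the paper's proof. Both use the constants from Lemma~\ref{le:saga_estimator}, the condition $b^3\geq 40n^2$ giving $b=\sqrt[3]{40}\,n^{2/3}$, $\delta_k=0$ so that \eqref{eq:key_convergence_bound1} yields $K=\BigOs{\epsilon^{-2}}$, and an oracle count of $n$ for initialization plus $\BigOs{b}$ calls per iteration.

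Your per-iteration count of $3b$ includes the $b$ table-refresh evaluations $G_ix^{k-1}$, $i\in\hat{\mcal{B}}_k$, so it is slightly more careful than the paper's $n+2bK$, and it changes only constants. The small-$n$ capping remark is not needed for the asymptotic claim, and as stated it is not ``trivial'': capping $b$ at $n<40$ violates $b^3\geq 40n^2$, so Theorem~\ref{th:iFRBS_convergence1} no longer applies directly. The paper does not treat that regime either.
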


\begin{proof}
Since $\widetilde{S}^k$ is constructed by \eqref{eq:saga_estimator},  by Lemma~\ref{le:saga_estimator}, we have $\kappa := \frac{b}{2n}$, $\Theta = \frac{16L^2n}{b^2}$, $\hat{\Theta} = \frac{4L^2n}{b^2}$, and $\delta_k =  0$.
The second condition $\kappa L^2 \geq \Theta + \hat{\Theta}$ of \eqref{eq:iFRBS_cond1} holds if $\frac{b}{2n} \geq \frac{20 n}{b^2}$, which leads to the choice of $b = \ncmt{\sqrt[3]{40}} n^{2/3}$.
For a given tolerance $\epsilon > 0$, to guarantee $\Exp{\norms{G\bar{x}^K + \bar{\xi}^K}^2} \leq \epsilon^2$, from \eqref{eq:key_convergence_bound1}, we require $ \frac{\ncmt{13}\mcal{R}^2_0}{\ncmt{3}C (K+1) } \leq \epsilon^2$, where $\mcal{R}^2_0 := \frac{2}{\eta^2} \norms{x^0 - x^{\star}}^2 + \frac{5}{2}\norms{Gx^0+\xi^0}^2$.
The last condition leads to $K = \BigOs{\frac{\mcal{R}^2_0}{\epsilon^2}}$.
The expected total number of $G_i$ evaluations is at most $\Exp{\mcal{T}_G} = n +  2b K  = \BigOs{n + \frac{n^{2/3}\mcal{R}_0^2}{\epsilon^2}}$.
Finally, the expected total number of $J_{\eta T}$ evaluations is at most $\Exp{\mcal{T}_J} = K = \BigOs{\epsilon^{-2}}$.
\Eproof
\end{proof}

The oracle complexity in Corollary~\ref{co:sgd_estimator} is $\BigOs{\epsilon^{-4}\ln(\epsilon^{-1})}$, which matches known results in the literature for increasing mini-batch SGD estimators, see, e.g., \cite{boct2021minibatch}. 
For the finite-sum setting \textsf{(F)}, the oracle complexity $\BigOs{n^{2/3}\epsilon^{-2}}$ in Corollaries~\ref{co:svrg_estimator} and \ref{co:saga_estimator} coincides with our recent work \cite{TranDinh2024}. 
In contrast, for the expectation setting \textsf{(E)}, the oracle complexity $\BigOs{\epsilon^{-10/3}}$ in Corollary~\ref{co:svrg_estimator} is new. 
This complexity is comparable to those obtained for SVRG-type methods in stochastic nonconvex optimization, see, e.g., \cite{Reddi2016a}. 
Although some of these oracle complexity bounds align with existing results, our scheme \eqref{eq:iFRBS_scheme} is different from all of the aforementioned algorithms.

\beforesec
\section{Biased Variance-Reduced FRBS Methods}\label{sec:VrFRBS_method2}
\aftersec
In this section, we develop a class of biased variance-reduced estimators and a new variant of the \ref{eq:iFRBS_scheme} method for solving \eqref{eq:GE}. 
To the best of our knowledge, this is the first attempt to study such biased estimators for the class of nonmonotone problems covered by \eqref{eq:GE} under Assumption~\ref{as:A1}.

\beforesubsec
\subsection{\textbf{The class of biased variance-reduced estimators}}\label{subsec:biased_vr_estimators}
\aftersubsec
We introduce the following class of biased variance-reduced estimators $\widetilde{S}^k$ for $S^k$ defined by \eqref{eq:Sk_term}.

\revise{
\begin{definition}\label{de:Vr_estimator2}
Suppose that $\sets{x^k}_{k\geq 0}$ is generated by \eqref{eq:iFRBS_scheme} and $S^k$ is defined by \eqref{eq:Sk_term}.
For each $k\geq 0$, let $\widetilde{S}^k$ be an $\mathcal{F}_{k+1}$-measurable stochastic estimator of $S^k$, constructed using the fresh randomness sampled at iteration $k$. 
Let  $e^k:=\widetilde{S}^k - S^k$ be the \textit{approximation error}.
Suppose that there exist an $\mathcal{F}_{k+1}$-measurable nonnegative random variable $\Delta_k$, four nonnegative constants $\kappa \in (0, 1]$, $
\tau \in (0, 1]$, $\Theta \geq 0$, and $\hat{\Theta} \geq 0$, and a deterministic nonnegative sequence $\sets{\delta_k}_{k\geq 0}$ such that, almost surely:
\begin{equation}\label{eq:Vr_estimator2}
\arraycolsep=0.2em
\left\{\begin{array}{lcl}
\mathbb{E}_k[ e^k  ] &= & (1-\tau)e^{k-1}, \vspace{1ex}\\
\mathbb{E}_k[ \norms{e^k}^2 ] &\leq & \mathbb{E}_k[ \Delta_k ], \vspace{1ex}\\
\mathbb{E}_k[ \Delta_k ] &\leq & (1 - \kappa)\Delta_{k-1} + \Theta \norms{x^k - x^{k-1}}^2 + \hat{\Theta}\norms{x^{k-1} - x^{k-2} }^2 + \delta_k.
\end{array}\right.
\end{equation}
Here, we choose $\Delta_{-1} := 0$ and  $x^{-1} = x^{-2} := x^0$.
We also choose $(1-\tau)e^{-1} = s^0$ for a given vector $s^0$.
\end{definition}
}
Clearly, if $\tau \in (0, 1)$, then $\widetilde{S}^k$ in Definition~\ref{de:Vr_estimator2} is biased.
\revise{
Moreover, from \eqref{eq:Vr_estimator2}, we have $\Exps{0}{e^0} = (1-\tau)e^{-1} = s^0$ and $\Exps{0}{\norms{e^0}^2} \leq \Exps{0}{\Delta_0} \leq \delta_0$.
Therefore, we can show that $\norms{s^0}^2 = \norms{\Exps{0}{e^0}}^2 \leq \Exps{0}{\norms{e^0}^2} \leq \delta_0$, leading to $\norms{s^0}  \leq \sqrt{\delta_0}$.
}
Next, we show that there are at least three estimators satisfying Definition~\ref{de:Vr_estimator2}.
The first one is the loopless SARAH estimator (L-SARAH) based on \cite{li2020page}, while the second and third estimators are two instances of the Hybrid-SGD class  from \cite{Tran-Dinh2019a}.

\vspace{1ex}
\noindent\textbf{$\mathrm{(a)}$~The L-SARAH estimator.}
The SARAH estimator was introduced in \cite{nguyen2017sarah}, and its loopless variant was proposed in \cite{li2020page}.
In this paper, we extend it to the setting of \eqref{eq:GE}.

Given $\sets{x^k}$ and an i.i.d. mini-batch $\mcal{B}_k$ of a fixed size $b$, for all $k \geq 1$, we construct $\widetilde{S}^k$ as follows:
\begin{equation}\label{eq:sarah_estimator}
\widetilde{S}^k := \begin{cases}
\widetilde{S}^{k-1} + 2G_{\mcal{B}_k}x^k - 3G_{\mcal{B}_k}x^{k-1} + G_{\mcal{B}_k}x^{k-2} &\textrm{with probability $1 -\mbf{p}$}, \vspace{1ex}\\
\overline{S}^k &\textrm{with probability $\mbf{p}$},
\end{cases}
\tag{SARAH}
\end{equation} 
where $x^{-1} = x^0$, $\widetilde{S}^0 := \overline{S}^0$, $G_{\mcal{B}_k}x = \frac{1}{b}\sum_{\xi\in\mcal{B}_k}\mbf{G}(x,\xi)$,  and $\overline{S}^k$ is an unbiased estimator of $S^k$ such that $\Expsn{k}{\overline{S}^k} = S^k$ and $\Expsn{k}{\norms{\overline{S}^k - S^k}^2} \leq \sigma_k^2$ for some $\sigma_k \geq 0$.
We have at least two options of $\overline{S}^k$.
\begin{compactitem}
\item[(i)](\textit{Exact evaluation}). We choose $\overline{S}^k = S^k = 2Gx^k - Gx^{k-1}$ as the exact evaluation of $S^k$.
In this case, we have $\Expsn{k}{\norms{\overline{S}^k - S^k}^2} = 0$, leading to $\sigma_k = 0$.
\item[(ii)](\textit{Mega-batch evaluation}). We set $\overline{S}^k = \frac{1}{n_k}\sum_{\xi\in\mcal{M}_k}(2\mbf{G}(x^k, \xi) - \mbf{G}(x^{k-1}, \xi) )$, where $\mcal{M}_k$ is a mega-batch of size $n_k$.
In this case, we get $\sigma_k^2 = \frac{\sigma^2}{n_k}$ for $\sigma$ given in Assumption~\ref{as:A1}(b).
\end{compactitem}
We have the following lemma, whose proof can be found in Appendix~\ref{apdx:le:sarah_estimator}.

\begin{lemma}\label{le:sarah_estimator}
Let $\widetilde{S}^k$ be an L-SARAH estimator generated by \eqref{eq:sarah_estimator} to approximate $S^k$ defined by \eqref{eq:Sk_term}.
Then, it satisfies Definition~\ref{de:Vr_estimator2} with $\tau = \mbf{p}$, $\kappa := \mbf{p}$, $\Theta = \frac{8(1-\mbf{p})L^2}{b}$, $\hat{\Theta} = \frac{2(1-\mbf{p})L^2}{b}$, and $\delta_k := \mbf{p}\sigma^2_k$.
\end{lemma}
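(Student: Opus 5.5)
We set $\Delta_k := \norms{e^k}^2$, so the second condition of \eqref{eq:Vr_estimator2} holds trivially. The remaining work is the bias identity and the recursion for $\mathbb{E}_k[\norms{e^k}^2]$. Throughout, we condition on $\Fc_k$, under which $x^k, x^{k-1}, x^{k-2}$ and $\widetilde{S}^{k-1}$ (hence $e^{k-1}$) are fixed. The coin flip (with probability $\mbf{p}$), the mini-batch $\mcal{B}_k$, and the mega-batch $\mcal{M}_k$ are sampled independently.

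\emph{Bias.} A direct computation gives $S^k - S^{k-1} = 2Gx^k - 3Gx^{k-1} + Gx^{k-2}$. In the SARAH branch we can therefore write
\[
e^k = e^{k-1} + \big(2G_{\mcal{B}_k}x^k - 3G_{\mcal{B}_k}x^{k-1} + G_{\mcal{B}_k}x^{k-2}\big) - \big(S^k - S^{k-1}\big) =: e^{k-1} + u^k,
\]
and $\mathbb{E}_{\mcal{B}_k}[u^k] = 0$. In the other branch, $e^k = \overline{S}^k - S^k$, which has zero conditional mean. Averaging over the coin gives $\mathbb{E}_k[e^k] = (1-\mbf{p})e^{k-1}$, so $\tau = \mbf{p}$. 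For $k = 0$ we have $\widetilde{S}^0 = \overline{S}^0$, so $\mathbb{E}_0[e^0] = 0$; this is consistent with the convention $s^0 = 0$.

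\emph{Variance recursion.} Since $\mathbb{E}_{\mcal{B}_k}[u^k] = 0$ and $e^{k-1}$ is fixed, the cross term vanishes and
\[
\mathbb{E}_k[\norms{e^k}^2] = (1-\mbf{p})\big(\norms{e^{k-1}}^2 + \mathbb{E}_k[\norms{u^k}^2]\big) + \mbf{p}\,\mathbb{E}_k[\norms{\overline{S}^k - S^k}^2].
\]
The last expectation is at most $\sigma_k^2$, which yields $\delta_k = \mbf{p}\sigma_k^2$. The term $u^k$ is the centered mean of $b$ i.i.d. copies of $X_\xi := 2[\mbf{G}(x^k,\xi) - \mbf{G}(x^{k-1},\xi)] - [\mbf{G}(x^{k-1},\xi) - \mbf{G}(x^{k-2},\xi)]$. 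Hence
\[
\mathbb{E}_k[\norms{u^k}^2] \le \tfrac{1}{b}\mathbb{E}_\xi[\norms{X_\xi}^2] \le \tfrac{1}{b}\big(8L^2\norms{x^k - x^{k-1}}^2 + 2L^2\norms{x^{k-1} - x^{k-2}}^2\big),
\]
where the last step uses $\norms{a+b}^2 \le 2\norms{a}^2 + 2\norms{b}^2$ and the mean-square Lipschitz condition \eqref{eq:Lipschitz_cond}; in the finite-sum setting one uses \eqref{eq:Lipschitz_cond2} instead, with sampling with replacement. Multiplying by $(1-\mbf{p})$ gives $\Theta = 8(1-\mbf{p})L^2/b$, $\hat{\Theta} = 2(1-\mbf{p})L^2/b$, and $\kappa = \mbf{p}$. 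Because $\Delta_{k-1} = \norms{e^{k-1}}^2$ is $\Fc_k$-measurable, this is exactly the third line of \eqref{eq:Vr_estimator2}.

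The only point requiring care is the initialization: at $k = 0$ we must check $\mathbb{E}_0[\Delta_0] \le \delta_0$, which holds because $\widetilde{S}^0 = \overline{S}^0$ is drawn from the mega-batch branch. Strictly, this gives $\mathbb{E}_0[\Delta_0] \le \sigma_0^2$ rather than $\mbf{p}\sigma_0^2$. We handle this by absorbing the discrepancy into $\delta_0$ (a harmless constant), or we note it is zero under exact evaluation. Everything else is the routine variance decomposition above.
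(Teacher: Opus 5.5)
Your proof is correct and follows the paper's argument essentially step for step. Both set $\Delta_k := \|e^k\|^2$, split on the coin flip, and use the zero-mean SARAH increment to kill the cross term. Both then bound that increment's second moment by $\frac{1}{b}\mathbb{E}_\xi\|X_\xi\|^2 \le \frac{1}{b}\big(8L^2\|x^k-x^{k-1}\|^2 + 2L^2\|x^{k-1}-x^{k-2}\|^2\big)$.

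Your remark on $k=0$ is also right, and the paper's proof passes over it silently. Since $\widetilde{S}^0 = \overline{S}^0$ deterministically, $\Delta_{-1}=0$, and $x^{-2}=x^{-1}=x^0$, the third condition at $k=0$ forces $\mathbb{E}_0[\|e^0\|^2] \le \delta_0$. So one needs $\delta_0 = \sigma_0^2$ rather than $\mathbf{p}\sigma_0^2$. This is irrelevant under exact evaluation, and otherwise it only adds $(1-\mathbf{p})\sigma_0^2$ to $S_K$.
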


\vspace{1ex}
\noindent\textbf{$\mathrm{(b)}$~The Hybrid-SGD estimator.}
The class of Hybrid-SGD estimators was introduced in \cite{Tran-Dinh2019a} for nonconvex optimization, and now we extend it to handle \eqref{eq:GE}.

Given $\sets{x^k}_{k\geq 0}$ generated by our method, one \ncmt{i.i.d.}  mini-batch $\mcal{B}_k$ of size $b$, and another \ncmt{i.i.d.} mini-batch $\hat{\mcal{B}}_k$ of size $\hat{b}$ ($\hat{\mcal{B}}_k$ may overlap with $\mcal{B}_k$), we construct $\widetilde{S}^k$ as follows:
\begin{equation}\label{eq:hsgd_estimator}\tag{HSGD}
\widetilde{S}^k :=  (1 - \omega)\big[ \widetilde{S}^{k-1} + 2G_{\mcal{B}_k}x^k - 3G_{\mcal{B}_k}x^{k-1} + G_{\mcal{B}_k}x^{k-2} \big] + \omega\overline{S}_{\hat{\mcal{B}}_k}^k,
\end{equation} 
where $\omega \in \ncmt{(0, 1]}$, $x^{-2} = x^{-1} = x^0$, $\widetilde{S}^0$ is a given initial estimator of $S^0$, $G_{\mcal{B}_k}x = \frac{1}{b}\sum_{\xi\in\mcal{B}_k}\mbf{G}(x,\xi)$,  and $\overline{S}^k_{\hat{\mcal{B}}_k}$ is an unbiased estimator of $S^k$ constructed from $\hat{\mcal{B}}_k$ such that $\Expsn{\hat{\mcal{B}}_k}{\overline{S}^k_{\hat{\mcal{B}}_k}} = S^k$ and $\sigma_k^2 := \Expsn{\hat{\mcal{B}}_k}{\norms{\overline{S}^k_{\hat{\mcal{B}}_k} - S^k}^2} \leq \bar{\sigma}_k^2$.
We have the following lemma, whose proof is deferred to Appendix~\ref{apdx:le:hsgd_estimator}.

\begin{lemma}\label{le:hsgd_estimator}
Let $\widetilde{S}^k$ be the Hybrid-SGD estimator generated by \eqref{eq:hsgd_estimator} to approximate $S^k$ defined by \eqref{eq:Sk_term}.
Then, it satisfies Definition~\ref{de:Vr_estimator2} with $\tau := \ncmt{\omega} \in \ncmt{(0, 1]}$, $\kappa := \ncmt{\omega(2-\omega)} \in \ncmt{(0, 1]}$, $\Theta = \frac{8C(1-\omega)^2L^2}{b}$, $\hat{\Theta} = \frac{2C(1-\omega)^2L^2}{b}$, and $\delta_k := C\omega^2 \sigma^2_k$, where $C := 1$ if $\mcal{B}_k$ is independent of $\hat{\mcal{B}}_k$, and $C := 2$, otherwise.
\end{lemma}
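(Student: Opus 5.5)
We verify the three conditions of Definition~\ref{de:Vr_estimator2} directly from the recursion \eqref{eq:hsgd_estimator}. Write $D^k_{\mcal{B}_k} := 2G_{\mcal{B}_k}x^k - 3G_{\mcal{B}_k}x^{k-1} + G_{\mcal{B}_k}x^{k-2}$. Its conditional mean is $D^k := 2Gx^k - 3Gx^{k-1} + Gx^{k-2} = S^k - S^{k-1}$. Subtracting $S^k = (1-\omega)(S^{k-1} + D^k) + \omega S^k$ from \eqref{eq:hsgd_estimator} gives the error recursion
\begin{equation*}
e^k = (1-\omega)e^{k-1} + (1-\omega)\big(D^k_{\mcal{B}_k} - D^k\big) + \omega\big(\overline{S}^k_{\hat{\mcal{B}}_k} - S^k\big).
\end{equation*}
Since $e^{k-1}$ is $\Fc_k$-measurable and both bracketed noise terms have zero conditional mean, the first condition $\mathbb{E}_k[e^k] = (1-\omega)e^{k-1}$ follows with $\tau = \omega$. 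For $k=0$ we use the convention $(1-\tau)e^{-1} = s^0 := \mathbb{E}_0[e^0]$.

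For the second moment, we expand $\norms{e^k}^2$ and take $\mathbb{E}_k$. The cross terms between $e^{k-1}$ and the zero-mean noises vanish, so
\begin{equation*}
\mathbb{E}_k[\norms{e^k}^2] = (1-\omega)^2\norms{e^{k-1}}^2 + \mathbb{E}_k\big[\norms{(1-\omega)(D^k_{\mcal{B}_k} - D^k) + \omega(\overline{S}^k_{\hat{\mcal{B}}_k} - S^k)}^2\big].
\end{equation*}
If $\mcal{B}_k$ and $\hat{\mcal{B}}_k$ are independent, the remaining cross term also vanishes, which gives $C=1$. Otherwise we use $\norms{a+b}^2 \le 2\norms{a}^2 + 2\norms{b}^2$, which gives $C=2$.

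We then bound the variance of the mini-batch difference. Independence within the i.i.d.\ batch gives
\begin{equation*}
\mathbb{E}_k\norms{D^k_{\mcal{B}_k} - D^k}^2 \le \tfrac{1}{b}\,\mathbb{E}_\xi\norms{2(\mbf{G}(x^k,\xi) - \mbf{G}(x^{k-1},\xi)) - (\mbf{G}(x^{k-1},\xi) - \mbf{G}(x^{k-2},\xi))}^2.
\end{equation*}
Applying Young's inequality $\norms{2a - b}^2 \le 8\norms{a}^2 + 2\norms{b}^2$ and then \eqref{eq:Lipschitz_cond} bounds this by $\frac{1}{b}\big(8L^2\norms{x^k - x^{k-1}}^2 + 2L^2\norms{x^{k-1} - x^{k-2}}^2\big)$. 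The other noise term is bounded by $\omega^2\sigma_k^2$.

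Next we define $\Delta_k := \norms{e^k}^2$, so the second condition holds trivially. The third condition becomes
\begin{equation*}
\mathbb{E}_k[\Delta_k] \le (1-\omega)^2\Delta_{k-1} + \tfrac{8C(1-\omega)^2L^2}{b}\norms{x^k - x^{k-1}}^2 + \tfrac{2C(1-\omega)^2L^2}{b}\norms{x^{k-1} - x^{k-2}}^2 + C\omega^2\sigma_k^2.
\end{equation*}
Since $(1-\omega)^2 = 1 - \omega(2-\omega)$, this is exactly the claimed form with $\kappa = \omega(2-\omega) \in (0,1]$. The initial step $k=0$ needs separate care: with $x^{-2} = x^{-1} = x^0$ the difference terms vanish, and $\Delta_{-1} := 0$ is consistent with the convention for $s^0$.

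The step most likely to cause trouble is the case where $\mcal{B}_k$ and $\hat{\mcal{B}}_k$ overlap. There the cross term does not vanish, and we must confirm that the factor $2$ multiplies both the $\Theta$-terms and $\delta_k$, which is what yields the common constant $C$.
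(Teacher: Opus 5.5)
Your proof is correct and follows essentially the same route as the paper. It uses the same error recursion $e^k=(1-\omega)e^{k-1}+(1-\omega)(D^k_{\mcal{B}_k}-D^k)+\omega(\overline{S}^k_{\hat{\mcal{B}}_k}-S^k)$, the same cancellation of cross terms with $e^{k-1}$, the same $\frac{8L^2}{b},\frac{2L^2}{b}$ mini-batch bound, Young's inequality for overlapping batches (giving the common factor $C=2$), and $\Delta_k:=\norms{e^k}^2$ with $(1-\omega)^2=1-\omega(2-\omega)$.

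The only loose point is your $k=0$ remark. With $\Delta_{-1}=0$, the third condition at $k=0$ actually requires $\mathbb{E}_0[\norms{e^0}^2]\le C\omega^2\sigma_0^2$, which is a restriction on the given $\widetilde{S}^0$ (essentially $e^{-1}=0$, as the paper later assumes in Theorem~\ref{th:iFRBS_convergence2}). The paper's own proof also treats only the generic step, so this gap is shared rather than specific to your argument.
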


\vspace{1ex}
\noindent\textbf{$\mathrm{(c)}$~The Hybrid-SVRG estimator.}
We modify the \eqref{eq:hsgd_estimator} estimator by replacing the unbiased SGD term $\overline{S}^k_{\hat{\mathcal{B}}_k}$ by the SVRG estimator \eqref{eq:svrg_estimator}, leading to the following Hybrid-SVRG estimator.
Given $\sets{x^k}_{k\geq 0}$ generated by our method, one \ncmt{i.i.d.} mini-batch $\mcal{B}_k$ of size $b$, and another \ncmt{i.i.d.}  mini-batch $\hat{\mcal{B}}_k$ of size $\hat{b}$ ($\hat{\mcal{B}}_k$ can overlap with $\mcal{B}_k$), we construct $\widetilde{S}^k$ as follows:
\begin{equation}\label{eq:hsvrg_estimator}\tag{HSVRG}
\widetilde{S}^k :=  (1 - \omega)\big[ \widetilde{S}^{k-1} + 2G_{\mcal{B}_k}x^k - 3G_{\mcal{B}_k}x^{k-1} + G_{\mcal{B}_k}x^{k-2} \big] + \omega\overline{S}_{\hat{\mcal{B}}_k}^k,
\end{equation} 
where $\omega \in \ncmt{(0, 1]}$, $x^{-2} = x^{-1} = x^0$, $\widetilde{S}^0$ is a given initial estimator of $S^0$, $G_{\mcal{B}_k}x = \frac{1}{b}\sum_{\xi\in\mcal{B}_k}\mbf{G}(x,\xi)$,  and $\overline{S}^k_{\hat{\mcal{B}}_k}$ is the \eqref{eq:svrg_estimator} estimator.
We have the following lemma, whose proof is in Appendix~\ref{apdx:le:hsvrg_estimator}.

\begin{lemma}\label{le:hsvrg_estimator}
Let $\widetilde{S}^k$ be the Hybrid-SVRG estimator generated by \eqref{eq:hsvrg_estimator} to approximate $S^k$. 
Then, it satisfies Definition~\ref{de:Vr_estimator2} with 
\begin{equation*}
\begin{array}{ll}
&\tau := \omega \in (0,1], 
\quad \kappa := \min\sets{\omega(2-\omega), \frac{\mbf{p}}{4}} \in (0,1], 
\quad \Theta := 8L^2 \big[ \frac{C(1-\omega)^2}{b} + \frac{8\ncmt{(1+\mu)}\omega^2}{\hat{b}\mbf{p}^2} \big], \vspace{1ex}\\
&\hat{\Theta} := 2L^2 \big[\frac{C(1-\omega)^2}{b} + \frac{8\ncmt{(1+\mu)}\omega^2}{\hat{b}\mbf{p}^2} \big], 
\quad \text{and} \quad \delta_k := \frac{2(1+\mu)\omega^2 \sigma^2}{\mu n_k},
\end{array} 
\end{equation*}
where
\begin{compactitem}[$\diamond$]
	\item $C := 1$ if $\mcal{B}_k$ is independent of $\hat{\mcal{B}}_k$, and $C := 2$, otherwise;
	\item \ncmt{ $\mu := 0$ if $\overline{G}w^k$ is evaluated exactly in the SVRG term $\overline{S}^k_{\hat{\mcal{B}}_k}$ with $\overline{G}w^k = Gw^k$; and $\mu > 0$ arbitrarily if $\overline{G}w^k = \frac{1}{n_k}\sum_{\xi \in \Mc_k} \mbf{G}(w^k, \xi)$ is a mega-batch estimator of size $n_k$ in the SVRG term $\overline{S}^k_{\hat{\mcal{B}}_k}$.}
\end{compactitem}
\end{lemma}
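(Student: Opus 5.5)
We prove Lemma~\ref{le:hsvrg_estimator} by combining the Hybrid-SGD analysis behind Lemma~\ref{le:hsgd_estimator} with the L-SVRG variance recursion behind Lemma~\ref{le:svrg_estimator}. Write $\overline{S}^k := \overline{S}^k_{\hat{\mcal{B}}_k}$ for the \eqref{eq:svrg_estimator} term and $\Lambda^k := 2G_{\mcal{B}_k}x^k - 3G_{\mcal{B}_k}x^{k-1} + G_{\mcal{B}_k}x^{k-2}$. Since $S^k - S^{k-1} = 2Gx^k - 3Gx^{k-1} + Gx^{k-2}$, we have the error recursion
\begin{equation*}
e^k = (1-\omega)e^{k-1} + (1-\omega)\big[\Lambda^k - (S^k - S^{k-1})\big] + \omega(\overline{S}^k - S^k).
\end{equation*}

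\textbf{Step 1: bias condition.} Conditioned on $\Fc_k$, $\mbb{E}_k[\Lambda^k] = S^k - S^{k-1}$ and $\mbb{E}_k[\overline{S}^k] = S^k$ by unbiasedness of L-SVRG. This holds with or without the mega-batch, since $\overline{G}$ is unbiased. Hence $\mbb{E}_k[e^k] = (1-\omega)e^{k-1}$, so $\tau = \omega$.

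\textbf{Step 2: second moment.} Expand $\norms{e^k}^2$. The cross terms between $(1-\omega)e^{k-1}$ and the zero-mean noises vanish under $\mbb{E}_k$. For the two noises, we use $\norms{a+b}^2 \le 2\norms{a}^2 + 2\norms{b}^2$ when $\mcal{B}_k$ and $\hat{\mcal{B}}_k$ may overlap ($C=2$); when they are independent their cross term vanishes ($C=1$). This gives
\begin{equation*}
\mbb{E}_k\norms{e^k}^2 \le (1-\omega)^2\norms{e^{k-1}}^2 + C(1-\omega)^2\,\mbb{E}_k\norms{\Lambda^k - \mbb{E}_k\Lambda^k}^2 + C\omega^2\,\mbb{E}_k\norms{\overline{S}^k - S^k}^2.
\end{equation*}
By mean-square Lipschitz continuity and $2a-3b+c = 2(a-b)-(b-c)$,
\begin{equation*}
\mbb{E}_k\norms{\Lambda^k - \mbb{E}_k\Lambda^k}^2 \le \tfrac{1}{b}\big(8L^2\norms{x^k-x^{k-1}}^2 + 2L^2\norms{x^{k-1}-x^{k-2}}^2\big).
\end{equation*}

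\textbf{Step 3: Lyapunov quantity.} The SVRG variance is not controlled by consecutive differences alone, since it depends on $\norms{x^k - \tilde{x}^k}^2$. We therefore borrow the potential $D_k$ from the proof of Lemma~\ref{le:svrg_estimator}, i.e., the quantity $\Delta^{\mathrm{svrg}}_k$ that dominates $\mbb{E}_k\norms{\overline{S}^k-S^k}^2$ and obeys
\begin{equation*}
\mbb{E}_k[D_k] \le \big(1-\tfrac{\mbf{p}}{2}\big)D_{k-1} + \tfrac{16(1+\mu)L^2}{\hat{b}\mbf{p}}\norms{x^k-x^{k-1}}^2 + \tfrac{4(1+\mu)L^2}{\hat{b}\mbf{p}}\norms{x^{k-1}-x^{k-2}}^2 + \tfrac{(1+\mu)\mbf{p}\sigma^2}{2\mu n_k}.
\end{equation*}
We then define
\begin{equation*}
\Delta_k := \norms{e^k}^2 + \tfrac{A\omega^2}{\mbf{p}} D_k
\end{equation*}
for a suitable constant $A$ (we expect $A=4$ or so). The term $C\omega^2 D_k$ from Step~2 (using $C\le 2$) is absorbed by the decay $\tfrac{\mbf{p}}{2}\cdot\tfrac{A\omega^2}{\mbf{p}}D_{k-1}$, keeping a net contraction of at least $\tfrac{\mbf{p}}{4}$ on the $D$-part. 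The $e$-part contracts by $(1-\omega)^2 = 1-\omega(2-\omega)$. Together these give $\kappa = \min\{\omega(2-\omega),\tfrac{\mbf{p}}{4}\}$. The coefficients then collect as follows:
\begin{itemize}
\item $\norms{x^k-x^{k-1}}^2$ receives $8L^2\big[\tfrac{C(1-\omega)^2}{b} + \tfrac{8(1+\mu)\omega^2}{\hat{b}\mbf{p}^2}\big]$, since $\tfrac{A}{\mbf{p}}\cdot\tfrac{16}{\hat{b}\mbf{p}}$ with $A=4$ gives $\tfrac{64}{\hat{b}\mbf{p}^2}$;
\item $\norms{x^{k-1}-x^{k-2}}^2$ receives $\hat{\Theta}$ in the same way;
\item the mega-batch term $\tfrac{A\omega^2}{\mbf{p}}\cdot\tfrac{(1+\mu)\mbf{p}\sigma^2}{2\mu n_k}$ yields $\delta_k = \tfrac{2(1+\mu)\omega^2\sigma^2}{\mu n_k}$.
\end{itemize}
In the exact case $\mu=0$, the $\sigma$-term is absent and $\delta_k=0$.

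\textbf{Step 4 and main obstacle.} We then verify $\mbb{E}_k\norms{e^k}^2 \le \mbb{E}_k[\Delta_k]$, which is trivial since $D_k \ge 0$, and check the initialization conventions $\Delta_{-1}=0$ and $\tilde{x}^0 = x^0$. The delicate step is Step~3. One must make sure the SVRG potential of Lemma~\ref{le:svrg_estimator} genuinely dominates $\mbb{E}_k\norms{\overline{S}^k-S^k}^2$ at the same time index, with matching measurability ($\Fc_{k+1}$ versus the snapshot update), and that the weight $A\omega^2/\mbf{p}$ both absorbs $C\omega^2 D_k$ and still leaves contraction $\mbf{p}/4$. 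If the constant bookkeeping fails, we would adjust $A$ and accept slightly different numerical constants. The structure $\kappa=\min\{\omega(2-\omega),\mbf{p}/4\}$ and the $\omega^2/(\hat{b}\mbf{p}^2)$ scaling should be robust to this.
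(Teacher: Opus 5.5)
Your route is the paper's route: the recursion $e^k=(1-\omega)e^{k-1}+(1-\omega)X^k+\omega Y^k$, the L-SVRG potential $D_k$ from Lemma~\ref{le:svrg_estimator}, and a combined potential $\Delta_k=\|e^k\|^2+(\text{weight})\,D_k$ with weight of order $\omega^2/\mathbf{p}$. The paper uses the weight $c-\omega^2$ with $c=4\omega^2/\mathbf{p}$; you use $4\omega^2/\mathbf{p}$. Both give the same $\Theta$, $\hat{\Theta}$ and $\delta_k$. For independent batches ($C=1$) your bookkeeping with $A=4$ is complete and reproduces every stated constant. The $D$-part only needs $(1+\frac{4}{\mathbf{p}})(1-\frac{\mathbf{p}}{2})\le(1-\frac{\mathbf{p}}{4})\frac{4}{\mathbf{p}}$, which reduces to $-\frac{\mathbf{p}}{2}\le 0$.

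The step that fails is your claim that the same $A=4$ absorbs $C\omega^2 D_k$ ``using $C\le 2$''. In general the $D$-part requires
\[
\Big(C+\tfrac{A}{\mathbf{p}}\Big)\Big(1-\tfrac{\mathbf{p}}{2}\Big)\le\Big(1-\tfrac{\mathbf{p}}{4}\Big)\tfrac{A}{\mathbf{p}}
\quad\Longleftrightarrow\quad A\ge 4C\Big(1-\tfrac{\mathbf{p}}{2}\Big).
\]
With $C=2$ and $A=4$ this holds only if $\mathbf{p}\ge 1$. The actual contraction of the $D$-part is then $\mathbf{p}^2/4$, not $\mathbf{p}/4$.

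Taking $A=4C$ repairs the contraction. However, it multiplies by $C$ both the $\omega^2/(\hat{b}\mathbf{p}^2)$ part of $\Theta,\hat{\Theta}$ and $\delta_k$. For overlapping batches you therefore obtain $\Theta=8L^2C\big[\frac{(1-\omega)^2}{b}+\frac{8(1+\mu)\omega^2}{\hat{b}\mathbf{p}^2}\big]$ (and similarly $\hat{\Theta}$) and $\delta_k=\frac{2C(1+\mu)\omega^2\sigma^2}{\mu n_k}$. These are not the stated values.

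The paper's own case (b) contains the same slip. It keeps $\Delta_k=\|e^k\|^2+(\frac{4}{\mathbf{p}}-1)\omega^2D_k$ even though the bound now carries $2\omega^2\,\mathbb{E}_k[D_k]$. With that weight the $D$-part contracts only at rate $\frac{\mathbf{p}^2}{2(4-\mathbf{p})}<\frac{\mathbf{p}}{4}$.

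So neither argument establishes the $C=2$ constants as stated. The corrected constants (with $A=4C$) only change absolute constants in Corollary~\ref{co:hsvrg_compexity}, where only orders matter.
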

Here, we use the convention $\frac{c}{0} := 0$ for any $c > 0$.
Note that if we choose $\Bc_k = \hat{\Bc}_k$, then the \eqref{eq:hsvrg_estimator} estimator requires $2b$ evaluations of $\mbf{G}(\cdot,\xi)$ per iteration \ncmt{(with probability $1 - \mbf{p}$)}, which is the same as in  \eqref{eq:hsgd_estimator} as well as in \eqref{eq:svrg_estimator}.

\beforesubsec
\subsection{\textbf{Convergence analysis of \ref{eq:iFRBS_scheme} with biased variance-reduced estimators}}\label{subsec:bvrFRBS_method}
\aftersubsec
We analyze the convergence of \eqref{eq:iFRBS_scheme} using a biased estimator $\widetilde{S}^k$ satisfying Definition~\ref{de:Vr_estimator2}.
For this purpose, we consider the following intermediate function:
\begin{equation}\label{eq:Qk_func}
\arraycolsep=0.2em
\begin{array}{lcl}
\Qc_k &:= & \norms{y^k - x^{\star}}^2 + (1 - \gamma \eta)\norms{y^k - x^{k-1}}^2 + \frac{2\eta}{c_1\tau}\norms{x^{k-1} -  y^{k-1}}^2  +  \frac{2\eta( 1-\tau) }{\tau}\iprods{e^{k-1}, x^{\star} - y^{k-1}},
\end{array}
\end{equation}
where $x^{\star} \in \zerm{M}{\Phi}$, and $\gamma > 0$ and $c_1 > 0$ are given constants, determined later.

We start our analysis with the following two technical lemmas,  whose proofs can be found in  Appendix~\ref{apdx:le:A2_key_estimate1} and Appendix~\ref{apdx:le:A2_key_estimate2}, respectively.

\begin{lemma}\label{le:A2_key_estimate1}
Suppose that Assumption~\ref{as:A1} holds for \eqref{eq:GE}.
Let $\sets{x^k}$ be generated by \eqref{eq:iFRBS_scheme} using $\widetilde{S}^k$ that satisfies Definition~\ref{de:Vr_estimator2}.
Then, for any positive constants $\gamma, c, c_1, c_2$, we have
\begin{equation}\label{eq:A2_key_estimate1}
\arraycolsep=0.2em
\begin{array}{lcl}
\Exps{k}{ \Qc_{k+1} }  &\leq &\Qc_k  - \Big[ 1 - \frac{2\eta}{c_1\tau}  - \frac{4\rho}{\eta} - \frac{2}{c_2} - 2L^2\eta\big(c + 4\rho + \frac{1}{\gamma} + \frac{2\eta}{c_2}\big) \Big] \norms{y^k - x^k}^2 \vspace{1ex}\\
&& - {~} \Big\{ 1 - \gamma \eta - \Big[ \frac{2\eta}{c_1 \tau} + 2L^2\eta\big(c + 4\rho + \frac{1}{\gamma} + \frac{2\eta}{c_2}\big)   \Big] \Big\} \norms{y^k - x^{k-1}}^2 \vspace{1ex}\\
&& - {~} cL^2\eta \norms{x^k -  x^{k-1} }^2 + \eta\big( 2\eta + \frac{c_1}{\tau} + c_2\eta \big)\Exps{k}{\norms{e^k}^2}.
\end{array} 
\end{equation}
In addition, we also have
\begin{equation}\label{eq:A2_Qk_lower_bound}
\hspace{-2ex}
\arraycolsep=0.2em
\begin{array}{lcl}
\Qc_k & \geq &  \frac{3}{4} \norms{y^k - x^{\star}}^2 + \frac{1 - 2\gamma \eta}{2} \norms{y^k - x^{k-1}}^2  + \big( \frac{2\eta}{c_1\tau} - \frac{1}{2} \big) \norms{x^{k-1} -  y^{k-1}}^2  - \frac{8(1-\tau)^2\eta^2}{\tau^2}\norms{e^{k-1}}^2.
\end{array} 
\hspace{-1ex}
\end{equation}
\end{lemma}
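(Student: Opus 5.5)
Starting point: the one-step estimate \eqref{eq:key_estimate1} of Lemma~\ref{le:key_estimate1}, with the weak-Minty condition used to lower bound $2\eta\iprods{w^k, x^k - x^{\star}} \geq -2\eta\rho\norms{w^k}^2$. We then bound $\norms{w^k}^2$ through $\eta w^k = (y^k - x^k) + \eta(Gx^k - Gx^{k-1})$ and Lipschitz continuity; this produces the $\frac{4\rho}{\eta}$ and $8L^2\eta\rho$ terms, as in Lemma~\ref{le:key_estimate2}. For $\norms{Gx^k - Gx^{k-1}}^2 \leq L^2\norms{x^k - x^{k-1}}^2$ we split $x^k - x^{k-1} = (x^k - y^k) + (y^k - x^{k-1})$, which gives the $2L^2\eta(\cdot)$ coefficients on both $\norms{y^k - x^k}^2$ and $\norms{y^k - x^{k-1}}^2$. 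The term $-(1-\gamma\eta)\norms{y^{k+1} - x^k}^2$ matches the second term of $\Qc_{k+1}$ and cancels against it. We also add and subtract $cL^2\eta\norms{x^k - x^{k-1}}^2$ to create the slack term, absorbing the positive part into the same split; this explains the appearance of $c$ inside the bracket.

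The new work is the bias terms $-2\eta\iprods{e^k, y^k - x^{\star}} + 2\eta^2\iprods{e^k, w^k}$.
\begin{itemize}
\item \emph{Cross term $2\eta^2\iprods{e^k, w^k}$.} Split $\eta w^k$ as above and apply Young: $2\eta\iprods{e^k, y^k - x^k} \leq c_2\eta^2\norms{e^k}^2 + \frac{1}{c_2}\norms{y^k - x^k}^2$ (after multiplying through by $\eta$ appropriately). The Lipschitz part yields $\eta^2\norms{e^k}^2$ plus $\eta^2 L^2\norms{x^k - x^{k-1}}^2$. This is where the $\frac{2}{c_2}$, $\frac{2\eta}{c_2}L^2\eta$, and $c_2\eta^2\norms{e^k}^2$ terms arise.
\item \emph{Inner product with $y^k - x^{\star}$.} Take $\Exps{k}{\cdot}$ and use $\Exps{k}{e^k} = (1-\tau)e^{k-1}$, giving $-2\eta(1-\tau)\iprods{e^{k-1}, y^k - x^{\star}}$. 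Write $y^k = y^{k-1} + (y^k - y^{k-1})$. The part $2\eta(1-\tau)\iprods{e^{k-1}, x^{\star} - y^{k-1}}$ telescopes against the last term of $\Qc_k$: the coefficient $\frac{2\eta(1-\tau)}{\tau}$ satisfies $\frac{1-\tau}{\tau}\cdot(1-\tau) + (1-\tau) = \frac{1-\tau}{\tau}$, so the $\Qc$ weight is reproduced. For this to work, the term must be kept in the form $\frac{2\eta(1-\tau)}{\tau}\iprods{e^k, x^{\star} - y^k}$ at level $k+1$, whose conditional expectation equals $\frac{2\eta(1-\tau)^2}{\tau}\iprods{e^{k-1}, x^{\star} - y^k}$.
\item \emph{Remainder $\iprods{e^{k-1}, y^{k-1} - y^k}$.} Since $y^k - y^{k-1} = -\eta(w^{k-1} + e^{k-1})$, this remainder is controlled by $\norms{e^{k-1}}^2$ and $\norms{w^{k-1}}^2$ terms, which bring in the shifted quantity $\norms{x^{k-1} - y^{k-1}}^2$. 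That is the reason the $\frac{2\eta}{c_1\tau}$ shifted term is in $\Qc_k$. A careful Young split with parameter $c_1/\tau$ lets the shifted term telescope: it contributes $\frac{2\eta}{c_1\tau}\norms{x^k - y^k}^2$ at $k+1$, which is subtracted in the bracket, while the error costs $\frac{c_1\eta}{\tau}\norms{e^k}^2$.
\end{itemize}

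The main obstacle is this bookkeeping of the bias term. We must pick the exact decomposition ($y^k$ versus $y^{k+1}$ in the inner product) so that every leftover lands on $\norms{e^k}^2$ at the current index. The first thing to try is to apply the conditional-bias identity directly to $\iprods{e^k, x^{\star} - y^k}$ at level $k+1$, and handle the index mismatch through $y^{k+1} - y^k$.

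For the lower bound \eqref{eq:A2_Qk_lower_bound}, we expand $y^{k-1} = y^k + \eta(w^{k-1} + e^{k-1})$ and then apply Young twice:
\[
\frac{2\eta(1-\tau)}{\tau}\iprods{e^{k-1}, x^{\star} - y^{k-1}} \geq -\frac14\norms{y^k - x^{\star}}^2 - \frac{4\eta^2(1-\tau)^2}{\tau^2}\norms{e^{k-1}}^2 - \cdots .
\]
We then absorb the $\eta w^{k-1}$ piece via \eqref{eq:FRBS_reform} into $\frac12\norms{x^{k-1} - y^{k-1}}^2$ and $\gamma\eta\norms{y^k - x^{k-1}}^2$. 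Finally, we adjust the constants to obtain $\frac34$, $\frac{1-2\gamma\eta}{2}$, $-\frac12$, and $\frac{8(1-\tau)^2\eta^2}{\tau^2}$.
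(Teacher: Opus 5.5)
Your skeleton matches the paper's: the conditional expectation of \eqref{eq:key_estimate1}, weak-Minty, Young on $\iprods{e^k, w^k}$, and the bias identity telescoped through the last two terms of $\Qc_k$. Your computation is also correct: $\Exps{k}{\iprods{e^k, y^k - x^{\star}}} = (1-\tau)\iprods{e^{k-1}, y^k - x^{\star}}$, combined with $\frac{(1-\tau)^2}{\tau} + (1-\tau) = \frac{1-\tau}{\tau}$, leaves exactly the remainder $\frac{2\eta(1-\tau)}{\tau}\iprods{e^{k-1}, y^{k-1} - y^k}$.

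The gap is in how you bound this remainder. Substituting $y^k - y^{k-1} = -\eta(w^{k-1} + e^{k-1})$ produces $+\frac{2(1-\tau)\eta^2}{\tau}\norms{e^{k-1}}^2$ and a cross term with $w^{k-1}$. Controlling that cross term needs $Gx^{k-1} - Gx^{k-2}$, i.e.\ $\norms{x^{k-1} - x^{k-2}}^2$ or $\norms{y^{k-1} - x^{k-2}}^2$. None of these quantities appears in $\Qc_k$ or on the right-hand side of \eqref{eq:A2_key_estimate1}. Nor does this route produce the cost $\frac{c_1\eta}{\tau}\Exps{k}{\norms{e^k}^2}$ you assert, or the $\frac{2\eta}{c_1\tau}$ in the coefficient of $\norms{y^k - x^{k-1}}^2$.

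The missing step is to push the remainder back onto $e^k$ and to split through $x^{k-1}$ rather than through the recursion. Since $y^{k-1} - y^k$ is $\Fc_k$-measurable,
\begin{equation*}
\frac{2\eta(1-\tau)}{\tau}\iprods{e^{k-1}, y^{k-1} - y^k} = \frac{2\eta}{\tau}\Exps{k}{\iprods{e^k, y^{k-1} - y^k}} \leq \frac{c_1\eta}{\tau}\Exps{k}{\norms{e^k}^2} + \frac{\eta}{c_1\tau}\norms{y^k - y^{k-1}}^2 .
\end{equation*}
Equivalently, apply Young on $e^{k-1}$ and then Jensen, $(1-\tau)^2\norms{e^{k-1}}^2 = \norms{\Exps{k}{e^k}}^2 \leq \Exps{k}{\norms{e^k}^2}$. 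Then use $\norms{y^k - y^{k-1}}^2 \leq 2\norms{y^k - x^{k-1}}^2 + 2\norms{x^{k-1} - y^{k-1}}^2$. The first piece gives the $\frac{2\eta}{c_1\tau}$ in the second bracket. The second piece is absorbed by the shifted term of $\Qc_k$, whose counterpart $\frac{2\eta}{c_1\tau}\norms{x^k - y^k}^2$ in $\Qc_{k+1}$ is the $-\frac{2\eta}{c_1\tau}$ in the first bracket. This is what the paper does, phrased as splitting $\Exps{k}{\iprods{e^k, y^k - x^{\star}}}$ into $\Exps{k}{\iprods{e^k, y^{k-1} - x^{\star}}} + \Exps{k}{\iprods{e^k, y^k - y^{k-1}}}$ and applying Young to the second piece.

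The same problem affects your lower bound. Pulling $\eta e^{k-1}$ out of $y^{k-1} - y^k$ creates the exact term $-\frac{2(1-\tau)\eta^2}{\tau}\norms{e^{k-1}}^2$. This is of order $1-\tau$ rather than $(1-\tau)^2$, and by itself it exceeds the allowed $\frac{8(1-\tau)^2\eta^2}{\tau^2}\norms{e^{k-1}}^2$ once $\tau > 4/5$. Meanwhile the isolated $\eta w^{k-1}$ contains $\eta(Gx^{k-1} - Gx^{k-2})$, which is not among the terms of \eqref{eq:A2_Qk_lower_bound}.

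Instead, write $x^{\star} - y^{k-1} = (x^{\star} - y^k) + (y^k - x^{k-1}) + (x^{k-1} - y^{k-1})$ and apply three Young inequalities with weights $\frac14$, $\frac12$, $\frac12$. The error then costs $(4+2+2)\frac{(1-\tau)^2\eta^2}{\tau^2}\norms{e^{k-1}}^2$. The $\frac12$ is taken from the coefficient $1 - \gamma\eta$ (not $\gamma\eta$), leaving $\frac{1-2\gamma\eta}{2}$.

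A minor point on the cross term: your split, with parameter $1$ on the Lipschitz piece, gives $(c_2+1)\eta^2$, $\frac{1}{c_2}$ and $L^2\eta^2$ in place of $c_2\eta^2$, $\frac{2}{c_2}$ and $\frac{2L^2\eta^2}{c_2}$. Either use parameter $\frac{c_2}{2}$ in both Young steps, or, as the paper does, apply Young to $\iprods{e^k, w^k}$ with $c_2$ and then use $\norms{w^k}^2 \leq \frac{2}{\eta^2}\norms{y^k - x^k}^2 + 2L^2\norms{x^k - x^{k-1}}^2$.
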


\begin{lemma}\label{le:A2_key_estimate2}
For $\Qc_k$ defined by \eqref{eq:Qk_func} \revise{with $c_1 :=  \frac{4\eta}{\tau}$ and $\gamma := \frac{1}{4\eta}$}, we consider the following Lyapunov function:
\begin{equation}\label{eq:A2_Lk_func}
\hspace{-1ex}
\arraycolsep=0.2em
\begin{array}{lcl}
\Lc_k &:= & \Qc_k + \frac{\hat{\Theta}\eta^2}{\kappa}\big(\frac{12}{\tau^2} + \ncmt{10} \big)\norms{x^{k-1} - x^{k-2}}^2 + \frac{(1-\kappa)\eta^2}{\kappa}\big(\frac{12}{\tau^2} + \ncmt{10} \big) \Delta_{k-1} +  \frac{8(1-\tau)^2\eta^2}{\tau^2}\norms{e^{k-1}}^2.
\end{array}
\hspace{-1ex}
\end{equation}
Under the same condition and setting as in Lemma~\ref{le:A2_key_estimate1}, we have the following almost sure estimate:
\begin{equation}\label{eq:A2_key_estimate2}
\arraycolsep=0.2em
\begin{array}{lcl}
\Exps{k}{ \Lc_{k+1} }  &\leq &\Lc_k  - \Big[ \frac{1}{4}  -  \frac{4\rho}{\eta}  - \ncmt{\frac{L^2\eta}{2}\big(33\eta + 16\rho\big)} \Big] \norms{y^k - x^k}^2 \vspace{1ex}\\
&& - {~} \Big[ \frac{1}{4} - \ncmt{\frac{L^2\eta}{2}\big(33\eta + 16\rho\big)} \Big] \ \norms{y^k - x^{k-1}}^2 - \frac{8(1-\tau)^2\eta^2}{\tau^2}\norms{e^{k-1}}^2 \vspace{1ex}\\
&& - {~} \eta^2 \big[ 4L^2 - \big(\frac{12}{\tau^2} + \ncmt{10} \big)\frac{\Theta + \hat{\Theta}}{\kappa} \big] \norms{x^k -  x^{k-1} }^2 + \big(\frac{12}{\tau^2} + \ncmt{10} \big)\frac{\eta^2\delta_k}{\kappa}.
\end{array} 
\end{equation}
Moreover, we also have the following almost sure lower bound
\begin{equation}\label{eq:A2_Lk_lower_bound}
\arraycolsep=0.2em
\begin{array}{lcl}
\Lc_k & \geq & \frac{3}{4}\norms{y^k - x^{\star}}^2.
\end{array}
\end{equation}
\end{lemma}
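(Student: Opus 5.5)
The plan is to start from \eqref{eq:A2_key_estimate1} of Lemma~\ref{le:A2_key_estimate1} with the choices $c_1 := \frac{4\eta}{\tau}$ and $\gamma := \frac{1}{4\eta}$, together with $c := 4$ and $c_2 := 8$. These values are selected so that $\frac{2\eta}{c_1\tau} = \frac{1}{2}$, $\frac{2}{c_2} = \frac14$, $\gamma\eta = \frac14$ and $\frac{1}{\gamma} + \frac{2\eta}{c_2} + c \approx$ a multiple of $\eta$. Here we take $c$ proportional to $\eta$, namely $c := 2\eta$, so that $cL^2\eta = 2L^2\eta^2$; this factor is adjusted later to produce the $4L^2\eta^2$ term. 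With these parameters the coefficient of $\norms{y^k-x^k}^2$ becomes $\frac14 - \frac{4\rho}{\eta} - 2L^2\eta(\cdot)$ and the coefficient of $\norms{y^k-x^{k-1}}^2$ becomes $\frac14 - 2L^2\eta(\cdot)$. The residual error term is $\eta(2\eta + \frac{4\eta}{\tau^2} + 8\eta)\Exps{k}{\norms{e^k}^2} = \eta^2(\frac{4}{\tau^2}+10)\Exps{k}{\norms{e^k}^2}$. I will then tune $c$ and $c_2$ so that the constants exactly match $\frac{L^2\eta}{2}(33\eta+16\rho)$, $4L^2\eta^2$ and $\frac{12}{\tau^2}+10$. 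This tuning is only bookkeeping, and the constant $12$ leaves slack for the extra $e^k$ contributions described next.

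The Lyapunov function $\Lc_k$ adds three terms to $\Qc_k$, and each addition has a specific role. First, the term $\frac{8(1-\tau)^2\eta^2}{\tau^2}\norms{e^{k-1}}^2$ comes from the lower bound \eqref{eq:A2_Qk_lower_bound}. Its increment $\frac{8(1-\tau)^2\eta^2}{\tau^2}\Exps{k}{\norms{e^k}^2}$ adds at most $\frac{8\eta^2}{\tau^2}\Exps{k}{\norms{e^k}^2}$, so the total error coefficient becomes $\eta^2(\frac{12}{\tau^2}+10)\Exps{k}{\norms{e^k}^2}$. Meanwhile $-\frac{8(1-\tau)^2\eta^2}{\tau^2}\norms{e^{k-1}}^2$ appears explicitly as a descent term. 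Second, I bound $\Exps{k}{\norms{e^k}^2}\le \Exps{k}{\Delta_k}$ and apply the recursion in \eqref{eq:Vr_estimator2}, writing $M := \eta^2(\frac{12}{\tau^2}+10)$. This gives
\begin{equation*}
M\,\Exps{k}{\Delta_k} \le M\Delta_{k-1} - M\kappa\Delta_{k-1} + M\Theta\norms{x^k-x^{k-1}}^2 + M\hat\Theta\norms{x^{k-1}-x^{k-2}}^2 + M\delta_k.
\end{equation*}
The standard trick is to add $\frac{M}{\kappa}[(1-\kappa)\Delta_k + \hat\Theta\norms{x^k-x^{k-1}}^2]$ at step $k+1$ and subtract the same quantity at step $k$. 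This telescopes, because $\frac{M(1-\kappa)}{\kappa}\Exps{k}{\Delta_k} + M\Exps{k}{\Delta_k}\cdot$ (from the error term) $= \frac{M}{\kappa}\Exps{k}{\Delta_k} \le \frac{M}{\kappa}[(1-\kappa)\Delta_{k-1} + \Theta\norms{x^k-x^{k-1}}^2 + \hat\Theta\norms{x^{k-1}-x^{k-2}}^2 + \delta_k]$. The $\Delta_{k-1}$ and $\norms{x^{k-1}-x^{k-2}}^2$ pieces are absorbed by the corresponding terms of $\Lc_k$. What remains is $\frac{M(\Theta+\hat\Theta)}{\kappa}\norms{x^k-x^{k-1}}^2 + \frac{M\delta_k}{\kappa}$, which matches \eqref{eq:A2_key_estimate2} after pairing with $-4L^2\eta^2\norms{x^k-x^{k-1}}^2$.

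For the lower bound \eqref{eq:A2_Lk_lower_bound}, I plug the parameters into \eqref{eq:A2_Qk_lower_bound}. Then $\frac{1-2\gamma\eta}{2} = \frac14 \ge 0$ and $\frac{2\eta}{c_1\tau} - \frac12 = 0$. The negative term $-\frac{8(1-\tau)^2\eta^2}{\tau^2}\norms{e^{k-1}}^2$ is exactly cancelled by the added term in $\Lc_k$. All remaining added terms are nonnegative, since $\kappa\le1$ and $\Delta_{k-1}\ge0$, which yields $\Lc_k \ge \frac34\norms{y^k-x^\star}^2$.

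The main obstacle is making the error-term constants in Lemma~\ref{le:A2_key_estimate1} line up. Specifically, the choice of $c$ and $c_2$ must make the $L^2\eta(\cdot)$ coefficients collapse to $\frac{L^2\eta}{2}(33\eta+16\rho)$, while the descent on $\norms{x^k-x^{k-1}}^2$ must be exactly $4L^2\eta^2$, i.e.\ $c = 4\eta$. My first attempt will be $c=4\eta$ and $c_2=8$: this gives $2L^2\eta(4\eta+4\rho+4\eta+\frac{\eta}{4}) = \frac{L^2\eta}{2}(33\eta+16\rho)$, which matches. The error coefficient is then $2\eta^2+\frac{4\eta^2}{\tau^2}+8\eta^2$, and adding $\frac{8\eta^2}{\tau^2}$ gives $\eta^2(\frac{12}{\tau^2}+10)$. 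A second delicate point is the conditional-expectation bookkeeping. The term $\frac{8(1-\tau)^2\eta^2}{\tau^2}\norms{e^k}^2$ is $\Fc_{k+1}$-measurable, so only its conditional expectation enters the bound, and this is handled by the second line of \eqref{eq:Vr_estimator2}.
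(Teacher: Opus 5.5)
Your proposal is correct and follows the paper's proof essentially step for step. It uses the same parameters $c_1=\frac{4\eta}{\tau}$, $c_2=8$, $\gamma=\frac{1}{4\eta}$, $c=4\eta$ in \eqref{eq:A2_key_estimate1}. It absorbs the increment $\frac{8(1-\tau)^2\eta^2}{\tau^2}\Exps{k}{\norms{e^k}^2}$ via $(1-\tau)^2\le 1$ to reach the coefficient $\eta^2(\frac{12}{\tau^2}+10)$, telescopes the $\Delta_k$-recursion from \eqref{eq:Vr_estimator2} in the same way, and uses the same cancellation argument for \eqref{eq:A2_Lk_lower_bound}. In the write-up, drop the tentative choices $c:=4$ and $c:=2\eta$ from your first paragraph and commit to $c=4\eta$ from the outset, since that is the choice that produces the stated constants directly.
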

Our second main result of this paper is the following theorem.

\begin{theorem}\label{th:iFRBS_convergence2}
Suppose that Assumption~\ref{as:A1} holds for \eqref{eq:GE} such that $L\rho < \frac{1}{32\sqrt{134}}$.
Let $\sets{(x^k , \bar{x}^k)}$ be generated by \eqref{eq:iFRBS_scheme} using a biased estimator $\widetilde{S}^k$ for $S^k$ satisfying Definition~\ref{de:Vr_estimator2}.
Suppose further that the stepsize $\eta$ and the parameter quadruple  $(\tau, \kappa, \Theta, \hat{\Theta})$ in Definition~\ref{de:Vr_estimator2} satisfy
\begin{equation}\label{eq:A2_iFRBS_cond1}
32\rho \leq \eta < \tfrac{1}{L\sqrt{\ncmt{134}}},  \quad \text{and} \quad 4L^2\tau^2\kappa \geq \ncmt{22}(\Theta + \hat{\Theta}).
\end{equation}
Then, for any $K \geq 0$, \revise{any $x^{\star}\in \zerm{M}{\Phi}$}, and for $\bar{x}^K$ constructed by \eqref{eq:iFRBS_output}, we have
\begin{equation}\label{eq:A2_key_convergence_bound1}
\arraycolsep=0.2em
\begin{array}{lcl}
\Exp{\norms{G\bar{x}^K + \bar{\xi}^K}^2}  & = & \frac{1}{K+1}\sum_{k=0}^K\Exp{\norms{Gx^k + \xi^k}^2} \vspace{1ex}\\
& \leq &  \frac{\ncmt{17}}{C_1 (K+1) } \Big[ \frac{2}{\eta^2} \norms{x^0 - x^{\star}}^2 + \ncmt{\frac{11}{4}}\norms{Gx^0+\xi^0}^2 \Big]  +  \frac{\ncmt{356}}{C_1\tau^2\kappa (K+1)} S_K,
\end{array}
\end{equation}
where $S_K := \sum_{k=0}^K\delta_k$, $C_1 := 1 - \ncmt{134L^2\eta^2} > 0$, $\xi^k \in Tx^k$ for $0 \leq k\leq K$, and $\bar{\xi}^K \in T\bar{x}^K$.

If additionally $S_{\infty} := \sum_{k=0}^{\infty}\delta_k < +\infty$, then almost surely, we have
\begin{equation}\label{eq:key_convergence_bound1b}
\sum_{k=0}^{\infty} \norms{Gx^k + \xi^k}^2 < +\infty \quad \textrm{and} \quad  \lim_{k\to\infty} \norms{Gx^k + \xi^k} = 0 \quad \textrm{for some}~\xi^k \in Tx^k.
\end{equation}
Moreover,  if $\gra{T}$ is closed \revise{and $\zerm{M}{\Phi} = \zer{\Phi}$}, then $\sets{x^k}$ converges almost surely to a  $\zer{\Phi}$-valued random variable $x^{\star}$, a  solution of \eqref{eq:GE}.
\end{theorem}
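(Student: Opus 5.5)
The argument mirrors the proof of Theorem~\ref{th:iFRBS_convergence1}, with $\Pc_k$ replaced by the Lyapunov function $\Lc_k$ of Lemma~\ref{le:A2_key_estimate2}.

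First I simplify the coefficients in \eqref{eq:A2_key_estimate2}. Under $32\rho \leq \eta$ we have $\frac{4\rho}{\eta} \leq \frac{1}{8}$ and $16\rho \leq \frac{\eta}{2}$. Hence
\[
\frac{L^2\eta}{2}(33\eta + 16\rho) \leq \frac{67}{4}L^2\eta^2 .
\]
The coefficient of $\norms{y^k - x^k}^2$ is then at least $\frac18 - \frac{67}{4}L^2\eta^2 = \frac{1}{8}(1 - 134L^2\eta^2) = \frac{C_1}{8}$. The coefficient of $\norms{y^k - x^{k-1}}^2$ is at least $\frac14 - \frac{67}{4}L^2\eta^2 \geq \frac{C_1}{8}$. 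The conditions $\eta < \frac{1}{L\sqrt{134}}$ and $L\rho<\frac{1}{32\sqrt{134}}$ make this interval of stepsizes nonempty.

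Next, since $\tau\in(0,1]$, we have $\frac{12}{\tau^2}+10 \leq \frac{22}{\tau^2}$. The condition $4L^2\tau^2\kappa \geq 22(\Theta+\hat\Theta)$ therefore makes the $\norms{x^k-x^{k-1}}^2$ coefficient nonnegative, so I drop that term. I also drop the $-\norms{e^{k-1}}^2$ term. The delta term is bounded by $\frac{22\eta^2\delta_k}{\tau^2\kappa}$. Then I invoke \eqref{eq:A1_th1_proof5}, which gives
\[
\norms{y^k-x^k}^2+\norms{y^k-x^{k-1}}^2 \geq \frac{\eta^2}{2+3L^2\eta^2}\norms{w^k}^2 .
\]
This yields
\[
\Exps{k}{\Lc_{k+1}} \leq \Lc_k - \frac{C_1\eta^2}{8(2+3L^2\eta^2)}\norms{w^k}^2 + \frac{22\eta^2\delta_k}{\tau^2\kappa}.
\]
I take full expectations and telescope from $0$ to $K$, using $\Lc_{K+1}\geq 0$ from \eqref{eq:A2_Lk_lower_bound}. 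With $L^2\eta^2\le 1/134$, the factor $8(2+3L^2\eta^2)$ is roughly $16.2\le 17$. It multiplies $22$ to give about $356$, which matches the stated constants.

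The main bookkeeping step is bounding $\Lc_0$. Using $x^{-1}=x^{-2}=x^0$ and $\Delta_{-1}=0$, only a few terms survive: $\norms{y^0-x^\star}^2$, the $(1-\gamma\eta)\norms{y^0-x^0}^2$ term, and the terms in $e^{-1}$ and $y^{-1}$. I need a convention for $y^{-1}$ and $e^{-1}$. The natural choice is $y^{-1}=x^{-1}=x^0$, which kills $\norms{x^{-1}-y^{-1}}^2$.

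For the error terms, I use $(1-\tau)e^{-1}=s^0$ with $\norms{s^0}^2\le\delta_0$. The inner-product term $\frac{2\eta}{\tau}\iprods{s^0,x^\star-x^0}$ is handled by Young's inequality. Its $\norms{x^0-x^\star}^2$ part merges with the $2\norms{x^0-x^\star}^2$ coming from $y^0=x^0+\eta w^0$. Its $\frac{\eta^2}{\tau^2}\delta_0$ part is absorbed into $S_K$. This is where the $\frac{11}{4}\norms{w^0}^2$ constant should come from, with $1-\gamma\eta = \frac34$ contributing $\frac34\eta^2\norms{w^0}^2$ beyond $2\eta^2\norms{w^0}^2$. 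I expect adjusting the Young weights so that the constants come out exactly to be the fiddly part. If the cross term does not fit, I will absorb it into the $S_K$ coefficient, since $356$ leaves slack.

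For the almost-sure statements, I apply Lemma~\ref{le:RS_lemma} to the one-step inequality with $\sum_k\delta_k<\infty$. This gives $\sum_k\norms{w^k}^2<\infty$ and almost-sure convergence of $\Lc_k$. It also gives summability of $\norms{y^k-x^k}^2$, $\norms{y^k-x^{k-1}}^2$ and $\norms{e^{k-1}}^2$, hence of $\norms{x^k-x^{k-1}}^2$. Then the $\Delta_k$ recursion gives $\Delta_k\to0$ by Lemma~\ref{le:RS_lemma} again.

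All extra terms of $\Lc_k$ then vanish. This includes the cross term $\iprods{e^{k-1},x^\star-y^{k-1}}$, because $\norms{e^{k-1}}\to0$ and $\norms{y^{k-1}-x^\star}$ is bounded by $\frac43\Lc_{k-1}$. So $\lim\norms{y^k-x^\star}$, and hence $\lim\norms{x^k-x^\star}$, exists. Closedness of $\gra{\Phi}$ plus $w^k\to0$ puts every cluster point in $\zer{\Phi}=\zerm{M}{\Phi}$, and Lemma~\ref{le:Opial_lemma} concludes.
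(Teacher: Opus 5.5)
The step that fails is your bound on $\Lc_0$ when $s^0=(1-\tau)e^{-1}\neq 0$. Everything else follows the paper's proof step for step: the reduction of the coefficients in Lemma~\ref{le:A2_key_estimate2} to $\frac{C_1}{8}$ and $\frac{22}{\tau^2\kappa}$, the use of \eqref{eq:A1_th1_proof5}, telescoping with $\Lc_{K+1}\geq 0$, and Robbins--Siegmund plus Lemma~\ref{le:Opial_lemma} for the almost-sure part. In that last part, your explicit handling of the cross term $\iprods{e^{k-1},x^{\star}-y^{k-1}}$ is more careful than the paper's appeal to Theorem~\ref{th:iFRBS_convergence1}.

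\textbf{Why the absorption fails.} After telescoping, $\mathbb{E}[\Lc_0]$ carries the factor $\frac{8(2+3L^2\eta^2)}{C_1\eta^2(K+1)}$ and $S_K$ carries $\frac{176(2+3L^2\eta^2)}{C_1\tau^2\kappa(K+1)}$. Since $L^2\eta^2<\frac{1}{134}$, these numerators lie in $[16,16.18)$ and $[352,355.95)$. So $17$ and $356$ are just these numbers rounded up. The constant $356$ leaves a margin of only $4-528L^2\eta^2$, which is about $0.06$ near the stepsize limit and never more than $4$; there is no real slack.

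The $e^{-1}$-terms of $\Lc_0$ are $\frac{8\eta^2}{\tau^2}\norms{s^0}^2+\frac{2\eta}{\tau}\iprods{s^0,x^{\star}-x^0}$. Apply Young's inequality with weight $\alpha$ on $\norms{x^0-x^{\star}}^2$. This leaves a term of at least $\frac{16(8+1/\alpha)}{C_1\tau^2(K+1)}\norms{s^0}^2$. The margin on the $\norms{x^0-x^{\star}}^2$ coefficient is $34$ available versus $16(2+\alpha)$ needed, which forces $\alpha\leq\frac18$. Even if you rebalance the split of $\norms{x^0+\eta w^0-x^{\star}}^2$ against the $\frac{11}{4}\norms{w^0}^2$ term, $\alpha$ stays below about $0.27$.

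All you know is $\norms{s^0}^2\leq\delta_0$. Absorbing the leftover term into the $S_K$ margin therefore needs $\kappa<\frac{1}{40}$ even as $L\eta\to0$, and $\kappa$ of order $10^{-4}$ near $L\eta=\frac{1}{\sqrt{134}}$. You could cancel the $\norms{s^0}^2$ part against the $k=0$ instance of the dropped term $-\frac{8(1-\tau)^2\eta^2}{\tau^2}\norms{e^{k-1}}^2$, but the inner product still requires $\kappa<\frac{\alpha}{4}$. So the stated constants do not follow this way for general $\kappa\in(0,1]$.

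\textbf{The fix.} The paper avoids this by taking $e^{-1}=0$, i.e.\ $s^0=0$, together with $y^{-1}=x^0$. This means $\widetilde{S}^0$ is conditionally unbiased, as for L-SARAH with $\widetilde{S}^0=\overline{S}^0$. Then every extra term of $\Lc_0$ vanishes and $\Lc_0=\Qc_0=\norms{x^0+\eta w^0-x^{\star}}^2+\frac34\eta^2\norms{w^0}^2\leq 2\norms{x^0-x^{\star}}^2+\frac{11}{4}\eta^2\norms{w^0}^2$. This gives the stated constants exactly, with no Young step. You should make this assumption explicit, since the stated bound relies on it.

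For a genuinely biased start $s^0\neq 0$, cancel the $\norms{s^0}^2$ part as above and take $\alpha=1$. The bracket then becomes $\frac{3}{\eta^2}\norms{x^0-x^{\star}}^2+\frac{11}{4}\norms{Gx^0+\xi^0}^2+\frac{\delta_0}{\tau^2}$, which is a weaker statement than the theorem's.
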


\begin{proof}
\ncmt{First, let us choose $\eta > 0$ such that $\frac{4\rho}{\eta} \leq \frac{1}{8}$, leading to $\eta \geq 32\rho$.
In this case, we have $\frac{1}{4}  -  \frac{4\rho}{\eta}  - \frac{L^2\eta}{2}\big(33\eta + 16\rho\big) \geq \frac{1}{8}(1 - 134L^2\eta^2) =: \frac{C_1}{8} > 0$ if we choose $32\rho \leq \eta < \frac{1}{L\sqrt{134}}$ as in \eqref{eq:A2_iFRBS_cond1}, provided that $L\rho < \frac{1}{32\sqrt{134}}$.
Moreover, since $\tau^2 \leq 1$, the condition $4L^2\tau^2\kappa \geq \ncmt{22}(\Theta + \hat{\Theta})$ from  \eqref{eq:A2_iFRBS_cond1} also implies $4L^2 - \big(\frac{12}{\tau^2} + \ncmt{10} \big)\frac{\Theta + \hat{\Theta}}{\kappa} \geq 4L^2 - \frac{22(\Theta + \hat{\Theta})}{\tau^2\kappa} \geq 0$.}
In this case, we can simplify \eqref{eq:A2_key_estimate2} as follows:
\begin{equation}\label{eq:A2_th1_proof1}
\hspace{-1ex}
\arraycolsep=0.2em
\begin{array}{lcl}
\Exps{k}{ \Lc_{k+1} }  &\leq &\Lc_k  - \frac{C_1}{\ncmt{8}} \big[ \norms{y^k - x^k}^2 + \norms{y^k - x^{k-1}}^2 \big] - \frac{8(1-\tau)^2\eta^2}{\tau^2}\norms{e^{k-1}}^2 +  \frac{\ncmt{22}\eta^2\delta_k}{\tau^2\kappa}.
\end{array} 
\hspace{-1ex}
\end{equation}
Next, taking the full expectation on both sides of \eqref{eq:A2_th1_proof1} and dropping the penultimate term, we obtain
\begin{equation*} 
\arraycolsep=0.2em
\begin{array}{lcl}
\Exp{ \Lc_{k+1} }  &\leq & \Exp{ \Lc_k }  - \frac{C_1}{\ncmt{8}} \Exp{ \norms{y^k - x^k}^2 + \norms{y^k - x^{k-1}}^2 } +  \frac{\ncmt{22}\eta^2\delta_k}{\tau^2\kappa} \vspace{1ex}\\
& \leq & \Exp{ \Lc_k }  - \frac{C_1\eta^2}{ \ncmt{8}( 2 + 3L^2\eta^2 )}\norms{w^k}^2 + \frac{\ncmt{22}\eta^2\delta_k}{\tau^2\kappa},
\end{array} 
\end{equation*}
where we have used \eqref{eq:A1_th1_proof5} in the last inequality.
Telescoping this inequality from $k :=0$ to $k := K$ and denoting $S_K := \sum_{k=0}^K\delta_k$, we can show that
\begin{equation}\label{eq:A2_th1_proof3} 
\arraycolsep=0.2em
\begin{array}{lcl}
\frac{1}{K+1} \sum_{k=0}^K\Exp{\norms{w^k}^2 } &\leq & \frac{\ncmt{8}( 2 + 3L^2\eta^2)}{C_1\eta^2 (K+ 1)}\Exp{\Lc_0} +  \frac{\ncmt{176}(2 + 3L^2\eta^2)}{C_1\tau^2\kappa(K+1)}S_K.
\end{array} 
\end{equation}
Now, since \ncmt{$\gamma = \frac{1}{4\eta}$,} $x^{-2} = x^{-1} \ncmt{= y^{-1}} = x^0$, $\ncmt{e^{-1}} = 0$, \ncmt{and $y^0 = x^0 + \eta w^0$,}  we have
\begin{equation*} 
\arraycolsep=0.2em
\begin{array}{lcl}
\Exp{\Lc_0} &\leq & \ncmt{\Exp{\Qc_0}} \leq  \norms{x^0 + \eta w^0 - x^{\star}}^2 + (1-\gamma\eta)\eta^2\norms{w^0}^2 \leq 2\norms{x^0 - x^{\star}}^2 + \ncmt{\frac{11\eta^2}{4}}\norms{w^0}^2.
\end{array} 
\end{equation*}
Using this inequality and $L^2\eta^2 \leq \frac{1}{\ncmt{134}}$, we obtain from \eqref{eq:A2_th1_proof3} the estimate \eqref{eq:A2_key_convergence_bound1}.

Finally, the almost sure summability bound $\sum_{k=0}^{\infty}\norms{Gx^k + \xi^k}^2 < +\infty$ and the almost-sure convergence of $\sets{x^k}$ in the remaining parts are similar to the proof of Theorem~\ref{th:iFRBS_convergence1}.
\Eproof
\end{proof}

In terms of convergence rates, Theorem~\ref{th:iFRBS_convergence2} still achieves a $\BigOs{1/k}$ best-iterate rate for the expected squared residual norm $\mbb{E}[\norms{Gx^k + \xi^k}^2]$, as in Theorem~\ref{th:iFRBS_convergence1}. 
However, the second condition $4\kappa\tau^2L^2 \geq \ncmt{22}(\Theta + \hat{\Theta})$ in \eqref{eq:A2_iFRBS_cond1} differs from the requirement $\kappa L^2 \geq \Theta + \hat{\Theta}$ in \eqref{eq:iFRBS_cond1}, which leads to different oracle complexities. 
To the best of our knowledge, Theorem~\ref{th:iFRBS_convergence2} is the first result in the literature to establish both a convergence rate and almost-sure convergence for \mytbi{biased variance-reduced methods} that solve \eqref{eq:GE} under Assumption~\ref{as:A1}.

\beforesubsec
\subsection{\textbf{Oracle complexity of \ref{eq:iFRBS_scheme} with concrete estimators}}\label{subsec:oracle_complexity2}
\aftersubsec
In this subsection, we derive the oracle complexity of \eqref{eq:iFRBS_scheme} using three concrete estimators: L-SARAH, Hybrid-SGD, and Hybrid-SVRG presented in Subsection~\ref{subsec:biased_vr_estimators}.

\begin{corollary}\label{co:sarah_compexity}
Under the same conditions and settings as in Theorem~\ref{th:iFRBS_convergence2}, we assume that the L-SARAH estimator $\widetilde{S}^k$ defined by \eqref{eq:sarah_estimator} is used in \eqref{eq:iFRBS_scheme}.
Then, for a given tolerance $\epsilon > 0$, the expected total number $\Exp{\mcal{T}_G}$ of $\mbf{G}(\cdot,\xi)$ calls to achieve $\mathbb{E}[ \norms{G\bar{x}^K + \bar{\xi}^K }^2 ] \leq \epsilon^2$ is at most 
\begin{compactitem}
\item[$\mathrm{(a)}$] $\BigOs{n^{3/4}\epsilon^{-2}}$ for the finite-sum setting $($\textsf{F}$)$, where $\overline{S}^k := S^k$, $\mbf{p} := \BigOs{n^{-1/4}}$, and $b := \BigOs{n^{3/4}}$.
\item[$\mathrm{(b)}$] $\BigOs{\epsilon^{-5}}$ for the expectation setting $($\textsf{E}$)$, where $\mbf{p} = \BigOs{\epsilon}$, $b = \BigOs{\epsilon^{-3} }$, and $n_k = n = \BigOs{\epsilon^{-4}}$.
\end{compactitem}
In both settings, the expected total number of $J_{\eta T}$ evaluations is at most $\Exp{\Tc_J} = \BigOs{\epsilon^{-2}}$.
\end{corollary}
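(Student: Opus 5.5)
We follow the pattern of the proofs of Corollaries~\ref{co:svrg_estimator} and \ref{co:saga_estimator}: plug the parameters from Lemma~\ref{le:sarah_estimator} into Theorem~\ref{th:iFRBS_convergence2}, enforce the condition \eqref{eq:A2_iFRBS_cond1}, bound $K$ via \eqref{eq:A2_key_convergence_bound1}, and then optimize the per-iteration cost over $\mbf{p}$ and $b$.

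By Lemma~\ref{le:sarah_estimator}, we have $\tau=\kappa=\mbf{p}$, $\Theta+\hat{\Theta}=\frac{10(1-\mbf{p})L^2}{b}\le \frac{10L^2}{b}$, and $\delta_k=\mbf{p}\sigma_k^2$. The second condition $4L^2\tau^2\kappa\ge 22(\Theta+\hat{\Theta})$ of \eqref{eq:A2_iFRBS_cond1} then holds provided $4\mbf{p}^3\ge \frac{220}{b}$. We therefore choose $b:=\lceil 55/\mbf{p}^3\rceil$.

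The expected cost per iteration consists of two parts. With probability $\mbf{p}$, we evaluate $\overline{S}^k$, which costs $n$ evaluations in (F) or $n_k$ in (E). With probability $1-\mbf{p}$, we evaluate $G_{\mcal{B}_k}$ at three points, which costs $3b$. Hence the expected cost is $\BigOs{\mbf{p}N + \mbf{p}^{-3}}$ per iteration, where $N$ is the full or mega-batch size.

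\emph{Case (a).} Here $\overline{S}^k=S^k$, so $\sigma_k=0$, $\delta_k=0$, and $S_K=0$. From \eqref{eq:A2_key_convergence_bound1} it suffices to take $K=\BigOs{\mcal{R}_0^2\epsilon^{-2}}$, where $\mcal{R}_0^2:=\frac{2}{\eta^2}\norms{x^0-x^\star}^2+\frac{11}{4}\norms{Gx^0+\xi^0}^2$. The total cost is $K(n\mbf{p}+\mbf{p}^{-3})$. Balancing $n\mbf{p}=\mbf{p}^{-3}$ gives $\mbf{p}=n^{-1/4}$ and $b=\BigOs{n^{3/4}}$, so $\Exp{\mcal{T}_G}=\BigOs{n^{3/4}\epsilon^{-2}}$. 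The initial full evaluation costs $n$ and is absorbed into this bound.

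\emph{Case (b).} The mega-batch option gives $\delta_k=\mbf{p}\sigma^2/n$, so $S_K=(K+1)\mbf{p}\sigma^2/n$. The bias term in \eqref{eq:A2_key_convergence_bound1} then becomes $\frac{356\,\mbf{p}\sigma^2}{C_1\tau^2\kappa n}=\frac{356\sigma^2}{C_1\mbf{p}^2 n}$. This is the main point where the biased analysis costs us, because of the factor $\tau^2\kappa=\mbf{p}^3$ against only one factor of $\mbf{p}$ in $\delta_k$. Requiring this term to be at most $\epsilon^2/2$ forces $n=\BigOs{\mbf{p}^{-2}\epsilon^{-2}}$. The expected cost is therefore
\[
K\big(\mbf{p}n+\mbf{p}^{-3}\big)=\BigOs{\epsilon^{-2}\big(\mbf{p}^{-1}\epsilon^{-2}+\mbf{p}^{-3}\big)}.
\]
Any $\mbf{p}\ge\epsilon$ would make the second term no larger than the first. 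Taking $\mbf{p}=\BigOs{\epsilon}$ balances the two terms and yields $b=\BigOs{\epsilon^{-3}}$, $n=\BigOs{\epsilon^{-4}}$, and $\Exp{\mcal{T}_G}=\BigOs{\epsilon^{-5}}$, as claimed.

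In both cases, each iteration uses one resolvent evaluation, so $\Exp{\mcal{T}_J}=K=\BigOs{\epsilon^{-2}}$. The only delicate step is the bookkeeping in (b): one has to check that the $\mbf{p}^{-2}$ amplification of the mega-batch variance is correctly accounted for, and that the stepsize conditions of Theorem~\ref{th:iFRBS_convergence2} are independent of $\mbf{p}$ and $b$. They are, since only $\eta$ enters $C_1$.
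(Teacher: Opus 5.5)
Your proposal is correct and follows essentially the same route as the paper: you plug the parameters of Lemma~\ref{le:sarah_estimator} into Theorem~\ref{th:iFRBS_convergence2}, enforce $b \gtrsim \mathbf{p}^{-3}$, take $K=\mathcal{O}(\epsilon^{-2})$, and trade off $\mathbf{p}n$ against $\mathbf{p}^{-3}$. You parametrize by $\mathbf{p}$ with $n \asymp \mathbf{p}^{-2}\epsilon^{-2}$, whereas the paper parametrizes by $n$ with $\mathbf{p}\asymp 1/(\epsilon\sqrt{n})$, but this is the same computation. Your $\kappa=\mathbf{p}$ matches the lemma (the paper's proof writes $\kappa=\mathbf{p}/2$, which only affects constants), and since both per-iteration terms $\mathbf{p}^{-1}\epsilon^{-2}$ and $\mathbf{p}^{-3}$ decrease in $\mathbf{p}$, the balancing choice $\mathbf{p}\asymp\epsilon$ certifies the stated $\mathcal{O}(\epsilon^{-5})$ bound rather than optimizing it.
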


\begin{proof}
(a)~For the finite-sum setting (\textsf{F}), since $\overline{S}^k = S^k$, we have $S_K \leq S_{\infty} = 0$, and \eqref{eq:A2_key_convergence_bound1} reduces to
\begin{equation*} 
\arraycolsep=0.2em
\begin{array}{lcl}
\Exp{\norms{G\bar{x}^K + \bar{\xi}^K}^2}  & \leq &  \frac{\ncmt{17}\mcal{R}_0^2}{C_1 (K+1) }, \quad \textrm{where} \quad \mcal{R}_0^2 := \frac{2}{\eta^2} \norms{x^0 - x^{\star}}^2 + \ncmt{\frac{11}{4}}\norms{Gx^0+\xi^0}^2.
\end{array}
\end{equation*}
For a given tolerance $\epsilon > 0$, to achieve $\Exp{\norms{G\bar{x}^K + \bar{\xi}^K}^2} \leq \epsilon^2$, from the last inequality, we require $K = \frac{\ncmt{17}\mcal{R}_0^2}{C_1\epsilon^2}$.
Moreover, from Lemma~\ref{le:sarah_estimator}, since $\tau = \mbf{p}$, $\kappa = \frac{\mbf{p}}{2}$, $\Theta = \frac{8(1-\mbf{p})L^2}{b}$, and $\hat{\Theta} = \frac{2(1-\mbf{p})L^2}{b}$, the second condition $4L^2\tau^2\kappa \geq \ncmt{22}(\Theta + \hat{\Theta})$ in \eqref{eq:A2_iFRBS_cond1} holds if $\mbf{p}^3 \geq \frac{\ncmt{110}}{b}$.
Hence, we can choose $b := \BigOs{\frac{1}{\mbf{p}^3}} \geq \frac{\ncmt{110}}{\mbf{p}^3}$ to fulfill this condition. 
In this case, the expected total number of $G_i$ evaluations is at most 
\begin{equation*}
\arraycolsep=0.2em
\begin{array}{lcl}
\Exp{\mcal{T}_{G_i}} &= & K( 2\mbf{p}n + 2\ncmt{(1-\mbf{p})}b ) = \BigOs{ \frac{\mcal{R}_0^2( \mbf{p}n + \mbf{p}^{-3})}{\epsilon^2} }.
\end{array}
\end{equation*}
Clearly, if we choose $\mbf{p} := \BigOs{n^{-1/4}}$, then $\Exp{\mcal{T}_{G_i}} = \BigOs{\frac{n^{3/4}\mcal{R}_0^2}{\epsilon^2} }$.
The expected total number of $J_{\eta T}$ evaluations is $\Exp{\mcal{T}_J} = K = \BigOs{\epsilon^{-2}}$.

(b)~For the expectation setting (\textsf{E}), we have $S_K := \sum_{k=0}^K\delta_k = \sum_{k=0}^K\frac{\mbf{p}\sigma^2}{n} = \frac{\mbf{p}(K+1)\sigma^2}{n}$.
From Lemma~\ref{le:sarah_estimator}, since $\tau = \mbf{p}$, $\kappa = \frac{\mbf{p}}{2}$, $\Theta = \frac{8(1-\mbf{p})L^2}{b}$, and $\hat{\Theta} = \frac{2(1-\mbf{p})L^2}{b}$, to achieve $\Exp{\norms{G\bar{x}^K + \bar{\xi}^K}^2} \leq \epsilon^2$, from \eqref{eq:A2_key_convergence_bound1}, we require 
\begin{equation*}
\arraycolsep=0.2em
\begin{array}{lcl}
\frac{\ncmt{17}\mcal{R}_0^2}{C_1 (K+1) } +  \frac{\ncmt{712}}{C_1\mbf{p}^3 (K+1)}S_K =   \frac{\ncmt{17}\mcal{R}_0^2}{C_1 (K+1) } +  \frac{\ncmt{712}\sigma^2}{C_1\mbf{p}^2n}  \leq \epsilon^2.
\end{array}
\end{equation*}
This condition holds if both $\frac{\ncmt{34}\mcal{R}_0^2}{C_1 (K+1) } \leq \epsilon^2$ and $ \frac{\ncmt{1424}\sigma^2}{C_1\mbf{p}^2n}  \leq \epsilon^2$ are simultaneously satisfied.
These conditions hold if we choose $K = \BigOs{\frac{\mcal{R}^2_0}{\epsilon^2}}$ and $\mbf{p} = \BigOs{\frac{1}{\epsilon\sqrt{n}}}$.
In this case, we still choose $b = \BigOs{ \frac{1}{\mbf{p}^3}}$ to guarantee the second condition of \eqref{eq:A2_iFRBS_cond1}.
As a consequence,  the expected total number of $\mbf{G}(\cdot,\xi)$ calls is at most
\begin{equation*}
\arraycolsep=0.2em
\begin{array}{lcl}
\Exp{\Tc_G} &= & K(2n\mbf{p} + 2\ncmt{(1-\mbf{p})}b) = \BigO{\frac{\mcal{R}_0^2}{\epsilon^2}\big( \ncmt{\frac{\sqrt{n}}{\epsilon} + n^{3/2}\epsilon^3}\big)}.
\end{array}
\end{equation*}
Clearly, if we choose $n^{3/2}\epsilon^3 = \BigOs{ \frac{n^{1/2}}{\epsilon}}$, or equivalently, $n = \BigOs{\frac{1}{\epsilon^4}}$, then we obtain $\Exp{\Tc_G} = \BigOs{\epsilon^{-5}}$.
The expected total number of $J_{\eta T}$ evaluations remains $\Exp{\mcal{T}_J} = K = \BigOs{\epsilon^{-2}}$.
\Eproof
\end{proof}

\begin{corollary}\label{co:hsgd_compexity}
Under the same conditions and settings as in Theorem~\ref{th:iFRBS_convergence2}, we assume that the Hybrid-SGD estimator $\widetilde{S}^k$ defined by \eqref{eq:hsgd_estimator} is used in \eqref{eq:iFRBS_scheme}  with $b = \hat{b} = \BigOs{\epsilon^{-3}}$ for a given tolerance $\epsilon > 0$.
Then, the expected total numbers of $\mbf{G}(\cdot,\xi)$ calls and $J_{\eta T}$ evaluations to achieve $\mathbb{E}[ \norms{G\bar{x}^K + \bar{\xi}^K }^2 ] \leq \epsilon^2$ are  at most $\mbb{E}[ \Tc_G ] = \BigOs{\epsilon^{-5} }$ and $\Exp{\mcal{T}_J} = \BigOs{\epsilon^{-2}}$, respectively.
\end{corollary}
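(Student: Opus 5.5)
The plan mirrors the proof of Corollary~\ref{co:sarah_compexity}(b), with the parameters of Lemma~\ref{le:hsgd_estimator} in place of those of Lemma~\ref{le:sarah_estimator}.

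\textbf{Parameters.} By Lemma~\ref{le:hsgd_estimator} with $\tau=\omega$ and $\kappa=\omega(2-\omega)\ge\omega$, we have
\[
\Theta+\hat{\Theta}=\frac{10C(1-\omega)^2L^2}{b},\qquad \delta_k=C\omega^2\sigma_k^2 .
\]
I take $\overline{S}^k_{\hat{\Bc}_k}$ to be the mini-batch estimator $\frac{1}{\hat b}\sum_{\xi\in\hat{\Bc}_k}\big(2\mbf{G}(x^k,\xi)-\mbf{G}(x^{k-1},\xi)\big)$, so $\sigma_k^2\le 9\sigma^2/\hat b$ by Assumption~\ref{as:A1}(b), as in the proof of Lemma~\ref{le:sgd_estimator}. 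The second condition of \eqref{eq:A2_iFRBS_cond1}, $4L^2\tau^2\kappa\ge 22(\Theta+\hat{\Theta})$, holds once $4\omega^3\ge 220C/b$. It therefore suffices to take $\omega=\Theta(b^{-1/3})$; with $b=\BigOs{\epsilon^{-3}}$ this gives $\omega=\Theta(\epsilon)$.

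\textbf{Error budget.} I plug these choices into \eqref{eq:A2_key_convergence_bound1}. The deterministic term $\frac{17\mcal{R}_0^2}{C_1(K+1)}$, with $\mcal{R}_0^2$ as in Corollary~\ref{co:sarah_compexity}, is at most $\epsilon^2/2$ once $K=\BigOs{\mcal{R}_0^2\epsilon^{-2}}$.

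For the variance term, $S_K\le 9C\omega^2\sigma^2(K+1)/\hat b$, so
\[
\frac{356}{C_1\tau^2\kappa(K+1)}S_K\;\le\;\frac{356\cdot 9C\sigma^2}{C_1\,\omega\,\hat b}.
\]
With $\omega=\Theta(\epsilon)$ and $\hat b=\BigOs{\epsilon^{-3}}$ this is $\BigOs{\sigma^2\epsilon^{2}}$, hence at most $\epsilon^2/2$ after fixing the constants in $b=\hat b$.

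\textbf{Initialization.} The first step is the one point that needs care. To keep $\delta_0$ of the same order, I would take $\widetilde{S}^0$ to be a mini-batch estimator of $S^0=Gx^0$ built from $\hat b$ samples. Then $\Expsn{0}{\norms{e^0}^2}\le\sigma^2/\hat b$ is absorbed in the same bound. This is consistent with Definition~\ref{de:Vr_estimator2}, where $\Delta_0$ and $s^0$ are controlled by $\delta_0$.

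\textbf{Counting.} Each iteration uses $3b$ evaluations of $\mbf{G}(\cdot,\xi)$ for the recursive difference term and $2\hat b$ for $\overline{S}^k_{\hat{\Bc}_k}$, and one $J_{\eta T}$ evaluation. Hence
\[
\Exp{\Tc_G}=\BigOs{K(b+\hat b)}=\BigOs{\epsilon^{-2}\cdot\epsilon^{-3}}=\BigOs{\epsilon^{-5}},\qquad \Exp{\Tc_J}=K=\BigOs{\epsilon^{-2}}.
\]

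\textbf{Main obstacle.} The delicate step is balancing the exponents. The condition $\omega^3\gtrsim 1/b$ forces $\omega$ to be no smaller than about $b^{-1/3}$. At the same time, the bias amplification $1/(\tau^2\kappa)\sim\omega^{-3}$ multiplying $\omega^2\sigma^2/\hat b$ leaves a variance term of order $1/(\omega\hat b)$.

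Setting $b=\hat b$ and requiring $\omega=\Theta(b^{-1/3})$ gives variance of order $b^{-2/3}$. This is $\le\epsilon^2$ exactly when $b\gtrsim\epsilon^{-3}$, which is why the statement prescribes $b=\hat b=\BigOs{\epsilon^{-3}}$. I would verify this scaling carefully, including that $\omega\le 1$ for small $\epsilon$, before fixing the constants.
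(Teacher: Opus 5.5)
Your proposal is correct and follows essentially the same route as the paper: apply Lemma~\ref{le:hsgd_estimator}, enforce $4L^2\tau^2\kappa\ge 22(\Theta+\hat{\Theta})$ through $b\gtrsim\omega^{-3}$, bound the variance term in \eqref{eq:A2_key_convergence_bound1} by a multiple of $\sigma^2/(\omega\hat b)$, and take $\omega=\Theta(\epsilon)$, $K=\mathcal{O}(\epsilon^{-2})$, and $b=\hat b=\mathcal{O}(\epsilon^{-3})$. The paper fixes $\omega$ first and derives $b$ and $\hat b$ from it, whereas you fix $b=\hat b$ and derive $\omega$; you are also more careful than the paper about the $9\sigma^2/\hat b$ variance constant, the $3b$ evaluations per iteration, and the unbiased initialization of $\widetilde{S}^0$, none of which affects the order.
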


\begin{proof}
Since $\widetilde{S}^k$ is constructed by \eqref{eq:hsgd_estimator}, by Lemma~\ref{le:hsgd_estimator}, we have \ncmt{$\tau := \omega$, $\kappa := \omega(2-\omega)$,} $\Theta = \frac{16(1-\omega)^2L^2}{b}$, $\hat{\Theta} = \frac{4(1-\omega)^2L^2}{b}$, and $\delta_k := 2\omega^2 \sigma^2_k$.
Thus, we get $S_K = \sum_{k=0}^K\delta_k = \frac{2\sigma^2\omega^2(K+1)}{\hat{b}}$.
The second condition $4L^2\tau^2\kappa \geq \ncmt{22}(\Theta + \hat{\Theta})$ in \eqref{eq:A2_iFRBS_cond1} holds if $b = \BigOs{\frac{1}{\omega^3}}$.
To achieve $\Exp{\norms{G\bar{x}^K + \bar{\xi}^K}^2} \leq \epsilon^2$, from \eqref{eq:A2_key_convergence_bound1}, we require 
\begin{equation*}
\arraycolsep=0.2em
\begin{array}{lcl}
\frac{\ncmt{17}\mcal{R}_0^2}{C_1 (K+1) } +  \frac{\ncmt{356}}{C_1\kappa\tau^2 (K+1)}S_K \leq \frac{\ncmt{17}\mcal{R}_0^2}{C_1 (K+1) } +  \frac{\ncmt{712}\sigma^2}{C_1\omega \hat{b}}  \leq \epsilon^2.
\end{array}
\end{equation*}
This condition holds if we choose $K = \BigOs{\frac{\mcal{R}_0^2}{\epsilon^2}}$ and $\hat{b} = \BigOs{\frac{\sigma^2}{\epsilon^2\omega}}$.
Therefore, the expected total number of $\mbf{G}(\cdot,\xi)$ evaluations is at most
\begin{equation*}
\arraycolsep=0.2em
\begin{array}{lcl}
\Exp{\Tc_G} &= & K(2b + 2\hat{b}) = \BigOs{\frac{\mcal{R}_0^2}{\epsilon^2}\big( \frac{1}{\omega^3} + \frac{\sigma^2}{\epsilon^2\omega} \big)}.
\end{array}
\end{equation*}
In particular, if we choose $\omega = \BigOs{\epsilon}$, then we get $\Exp{\Tc_G} = \BigOs{\epsilon^{-5}}$.
In this case, both $b$ and $\hat{b}$ are $\BigOs{\epsilon^{-3}}$.
The expected total number of $J_{\eta T}$ evaluations remains  $\Exp{\mcal{T}_J} = K = \BigOs{\epsilon^{-2}}$.
\Eproof
\end{proof}

\begin{corollary}\label{co:hsvrg_compexity}
Under the same conditions and settings as in Theorem~\ref{th:iFRBS_convergence2}, we assume that the H-SVRG estimator $\widetilde{S}^k$ defined by \eqref{eq:hsvrg_estimator} is used in \eqref{eq:iFRBS_scheme}.
Then, for a given tolerance $\epsilon > 0$, the expected total number $\Exp{\mcal{T}_G}$ of $\mbf{G}(\cdot,\xi)$ calls to achieve $\mathbb{E}[ \norms{G\bar{x}^K + \bar{\xi}^K }^2 ] \leq \epsilon^2$ is at most 
\begin{compactitem}
	\item[$\mathrm{(a)}$] $\BigOs{n^{3/4}\epsilon^{-2}}$ for the finite-sum setting $($\textsf{F}$)$, where we choose $\overline{S}^k := S^k$, $\mbf{p} := \BigOs{n^{-1/4}}$, $b := \BigOs{n^{3/4}}$, and $\hat{b} := \BigOs{n^{3/4}}$.
	\item[$\mathrm{(b)}$] \ncmt{$\BigOs{\epsilon^{-5}}$ for the expectation setting $($\textsf{E}$)$, where we choose $\mbf{p} = \BigOs{\epsilon}$, $b = \BigOs{\epsilon^{-3} }$, $\hat{b} = \BigOs{\epsilon^{-3} }$, and $n_k = n = \BigOs{\epsilon^{-3}}$.}
\end{compactitem}
In both settings, the expected total number of $J_{\eta T}$ evaluations is at most $\Exp{\Tc_J} = \BigOs{\epsilon^{-2}}$.
\end{corollary}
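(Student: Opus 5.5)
The plan is to follow the proofs of Corollaries~\ref{co:sarah_compexity} and \ref{co:hsgd_compexity}. First, by Lemma~\ref{le:hsvrg_estimator}, the \eqref{eq:hsvrg_estimator} estimator satisfies Definition~\ref{de:Vr_estimator2}. We read off $\tau=\omega$ and $\kappa=\min\{\omega(2-\omega),\mbf{p}/4\}$. We also read off $\Theta+\hat{\Theta}=10L^2\big[\frac{C(1-\omega)^2}{b}+\frac{8(1+\mu)\omega^2}{\hat{b}\mbf{p}^2}\big]$ and $\delta_k=\frac{2(1+\mu)\omega^2\sigma^2}{\mu n_k}$, where $\mu=0$ and $\delta_k=0$ when $\overline{G}$ is evaluated exactly. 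We then invoke Theorem~\ref{th:iFRBS_convergence2}. The bound \eqref{eq:A2_key_convergence_bound1} holds as soon as the coupling condition $4L^2\tau^2\kappa\geq 22(\Theta+\hat{\Theta})$ in \eqref{eq:A2_iFRBS_cond1} is met. It then gives $\frac{17\mcal{R}_0^2}{C_1(K+1)}+\frac{356\,S_K}{C_1\tau^2\kappa(K+1)}\leq\epsilon^2$, with $\mcal{R}_0^2$ as in Corollary~\ref{co:sarah_compexity}. Hence $K=\BigOs{\mcal{R}_0^2\epsilon^{-2}}$, which immediately yields $\Exp{\mcal{T}_J}=K=\BigOs{\epsilon^{-2}}$ in both settings.

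The key step is choosing $\omega$ to balance the coupling condition. I take $\omega:=\mbf{p}/4$, so that $\kappa\asymp\mbf{p}$ and $\tau\asymp\mbf{p}$; the left-hand side of the coupling condition is then of order $L^2\mbf{p}^3$. The first part of the right-hand side is of order $L^2/b$, which forces $b=\BigOs{\mbf{p}^{-3}}$. The second part is of order $L^2\omega^2/(\hat{b}\mbf{p}^2)\asymp L^2/\hat{b}$, which forces $\hat{b}=\BigOs{\mbf{p}^{-3}}$ as well. This choice of $\omega$ is the point where the extra $\mbf{p}^{-2}$ factor from the inner SVRG term is cancelled by $\omega^2$. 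Checking that cancellation, with constants, is the step I expect to be most delicate; if the constants fail, I will shrink $\omega$ by a further absolute factor.

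The per-iteration cost in expectation is $\BigOs{b+\hat{b}+\mbf{p}N}$. Here $N=n$ is the full-batch snapshot cost in setting (\textsf{F}) and $N=n_k$ is the mega-batch cost in setting (\textsf{E}), refreshed with probability $\mbf{p}$. In case (a) we have $\delta_k=0$, so $\Exp{\mcal{T}_G}=\BigOs{\epsilon^{-2}(\mbf{p}^{-3}+\mbf{p}n)}$. Choosing $\mbf{p}=n^{-1/4}$ gives $b=\hat{b}=\BigOs{n^{3/4}}$ and $\BigOs{n^{3/4}\epsilon^{-2}}$, plus a one-time initialization of $\widetilde{S}^0$ that I will absorb.

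In case (b), $S_K=(K+1)\frac{2(1+\mu)\omega^2\sigma^2}{\mu n}$. The variance term becomes $\frac{356\cdot 2(1+\mu)\omega^2\sigma^2}{C_1\mu\,\tau^2\kappa\,n}\asymp\frac{\sigma^2}{\mbf{p}n}$, and requiring it to be at most $\epsilon^2/2$ gives $n=\BigOs{\epsilon^{-2}\mbf{p}^{-1}}$. With $\mbf{p}=\epsilon$, this gives $n=\BigOs{\epsilon^{-3}}$ and $b=\hat{b}=\BigOs{\epsilon^{-3}}$. The per-iteration cost is then $\BigOs{\epsilon^{-3}+\epsilon\cdot\epsilon^{-3}}=\BigOs{\epsilon^{-3}}$, so $\Exp{\mcal{T}_G}=\BigOs{\epsilon^{-5}}$. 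Finally, I will verify that this choice is consistent: indeed $\mbf{p}N=\BigOs{\epsilon^{-2}}$ is dominated by $b$, so the stated parameters are the ones announced in the corollary.
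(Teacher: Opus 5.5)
Your proposal is correct and follows essentially the same route as the paper: read off $(\tau,\kappa,\Theta,\hat{\Theta},\delta_k)$ from Lemma~\ref{le:hsvrg_estimator}, take $\omega \asymp \mbf{p}$ so that $b,\hat{b} \asymp \mbf{p}^{-3}$ satisfy the coupling condition in \eqref{eq:A2_iFRBS_cond1}, and then balance $\mbf{p}^{-3}$ against $\mbf{p}N$. The $\omega^2/\mbf{p}^2$ cancellation is exact, so no further shrinking of $\omega$ is needed, and the remaining differences are cosmetic: the paper takes $\omega\le \mbf{p}/8$ so that $\kappa=\omega(2-\omega)$ rather than your $\kappa=\mbf{p}/4$, and in setting \textsf{(E)} it first fixes $\omega\asymp 1/(\epsilon^2 n)$ and then chooses $n\asymp\epsilon^{-3}$, whereas you fix $\mbf{p}=\epsilon$ and derive $n\asymp\epsilon^{-2}\mbf{p}^{-1}$, which gives the same parameters and the same $\BigOs{\epsilon^{-5}}$ bound.
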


\begin{proof}
\ncmt{(a)~For the finite-sum setting (\textsf{F}), since we use the exact evaluation for $\overline{G}w^k$, we have $\delta_k = 0$, leading to $S_K \leq S_{\infty} = 0$.}
Then, \eqref{eq:A2_key_convergence_bound1} reduces to
\begin{equation*} 
\arraycolsep=0.2em
\begin{array}{lcl}
\Exp{\norms{G\bar{x}^K + \bar{\xi}^K}^2}  & \leq &  \frac{\ncmt{17}\mcal{R}_0^2}{C_1 (K+1) } \quad \textrm{with} \quad \mcal{R}_0^2 := \frac{2}{\eta^2} \norms{x^0 - x^{\star}}^2 + \ncmt{\frac{11}{4}}\norms{Gx^0+\xi^0}^2.
\end{array}
\end{equation*}
To achieve $\Exp{\norms{G\bar{x}^K + \bar{\xi}^K}^2} \leq \epsilon^2$, we require $K := \frac{\ncmt{17}\mcal{R}_0^2}{C_1\epsilon^2} = \BigOs{\frac{\mcal{R}_0^2}{\epsilon^2}}$.
If we choose $\omega$ sufficiently small such that $\omega(2-\omega) \leq \frac{\mbf{p}}{4}$, then we have $\kappa = \omega(2-\omega)$.
From Lemma~\ref{le:hsvrg_estimator}, since $\tau = \omega$, $\kappa = \omega(2-\omega)$, \ncmt{$\Theta = 16L^2\left[\frac{(1-\omega)^2}{b} + \frac{4\omega^2}{\hat{b}\mbf{p}^2}\right]$}, and \ncmt{$\hat{\Theta} = 4L^2\left[\frac{(1-\omega)^2}{b} + \frac{4\omega^2}{\hat{b}\mbf{p}^2}\right]$}, the condition $4L^2\tau^2\kappa \geq \ncmt{22}(\Theta + \hat{\Theta})$ in \eqref{eq:A2_iFRBS_cond1}
becomes 
$4L^2\omega^3(2-\omega) \geq \ncmt{440}L^2 \left[\frac{(1-\omega)^2}{b} + \frac{\ncmt{4}\omega^2}{\hat{b}\mbf{p}^2}\right]$, which holds if
\begin{equation*} 
\arraycolsep=0.2em
\begin{array}{lcl}
\frac{(1-\omega)^2}{b} \leq \frac{\omega^3(2-\omega)}{\ncmt{220}} 
\qquad \text{and} \qquad 
\frac{\ncmt{4}\omega^2}{\hat{b}\mbf{p}^2} \leq \frac{\omega^3(2-\omega)}{\ncmt{220}}.
\end{array}
\end{equation*}
Since $\omega \in \ncmt{(0,1]}$, the first condition holds if $\frac{1}{b} \leq \frac{\omega^3}{\ncmt{220}}$.
Thus, we can choose $b := \BigOs{\frac{1}{\omega^3}} \geq \frac{\ncmt{220}}{\omega^3}$.
Similarly, the second condition holds if $\frac{\ncmt{4}}{\hat{b}\mbf{p}^2} \leq \frac{\omega}{\ncmt{220}}$.
In this case, we can choose $\hat{b} := \BigOs{\frac{1}{\omega \mbf{p}^2}} \geq \frac{\ncmt{880}}{\omega \mbf{p}^2}$.
Therefore, the expected total number of $G_i$ evaluations is at most
\begin{equation*}
\arraycolsep=0.2em
\begin{array}{lcl}
\Exp{\Tc_G} &= & K(2b + 2\hat{b} + n\mbf{p}) = \BigO{\frac{\mcal{R}_0^2}{\epsilon^2}\big( \frac{1}{\omega^3} + \frac{1}{\omega\mbf{p}^2} + n\mbf{p} \big)}.
\end{array}
\end{equation*}
In particular, if we choose $\omega := \BigOs{\mbf{p}} \leq \frac{\mbf{p}}{8}$, then $\Exp{\Tc_G} = \BigOs{\frac{\mcal{R}_0^2}{\epsilon^2}\big( \frac{1}{\mbf{p}^3} + n\mbf{p} \big)}$.
Moreover, if $\mbf{p} = \BigOs{n^{-1/4}}$, then we get $\Exp{\Tc_G} = \BigOs{\frac{\mcal{R}_0^2 n^{3/4}}{\epsilon^2}}$.
In this case, we obtain $b = \BigOs{n^{3/4}}$ and $\hat{b} = \BigOs{n^{3/4}}$.
The expected total number of $J_{\eta T}$ evaluations is $\Exp{\mcal{T}_J} = K = \BigOs{\epsilon^{-2}}$.

{(b)~For the expectation setting (\textsf{E}), we have $S_K := \sum_{k=0}^K\delta_k = \sum_{k=0}^K\frac{2\omega^2\sigma^2}{n} = \frac{2\omega^2(K+1)\sigma^2}{n}$.
If we choose $\omega$ sufficiently small such that $\omega(2-\omega) \leq \frac{\mbf{p}}{4}$, then we have $\kappa = \omega(2-\omega)$.
From Lemma~\ref{le:hsvrg_estimator}, since $\tau = \omega$, $\kappa = \omega(2-\omega)$, \ncmt{$\Theta = 16L^2\left[\frac{(1-\omega)^2}{b} + \frac{8\omega^2}{\hat{b}\mbf{p}^2}\right]$}, and \ncmt{$\hat{\Theta} = 4L^2\left[\frac{(1-\omega)^2}{b} + \frac{8\omega^2}{\hat{b}\mbf{p}^2}\right]$}, we can still choose $b := \BigOs{\frac{1}{\omega^3}}$ and $\hat{b} := \BigOs{\frac{1}{\omega \mbf{p}^2}}$ to satisfy the condition $4L^2\tau^2\kappa \geq \ncmt{22}(\Theta + \hat{\Theta})$ in \eqref{eq:A2_iFRBS_cond1}.

Now, to achieve $\Exp{\norms{G\bar{x}^K + \bar{\xi}^K}^2} \leq \epsilon^2$, from \eqref{eq:A2_key_convergence_bound1}, we require 
\begin{equation*}
	\arraycolsep=0.2em
	\begin{array}{lcl}
		\frac{\ncmt{17}\mcal{R}_0^2}{C_1 (K+1) } +  \frac{\ncmt{356}}{C_1\tau^2\kappa (K+1)}S_K \leq  \frac{\ncmt{17}\mcal{R}_0^2}{C_1 (K+1) } +  \frac{\ncmt{712}\sigma^2}{C_1\omega n}  \leq \epsilon^2.
	\end{array}
\end{equation*}
This condition holds if both $\frac{\ncmt{34}\mcal{R}_0^2}{C_1 (K+1) } \leq \epsilon^2$ and $ \frac{\ncmt{1424}\sigma^2}{C_1\omega n}  \leq \epsilon^2$ are simultaneously satisfied.
These conditions hold if we choose $K = \BigOs{\frac{\mcal{R}^2_0}{\epsilon^2}}$ and $\omega = \BigOs{\frac{1}{\epsilon^2 n}}$.
As a consequence,  the expected total number of $\mbf{G}(\cdot,\xi)$ calls is at most
\begin{equation*}
\arraycolsep=0.2em
\begin{array}{lcl}
	\Exp{\Tc_G} &= & K(2b + 2\hat{b} + n\mbf{p}) = \BigO{\frac{\mcal{R}_0^2}{\epsilon^2}\big( \frac{1}{\omega^3} + \frac{1}{\omega\mbf{p}^2} + n\mbf{p} \big)}.
\end{array}
\end{equation*}
In particular, if we choose $\omega := \BigOs{\mbf{p}} \leq \frac{\mbf{p}}{8}$, then we also have $\mbf{p} = \BigOs{\frac{1}{\epsilon^2 n}}$.
This choice leads to $\Exp{\Tc_G} = \BigOs{\frac{\mcal{R}_0^2}{\epsilon^2}\big( \frac{1}{\mbf{p}^3} + n\mbf{p} \big)} = \BigOs{\frac{\mcal{R}_0^2}{\epsilon^2}\big( \epsilon^6 n^3 + \frac{1}{\epsilon^3} \big)}$.
If we choose $n = \BigOs{\epsilon^{-3}}$, then this complexity becomes $\Exp{\Tc_G} = \BigOs{\epsilon^{-5}}$.
In this case, we have $\mbf{p} = \BigOs{\epsilon}$, $\mbf{\omega} = \BigOs{\epsilon}$, $b = \BigOs{\epsilon^{-3}}$, and $\hat{b} = \BigOs{\epsilon^{-3}}$.
The expected total number of $J_{\eta T}$ evaluations is $\Exp{\mcal{T}_J} = K = \BigOs{\epsilon^{-2}}$.}
\Eproof
\end{proof}

For the finite-sum setting \textsf{(F)}, Corollaries~\ref{co:sarah_compexity} and \ref{co:hsvrg_compexity} both yield an oracle complexity of $\BigOs{n^{3/4}\epsilon^{-2}}$. 
This improves upon deterministic FRBS by a factor of $n^{1/4}$. 
However, it is worse than the $\BigOs{n^{2/3}\epsilon^{-2}}$ complexity obtained by the unbiased SVRG and SAGA variants in Subsection~\ref{subsec:oracle_complexity1} by a factor $n^{\frac{1}{12}}$. 
We believe this gap is likely an artifact of our analysis rather than a limitation of the estimators themselves. 
This behavior also contrasts with nonconvex optimization, where biased variance-reduced estimators such as SARAH and Hybrid SGD can achieve optimal oracle complexities that are better than those of SVRG and SAGA. 
It therefore remains an open question whether biased variance-reduced estimators in \eqref{eq:iFRBS_scheme} can achieve a better oracle complexity than unbiased ones when solving \eqref{eq:GE}.

For the expectation setting \textsf{(E)}, Corollaries~\ref{co:sarah_compexity}, \ref{co:hsgd_compexity}, \ncmt{and \ref{co:hsvrg_compexity}} give an oracle complexity of $\BigOs{\epsilon^{-5}}$, which is worse than $\BigOs{\epsilon^{-10/3}}$ for SVRG and $\BigOs{\epsilon^{-4}\ln(\epsilon^{-1})}$ for the increasing mini-batch SGD variant. 
Again, we conjecture that this gap is an artifact of our analysis.

\beforesec
\section{Numerical Experiments}\label{sec:numerical_experiments}
\aftersec
In this section, we provide two numerical examples to evaluate our methods and compare them with recent algorithms from the literature.
All algorithms are implemented in Python and run on the Longleaf HPC cluster using NVIDIA A100 (40 GB) and L40 (48 GB) GPUs.

\beforesubsec
\subsection{\textbf{AUC optimization for classification with imbalanced data}}\label{subsec:AUC_optimization}
\aftersubsec
$\textrm{(a)}$~\textbf{\textit{Mathematical model.}}
Many real-world machine learning tasks such as bot detection, financial fraud identification, and medical diagnosis frequently require solving  classification problems with highly imbalanced data. 
In these scenarios, training a classifier based on accuracy is often ineffective (i.e., the model only needs to assign the negative label to all instances to achieve near perfect accuracy), necessitating the use of alternative objectives.

The Area Under the ROC Curve (AUC) is a robust metric for such cases, quantifying the probability that a classifier ranks a randomly chosen positive instance higher than a randomly chosen negative instance. 
However, directly optimizing AUC is computationally expensive because its pairwise nature requires comparing $O(n^2)$ pairs for $n$ data instances, which is prohibitive for large-scale datasets. 
To overcome this challenge, \cite{ying2016stochastic} introduced a reformulation based on the square loss, transforming the problem into the following minimax optimization model:
\begin{equation}\label{eq:AUC_minimax}
\min\limits_{(\mbf{w}, a, b) \in \Omega_1} \max\limits_{\alpha \in \Omega_2} \ \int_{\Zc} \Lc(\mathbf{w}, a, b, \alpha; \mbf{z}) d\rho(\mbf{z}),
\end{equation}
where $\mbf{z} := (\mbf{x}, y) \in \Zc$  represents  i.i.d. samples drawn from an unknown distribution $\rho$ on $\Zc = \Xc \times \Yc$, $\kappa := \sup_{\mbf{x} \in \Xc} \norms{\mbf{x}} < +\infty$, $\Omega_1 := \sets{(\mbf{w}, a, b) \in \R^{d+2} : \norms{\mbf{w}} \leq R, \ \vert a \vert \leq R\kappa, \ \vert b \vert \leq R\kappa}$, $\Omega_2 := \sets{\alpha \in \R : \vert \alpha \vert \leq 2R\kappa}$, and the function $\Lc$ is defined for a data point $\mbf{z} = (\mbf{x},y)$ as follows:
\begin{equation*}
\arraycolsep=0.2em
\begin{array}{lcl}
\Lc(\mathbf{w}, a, b, \alpha; \mbf{z}) &:=& (1 - \mbf{q} )(\mathbf{w}^\top \mathbf{x} - a)^2 \mathbb{I}_{[y=1]} + \mbf{q}(\mathbf{w}^\top \mathbf{x} - b)^2 \mathbb{I}_{[y=-1]} \vspace{1ex}\\
&& + {~} 2(1+\alpha)\left( \mbf{q} \mathbf{w}^\top \mathbf{x} \mathbb{I}_{[y=-1]} - (1 - \mbf{q}) \mathbf{w}^\top \mathbf{x} \mathbb{I}_{[y=1]} \right) - \mbf{q}(1 - \mbf{q})\alpha^2.
\end{array}
\end{equation*}
Here, $\mathbf{w}$ represents the model parameters, $a$ and $b$ are the estimated mean scores for the positive and negative classes, respectively, $\alpha$ is the dual variable, and $\mbf{q}$ is the prior probability of the positive class.

If we assume that $\rho$ is a uniform distribution, then \eqref{eq:AUC_minimax} can equivalently be written as the following finite-sum minimax optimization problem:
\begin{equation}\label{eq:AUC_finitesum_minimax}
\arraycolsep=0.2em
\begin{array}{lcl}
\min\limits_{\mbf{w},a,b \in \R^{d+2}} \max\limits_{\alpha \in \R} \ \Big\{f(\mbf{w},a,b) + \frac{1}{n} \sum_{i=1}^n \Lc_i(\mathbf{w}, a, b, \alpha; \mbf{z}_i) - g(\alpha)\Big\},
\end{array}
\end{equation}
where $\Lc_i(\mathbf{w}, a, b, \alpha) := \Lc(\mathbf{w}, a, b, \alpha; \mbf{z}_i)$, and $f(\mbf{w}, a, b) := \delta_{\Omega_1}(\mbf{w}, a, b)$ and $g(\alpha) := \delta_{\Omega_2}(\alpha)$ are the indicators of $\Omega_1$ and $\Omega_2$, respectively.
Let us define
\begin{equation*}
x := \begin{bmatrix}
\mbf{w}\\ a\\ b\\ \alpha
\end{bmatrix}, \quad 
G_ix := \begin{bmatrix}
\nabla_{\mbf{w}}\Lc_i(\mathbf{w}, a, b, \alpha) \\
\nabla_{a} \Lc_i(\mathbf{w}, a, b, \alpha) \\
\nabla_{b} \Lc_i(\mathbf{w}, a, b, \alpha) \\
-\nabla_{\alpha} \Lc_i(\mathbf{w}, a, b, \alpha)
\end{bmatrix}, \quad \text{and} \quad
Tx := \begin{bmatrix}
\partial_{\mbf{w}} f(\mbf{w},a,b) \\
\partial_{a} f(\mbf{w},a,b) \\
\partial_{b} f(\mbf{w},a,b) \\
\partial_{\alpha} g(\alpha)
\end{bmatrix},
\end{equation*}
Then, the optimality condition of problem \eqref{eq:AUC_finitesum_minimax} can be written as $0 \in Gx + Tx$, where $Gx = \frac{1}{n}\sum_{i=1}^n G_ix$.
This is exactly of the finite-sum form \eqref{eq:finite_sum_form} in the setting \textsf{(F)}.

Moreover, if we denote by $I_{+} := \sets{i \in [n] : y_i = 1}$ and $I_{-} := \sets{ i \in [n] : y_i = -1}$ the index sets of positive and negative classes, respectively, by $n_{+} := \vert I_{+} \vert$ and $n_{-} := \vert I_{-}\vert$ their corresponding cardinalities, by $\boldsymbol{\mu}_{+} := \frac{1}{n_{+}}\sum_{i \in I_{+}} \mbf{x}_i$ and $\boldsymbol{\mu}_{-} := \frac{1}{n_{-}}\sum_{i \in I_{-}} \mbf{x}_i$ their corresponding class means, and by $\mbf{S}_{+} := \frac{1}{n_{+}}\sum_{i \in I_{+}} \mbf{x}_i \mbf{x}_i^\top$ and $\mbf{S}_{-} := \frac{1}{n_{-}}\sum_{i \in I_{-}} \mbf{x}_i \mbf{x}_i^\top$ their corresponding class second-moment matrices, then we can express  $G$ as an affine mapping $Gx = \mbf{Q}x + \mbf{r}$, where the matrix $\mbf{Q}$ is given by
$$\mbf{Q} := 2 \mbf{q}(1 - \mbf{q})\begin{bmatrix}
\mbf{S}_{+} + \mbf{S}_{-} & -\boldsymbol{\mu}_{+} & -\boldsymbol{\mu}_{-} & \boldsymbol{\mu}_{-} - \boldsymbol{\mu}_{+} \\
-\boldsymbol{\mu}_{+}^\top & 1 & 0 & 0 \\
-\boldsymbol{\mu}_{-}^\top & 0 & 1 & 0 \\
-(\boldsymbol{\mu}_{-} - \boldsymbol{\mu}_{+})^\top & 0 & 0 & 1
\end{bmatrix},$$
which completely depends on the dataset. 
Thus, $G$ is $L$-Lipschitz continuous with $L := \norms{\mbf{Q}}_2$.

\vspace{0.5ex}
\noindent$\textrm{(b)}$~\textbf{\textit{Data generation and experiment setup.}}
We generate a synthetic imbalanced dataset consisting of $n$ samples and $d$ features. 
The feature matrix $\mbf{X} \in \mathbb{R}^{n \times d}$ is sampled from a standard normal distribution $\mathcal{N}(0, 1)$, and a ground truth weight vector $w^* \in \mathbb{R}^d$ is generated from the same distribution and normalized to unit norm. 
The noisy scores are then computed as $s = \mbf{X}w^* + \epsilon$, where $\epsilon \sim \mathcal{N}(0, \sigma^2)$ represents additive noise. 
To strictly enforce a target positive class ratio $\mbf{q}$, binary labels $y \in \{1, -1\}^n$ are assigned by thresholding the scores at the $(1 - \mbf{q})$-th percentile. 

We consider two experiments: \textit{Experiment 1} with $n = 50,000$ and $d = 250$, leading to $p = 253$, and \textit{Experiment 2} with $n = 100,000$ and $d = 500$, leading to $p = 503$.
Each experiment is run five times, and then we report the average relative forward-backward splitting residual norm $\frac{\norms{\Fc_{\eta}x^k}}{\norms{\Fc_{\eta}x^0}}$ over $1000$ epochs, where one epoch corresponds to $n$ component-operator evaluations of $G_i$.

\vspace{0.5ex}
\noindent$\textrm{(c)}$~\textbf{\textit{Algorithms and parameters.}}
We consider $6$ variants of our \ref{eq:iFRBS_scheme} method with different variance-reduced estimators: $3$ unbiased ones --- \ref{eq:svrg_estimator} (\texttt{VrFRBS-SVRG}), \ref{eq:saga_estimator} (\texttt{VrFRBS-SAGA}), and increasing mini-batch SGD (\texttt{VrFRBS-SGD (imb)}), and $3$ biased instances --- \ref{eq:sarah_estimator} (\texttt{VrFRBS-SARAH}), \ref{eq:hsgd_estimator} (\texttt{VrFRBS-HSGD}), and  \ref{eq:hsvrg_estimator} (\texttt{VrFRBS-HSVRG}).
The two competitors   are the variance-reduced forward-reflected-backward splitting scheme (\texttt{VFRBS}) in \cite{alacaoglu2021forward} and the variance-reduced extragradient method (\texttt{VEG}) in \cite{alacaoglu2021stochastic}.
Both \texttt{VFRBS} and \texttt{VEG} are variants of EG and use loopless SVRG estimators.

After tuning each method, the parameters used in these algorithms are chosen as follows:
\begin{compactitem}[$\diamond$]
\item For \texttt{VrFRBS-SVRG}, we choose $\eta = \frac{1}{5L}$, a mini-batch size $b = \lfloor 0.5n^{2/3} \rfloor$, and $\mbf{p} = n^{-1/3}$.
\item For \texttt{VrFRBS-SAGA}, we choose $\eta = \frac{1}{14L}$, a mini-batch size $b = \lfloor 0.5n^{2/3} \rfloor$.
\item For \texttt{VrFRBS-SGD (imb)}, we choose $\eta = \frac{1}{2L}$ (\textit{Experiment 1}) and $\eta = \frac{1}{4.5L}$ (\textit{Experiment 2}), and an increasing mini-batch size $b_k = \min\sets{n, \max\sets{1, \lfloor 0.01n(k+1)^{3/4}\rfloor}}$.
\item For \texttt{VrFRBS-SARAH}, we choose $\eta = \frac{1}{3.5L}$, a mini-batch size $b = \lfloor 0.25n^{3/4} \rfloor$, and $\mbf{p} = n^{-1/4}$.
\item For \texttt{VrFRBS-HSGD}, we choose $\eta = \frac{1}{1.5L}$, a mini-batch size $b = \lfloor 0.25n^{3/4} \rfloor$, $\omega = 0.5$.
\item For \texttt{VrFRBS-HSVRG}, we choose $\eta = \frac{1}{5.5L}$, a mini-batch size $b = \lfloor 0.25n^{3/4} \rfloor$, $\mbf{p} = n^{-1/3}$, and $\omega = 0.5$.
\item For \texttt{VFRBS}, we set $\eta = 7\eta_{\text{def}}$ and a mini-batch size $b = \lfloor 0.5n^{2/3}\rfloor$, and $\mbf{p} = n^{-1/3}$, where $\eta_{\text{def}} := \frac{0.95(1-\sqrt{1 - \mbf{p}})}{2L}$ is its theoretical default stepsize.
\item For \texttt{VEG}, we choose $\eta = 6\eta_{\text{def}}$ and a mini-batch size $b = \lfloor 0.5n^{2/3}\rfloor$, and $\mbf{p} = n^{-1/3}$, where $\eta_{\text{def}} := \frac{0.95\sqrt{1-\alpha}}{L}$ is its theoretical default stepsize with $\alpha := 1 - \mbf{p}$.
\end{compactitem}
Note that the stepsize $\eta$ above for each algorithm is manually tuned to achieve the best performance. 
Consequently, some values exceed the theoretical thresholds established in the  theorems, which are sufficient but not necessarily optimal. 
Similarly, the stepsizes for the \texttt{VFRBS} and \texttt{VEG} baselines are larger than those originally proposed, as the theoretical values did not yield the best results in our experiments.

\vspace{0.5ex}
\noindent$\textrm{(d)}$~\textbf{\textit{Numerical results.}}
The numerical results are presented in Figure~\ref{fig:AUC_results}, where the vertical axis shows the relative residual norm $\frac{\norms{\Fc_{\eta}x^k}}{\norms{\Fc_{\eta}x^0}}$, while the horizontal axis shows the number of epochs.

\begin{figure}[hpt!]
\vspace{-0ex}
\centering
\includegraphics[width=\textwidth]{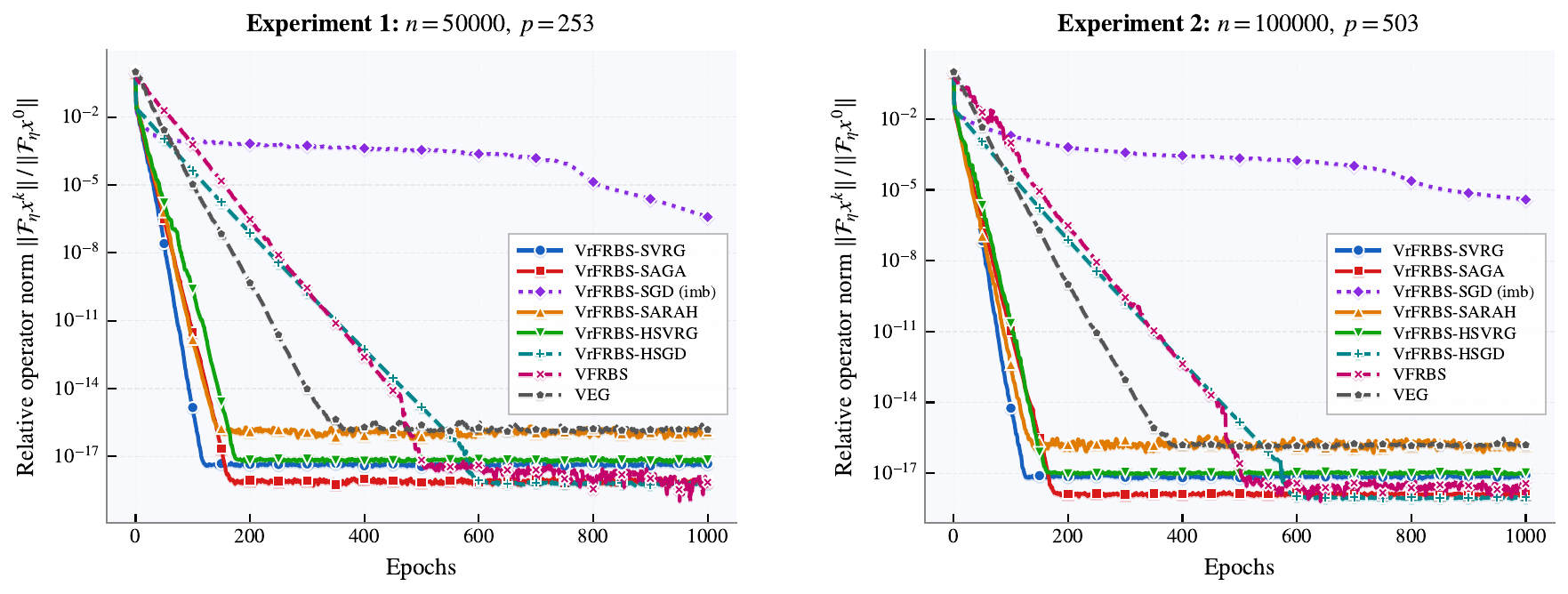}
\vspace{-4ex}
\caption{The performance of the 6 variants of \eqref{eq:iFRBS_scheme} and their competitors on the AUC optimization problem \eqref{eq:AUC_finitesum_minimax}, averaged over 5 problem instances}
\label{fig:AUC_results}
\vspace{-2ex}
\end{figure}

As illustrated in Figure~\ref{fig:AUC_results}, \texttt{VrFRBS-SVRG} emerges as the most efficient algorithm in this experiment. 
It converges slightly faster than the three variants of \eqref{eq:iFRBS_scheme} utilizing \eqref{eq:saga_estimator}, \eqref{eq:sarah_estimator}, and \eqref{eq:hsvrg_estimator} estimators, while significantly outperforming the remaining methods.
Interestingly, unlike in stochastic optimization, so far, we observe that biased estimators such as SARAH or HSVRG do not surpass unbiased ones in this more general minimax setting. 
This observation is consistent with our theoretical oracle-complexity results.
The two baseline competitors, \texttt{VFRBS} and \texttt{VEG}, do not converge as rapidly as our top-performing variants, though they remain faster than or at least comparable to the \texttt{VrFRBS-HSGD} variant.
Comparing the two hybrid-type biased estimators, the variant using \eqref{eq:hsvrg_estimator} strongly outperforms the one using \eqref{eq:hsgd_estimator}. 
This superiority can be attributed to the variance-reduction property inherent in the unbiased component of HSVRG.
Finally, \texttt{VrFRBS-SGD (imb)} exhibits the least competitive performance, barely reaching a relative operator norm of $10^{-6}$ after 1000 epochs, which reflects the poor oracle complexity derived in our theoretical analysis.

We also observe highly consistent results between \textit{Experiments 1} and \textit{Experiment 2}, even though both the sample size and dimension are doubled, illustrating the stability and scalability of our methods with respect to problem size.

\beforesubsec
\subsection{\textbf{Policy evaluation with linear approximation}}\label{subsec:policy_evaluation}
\aftersubsec
$\textrm{(a)}$~\textbf{\textit{Mathematical model.}}
Policy evaluation, which estimates the value function of a Markov Decision Process (MDP) under a given policy, is a crucial step in many reinforcement learning algorithms.
Consider an infinite-horizon discounted-reward problem in which an MDP is characterized by $(\Sc, \Ac, \Pc_{s,s'}^a, \Rc, \gamma)$, where $\Sc$ is the state space, $\Ac$ is the action space, $\Pc_{s,s'}^a$ denotes the transition probabilities, $\Rc(s,a)$ is the immediate reward received after taking action $a$ at state $s$, and $\gamma$ is the discount factor.
The value function associated with a given policy $\pi$ is defined by 
\begin{equation*}
\begin{array}{l}
V^\pi := [V^{\pi}(1), \cdots, V^{\pi}(|S|)]^{\top}, \quad \textrm{where} \quad V^{\pi}(s) := \Expk{\sum_{t=0}^\infty \gamma^t R(s_t, a_t) \mid s_0 = s, \pi}.
\end{array}
\end{equation*}
Then, it is well-known that $V^\pi$ is the unique fixed point of the Bellman operator $T^\pi$ defined by $T^\pi(u) := R^\pi + \gamma P^\pi u$ for $u \in \R^{|\Sc|}$, where $R^\pi := [R^\pi(1), \cdots, R^\pi(|\Sc|)]^\top$ is the expected reward vector under policy $\pi$ with $R^\pi(s) := \Expsk{\pi(a|s)}{\Rc(s,a)}$ and $P^\pi$ is the transition matrix induced by the policy $\pi$, defined elementwise as $P^{\pi}(s,s') := \Expsk{\pi(a|s)}{\Pc^a_{ss'}}$.

In many scenarios, the state space $\Sc$ is large or even infinite, and consequently, evaluating the exact value function is extremely expensive or even infeasible.
One approach is to approximate $V^{\pi}$ by a linear function $\hat{V}^\pi(s) := \phi(s)^\top \theta$, where $\phi: \Sc \to \R^d$ is a given feature map and $\theta \in \R^d$ is the model parameter that we need to learn.
This reduces the problem dimension from $|\Sc|$ (very large or infinite) to $d \ll |\Sc|$.
Since $\hat{V}^\pi(s)$ approximates $V^\pi$, the unique fixed-point of $T^\pi$, we can find $\theta$ by minimizing the [Empirical] Mean Squared Projected Bellman Error (EM-MSPBE) as follows:
\begin{equation}\label{eq:PE_min}
\arraycolsep=0.2em
\begin{array}{lcl}
\min\limits_{\theta \in \mathbb{R}^d}\Big\{ \frac{1}{2}  \norms{\hat{A}\theta - \hat{b}} _{\hat{C}^{-1}}^2 + \revise{\nu} f(\theta) \Big\}.
\end{array}
\end{equation}
Here, given a dataset of $n$ transitions $\Dc := \sets{(s_t, a_t, r_t, s_{t+1})}_{t=1}^n$, denote $\phi_t := \phi(s_t)$, $\phi'_t := \phi(s_{t+1})$, and let $\hat{A} := \frac{1}{n}\sum_{t=1}^n A_t$, $\hat{b} := \frac{1}{n}\sum_{t=1}^n b_t$, and $\hat{C} := \frac{1}{n}\sum_{t=1}^n C_t$ with $A_t := \phi_t(\phi_t - \gamma\phi_t')^\top$, $b_t := r_t \phi_t$, and $C_t := \phi_t \phi_t^\top$.
Moreover, $f(\theta)$ is a regularizer added to the objective function and $\revise{\nu} > 0$ is a regularization parameter.
\revise{Following \cite{du2017stochastic}, we assume that $\hat{C}$ is positive definite and $\hat{A}$ has full rank.}

Due to the weighted norm $\norms{\cdot}_{\hat{C}^{-1}}$, the current objective function of \eqref{eq:PE_min} does not admit a finite-sum structure.
We follow \cite{du2017stochastic} to reformulate \eqref{eq:PE_min} into the following finite-sum minimax problem:
\begin{equation}\label{eq:PE_minimax}
\arraycolsep=0.2em
\begin{array}{lcl}
\min\limits_{\theta \in \mathbb{R}^d} \max\limits_{w \in \mathbb{R}^d} \Big\{  \revise{\nu} f(\theta) + \frac{1}{n} \sum_{t=1}^n \mathcal{L}_t(\theta, w) \Big\},
\end{array}
\end{equation}
where $w \in \R^d$ is the dual variable acting as a critic, and $\mathcal{L}_t(\theta, w) = - w^\top A_t \theta - \frac{1}{2} \| w \|_{C_t}^2 + w^\top b_t$.
\revise{
We choose $f(\theta) := \sum_{j=1}^d p_{\lambda, a}(\theta_j)$ as the SCAD (Smoothly Clipped Absolute Deviation) regularizer \cite{fan2001variable}, where $p_{\lambda, a}$ denotes the SCAD penalty with parameters $\lambda > 0$ and $a > 2$.}
If we define $x := [\theta, w] \in \R^{2d}$, $G_t x := \left[ \nabla_{\theta} \Lc_t(\theta, w), -\nabla_{w} \Lc_t(\theta, w) \right] = [-A_t^\top w, \ A_t \theta + C_t w - b_t]$, $Gx = \frac{1}{n}\sum_{t=1}^n G_t x$, and $Tx := [\revise{\nu}\partial f(\theta), 0]$, then \eqref{eq:PE_minimax} reduces to $0 \in Gx + Tx$, a special case of \eqref{eq:GE}.

\revise{
The following proposition shows that, under appropriate conditions of $A$ and $\hat{C}$, $G+T$ is $\rho$-co-hypomonotone, and thus every $x^{\star} \in \zer{G+T}$ is a $\rho$-weak-Minty solution.
The proof of this proposition can be found in Appendix~\ref{apdx:sec:numerical_experiments}.

\begin{proposition}\label{prop:PE_cohypomonotone}
	Suppose that $C \succ 0$.
	Let $\mu := \lambda_{\min}(\hat{A}^\top \hat{C}^{-1}\hat{A})$ and $c := \lambda_{\min}(\hat{C})$.
	If  $\nu < (a-1)\mu$, then $\Phi := G+T$ in the \eqref{eq:GE} reformulation of \eqref{eq:PE_minimax} is $\rho$-co-hypomonotone with 
	$\rho := \frac{\nu}{(a-1)c\big(\sqrt{\mu} - \sqrt{\frac{\nu}{a-1}}\big)^2}$.
\end{proposition}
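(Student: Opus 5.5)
The plan is to verify the defining inequality $\iprods{u - v, x - x'} \geq -\rho\norms{u - v}^2$ directly for arbitrary $(x,u),(x',v)\in\gra{\Phi}$. The main idea is to keep only the $w$-block of the residual: it alone will control everything, so the (possibly unbounded) subgradient part never has to be estimated.

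\textbf{Step 1 (hypomonotonicity of the SCAD part).} I will use that the SCAD penalty $p_{\lambda,a}$ is $\frac{1}{a-1}$-weakly convex, i.e., $p_{\lambda,a}+\frac{1}{2(a-1)}(\cdot)^2$ is convex. Its derivative has slope $\geq -\frac{1}{a-1}$ on $(\lambda, a\lambda)$ and is nondecreasing elsewhere. Hence, with $\beta := \frac{\nu}{a-1}$, the operator $\nu\partial f$ is $\beta$-hypomonotone: $\iprods{g - g', \theta - \theta'} \geq -\beta\norms{\theta - \theta'}^2$ for $g \in \nu\partial f(\theta)$ and $g' \in \nu\partial f(\theta')$. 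This is the only property of the regularizer I need.

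\textbf{Step 2 (expanding the inner product).} Write $d_\theta := \theta - \theta'$, $d_w := w - w'$, and $d_g := g - g'$. The residual difference is $u - v = (r_1, r_2)$ with
\begin{equation*}
r_1 = -\hat{A}^{\top}d_w + d_g, \qquad r_2 = \hat{A}d_\theta + \hat{C}d_w.
\end{equation*}
The bilinear coupling terms cancel, so
\begin{equation*}
\iprods{u - v, x - x'} = \iprods{d_g, d_\theta} + d_w^{\top}\hat{C}d_w \geq s^2 - \beta\norms{d_\theta}^2, \qquad s := \norms{d_w}_{\hat{C}}.
\end{equation*}

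\textbf{Step 3 (bounding $\norms{d_\theta}$ by $r_2$ and $s$).} From $r_2$ we get $\hat{A}d_\theta = r_2 - \hat{C}d_w$. I will take $\hat{C}^{-1}$-norms and use $\norms{\hat{C}d_w}_{\hat{C}^{-1}} = s$, $\norms{r_2}_{\hat{C}^{-1}} \leq \norms{r_2}/\sqrt{c}$, and $\norms{\hat{A}d_\theta}_{\hat{C}^{-1}}^2 = d_\theta^{\top}\hat{A}^{\top}\hat{C}^{-1}\hat{A}d_\theta \geq \mu\norms{d_\theta}^2$. With $t := \norms{r_2}/\sqrt{c}$, this gives
\begin{equation*}
\sqrt{\mu}\,\norms{d_\theta} \leq t + s.
\end{equation*}
Positive definiteness of $\hat{C}$ and full rank of $\hat{A}$ are used exactly here.

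\textbf{Step 4 (one-variable minimization, the crux).} Set $\gamma := \beta/\mu$, which is $<1$ by the hypothesis $\nu < (a-1)\mu$. Steps 2 and 3 give
\begin{equation*}
\iprods{u-v,x-x'} \geq (1-\gamma)s^2 - 2\gamma ts - \gamma t^2.
\end{equation*}
Minimizing this quadratic over $s \geq 0$ gives the lower bound
\begin{equation*}
-\Big(\gamma + \frac{\gamma^2}{1-\gamma}\Big)t^2 = -\frac{\gamma}{1-\gamma}\,t^2 = -\frac{\beta}{c(\mu-\beta)}\norms{r_2}^2 .
\end{equation*}

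\textbf{Step 5 (matching the stated $\rho$).} Since $\norms{r_2} \leq \norms{u - v}$ and
\begin{equation*}
\mu - \beta = (\sqrt{\mu}-\sqrt{\beta})(\sqrt{\mu}+\sqrt{\beta}) \geq (\sqrt{\mu}-\sqrt{\beta})^2,
\end{equation*}
we get $\frac{\beta}{c(\mu-\beta)} \leq \rho$, which proves the claim. Weak-Minty solutions then follow from the remark after Assumption~\ref{as:A1}.

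The step I expect to need the most care is Step 4. One must avoid estimating $r_1$, which involves the unbounded $d_g$. One must also confirm that replacing $\norms{d_\theta}$ by its upper bound is legitimate: it is, since $\norms{d_\theta}$ enters only through the negative term $-\beta\norms{d_\theta}^2$. The weak-convexity modulus of SCAD in Step 1 is standard and should only need a brief justification from its piecewise definition.
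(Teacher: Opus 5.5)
Your proof is correct, and it departs from the paper's argument after Step 2. The paper also starts from weak convexity and the same inner-product identity, but then it splits into cases. If $\iprods{u-v,x-x'}\ge 0$, there is nothing to prove. Otherwise $s^2<\beta\norms{d_\theta}^2$, and the paper feeds this into the same triangle inequality to get $\norms{\hat{C}^{-1/2}r_2}\ge(\sqrt{\mu}-\sqrt{\beta})\norms{d_\theta}$. It then discards the $s^2$ term and concludes $\iprods{u-v,x-x'}\ge-\beta\norms{d_\theta}^2\ge-\rho\norms{u-v}^2$.

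In effect, the paper uses $s$ only through its worst-case upper bound and throws away the helpful $+s^2$ term. You keep $s$ as a free variable and minimize the resulting quadratic. This removes the case split and gives the sharper modulus $\frac{\beta}{c(\mu-\beta)}=\frac{\nu}{c((a-1)\mu-\nu)}$. That is smaller than the stated $\rho$ by the factor $\frac{\sqrt{\mu}-\sqrt{\beta}}{\sqrt{\mu}+\sqrt{\beta}}$. The gain matters most as $\nu\uparrow(a-1)\mu$: your constant grows like $(\mu-\beta)^{-1}$, while the paper's grows like $(\mu-\beta)^{-2}$.

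The paper's route is marginally more elementary, since it needs no optimization. Yours gives a strictly better constant, which has a practical payoff. For example, the threshold $\nu_{\max,j}$ in the policy-evaluation experiments could be relaxed from $(a-1)\mu_j\big(\frac{q_j}{1+q_j}\big)^2$ to $(a-1)\mu_j\frac{q_j^2}{1+q_j^2}$ while still guaranteeing $L_j\rho_j\le\frac{1}{32\sqrt{134}}$.
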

}

\vspace{0.5ex}
\noindent$\textrm{(b)}$~\textbf{\textit{Experiment setup and data.}}
We evaluate the performance of the $6$ different variants of \eqref{eq:iFRBS_scheme} through two experiments: \textit{Experiment 1} and \textit{Experiment 2}.
\begin{compactitem}[$\bullet$]
\item In \textit{Experiment 1}, we consider a randomly generated MDP with $1{,}000$ states and $20$ actions.
The transition probabilities are chosen as $P(s'|s,a) \propto p^a_{ss'} + 10^{-5}$, where $p^a_{ss'} \sim U[0,1]$.
The action-selection policy and initial distribution were generated in a similar way.
The dataset consists of $n=20{,}000$ transitions.
Each state is represented by a $201$-dimensional feature vector, where $200$ of the features were sampled from a uniform distribution, and the last feature was set to the constant value $1$ to handle the bias term.
The discount factor is chosen as $\gamma = 0.95$.

\item In \textit{Experiment 2}, we consider the Mountain Car problem \cite[Example 9.2]{sutton2018reinforcement}.
Instead of a random policy, we first use the SARSA algorithm with $d=300$ CMAC features to obtain a good policy.
Then, we run this policy for $n = 20{,}000$ transitions to obtain the dataset.
Here, we choose the discount factor to be $\gamma = 0.99$.
\end{compactitem}

\revise{
For each problem instance, we first check the raw empirical matrices, and found that they satisfy the assumptions required by Proposition~\ref{prop:PE_cohypomonotone} in \textit{Experiment 1}.
In \textit{Experiment 2}, some CMAC features are inactive or redundant along the finite trajectory, resulting in rank-deficient empirical matrices.
This issue can be addressed, for example, by removing redundant features, or applying an appropriate dimension-reduction procedure.
However, to preserve the original feature dimension and experimental setup, we apply, only when needed, a small diagonal stabilization with relative magnitude $10^{-3}$.
Otherwise, the raw matrices are left unchanged.

For the SCAD regularizer, we choose $\lambda = 0.1$, $a = 3.7$, and $\nu = 0.99\min_j \nu_{\max, j}$, where $\nu_{\max, j} = (a-1)\mu_j \big(\frac{q_j}{1+q_j}\big)^2$ with $q_j = \sqrt{\frac{c_j}{32\sqrt{134}L_j}}$, $c_j = \lambda_{\min}(\hat{C_j})$, and $\mu_j = \lambda_{\min}(\hat{A}_j^\top \hat{C}_j^{-1}\hat{A}_j)$ are obtained from the data of the problem instance $j$.
This choice ensures that $L_j\rho_j \leq \frac{1}{32\sqrt{134}}$ and $\nu < (a-1)\mu_j$ for every problem instance $j$.
}
We run each experiment five times and report the average relative norm $\frac{\norms{\Fc_{\eta}x^k}}{\norms{\Fc_{\eta}x^0}}$ over $5000$ epochs, where one epoch is defined as $n$ evaluations of $G_i$.

\vspace{0.5ex}
\noindent$\textrm{(c)}$~\textbf{\textit{Algorithms and parameters.}}
Similar to Subsection~\ref{subsec:AUC_optimization}, we still consider 6 variants of our \ref{eq:iFRBS_scheme} algorithm and the two competitors \texttt{VFRBS} and \texttt{VEG}.
The parameters are chosen as follows:
First, since $Gx = \frac{1}{n}\sum_{t=1}^n \mbf{G}_tx + \mbf{g}$ is affine, it is $L$-average Lipschitz continuous with \revise{$L := \left[\lambda_{\max}\big(\frac{1}{n}\sum_{t=1}^n \mbf{G}_t^\top \mbf{G}_t\big)\right]^{1/2}$}.
Next, having $L$, we set other parameters as follows.
\begin{compactitem}[$\diamond$]
\item For \texttt{VrFRBS-SVRG}, we choose $\eta = \revise{\frac{0.99}{3\sqrt{2}L}}$, a mini-batch size $b = \lfloor 0.5n^{2/3} \rfloor$, and $\mbf{p} = n^{-1/3}$.
\item For \texttt{VrFRBS-SAGA}, we choose $\eta = \revise{\frac{0.99}{3\sqrt{2}L}}$, a mini-batch size $b = \lfloor 0.5n^{2/3} \rfloor$.
\item For \texttt{VrFRBS-SGD(imb)}, we set $\eta = \revise{\frac{0.99}{3\sqrt{2}L}}$, a mini-batch size $b_k = \min\sets{n, \max\sets{1, \lfloor 0.025n(k+1)^{3/4}\rfloor}}$.
\item For \texttt{VrFRBS-SARAH}, we choose $\eta = \revise{\frac{0.99}{\sqrt{134}L}}$, a mini-batch size $b = \lfloor 0.25n^{3/4} \rfloor$, and $\mbf{p} = n^{-1/4}$.
\item For \texttt{VrFRBS-HSVRG}, we choose $\eta = \revise{\frac{0.99}{\sqrt{134}L}}$, a mini-batch size $b = \lfloor 0.25n^{3/4} \rfloor$, $\mbf{p} = n^{-1/3}$, and $\omega = 0.5$.
\item For \texttt{VrFRBS-HSGD}, we choose $\eta = \revise{\frac{0.99}{\sqrt{134}L}}$, a mini-batch size $b = \lfloor 0.25n^{3/4} \rfloor$, and $\omega = 0.5$.
\item For \texttt{VFRBS}, we use its theoretical default stepsize $\revise{\eta = \eta_{\text{def}}} := \frac{0.95(1-\sqrt{1 - \mbf{p}})}{2L}$, a mini-batch size $b = \lfloor 0.5n^{2/3}\rfloor$, and $\mbf{p} = n^{-1/3}$.
\item For \texttt{VEG}, we use its theoretical default stepsize $\revise{\eta = \eta_{\text{def}}} := \frac{0.95\sqrt{1-\alpha}}{L}$ with $\alpha := 1 - \mbf{p}$, a mini-batch size $b = \lfloor 0.5n^{2/3}\rfloor$, and $\mbf{p} = n^{-1/3}$.
\end{compactitem}

\vspace{0.5ex}
\noindent$\textrm{(d)}$~\textbf{\textit{Numerical results.}}
The numerical results for this example are presented in Figure~\ref{fig:PE_results}.
\begin{figure}[hpt!]
\vspace{-2ex}
\centering
\includegraphics[width=\textwidth]{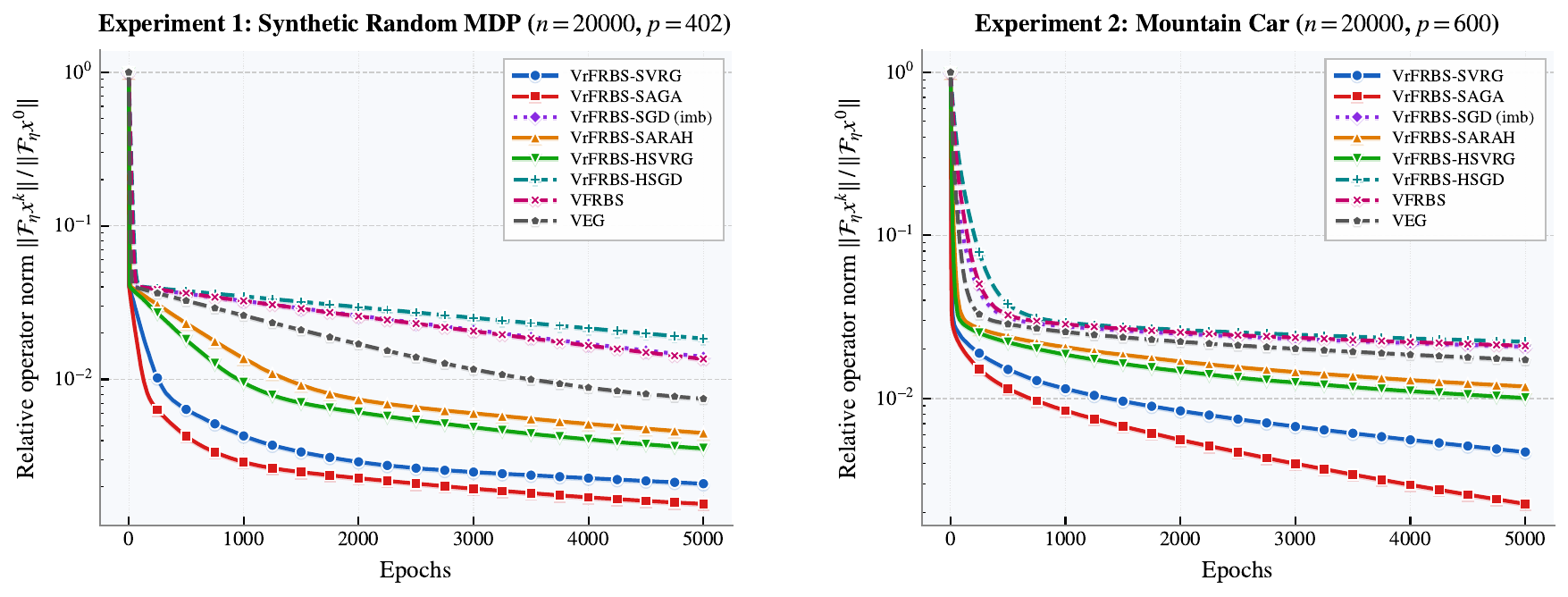}
\vspace{-4ex}
\caption{The performance of the 6 variants of \eqref{eq:iFRBS_scheme} and their competitors on the policy evaluation problem \eqref{eq:PE_minimax}, averaged over 5 problem instances}
\label{fig:PE_results}
\vspace{-3ex}
\end{figure}

As shown in Figure~\ref{fig:PE_results}, the top-performing variants from Subsection~\ref{subsec:AUC_optimization},  namely \texttt{VrFRBS-SVRG}, \texttt{VrFRBS-SAGA}, \texttt{VrFRBS-SARAH}, and \texttt{VrFRBS-HSVRG}, continue to yield the best results in this policy evaluation task. 
However, the variant utilizing the \eqref{eq:saga_estimator} estimator now converges faster than the other three, and this advantage is more pronounced in \textit{Experiment 2}. 
\revise{\texttt{VrFRBS-HSGD} and \texttt{VrFRBS-SGD(imb)}, together with the baseline competitors \texttt{VFRBS} and \texttt{VEG}, exhibit the slowest convergence overall.}

\beforesec
\section*{Acknowledgements and Declarations}
\aftersec
\noindent\textbf{Acknowledgements}.
This work is partly supported by the National Science Foundation: NSF-RTG grant No. NSF DMS-2134107 and the Office of Naval Research (ONR), grant No. N00014-23-1-2588.

\vspace{1ex}
\noindent\textbf{Conflicts of interest/competing interests.}
The authors declare that they have no conflicts of interest or competing interests related to this work.

\vspace{1ex}
\noindent\textbf{Data availability.}
We evaluate our methods and their competitors using both synthetic datasets, generated according to the procedure described herein, and publicly available data from \cite{sutton2018reinforcement}.

\appendix
\beforesec
\section{Appendix: Technical lemmas}\label{sec:useful_lemmas}
\aftersec
This appendix recalls two technical lemmas, which will be used in our convergence analysis.
The first lemma is the well-known Robbins-Siegmund supermartingale theorem in \cite{robbins1971convergence}.

\begin{lemma}[\cite{robbins1971convergence}]\label{le:RS_lemma}
Let $\sets{U_k}$, $\sets{\alpha_k}$, $\sets{V_k}$ and $\sets{R_k}$ be  sequences of nonnegative integrable random variables on some arbitrary probability space and adapted to the filtration $\set{\Fc_k}$ with $\sum_{k=0}^{\infty}\alpha_k < +\infty$ and $\sum_{k=0}^{\infty}R_k < +\infty$ almost surely, and 
\begin{equation}\label{eq:RS_martingale_cond}
\Exp{ U_{k+1} \mid \Fc_k} \leq (1 + \alpha_k)U_k - V_k + R_k, 
\end{equation}
almost surely for all $k \geq 0$.
Then, $\sets{U_k}$ converges almost surely to a nonnegative random variable $U$ and $\sum_{k=0}^{\infty}V_k <+\infty$ almost surely.
\end{lemma}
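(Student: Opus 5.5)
The plan is the classical Robbins--Siegmund argument: normalize away the factor $(1+\alpha_k)$, turn \eqref{eq:RS_martingale_cond} into a supermartingale that is bounded below on suitable events, and then invoke the supermartingale convergence theorem. Define $\pi_0 := 1$ and $\pi_k := \prod_{i=0}^{k-1}(1+\alpha_i)^{-1}$ for $k\geq 1$. Since $\alpha_i$ is $\Fc_i$-measurable, $\pi_{k+1}$ is $\Fc_k$-measurable and $\pi_{k+1}(1+\alpha_k) = \pi_k$. Moreover, $\sum_k\alpha_k<+\infty$ a.s. gives $\ln(1/\pi_k) = \sum_{i<k}\ln(1+\alpha_i) \leq \sum_{i<k}\alpha_i$. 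Hence $\pi_k$ decreases to a limit $\pi_\infty \in (0,1]$ almost surely. Multiplying \eqref{eq:RS_martingale_cond} by $\pi_{k+1}$ and pulling it inside the conditional expectation yields
\begin{equation*}
\Exp{\pi_{k+1}U_{k+1}\mid\Fc_k} \leq \pi_kU_k - \pi_{k+1}V_k + \pi_{k+1}R_k .
\end{equation*}

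Next, set $Y_k := \pi_kU_k + \sum_{i<k}\pi_{i+1}V_i - \sum_{i<k}\pi_{i+1}R_i$. This is integrable, since $\pi\leq 1$ and all the variables are integrable. It is $\Fc_k$-adapted and satisfies $\Exp{Y_{k+1}\mid\Fc_k}\leq Y_k$, so it is a supermartingale. It is, however, not bounded below, and this is the main technical point. I handle it by localization. For $a>0$, let $\tau_a := \inf\{k : \sum_{i\leq k}\pi_{i+1}R_i > a\}$. Because $R_k$ and $\pi_{k+1}$ are $\Fc_k$-measurable, $\tau_a$ is a stopping time, and for $k\leq\tau_a$ we have $\sum_{i<k}\pi_{i+1}R_i\leq a$. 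Hence the stopped process $Y_{k\wedge\tau_a}+a$ is a nonnegative supermartingale, and it converges almost surely by Doob's theorem. Since $\sum_k R_k<+\infty$ a.s. and $\pi\leq 1$, we have $\sum_i\pi_{i+1}R_i<+\infty$ a.s. Therefore, almost every $\omega$ lies in $\{\tau_a=\infty\}$ for some integer $a$. On that event $Y_k = Y_{k\wedge\tau_a}$, so $Y_k$ converges almost surely.

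Finally, I unwind the definition of $Y_k$. Since both $Y_k$ and $\sum_{i<k}\pi_{i+1}R_i$ converge a.s., the sum $\pi_kU_k + \sum_{i<k}\pi_{i+1}V_i$ converges a.s. to a finite limit. Both terms are nonnegative, and the second is nondecreasing and bounded by this finite limit, so it converges. Consequently $\pi_kU_k$ converges too. Because $\pi_k\to\pi_\infty>0$, it follows that $U_k$ converges a.s. to a nonnegative $U$. Also $\sum_i V_i \leq \pi_\infty^{-1}\sum_i\pi_{i+1}V_i<+\infty$ a.s., which completes the proof.
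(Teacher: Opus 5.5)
Your proof is correct and complete, and it follows the classical argument from the original Robbins--Siegmund paper: you rescale by $\pi_k=\prod_{i<k}(1+\alpha_i)^{-1}$, form the supermartingale $Y_k$, and localize with the stopping time on the partial sums of $\pi_{i+1}R_i$. The paper itself gives no proof of this lemma and simply cites that original result, so your argument matches the intended one.
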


The following lemma can be proved similarly to \cite[Theorem 3.2.]{combettes2015stochastic} and \cite[Proposition 4.1.]{davis2022variance}.
We recall it here without repeating the proof to cover the composite mapping $\Phi$ in \eqref{eq:GE}.

\begin{lemma}\label{le:Opial_lemma}
Suppose that $\gra{\Phi}$ of $\Phi$ in \eqref{eq:GE} is closed.
Let $\sets{x^k}$ be a sequence of random vectors such that for all $x^{\star} \in \zer{\Phi}$, the sequence $\sets{\norms{x^k - x^{\star}}^2}$ converges almost surely  to a nonnegative random variable. 
In addition, assume that $\sets{\norms{w^k}}$ for $(x^k, w^k) \in \gra{\Phi}$ also almost surely converges to zero.
Then, $\sets{x^k}$ converges  almost surely to a $\zer{\Phi}$-valued random variable $x^{\star} \in \zer{\Phi}$.
\end{lemma}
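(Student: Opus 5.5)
The plan is the standard stochastic Opial argument (as in \cite[Theorem 3.2]{combettes2015stochastic}), organized around one difficulty. The hypothesis gives, for each fixed $x^{\star}\in\zer{\Phi}$, a full-measure event on which $\norms{x^k-x^{\star}}$ converges. But that event depends on $x^{\star}$, and $\zer{\Phi}$ is typically uncountable. The first step, which I expect to be the main (though standard) obstacle, is therefore to produce a single full-measure event on which $\norms{x^k-x^{\star}}$ converges simultaneously for all $x^{\star}\in\zer{\Phi}$.

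Set $Z:=\zer{\Phi}$, which is nonempty by Assumption~\ref{as:A1}(a). Since $\gra{\Phi}$ is closed, $Z=\sets{x : (x,0)\in\gra{\Phi}}$ is closed in $\R^p$, hence separable. Choose a countable dense subset $D\subseteq Z$. For each $z\in D$, let $\Omega_z$ be the full-measure event on which $\norms{x^k-z}$ converges. Let $\Omega_w$ be the full-measure event on which $\norms{w^k}\to 0$. Then $\Omega':=\Omega_w\cap\bigcap_{z\in D}\Omega_z$ has probability one.

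Fix $\omega\in\Omega'$ and any $x^{\star}\in Z$, and pick $z_j\in D$ with $z_j\to x^{\star}$. The triangle inequality gives $\big|\norms{x^k-x^{\star}}-\norms{x^k-z_j}\big|\leq\norms{x^{\star}-z_j}$. Since $\norms{x^k-z_j}$ converges, this yields $\limsup_k\norms{x^k-x^{\star}}-\liminf_k\norms{x^k-x^{\star}}\leq 2\norms{x^{\star}-z_j}$. Letting $j\to\infty$ shows that $\norms{x^k-x^{\star}}$ converges for every $x^{\star}\in Z$ on the single event $\Omega'$.

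From here I argue pathwise on $\Omega'$.
\begin{compactitem}
\item[(i)] \emph{Boundedness.} Convergence of $\norms{x^k-z}$ for one $z\in Z$ shows that $\sets{x^k(\omega)}$ is bounded, so it has cluster points.
\item[(ii)] \emph{Cluster points are solutions.} If $x^{k_j}\to\bar x$, then $(x^{k_j},w^{k_j})\in\gra{\Phi}$ and $w^{k_j}\to 0$. Closedness of $\gra{\Phi}$ gives $0\in\Phi\bar x$, i.e.\ $\bar x\in Z$.
\item[(iii)] \emph{Uniqueness of the cluster point.} Let $\bar x_1,\bar x_2\in Z$ be two cluster points. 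The identity $\norms{x^k-\bar x_1}^2-\norms{x^k-\bar x_2}^2=2\iprods{x^k,\bar x_2-\bar x_1}+\norms{\bar x_1}^2-\norms{\bar x_2}^2$ has a convergent left-hand side, so $\iprods{x^k,\bar x_2-\bar x_1}$ converges. Evaluating its limit along the subsequences tending to $\bar x_1$ and to $\bar x_2$ gives $\iprods{\bar x_1-\bar x_2,\bar x_2-\bar x_1}=0$, i.e.\ $\bar x_1=\bar x_2$.
\end{compactitem}
Hence the whole sequence $x^k(\omega)$ converges to a point of $Z$.

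Finally, I define $x^{\star}(\omega):=\lim_k x^k(\omega)$ on $\Omega'$ and $x^{\star}(\omega):=z_0$ for a fixed $z_0\in Z$ off $\Omega'$. As a pointwise limit of random vectors on a measurable event, modified on a null set, $x^{\star}$ is measurable. It is therefore a $Z$-valued random variable with $x^k\to x^{\star}$ almost surely.
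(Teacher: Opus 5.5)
Your proof is correct and is the standard stochastic Opial argument that the paper invokes by citing \cite[Theorem 3.2]{combettes2015stochastic} and \cite[Proposition 4.1]{davis2022variance} without writing it out. It has the same structure: a countable dense subset of $\zer{\Phi}$ yields one full-measure event on which $\norms{x^k - x^{\star}}$ converges for every $x^{\star}\in\zer{\Phi}$, followed by the pathwise boundedness, closed-graph and cluster-point-uniqueness steps, a measurable modification on a null set, and the needed nonemptiness of $\zer{\Phi}$ taken from Assumption~\ref{as:A1}(a) (which the lemma statement leaves implicit).
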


\beforesec
\section{Appendix: The Proof of Technical Results in Section~\ref{sec:VrFRBS}}\label{apdx:sec:VrFRBS}
\aftersec
This appendix provides the missing proof of the technical results in Section~\ref{sec:VrFRBS}.

\beforesubsec
\subsection{\textbf{The proof of Lemma~\ref{le:svrg_estimator} --- The L-SVRG estimator}}\label{apdx:le:svrg_estimator}
\aftersubsec
First, let $G\tilde{x}^k$ be the exact evaluation of $G$ at $\tilde{x}^k$.
We define
\begin{equation}\label{eq:A1_lm2_proof1}
\widehat{S}^k := G\tilde{x}^k - G_{\mcal{B}_k}\tilde{x}^k + 2G_{\mcal{B}_k}x^k - G_{\mcal{B}_k}x^{k-1} \quad\textrm{and} \quad  r^k := \overline{G}\tilde{x}^k - G\tilde{x}^k.
\end{equation}
Then, we have $\widetilde{S}^k = \widehat{S}^k + r^k$.
Since $\mathbb{E}_{\mcal{B}_k}[\overline{G}\tilde{x}^k] = G\tilde{x}^k$, $\mathbb{E}_{\mcal{B}_k}[G_{\mcal{B}_k}x] = Gx$ for $x = x^k$ and $x = x^{k-1}$, we have $\mathbb{E}_{\mcal{B}_k}[r^k ] = 0$, and consequently
\begin{equation}\label{eq:A1_lm2_proof1b}
\mathbb{E}_{\mcal{B}_k}[\widetilde{S}^k] = G\tilde{x}^k - G\tilde{x}^k + 2Gx^k - Gx^{k-1} = S^k.
\end{equation}
Taking the conditional expectation $\mathbb{E}_{k}[\cdot]$ on both sides of this relation, we verify the first line of \eqref{eq:Vr_estimator}.

Next, let us define $X^k(\xi) := 2\mbf{G}(x^k, \xi) - \mbf{G}(x^{k-1}, \xi) - \mbf{G}(\tilde{x}^k, \xi)$.
Then, we have $\mathbb{E}_{\xi}[ X^k(\xi)] = 2Gx^k - Gx^{k-1} - G\tilde{x}^k = S^k - G\tilde{x}^k =: \bar{X}^k$ and $\widehat{S}^k = \frac{1}{b}\sum_{\xi\in\mcal{B}_k}X^k(\xi) + G\tilde{x}^k$ due to \eqref{eq:A1_lm2_proof1}.
Consequently, we have $\widehat{S}^k - S^k = \frac{1}{b}\sum_{\xi \in\mcal{B}_k}(X^k(\xi) - \bar{X}^k)$.
Utilizing this expression, we can show that
\begin{equation}\label{eq:A1_lm2_proof2}
\arraycolsep=0.2em
\begin{array}{lcl}
\mathbb{E}_{\mcal{B}_k }[ \norms{ \widehat{S}^k - S^k}^2 ] &= & \mathbb{E}_{\mcal{B}_k} \big[ \Vert \frac{1}{b}\sum_{\xi \in\mcal{B}_k}(X^k(\xi) - \bar{X}^k) \Vert^2 \big] \vspace{1ex}\\
& \stackrel{\ncmt{\tiny\textcircled{1}}}{=} & \frac{1}{b^2} \sum_{\xi \in \mcal{B}_k} \mathbb{E}_{\xi} \big[ \norms{X^k(\xi) - \bar{X}^k }^2  \big] \vspace{1ex}\\
& \stackrel{\ncmt{\tiny\textcircled{2}}}{\leq} & \frac{1}{b^2} \sum_{\xi \in \mcal{B}_k} \mathbb{E}_{\xi} \big[ \norms{X^k(\xi) }^2  \big]  \vspace{1ex}\\
& = & \frac{1}{b}\mathbb{E}_{\xi}\big[ \norms{2\mbf{G}(x^k, \xi) - \mbf{G}(x^{k-1}, \xi) - \mbf{G}(\tilde{x}^k, \xi)}^2 \big],
\end{array}
\end{equation}
\ncmt{where {\tiny\textcircled{1}} holds due to the i.i.d. property of $\Bc_k$ and {\tiny\textcircled{2}} holds due to the property of variance.}
Let us denote by $\hat{\Delta}_k := \frac{1}{b}\mathbb{E}_{\xi}\big[ \norms{2\mbf{G}(x^k, \xi) - \mbf{G}(x^{k-1}, \xi) - \mbf{G}(\tilde{x}^k, \xi)}^2 \big]$.
Then, by the update rule \eqref{eq:svrg_wk} of $\tilde{x}^k$, and Young's inequality, for any $c > 0$, we have 
\begin{equation*}
\arraycolsep=0.2em
\begin{array}{lcl}
\Tc_{[1]} &:= & \mathbb{E}_{\xi, \tilde{x}^k}\big[ \norms{2\mbf{G}(x^k, \xi) - \mbf{G}(x^{k-1}, \xi) - \mbf{G}(\tilde{x}^k, \xi)}^2 \big] \vspace{1ex}\\
&= & (1 - \mbf{p}) \mathbb{E}_{\xi}\big[ \norms{2\mbf{G}(x^k, \xi) - \mbf{G}(x^{k-1}, \xi) - \mbf{G}(\tilde{x}^{k-1}, \xi)}^2 \big] \vspace{1ex}\\
&& + {~} \mbf{p} \mathbb{E}_{\xi}\big[ \norms{2\mbf{G}(x^k, \xi) - \mbf{G}(x^{k-1}, \xi) - \mbf{G}(x^{k-1}, \xi)}^2 \big] \vspace{1ex}\\
&\leq & (1 - \mbf{p})(1+c) \mathbb{E}_{\xi}\big[ \norms{2\mbf{G}(x^{k-1}, \xi) - \mbf{G}(x^{k-2}, \xi) - \mbf{G}(\tilde{x}^{k-1}, \xi)}^2 \big] \vspace{1ex}\\
&& + {~} \frac{(1-\mbf{p})(1+c)}{c}\mathbb{E}_{\xi}\big[ \norms{2(\mbf{G}(x^k, \xi) - \mbf{G}(x^{k-1}, \xi)) - (\mbf{G}(x^{k-1}, \xi) - \mbf{G}(x^{k-2}, \xi)) }^2 \big] \vspace{1ex}\\
&& + {~} 4\mbf{p} \mathbb{E}_{\xi}\big[ \norms{\mbf{G}(x^k, \xi) - \mbf{G}(x^{k-1}, \xi) }^2 \big] \vspace{1ex}\\
& \leq & (1 - \mbf{p})(1+c) \mathbb{E}_{\xi}\big[ \norms{2\mbf{G}(x^{k-1}, \xi) - \mbf{G}(x^{k-2}, \xi) - \mbf{G}(\tilde{x}^{k-1}, \xi)}^2 \big] \vspace{1ex}\\
&& + {~} \big[ \frac{8(1-\mbf{p})(1+c)}{c} + 4\mbf{p} \big] \mathbb{E}_{\xi}\big[ \norms{\mbf{G}(x^k, \xi) - \mbf{G}(x^{k-1}, \xi) }^2 \big] \vspace{1ex}\\
&&  + {~} \frac{2(1-\mbf{p})(1+c)}{c} \mathbb{E}_{\xi}\big[ \norms{\mbf{G}(x^{k-1}, \xi) - \mbf{G}(x^{k-2}, \xi) }^2 \big].
\end{array}
\end{equation*}
Let us choose $c := \frac{\mbf{p}}{2(1 - \mbf{p})}> 0$. 
Then, by applying \eqref{eq:Lipschitz_cond} from Assumption~\ref{as:A1}, we obtain from the last inequality that 
\begin{equation*}
\arraycolsep=0.2em
\begin{array}{lcl}
\Tc_{[1]} &:= & \mathbb{E}_{\xi, \tilde{x}^k}\big[ \norms{2\mbf{G}(x^k, \xi) - \mbf{G}(x^{k-1}, \xi) - \mbf{G}(\tilde{x}^k, \xi)}^2 \big] \vspace{1ex}\\
& \leq & \big(1 - \frac{\mbf{p}}{2} \big) \mathbb{E}_{\xi}\big[ \norms{2\mbf{G}(x^{k-1}, \xi) - \mbf{G}(x^{k-2}, \xi) - \mbf{G}(\tilde{x}^{k-1}, \xi)}^2 \big] \vspace{1ex}\\
&& + {~}  \frac{4(4 - 6\mbf{p} + 3\mbf{p}^2)L^2}{\mbf{p}}\norms{x^k - x^{k-1}}^2 +  \frac{2(2 - 3\mbf{p} + \mbf{p}^2)L^2}{\mbf{p} }\norms{x^{k-1} - x^{k-2} }^2.
\end{array}
\end{equation*}
Multiplying this expression by $\frac{1}{b}$, then taking the conditional expectation $\mathbb{E}_k[\cdot]$ on both sides of the result, and using the definition of $\hat{\Delta}_k$, $4 - 6\mbf{p} + 3\mbf{p}^2 \leq 4$, and $2 - 3\mbf{p} + \mbf{p}^2 \leq 2$,  we get
\begin{equation}\label{eq:A1_lm2_proof3}
\arraycolsep=0.2em
\begin{array}{lcl}
\Expsn{k}{ \hat{\Delta}_k} &\leq & \big(1 - \frac{\mbf{p}}{2}\big) \hat{\Delta}_{k-1} + \frac{16L^2}{b\mbf{p}}\norms{x^k - x^{k-1} }^2 + \frac{4L^2}{b\mbf{p}}\norms{x^{k-1} - x^{k-2} }^2. 
\end{array}
\end{equation}
Now, we consider two cases.
\begin{compactitem}
\item[(a)] 
First, if $\overline{G}\tilde{x}^k = G\tilde{x}^k$ (exact evaluation), then $\widetilde{S}^k = \widehat{S}^k$, and thus, $\Expsn{k}{\widetilde{S}^k } = \Expsn{k}{\widehat{S}^k } = S^k$, which fulfills the first condition of \eqref{eq:Vr_estimator}.

Next, if we define $\Delta_k := \hat{\Delta}_k$, then we also have $\Expsn{k}{ \norms{\widetilde{S}^k - S^k}^2 } = \Expsn{k}{ \norms{\widehat{S}^k - S^k}^2 } \overset{\tiny\eqref{eq:A1_lm2_proof2}}{ \leq } \Expsn{k}{ \hat{\Delta}_k  } = \Expsn{k}{ \Delta_k}$, which satisfies the second condition of \eqref{eq:Vr_estimator}.

Finally, since $\Delta_k = \hat{\Delta}_k$, we obtain from \eqref{eq:A1_lm2_proof3} that
\begin{equation*} 
\arraycolsep=0.2em
\begin{array}{lcl}
\Expsn{k}{ \Delta_k} &\leq & \big(1 - \frac{\mbf{p}}{2}\big) \Delta_{k-1} + \frac{16L^2}{b\mbf{p}}\norms{x^k - x^{k-1} }^2 + \frac{4L^2}{b\mbf{p}}\norms{x^{k-1} - x^{k-2} }^2. 
\end{array}
\end{equation*}
This shows that the third condition of \eqref{eq:Vr_estimator} holds with $\kappa := \frac{\mbf{p}}{2}$, $\Theta := \frac{16L^2}{b\mbf{p}}$, $\hat{\Theta} := \frac{4L^2}{b\mbf{p}}$, and $\delta_k := 0$.

\item[(b)]
Second, if $\overline{G}\tilde{x}^k$ is constructed by the mega-batch estimator $\overline{G}\tilde{x}^k := \frac{1}{n_k}\sum_{\xi \in \mcal{M}_k}\mbf{G}( \tilde{x}^k, \xi)$, we have from  \eqref{eq:A1_lm2_proof1} that $r^k := \overline{G}\tilde{x}^k - G\tilde{x}^k$ and $\Expsn{k}{ \norms{ \overline{G}\tilde{x}^k - G\tilde{x}^k}^2 } \leq \frac{\sigma^2}{n_k}$.
From \eqref{eq:A1_lm2_proof1b}, we have $\Expsn{k}{\widetilde{S}^k} = S^k$, which fulfills the first condition of \eqref{eq:Vr_estimator}.

Next, by Young's inequality, for any $\mu > 0$, we have
\begin{equation}\label{eq:A1_lm2_proof4} 
\arraycolsep=0.2em
\begin{array}{lcl}
\Expsn{k}{ \norms{\widetilde{S}^k - S^k}^2 } & \leq & (1 + \mu)\Expsn{k}{ \norms{\widehat{S}^k - S^k}^2 } + \frac{1 + \mu}{\mu}\Expsn{k}{ \norms{\overline{G}\tilde{x}^k - G\tilde{x}^k}^2 } \vspace{1ex}\\
& \leq & (1 + \mu) \Expsn{k}{ \hat{\Delta}_k } + \frac{(1 + \mu) \sigma^2}{\mu n_k} \vspace{1ex}\\
& = & \Expsn{k}{ (1 + \mu) \hat{\Delta}_k + \frac{\ncmt{(1+\mu)}\sigma^2}{\mu n_k} }.
\end{array}
\end{equation}
Therefore, if we define $\Delta_k := (1+\mu)\hat{\Delta}_k + \frac{(1+\mu)\sigma^2}{\mu n_k}$, then \eqref{eq:A1_lm2_proof4} shows that $\Expsn{k}{ \norms{\widetilde{S}^k - S^k}^2 } \leq \Expsn{k}{\Delta_k}$, which shows that the second condition of \eqref{eq:Vr_estimator} holds.

Finally, combining \eqref{eq:A1_lm2_proof3} and \eqref{eq:A1_lm2_proof4} and using the definition of $\Delta_k$ and $n_k \geq n_{k-1}$, we have
\begin{equation*} 
\ncmt{\arraycolsep=0.2em
\begin{array}{lcl}
	\Expsn{k}{ \Delta_k} &\leq & \big(1 - \frac{\mbf{p}}{2}\big) \Delta_{k-1} + \frac{16(1+\mu)L^2}{b\mbf{p}}\norms{x^k - x^{k-1} }^2 + \frac{4(1+\mu)L^2}{b\mbf{p}}\norms{x^{k-1} - x^{k-2} }^2 + \frac{(1+\mu)\mbf{p}\sigma^2}{2\mu n_k}. 
\end{array}}
\end{equation*}
This inequality shows that the third condition of \eqref{eq:Vr_estimator} holds with $\kappa := \frac{\mbf{p}}{2}$, $\Theta :=  \frac{16(1+\mu)L^2}{b\mbf{p}}$, $\hat{\Theta} :=  \frac{4(1+\mu)L^2}{b\mbf{p}}$, and $\delta_k := \frac{(1+\mu)\mbf{p}\sigma^2}{2\mu n_k}$.
\end{compactitem}
Since we have verified Definition~\ref{de:Vr_estimator} for the two cases (a) and (b), we have completed the proof.
\Eproof

\beforesubsec
\subsection{\textbf{The proof of Lemma~\ref{le:key_estimate2} --- The descent property of the Lyapunov function}}\label{apdx:le:key_estimate2}
\aftersubsec
\revise{Since $w^k$ and $y^k$ are $\mathcal{F}_k$-measurable and $\mbb{E}_k[ e^k ] = 0$, we have 
\begin{equation}\label{eq:lm34_proof1}
\mathbb{E}_k[ \iprods{e^k, w^k} ]  = \iprods{ \mathbb{E}_k[ e^k ], w^k} = 0
\quad \text{and} \quad
\mathbb{E}_k[ \iprods{ e^k, y^k - x^{\star} } ] = \iprods{ \mathbb{E}_k[ e^k ], y^k - x^{\star} } = 0.
\end{equation}
}
Taking the conditional expectation $\Exps{k}{\cdot}$ on both sides of \eqref{eq:key_estimate1}, and using the relations in \eqref{eq:lm34_proof1}, and $\norms{Gx^k - Gx^{k-1}}^2 \leq L^2\norms{x^k - x^{k-1} }^2$ due to the $L$-Lipschitz continuity of $G$, we obtain
\begin{equation*}
\arraycolsep=0.2em
\begin{array}{lcl}
\Exps{k}{\norms{y^{k+1} - x^{\star}}^2 }  &\leq & \norms{y^k - x^{\star}}^2  - 2\eta\iprods{w^k, x^k - x^{\star}} - \norms{y^k - x^k}^2 + \frac{\eta L^2}{\gamma}\norms{x^k -  x^{k-1} }^2 \vspace{1ex}\\
&& - {~} \ (1 - \gamma\eta) \Exps{k}{ \norms{y^{k+1} - x^k}^2 } + 2\eta^2\Exps{k}{\norms{e^k}^2}. 
\end{array} 
\end{equation*}
By \eqref{eq:weak_Minty_cond} of Assumption~\ref{as:A1}, we have $\iprods{w^k, x^k - x^{\star}} \geq -\rho\norms{w^k}^2$.
Moreover, since $w^k = \eta^{-1}(y^k - x^k) + w^k - \hat{w}^k = \eta^{-1}(y^k - x^k) +  Gx^k - Gx^{k-1}$ due to \eqref{eq:w_notation} and \eqref{eq:FRBS_reform}, by Young's inequality, we get
\begin{equation*}
\arraycolsep=0.2em
\begin{array}{lcl}
\iprods{w^k, x^k - x^{\star}}  & \geq  & -\rho\norms{w^k}^2 \vspace{1ex}\\
& = & -\rho\norms{\eta^{-1}(y^k - x^k) + Gx^k - Gx^{k-1} }^2 \vspace{1ex}\\
& \geq & -\frac{2\rho}{\eta^2}\norms{y^k - x^k}^2 - 2\rho\norms{Gx^k - Gx^{k-1}}^2 \vspace{1ex}\\
& \geq & -\frac{2\rho}{\eta^2}\norms{y^k - x^k}^2 - 2\rho L^2\norms{x^k - x^{k-1}}^2.
\end{array}
\end{equation*}
Combining the last two inequalities, for some $c > 0$, we can derive that
\begin{equation}\label{eq:lm2_proof3}
\arraycolsep=0.2em
\begin{array}{lcl}
\Exps{k}{\norms{y^{k+1} - x^{\star}}^2 }  &\leq & \norms{y^k - x^{\star}}^2   - \big(1 - \frac{4\rho}{\eta}\big) \norms{y^k - x^k}^2 + L^2\eta\big(4\rho + \frac{1}{\gamma} \big)\norms{x^k -  x^{k-1} }^2 \vspace{1ex}\\
&& - {~} (1 - \gamma\eta) \Exps{k}{ \norms{y^{k+1} - x^k}^2 } + 2\eta^2\Exps{k}{\norms{e^k}^2} \vspace{1ex}\\
& \leq & \norms{y^k - x^{\star}}^2   - \Big[1 - \frac{4\rho}{\eta} - 2L^2\eta\big(4\rho + \frac{1}{\gamma} + c\big) \Big] \norms{y^k - x^k}^2  \vspace{1ex}\\
&& - {~} (1 - \gamma\eta) \Exps{k}{ \norms{y^{k+1} - x^k}^2 } + 2L^2\eta\big(4\rho + \frac{1}{\gamma} + c\big)\norms{y^k - x^{k-1}}^2 \vspace{1ex}\\
&& - {~}  cL^2\eta \norms{x^k -  x^{k-1} }^2 + 2\eta^2\Exps{k}{\norms{e^k}^2},
\end{array} 
\end{equation}
where we have used Young's inequality $\norms{x^k - x^{k-1}}^2 \leq 2\norms{x^k - y^k}^2 + 2\norms{y^k - x^{k-1}}^2$ in the last line.

Next, from \eqref{eq:Vr_estimator} of Definition~\ref{de:Vr_estimator}, we have 
\begin{equation*}
\arraycolsep=0.2em
\begin{array}{lcl}
\Exps{k}{\norms{e^k}^2} & \leq & \Exps{k}{\Delta_k} \vspace{1ex}\\
&\leq & \frac{1-\kappa}{\kappa}\big[\Delta_{k-1} - \Exps{k}{\Delta_k}\big] + \frac{\hat{\Theta}}{\kappa}\big[\norms{x^{k-1} - x^{k-2}}^2 - \norms{x^k - x^{k-1}}^2 \big] + \frac{\delta_k}{\kappa} \vspace{1ex}\\
&& + {~} \frac{\Theta + \hat{\Theta} }{\kappa}\norms{x^k - x^{k-1}}^2.
\end{array} 
\end{equation*}
Substituting this inequality into \eqref{eq:lm2_proof3}, we get
\begin{equation*}
\arraycolsep=0.2em
\begin{array}{lcl}
\Exps{k}{\norms{y^{k+1} - x^{\star}}^2 } 
& \leq & \norms{y^k - x^{\star}}^2   - \Big[ 1 - \frac{4\rho}{\eta} - 2L^2\eta\big(4\rho + \frac{1}{\gamma} + c\big) \Big] \norms{y^k - x^k}^2  \vspace{1ex}\\
&& - {~} (1 - \gamma\eta) \Exps{k}{ \norms{y^{k+1} - x^k}^2 } + 2L^2\eta\big(4\rho + \frac{1}{\gamma} + c\big)\norms{y^k - x^{k-1}}^2 \vspace{1ex}\\
&& +  \frac{2\eta^2(1-\kappa)}{\kappa}\big[\Delta_{k-1} - \Exps{k}{\Delta_k}\big]  +  \frac{2\eta^2 \hat{\Theta}}{\kappa}\big[\norms{x^{k-1} - x^{k-2}}^2 - \norms{x^k - x^{k-1}}^2 \big] \vspace{1ex}\\
&& - {~} \big[ cL^2\eta - \frac{2\eta^2(\Theta + \hat{\Theta}) }{\kappa} \big] \norms{x^k - x^{k-1}}^2 + \frac{2\eta^2 \delta_k}{\kappa}.
\end{array} 
\end{equation*}
Using $\Pc_k$ from \eqref{eq:Pk_func}, the last estimate leads to
\begin{equation*}
\arraycolsep=0.2em
\begin{array}{lcl}
\Exps{k}{\Pc_{k+1} } & \leq & \Pc_k  - \Big[ 1 - \frac{4\rho}{\eta} - 2L^2\eta\big(4\rho + \frac{1}{\gamma} + c\big) \Big] \norms{y^k - x^k}^2  \vspace{1ex}\\
&& - {~} \Big[1 - \gamma\eta -  2L^2\eta\big(4\rho + \frac{1}{\gamma} + c\big) \Big] \norms{y^k - x^{k-1}}^2 \vspace{1ex}\\
&& - {~} \frac{\eta [ c\kappa L^2 - 2\eta(\Theta + \hat{\Theta})] }{\kappa}  \norms{x^k - x^{k-1}}^2 + \frac{2\eta^2 \delta_k}{\kappa},
\end{array} 
\end{equation*}
which proves \eqref{eq:key_estimate2}.
\Eproof

\beforesec
\section{Appendix: The Proof of Technical Results in Section~\ref{sec:VrFRBS_method2}}\label{apdx:sec:VrFRBS_method2}
\aftersec
This appendix presents the missing proof of the technical results in Section~\ref{sec:VrFRBS_method2}.
\beforesubsec
\subsection{\textbf{The proof of Lemma~\ref{le:sarah_estimator} --- The L-SARAH estimator}}\label{apdx:le:sarah_estimator}
\aftersubsec
For $\widetilde{S}^k$ constructed by \eqref{eq:sarah_estimator}, we denote $X^k(\xi) := 2\mbf{G}(x^k, \xi) - 3\mbf{G}(x^{k-1}, \xi) + \mbf{G}(x^{k-2}, \xi)$ and $X^k := \frac{1}{b}\sum_{\xi \in \mcal{B}_k}X^k(\xi)$.
Then, we have $\Expsn{\xi}{X^k(\xi)} = \bar{X}^k :=  2Gx^k - 3Gx^{k-1} + Gx^{k-2} = S^k - S^{k-1}$ and  $\Expsn{\mcal{B}_k}{X^k} = 2Gx^k - 3Gx^{k-1} + Gx^{k-2} = S^k - S^{k-1} = \bar{X}^k$.
Applying these facts, \ncmt{the i.i.d. property of the mini-batch $\Bc_k$,} and Young's inequality, we can derive that
\begin{equation*} 
\hspace{-2ex}
\arraycolsep=0.2em
\begin{array}{lcl}
\Expsn{\mcal{B}_k}{\norms{ X^k - S^k + S^{k-1}}^2 } & = & \Expsn{\mcal{B}_k}{ \norms{ \frac{1}{b}\sum_{\xi\in\mcal{B}_k}(X^k(\xi) - \bar{X}^k)}^2 } \vspace{1ex}\\
& = & \frac{1}{b^2}\sum_{\xi\in\mcal{B}_k}\Expsn{\xi}{ \norms{X^k(\xi) - \bar{X}^k}^2 } \vspace{1ex}\\
& = & \frac{1}{b^2}\sum_{\xi\in\mcal{B}_k}\big[ \Expsn{\xi}{ \norms{X^k(\xi)}^2 } - \norms{ \Expsn{\xi}{X^k(\xi) } }^2 \big] \vspace{1ex}\\
& \leq & \frac{1}{b}\Expsn{\xi}{ \norms{X^k(\xi) }^2 } \vspace{1ex}\\
& = & \frac{1}{b}\Expsn{\xi}{ \norms{  2\mbf{G}(x^k, \xi) - 3\mbf{G}(x^{k-1}, \xi) + \mbf{G}(x^{k-2}, \xi)  }^2 } \vspace{1ex}\\
& \leq & \frac{8}{b}\Expsn{\xi}{ \norms{  \mbf{G}(x^k, \xi) - \mbf{G}(x^{k-1}, \xi)  }^2 } \vspace{1ex}\\
&& + {~}  \frac{2}{b}\Expsn{\xi}{ \norms{  \mbf{G}(x^{k-1}, \xi) - \mbf{G}(x^{k-2}, \xi)  }^2 }.
\end{array}
\hspace{-8ex}
\end{equation*}
Utilizing  \eqref{eq:Lipschitz_cond} in  Assumption~\ref{as:A1}, we obtain from the last expression that
\begin{equation}\label{eq:sarah_lm1_proof1}
\arraycolsep=0.2em
\begin{array}{lcl}
\Expsn{\mcal{B}_k}{\norms{ X^k - S^k + S^{k-1}}^2 } & \leq & \frac{8L^2}{b} \norms{ x^k - x^{k-1} }^2  +  \frac{2L^2}{b}\norms{x^{k-1} - x^{k-2}}^2.
\end{array}
\end{equation}
Next, let $s_k$ be the random variable representing the switching rule in \eqref{eq:sarah_estimator}.
Then, we have 
\begin{equation*}
\arraycolsep=0.1em
\begin{array}{lcl}
\Expsn{s_k,\mcal{B}_k}{\norms{ \widetilde{S}^k - S^k }^2 } & = & (1 - \mbf{p} ) \Expsn{\mcal{B}_k}{\norms{ \widetilde{S}^{k-1} + X^k - S^k }^2} + \mbf{p} \Expsn{\mcal{B}_k}{ \norms{ \overline{S}^k - S^k }^2 } \vspace{1ex}\\
& = & (1 - \mbf{p} ) \Expsn{\mcal{B}_k}{\norms{ \widetilde{S}^{k-1} - S^{k-1} + (X^k - S^k + S^{k-1})}^2 } + \mbf{p}  \Expsn{\mcal{B}_k}{\norms{ \overline{S}^k - S^k }^2} \vspace{1ex}\\
& = & (1 - \mbf{p} ) \norms{ \widetilde{S}^{k-1} - S^{k-1} }^2 + 2(1 - \mbf{p} ) \Expsn{\mcal{B}_k}{\iprods{ \widetilde{S}^{k-1} - S^{k-1}, X^k - S^k + S^{k-1}} } \vspace{1ex}\\
&& + {~} (1 - \mbf{p} ) \Expsn{\mcal{B}_k}{\norms{ X^k - S^k + S^{k-1}}^2 } + \mbf{p}  \Expsn{\mcal{B}_k}{\norms{ \overline{S}^k - S^k }^2} \vspace{1ex}\\
&\stackrel{\ncmt{\tiny\textcircled{1}}}{=} &  (1 - \mbf{p} ) \norms{ \widetilde{S}^{k-1} - S^{k-1} }^2 + (1 - \mbf{p} ) \Expsn{\mcal{B}_k}{\norms{ X^k - S^k + S^{k-1}}^2 }  \vspace{1ex}\\
&&  + {~} \mbf{p}  \Expsn{\mcal{B}_k}{\norms{ \overline{S}^k - S^k }^2} \vspace{1ex}\\
& \overset{\tiny\eqref{eq:sarah_lm1_proof1}}{\leq}  &  (1 - \mbf{p} ) \norms{ \widetilde{S}^{k-1} - S^{k-1} }^2 +  \frac{8(1 - \mbf{p} )L^2}{b} \norms{ x^k - x^{k-1} }^2    \vspace{1ex}\\
&& + {~} \frac{2(1 - \mbf{p} )L^2}{b}\norms{x^{k-1} - x^{k-2}}^2 + \mbf{p}  \Expsn{\mcal{B}_k}{\norms{ \overline{S}^k - S^k }^2} \vspace{1ex}\\
\end{array}
\end{equation*}
\ncmt{Here, we have used in {\tiny\textcircled{1}} that $\Expsn{\Bc_k}{X^k - S^k + S^{k-1}} = 0$ and $X^k - S^k + S^{k-1}$ is conditionally independent of $\widetilde{S}^{k-1} - S^{k-1}$.}
Taking the conditional expectation $\Expsn{k}{\cdot}$ on both sides of this inequality and using the fact that $\Expsn{k}{ \norms{ \overline{S}^k - S^k }^2 } \leq \sigma_k^2$, we get
\begin{equation*}
\arraycolsep=0.1em
\begin{array}{lcl}
\Expsn{k}{\norms{ \widetilde{S}^k - S^k }^2 } &\leq &   (1 - \mbf{p} ) \norms{ \widetilde{S}^{k-1} - S^{k-1} }^2 +  \frac{8(1 - \mbf{p} )L^2}{b} \norms{ x^k - x^{k-1} }^2  \vspace{1ex}\\
&& + {~}  \frac{2(1 - \mbf{p} )L^2}{b}\norms{x^{k-1} - x^{k-2}}^2 +  \mbf{p}\sigma^2_k.
\end{array}
\end{equation*}
Clearly, if we denote $\Delta_k := \norms{ \widetilde{S}^k - S^k }^2$, then the last inequality shows that $\widetilde{S}^k$ constructed by \eqref{eq:sarah_estimator} satisfies the last two conditions in \eqref{eq:Vr_estimator2} of Definition~\ref{de:Vr_estimator2} with $\kappa := \mbf{p}$, $\Theta := \frac{8(1 - \mbf{p} )L^2}{b}$, $\hat{\Theta} := \frac{2(1 - \mbf{p} )L^2}{b}$, and $\delta_k := \mbf{p}\sigma_k^2$.

Finally, using again $\Expsn{\mcal{B}_k}{\ncmt{X^k}} = S^k - S^{k-1}$ and $\Expsn{\ncmt{\Mc_k}}{\overline{S}^k}  = S^k$, we have
\begin{equation*}
\arraycolsep=0.1em
\begin{array}{lcl}
\Expsn{k}{\widetilde{S}^k} &= & (1-\mbf{p})\Expsn{\mcal{B}_k}{ \widetilde{S}^{k-1} + \ncmt{X^k}} + \mbf{p}\Expsn{\ncmt{\Mc_k}}{\overline{S}^k} \vspace{1ex}\\
& = & (1 - \mbf{p})(\widetilde{S}^{k-1} + S^k - S^{k-1}) + \mbf{p}S^k  \vspace{1ex}\\
& = & (1- \mbf{p})(\widetilde{S}^{k-1} - S^{k-1}) + S^k.
\end{array}
\end{equation*}
This expression shows that $\widetilde{S}^k$ satisfies the first condition in \eqref{eq:Vr_estimator2} of Definition~\ref{de:Vr_estimator2} with $\tau := \mbf{p}$.
\Eproof

\beforesubsec
\subsection{\textbf{The proof of Lemma~\ref{le:hsgd_estimator} --- The HSGD estimator}}\label{apdx:le:hsgd_estimator}
\aftersubsec
First, for $\widetilde{S}^k$ defined by \eqref{eq:hsgd_estimator}, let $e^k := \widetilde{S}^k - S^k$ be the approximation error between $\widetilde{S}^k$ and $S^k$.
Let $X^k(\xi) := 2\mbf{G}(x^k, \xi) - 3\mbf{G}(x^{k-1}, \xi) + \mbf{G}(x^{k-2}, \xi) - (2Gx^k - 3Gx^{k-1} + Gx^{k-2})$ and $Y^k := \overline{S}^k_{\hat{\mcal{B}}_k} - S^k$.
Then, it follows immediately that $\Expsn{\xi}{X^k(\xi)} = 0$ and $\Expsn{\hat{\mcal{B}}_k}{Y^k} = 0$ by our assumption.

Next, if we denote $X^k := \frac{1}{b}\sum_{\xi \in \mcal{B}_k}X^k(\xi)$, then similar to the proof of \eqref{eq:sarah_lm1_proof1} in Lemma~\ref{le:sarah_estimator}, we have
\begin{equation}\label{eq:hsgd_lm1_proof1}
\arraycolsep=0.2em
\begin{array}{lcl}
\Expsn{\mcal{B}_k}{\norms{ X^k}^2 } & \leq & \frac{8L^2}{b} \norms{ x^k - x^{k-1} }^2  +  \frac{2L^2}{b}\norms{x^{k-1} - x^{k-2}}^2.
\end{array}
\end{equation}
Now, utilizing again the update \eqref{eq:hsgd_estimator} and the definitions of $X^k$ and $Y^k$, we can write
\begin{equation}\label{eq:hsgd_lm1_proof1b}
\arraycolsep=0.1em
\begin{array}{lcl}
e^k & = &   \widetilde{S}^k - S^k \vspace{1ex}\\
& \overset{\tiny\eqref{eq:hsgd_estimator}}{=} &  (1- \omega) \widetilde{S}^{k-1} + (1-\omega)\big[2G_{\mcal{B}_k}x^k - 3G_{\mcal{B}_k}x^{k-1} + G_{\mcal{B}_k}x^{k-2} \big]  + \omega\overline{S}_{\hat{\mcal{B}}_k}^k - S^k \vspace{1ex}\\
& = &  (1-\omega)( \widetilde{S}^{k-1} - S^{k-1} ) + \omega( \overline{S}_{\hat{\mcal{B}}_k}^k - S^k ) \vspace{1ex}\\
&& + {~}  (1-\omega)\big[ 2G_{\mcal{B}_k}x^k - 3G_{\mcal{B}_k}x^{k-1} + G_{\mcal{B}_k}x^{k-2}  - (S^k - S^{k-1}) \big] \vspace{1ex}\\
& = &  (1- \omega)e^{k-1}   +  (1 - \omega)X^k + \omega Y^k.
\end{array}
\end{equation}
This expression leads to 
\begin{equation*}
\arraycolsep=0.2em
\begin{array}{lcl}
\norms{e^k}^2 & = &  (1-\omega)^2\norms{e^{k-1}}^2 +  (1-\omega)^2\norms{X^k}^2 + \omega^2\norms{Y^k}^2 \vspace{1ex}\\
&& + {~} 2(1-\omega)^2\iprods{e^{k-1}, X^k} + 2\omega(1-\omega) \iprods{X^k, Y^k} + 2\omega(1-\omega) \iprods{e^{k-1}, Y^k}.
\end{array}
\end{equation*}
Taking the conditional expectation $\Expsn{(\mcal{B}_k,\hat{\mcal{B}}_k)}{\cdot}$ on both sides of this expression and using $\Expsn{(\mcal{B}_k,\hat{\mcal{B}}_k)}{X^k} = \Expsn{\hat{\mcal{B}}_k}{\Expsn{\mcal{B}_k}{X^k  \mid \hat{\mcal{B}}_k } } = 0$ and $\Expsn{(\mcal{B}_k,\hat{\mcal{B}}_k)}{Y^k} = \Expsn{\mcal{B}_k}{\Expsn{\hat{\mcal{B}}_k}{Y^k  \mid \mcal{B}_k } } = 0$, we can show that
\begin{equation}\label{eq:hsgd_lm1_proof2}
\arraycolsep=0.2em
\begin{array}{lcl}
\Expsn{(\mcal{B}_k,\hat{\mcal{B}}_k)}{\norms{e^k}^2} & = &  (1-\omega)^2\norms{e^{k-1}}^2 +  (1-\omega)^2\Expsn{\mcal{B}_k}{ \norms{X^k}^2 } + \omega^2 \Expsn{\hat{\mcal{B}}_k}{ \norms{Y^k}^2 } \vspace{1ex}\\
&& + {~}  2(1-\omega)\omega \Expsn{(\mcal{B}_k,\hat{\mcal{B}}_k)}{ \iprods{X^k, Y^k} }.
\end{array}
\end{equation}
Here, we have used the facts that $X^k$ only depends on $\mcal{B}_k$ and $Y^k$ only depends on $\hat{\mcal{B}}_k$.

\noindent
Now, we consider two cases as follows.
\begin{compactitem}
\item[(i)]~If $\mcal{B}_k$ and $\hat{\mcal{B}}_k$ are independent, then $\Expsn{(\mcal{B}_k,\hat{\mcal{B}}_k)}{ \iprods{X^k, Y^k} \mid \Fc_k} = 0$.
Using this fact, \eqref{eq:hsgd_lm1_proof1}, and $\sigma_k^2 :=  \Expsn{\hat{\mcal{B}}_k}{ \norms{Y^k}^2 }$ into \eqref{eq:hsgd_lm1_proof2}, and then taking the conditional expectation $\Expsn{k}{\cdot}$ on both sides of the result, we obtain
\begin{equation*}
\arraycolsep=0.2em
\begin{array}{lcl}
\Expsn{k}{\norms{e^k}^2} \leq (1-\omega)^2\norms{e^{k-1}}^2 + \frac{8(1-\omega)^2L^2}{b} \norms{ x^k - x^{k-1} }^2  +  \frac{2(1-\omega)^2L^2}{b}\norms{x^{k-1} - x^{k-2}}^2 + \omega^2\sigma_k^2.
\end{array}
\end{equation*}
Clearly, if we define $\Delta_k := \norms{e^k}^2$, then the last inequality shows that $\widetilde{S}^k$ constructed by \eqref{eq:hsgd_estimator} satisfies the last two conditions of \eqref{eq:Vr_estimator2} of Definition~\ref{de:Vr_estimator2} with $\kappa := \omega(2-\omega) \in [0, 1]$, $\Theta := \frac{8(1-\omega)^2L^2}{b}$, $\hat{\Theta} := \frac{2(1-\omega)^2L^2}{b}$, and $\delta_k := \omega^2\sigma^2_k$.

\item[(ii)]~If $\mcal{B}_k$ and $\hat{\mcal{B}}_k$ are not independent, then by Young's inequality, we have
\begin{equation*}
\arraycolsep=0.2em
\begin{array}{lcl}
2(1-\omega)\omega \Expsn{(\mcal{B}_k,\hat{\mcal{B}}_k)}{ \iprods{X^k, Y^k} } & \leq &  (1-\omega)^2 \Expsn{\mcal{B}_k}{ \norms{X^k}^2 } + \omega^2 \Expsn{\hat{\mcal{B}}_k}{ \norms{Y^k}^2 }.
\end{array}
\end{equation*}
Substituting this inequality,  \eqref{eq:hsgd_lm1_proof1}, and $\sigma_k^2 :=  \Expsn{\hat{\mcal{B}}_k}{ \norms{Y^k}^2 }$ into \eqref{eq:hsgd_lm1_proof2}, and taking the conditional expectation $\Expsn{k}{\cdot}$ on both sides of the result, we obtain  
\begin{equation*}
\arraycolsep=0.2em
\begin{array}{lcl}
\Expsn{k}{\norms{e^k}^2} \leq (1-\omega)^2\norms{e^{k-1}}^2 + \frac{16(1-\omega)^2L^2}{b} \norms{ x^k - x^{k-1} }^2  +  \frac{4(1-\omega)^2L^2}{b}\norms{x^{k-1} - x^{k-2}}^2 + 2\omega^2\sigma_k^2.
\end{array}
\end{equation*}
Again, if we define $\Delta_k := \norms{e^k}^2$, then the last inequality shows that $\widetilde{S}^k$ constructed by \eqref{eq:hsgd_estimator} satisfies the last two conditions of \eqref{eq:Vr_estimator2} of Definition~\ref{de:Vr_estimator2} with $\kappa := \omega(2-\omega)\in (0, 1]$, $\Theta := \frac{16(1-\omega)^2L^2}{b}$, $\hat{\Theta} := \frac{4(1-\omega)^2L^2}{b}$, and $\delta_k := 2\omega^2\sigma^2_k$.
\end{compactitem}
Finally, since $\Expsn{\mcal{B}_k}{X^k} = 0$ and $\Expsn{\hat{\mcal{B}}_k}{Y^k} = 0$, it is easy to obtain from \eqref{eq:hsgd_lm1_proof1b} that $\Expsn{(\mcal{B}_k, \hat{\mcal{B}}_k)}{e^k} = (1-\omega)e^{k-1}$.
Taking the conditional expectation $\Expsn{k}{\cdot}$ on both sides of this expression, we verify the first condition of \eqref{eq:Vr_estimator2} in Definition~\ref{de:Vr_estimator2} with $\tau := \omega$.
\Eproof

\beforesubsec
\subsection{\textbf{The proof of Lemma~\ref{le:hsvrg_estimator} --- The HSVRG estimator}}\label{apdx:le:hsvrg_estimator}
\aftersubsec
First, for $\widetilde{S}^k$ constructed by \eqref{eq:hsvrg_estimator}, let $e^k := \widetilde{S}^k - S^k$.
We also define $X^k(\xi) := 2\mbf{G}(x^k, \xi) - 3\mbf{G}(x^{k-1}, \xi) + \mbf{G}(x^{k-2}, \xi) - (2Gx^k - 3Gx^{k-1} + Gx^{k-2})$ and $Y^k := \overline{S}^k_{\hat{\mcal{B}}_k} - S^k$.
Then, we have $\Expsn{\xi}{X^k(\xi)} = 0$ and $\Expsn{\hat{\mcal{B}}_k}{Y^k} = 0$ by our assumption.
In addition, if we denote $X^k := \frac{1}{b}\sum_{\xi \in \mcal{B}_k}X^k(\xi)$, then similar to the proof of \eqref{eq:sarah_lm1_proof1} in Lemma~\ref{le:sarah_estimator}, we also have
\begin{equation}\label{eq:hsvrg_lm1_proof1}
\arraycolsep=0.2em
\begin{array}{lcl}
\Expsn{\mcal{B}_k}{\norms{X^k}^2} & \leq & \frac{8L^2}{b} \norms{ x^k - x^{k-1} }^2  +  \frac{2L^2}{b}\norms{x^{k-1} - x^{k-2}}^2.
\end{array}
\end{equation}
Moreover, \ncmt{to avoid ambiguity, let $D_k$ and $d_k$ denote the quantities $\Delta_k$ and $\delta_k$ associated with the SVRG estimator $\overline{S}^k_{\hat{\Bc}_k}$ as defined in Lemma~\ref{le:svrg_estimator}, thereby distinguishing them from the quantities $\Delta_k$ and $\delta_k$ constructed below for the \eqref{eq:hsvrg_estimator} estimator.
Then, we know from Lemma~\ref{le:svrg_estimator} that} 
\begin{equation}\label{eq:hsvrg_lm1_proof2}
\arraycolsep=0.2em
\left\{\begin{array}{lcl}
\Expsn{k}{\norms{Y^k}^2} & \leq & \Expsn{k}{D_k}, \vspace{1ex}\\
\Expsn{k}{D_k} &\leq& (1 - \frac{\mbf{p}}{2}) D_{k-1} + \frac{16\ncmt{(1+\mu)}L^2}{\hat{b}\mbf{p}}\norms{x^k - x^{k-1}}^2 + \frac{4\ncmt{(1+\mu)}L^2}{\hat{b}\mbf{p}} \norms{x^{k-1} - x^{k-2}}^2 + \ncmt{d_k},
\end{array}\right.
\end{equation}
\ncmt{where $\mu := 0$ and $d_k := 0$ if the snapshot operator $\overline{G}w^k = Gw^k$ in $\overline{S}^k_{\hat{\Bc}_k}$ is evaluated exactly, and $\mu > 0$ and $d_k := \frac{(1+\mu)\mbf{p}\sigma^2}{2\mu n_k}$ if $\overline{G}w^k = \frac{1}{n_k}\sum_{\xi \in \Mc_k} \mbf{G}(w^k, \xi)$ is a mega-batch estimator of size $n_k$.}

Next, similar to \eqref{eq:hsgd_lm1_proof2} in the proof of Lemma~\ref{le:hsgd_estimator} in Appendix~\ref{apdx:le:hsgd_estimator}, we can prove that
\begin{equation}\label{eq:hsvrg_lm1_proof4}
\arraycolsep=0.2em
\begin{array}{lcl}
\Expsn{(\mcal{B}_k,\hat{\mcal{B}}_k)}{\norms{e^k}^2} & = &  (1-\omega)^2\norms{e^{k-1}}^2 +  (1-\omega)^2\Expsn{\mcal{B}_k}{ \norms{X^k}^2 } + \omega^2 \Expsn{\hat{\mcal{B}}_k}{ \norms{Y^k}^2 } \vspace{1ex}\\
&& + {~}  2(1-\omega)\omega \Expsn{(\mcal{B}_k,\hat{\mcal{B}}_k)}{ \iprods{X^k, Y^k} }.
\end{array}
\end{equation}

\noindent
Now, we consider two cases as follows.
\begin{compactitem}
\item[(a)]~If $\mcal{B}_k$ and $\hat{\mcal{B}}_k$ are independent, then $\Expsn{(\mcal{B}_k,\hat{\mcal{B}}_k)}{ \iprods{X^k, Y^k} \mid \Fc_k} = 0$.
Substituting this fact and \eqref{eq:hsvrg_lm1_proof1} into \eqref{eq:hsvrg_lm1_proof4}, and then taking the conditional expectation $\Expsn{k}{\cdot}$ on both sides of the result and using the first line of \eqref{eq:hsvrg_lm1_proof2}, we obtain
\begin{equation*}
\arraycolsep=0.2em
\begin{array}{lcl}
\Expsn{k}{\norms{e^k}^2} \leq (1-\omega)^2\norms{e^{k-1}}^2 + \frac{8(1-\omega)^2L^2}{b} \norms{ x^k - x^{k-1} }^2  +  \frac{2(1-\omega)^2L^2}{b}\norms{x^{k-1} - x^{k-2}}^2 + \omega^2\Expsn{k}{\ncmt{D_k}}.
\end{array}
\end{equation*}
Multiplying both sides of the second line of \eqref{eq:hsvrg_lm1_proof2} by $c > 0$, then adding the result to the last inequality \ncmt{and rearranging the result}, we get
\begin{equation}\label{eq:hsvrg_lm1_proof4b}
\hspace{-1ex}
\arraycolsep=0.2em
\begin{array}{lcl}
\Expsn{k}{\norms{e^k}^2 + (c - \omega^2)\ncmt{D_k}} &\leq& (1-\omega)^2\norms{e^{k-1}}^2 + c(1-\frac{\mbf{p}}{2})\ncmt{D_{k-1}} + 8L^2\left[ \frac{(1-\omega)^2}{b} + \frac{2c\ncmt{(1+\mu)}}{\hat{b}\mbf{p}} \right] \norms{ x^k - x^{k-1} }^2  \vspace{1ex}\\
&& + {~}  2L^2\left[ \frac{(1-\omega)^2}{b} + \frac{2c\ncmt{(1+\mu)}}{\hat{b}\mbf{p}} \right]\norms{x^{k-1} - x^{k-2}}^2 \ncmt{ + c d_k}.
\end{array}
\hspace{-2ex}
\end{equation}
We need to choose $c > \omega^2$ such that $(1-\omega)^2\norms{e^{k-1}}^2 + c(1-\frac{\mbf{p}}{2})\ncmt{D_{k-1}} \leq (1-\kappa)\left[\norms{e^{k-1}}^2 + (c - \omega^2)\ncmt{D_{k-1}}\right]$ for some $\kappa \in (0, 1]$, which holds if
\begin{equation}\label{eq:hsvrg_lm1_proof5}
\arraycolsep=0.2em
\begin{array}{lcl}
(1 - \omega)^2 \leq 1 - \kappa  \qquad \text{and} \qquad
c(1 - \frac{\mbf{p}}{2}) \leq (1-\kappa)(c - \omega^2).
\end{array}
\end{equation}
Since $c - \omega^2 > 0$, these conditions hold if we have
\begin{equation*} 
\arraycolsep=0.2em
\begin{array}{lcl}
\kappa \leq \min\sets{\omega(2-\omega), \frac{\frac{c\mbf{p}}{2} - \omega^2}{c - \omega^2}}.
\end{array}
\end{equation*}
Clearly, if we choose $c := \frac{4\omega^2}{\mbf{p}} > \omega^2$, then \ncmt{the choice $\kappa := \min\sets{\omega(2-\omega), \frac{\mbf{p}}{4}} \in (0,1]$ satisfies this condition at equality.}

Now, under this choice of \ncmt{$c$ and} $\kappa$, if we define $\Delta_k := \norms{e^k}^2 + (c-\omega^2)\ncmt{D_k} = \ncmt{\norms{e^k}^2 + (\frac{4}{\mbf{p}} - 1)\omega^2D_k}$, then \eqref{eq:hsvrg_lm1_proof4b} becomes
\begin{equation*}
\arraycolsep=0.2em
\begin{array}{lcl}
\Expsn{k}{\Delta_k} &\leq& (1-\kappa)\Delta_{k-1} + 8L^2\left[ \frac{(1-\omega)^2}{b} + \frac{2c\ncmt{(1+\mu)}}{\hat{b}\mbf{p}} \right] \norms{ x^k - x^{k-1} }^2  \vspace{1ex}\\
&& + {~}  2L^2\left[ \frac{(1-\omega)^2}{b} + \frac{2c\ncmt{(1+\mu)}}{\hat{b}\mbf{p}} \right]\norms{x^{k-1} - x^{k-2}}^2 \ncmt{+ \frac{4\omega^2}{\mbf{p}} d_k}.
\end{array}
\end{equation*}
This shows that $\widetilde{S}^k$ constructed by \eqref{eq:hsvrg_estimator} satisfies the last two conditions of \eqref{eq:Vr_estimator2} of Definition~\ref{de:Vr_estimator2} with $\Theta :=  8L^2\left[ \frac{(1-\omega)^2}{b} + \frac{8\ncmt{(1+\mu)}\omega^2}{\hat{b}\mbf{p}^2} \right]$, $\hat{\Theta} := 2L^2\left[ \frac{(1-\omega)^2}{b} + \frac{8\ncmt{(1+\mu)}\omega^2}{\hat{b}\mbf{p}^2} \right]$, $\kappa := \min\sets{\omega(2-\omega), \frac{\mbf{p}}{4}}$, \ncmt{and $\delta_k := \frac{4\omega^2}{\mbf{p}} d_k$}.

\item[(b)]~If $\mcal{B}_k$ and $\hat{\mcal{B}}_k$ are not independent, then by Young's inequality, we have
\begin{equation*}
\arraycolsep=0.2em
\begin{array}{lcl}
2(1-\omega)\omega \Expsn{(\mcal{B}_k,\hat{\mcal{B}}_k)}{ \iprods{X^k, Y^k} } & \leq &  (1-\omega)^2 \Expsn{\mcal{B}_k}{ \norms{X^k}^2 } + \omega^2 \Expsn{\hat{\mcal{B}}_k}{ \norms{Y^k}^2 }.
\end{array}
\end{equation*}
Substituting this inequality into \eqref{eq:hsvrg_lm1_proof4}, then taking the conditional expectation $\Expsn{k}{\cdot}$ on both sides of the result and using the first line of \eqref{eq:hsvrg_lm1_proof2}, we obtain
\begin{equation*}
\arraycolsep=0.2em
\begin{array}{lcl}
\Expsn{k}{\norms{e^k}^2} \leq (1-\omega)^2\norms{e^{k-1}}^2 + \frac{16(1-\omega)^2L^2}{b} \norms{ x^k - x^{k-1} }^2  +  \frac{4(1-\omega)^2L^2}{b}\norms{x^{k-1} - x^{k-2}}^2 + 2\omega^2\Expsn{k}{\ncmt{D_k}}.
\end{array}
\end{equation*}
By a similar proof as in (a), if we define $\Delta_k := \ncmt{\norms{e^k}^2 + (\frac{4}{\mbf{p}} - 1)\omega^2D_k}$, then we can use the last inequality to show that $\widetilde{S}^k$ constructed by \eqref{eq:hsvrg_estimator} satisfies the last two conditions of \eqref{eq:Vr_estimator2} of Definition~\ref{de:Vr_estimator2} with $\Theta := 8L^2\left[ \frac{2(1-\omega)^2}{b} + \frac{8\ncmt{(1+\mu)}\omega^2}{\hat{b}\mbf{p}^2} \right]$, $\hat{\Theta} := 2L^2\left[ \frac{2(1-\omega)^2}{b} + \frac{8\ncmt{(1+\mu)}\omega^2}{\hat{b}\mbf{p}^2} \right]$, $\kappa := \min\sets{\omega(2-\omega), \frac{\mbf{p}}{4}}$, \ncmt{and $\delta_k := \frac{4\omega^2}{\mbf{p}} d_k$}.
\end{compactitem}
Finally, \ncmt{similar to Lemma~\ref{le:hsgd_estimator}, we can prove that \eqref{eq:hsvrg_estimator} satisfies} the first condition of \eqref{eq:Vr_estimator2} in Definition~\ref{de:Vr_estimator2} with $\tau := \omega$.
\Eproof

\beforesubsec
\subsection{\textbf{The proof of Lemma~\ref{le:A2_key_estimate1} --- The property of $\Qc_k$}}\label{apdx:le:A2_key_estimate1}
\aftersubsec
First, utilizing the first condition $\Exps{k}{e^k} = (1-\tau)e^{k-1}$ from \eqref{eq:Vr_estimator2}, and Young's inequality, \revise{for $c_1 > 0$ in $\Qc_k$ from \eqref{eq:Qk_func}}, we have
\begin{equation*}
\arraycolsep=0.2em
\begin{array}{lcl}
\Exps{k}{\iprods{e^k, y^k - x^{\star}}} &= & \Exps{k}{\iprods{e^k, y^{k-1} - x^{\star}}} + \Exps{k}{ \iprods{e^k, y^k - y^{k-1} } } \vspace{1ex}\\
& \geq & (1-\tau)\iprods{e^{k-1}, y^{k-1} - x^{\star}} - \frac{c_1}{2}\Exps{k}{\norms{e^k}^2 } - \frac{1}{2c_1}\norms{y^k - y^{k-1}}^2.
\end{array}
\end{equation*}
This inequality implies that
\begin{equation}\label{eq:lm4_proof2}
\arraycolsep=0.2em
\begin{array}{lcl}
\Exps{k}{\iprods{e^k, y^k - x^{\star}}}   & \geq & \frac{1-\tau}{\tau}\Big[ \iprods{e^{k-1}, y^{k-1} - x^{\star}} - \Exps{k}{\iprods{e^k, y^k - x^{\star}}} \Big] \vspace{1ex}\\
&& - \frac{c_1}{2\tau }\Exps{k}{\norms{e^k}^2 } - \frac{1}{2c_1 \tau}\norms{y^k - y^{k-1}}^2.
\end{array}
\end{equation}
Next, using again Young's inequality, the relation $w^k = \eta^{-1}(y^k - x^k) + Gx^k - Gx^{k-1}$ from \eqref{eq:FRBS_reform} as before, and then the $L$-Lipschitz continuity of $G$ for any $c_2 > 0$, we have
\begin{equation}\label{eq:lm4_proof3}
\arraycolsep=0.2em
\begin{array}{lcl}
\iprods{e^k, w^k} &\leq & \frac{c_2}{2}\norms{e^k}^2 + \frac{1}{2c_2}\norms{w^k}^2 \vspace{1ex}\\
& \leq &  \frac{c_2}{2}\norms{e^k}^2 + \frac{1}{c_2\eta^2}\norms{y^k - x^k}^2 + \frac{1}{c_2}\norms{\ncmt{Gx^k - Gx^{k-1}}}^2 \vspace{1ex}\\
& \leq &  \frac{c_2}{2}\norms{e^k}^2 + \frac{1}{c_2\eta^2}\norms{y^k - x^k}^2 + \frac{L^2}{c_2}\norms{x^k - x^{k-1}}^2.
\end{array}
\end{equation}
Similar to the proof of \eqref{eq:lm2_proof3}, we have
\begin{equation*}
\arraycolsep=0.2em
\begin{array}{lcl}
\Exps{k}{\norms{y^{k+1} - x^{\star}}^2 }  &\leq & \norms{y^k - x^{\star}}^2   - \big(1 - \frac{4\rho}{\eta}\big) \norms{y^k - x^k}^2 + L^2\eta\big(4\rho + \frac{1}{\gamma} \big)\norms{x^k -  x^{k-1} }^2 \vspace{1ex}\\
&& - {~} (1 - \gamma\eta) \Exps{k}{ \norms{y^{k+1} - x^k}^2 } + 2\eta^2\Exps{k}{\norms{e^k}^2} \vspace{1ex}\\
&& - {~} 2 \eta \Exps{k}{\iprods{e^k, y^k - x^{\star}}} + 2\eta^2\Exps{k}{ \iprods{e^k, w^k} }.
\end{array} 
\end{equation*}
Substituting \eqref{eq:lm4_proof2} and \eqref{eq:lm4_proof3} into the last inequality, we can show that
\begin{equation*}
\arraycolsep=0.2em
\begin{array}{lcl}
\Exps{k}{\norms{y^{k+1} - x^{\star}}^2 }  &\leq & \norms{y^k - x^{\star}}^2   - \big(1 - \frac{4\rho}{\eta} - \frac{2}{c_2}\big) \norms{y^k - x^k}^2 + L^2\eta\big(4\rho + \frac{1}{\gamma} + \frac{2\eta}{c_2}\big)\norms{x^k -  x^{k-1} }^2 \vspace{1ex}\\
&& - {~} (1 - \gamma\eta) \Exps{k}{ \norms{y^{k+1} - x^k}^2 } + \eta\big( 2\eta + \frac{c_1}{\tau} + c_2\eta \big)\Exps{k}{\norms{e^k}^2} \vspace{1ex}\\
&& - {~}  \frac{2\eta( 1-\tau) }{\tau}\Big[ \iprods{e^{k-1}, y^{k-1} - x^{\star}} - \Exps{k}{\iprods{e^k, y^k - x^{\star}}} \Big] + \frac{\eta}{c_1 \tau}\norms{y^k - y^{k-1}}^2.
\end{array} 
\end{equation*}
By Young's inequality again, we get $\norms{x^k - x^{k-1}}^2 \leq 2\norms{x^k - y^k}^2 + 2\norms{y^k - x^{k-1}}^2$ and $\norms{y^k - y^{k-1}}^2 \leq 2\norms{y^k - x^{k-1}}^2 + 2\norms{x^{k-1} - y^{k-1}}^2$.
Substituting these inequalities into the last estimate, \ncmt{for any $c > 0$,} we get
\begin{equation*}
\arraycolsep=0.2em
\begin{array}{lcl}
\Exps{k}{\norms{y^{k+1} - x^{\star}}^2 }  &\leq & \norms{y^k - x^{\star}}^2   - \Big[ 1 - \frac{4\rho}{\eta} - \frac{2}{c_2} - 2L^2\eta\big(c + 4\rho + \frac{1}{\gamma} + \frac{2\eta}{c_2}\big) \Big] \norms{y^k - x^k}^2 \vspace{1ex}\\
&& - {~}  (1 - \gamma\eta) \Exps{k}{ \norms{y^{k+1} - x^k}^2 }   + \frac{2\eta}{c_1\tau} \norms{y^{k-1} - x^{k-1}}^2 - cL^2\eta \norms{x^k -  x^{k-1} }^2 \vspace{1ex}\\
&& + {~} \Big[ \frac{2\eta}{c_1 \tau} + 2L^2\eta\big(c + 4\rho + \frac{1}{\gamma} + \frac{2\eta}{c_2}\big)   \Big] \norms{y^k - x^{k-1}}^2 + \eta\big( 2\eta + \frac{c_1}{\tau} + c_2\eta \big)\Exps{k}{\norms{e^k}^2}  \vspace{1ex}\\
&& - {~}  \frac{2\eta( 1-\tau) }{\tau}\Big[ \iprods{e^{k-1}, y^{k-1} - x^{\star}} - \Exps{k}{\iprods{e^k, y^k - x^{\star}}} \Big].
\end{array} 
\end{equation*}
Using  $\Qc_k$ from \eqref{eq:Qk_func}, we can show that
\begin{equation*}
\arraycolsep=0.2em
\begin{array}{lcl}
\Exps{k}{ \Qc_{k+1} }  &\leq &\Qc_k  - \Big[ 1 - \frac{2\eta}{c_1\tau}  - \frac{4\rho}{\eta} - \frac{2}{c_2} - 2L^2\eta\big(c + 4\rho + \frac{1}{\gamma} + \frac{2\eta}{c_2}\big) \Big] \norms{y^k - x^k}^2 \vspace{1ex}\\
&& - {~} \Big\{ 1 - \gamma \eta - \Big[ \frac{2\eta}{c_1 \tau} + 2L^2\eta\big(c + 4\rho + \frac{1}{\gamma} + \frac{2\eta}{c_2}\big)   \Big] \Big\} \norms{y^k - x^{k-1}}^2 \vspace{1ex}\\
&& - {~} cL^2\eta \norms{x^k -  x^{k-1} }^2 + \eta\big( 2\eta + \frac{c_1}{\tau} + c_2\eta \big)\Exps{k}{\norms{e^k}^2}.
\end{array} 
\end{equation*}
This proves \eqref{eq:A2_key_estimate1}.

Next, by Young's inequality,  we can easily show that 
\begin{equation*}
\arraycolsep=0.2em
\begin{array}{lcl}
\frac{2\eta(1-\tau)}{\tau}\iprods{e^{k-1}, x^{\star} - y^{k-1}} & \geq & -\frac{8(1-\tau)^2\eta^2}{\tau^2}\norms{e^{k-1}}^2  -  \frac{1}{4}\norms{y^k - x^{\star}}^2 \vspace{1ex}\\
&& - {~}  \frac{1}{2}\norms{y^k - x^{k-1}}^2 -  \frac{1}{2}\norms{y^{k-1} - x^{k-1}}^2.
\end{array} 
\end{equation*}
Using this inequality into $\Qc_k$ from \eqref{eq:Qk_func}, we obtain
\begin{equation*} 
\arraycolsep=0.2em
\begin{array}{lcl}
\Qc_k & \geq &  \frac{3}{4} \norms{y^k - x^{\star}}^2 + \frac{1 - 2\gamma \eta}{2} \norms{y^k - x^{k-1}}^2  + \big( \frac{2\eta}{c_1\tau} - \frac{1}{2} \big) \norms{x^{k-1} -  y^{k-1}}^2  - \frac{8(1-\tau)^2\eta^2}{\tau^2}\norms{e^{k-1}}^2,
\end{array}
\end{equation*}
which proves \eqref{eq:A2_Qk_lower_bound}.
\Eproof

\beforesubsec
\subsection{\textbf{The proof of Lemma~\ref{le:A2_key_estimate2} --- The descent property of the Lyapunov function}}\label{apdx:le:A2_key_estimate2}
\aftersubsec
Let us choose $c_1 = \frac{4\eta}{\tau}$, $c_2 = \ncmt{8}$, $\gamma := \frac{1}{4\eta}$, and $c = 4\eta$ in \eqref{eq:A2_key_estimate1}.
Then, from \eqref{eq:A2_key_estimate1}, we have
\begin{equation*}
\arraycolsep=0.2em
\begin{array}{lcl}
\Exps{k}{ \Qc_{k+1} }  &\leq &\Qc_k  - \Big[ \frac{1}{4}  -  \frac{4\rho}{\eta}  - \ncmt{\frac{L^2\eta}{2}\big(33\eta + 16\rho\big)} \Big] \norms{y^k - x^k}^2 \vspace{1ex}\\
&& - {~} \Big[ \frac{1}{4} - \ncmt{\frac{L^2\eta}{2}\big(33\eta + 16\rho\big)} \Big] \ \norms{y^k - x^{k-1}}^2 \vspace{1ex}\\
&& - {~} 4L^2\eta^2 \norms{x^k -  x^{k-1} }^2 + \eta^2\big(\frac{4}{\tau^2} + \ncmt{10} \big)\Exps{k}{\norms{e^k}^2}.
\end{array} 
\end{equation*}
This inequality implies
\begin{equation*}
\arraycolsep=0.2em
\begin{array}{lcl}
\Tc_{[1]} & := &  \Exps{k}{ \Qc_{k+1} }  + \frac{8(1-\tau)^2\eta^2}{\tau^2} \Exps{k}{ \norms{e^k }^2 } \vspace{1ex}\\
&\leq &\Qc_k + \frac{8(1-\tau)^2\eta^2}{\tau^2}\norms{e^{k-1}}^2 - \Big[ \frac{1}{4}  -  \frac{4\rho}{\eta}  - \ncmt{\frac{L^2\eta}{2}\big(33\eta + 16\rho\big)} \Big] \norms{y^k - x^k}^2 \vspace{1ex}\\
&& - {~} \Big[ \frac{1}{4} - \ncmt{\frac{L^2\eta}{2}\big(33\eta + 16\rho\big)} \Big] \ \norms{y^k - x^{k-1}}^2 - \frac{8(1-\tau)^2\eta^2}{\tau^2}\norms{e^{k-1}}^2 \vspace{1ex}\\
&& - {~} 4L^2\eta^2 \norms{x^k -  x^{k-1} }^2 + \eta^2\big(\frac{12}{\tau^2} + \ncmt{10} \big)\Exps{k}{\norms{e^k}^2},
\end{array} 
\end{equation*}
\ncmt{where we have used $(1-\tau)^2 \leq 1$ for all $\tau \in (0,1]$.}
Next, from \eqref{eq:Vr_estimator2} of Definition~\ref{de:Vr_estimator2}, we have 
\begin{equation}\label{eq:lm2_A2_proof3}
\arraycolsep=0.2em
\begin{array}{lcl}
\Exps{k}{\norms{e^k}^2} & \leq & \Exps{k}{\Delta_k} \vspace{1ex}\\
&\leq & \frac{1-\kappa}{\kappa}\big[\Delta_{k-1} - \Exps{k}{\Delta_k}\big] + \frac{\hat{\Theta}}{\kappa}\big[\norms{x^{k-1} - x^{k-2}}^2 - \norms{x^k - x^{k-1}}^2 \big] + \frac{\delta_k}{\kappa} \vspace{1ex}\\
&& + {~} \frac{\Theta + \hat{\Theta} }{\kappa}\norms{x^k - x^{k-1}}^2.
\end{array} 
\end{equation}
Substituting this inequality  \eqref{eq:lm2_A2_proof3} into $\Tc_{[1]}$, and using $\Lc_k$ from \eqref{eq:A2_Lk_func}, we get
\begin{equation*}
\arraycolsep=0.2em
\begin{array}{lcl}
\Exps{k}{ \Lc_{k+1} }  &\leq &\Lc_k  - \Big[ \frac{1}{4}  -  \frac{4\rho}{\eta}  - \ncmt{\frac{L^2\eta}{2}\big(33\eta + 16\rho\big)} \Big] \norms{y^k - x^k}^2 \vspace{1ex}\\
&& - {~} \Big[ \frac{1}{4} - \ncmt{\frac{L^2\eta}{2}\big(33\eta + 16\rho\big)} \Big] \ \norms{y^k - x^{k-1}}^2 - \frac{8(1-\tau)^2\eta^2}{\tau^2}\norms{e^{k-1}}^2 \vspace{1ex}\\
&& - {~} \eta^2 \big[ 4L^2 - \big(\frac{12}{\tau^2} + \ncmt{10} \big)\frac{\Theta + \hat{\Theta}}{\kappa} \big] \norms{x^k -  x^{k-1} }^2 +  \big(\frac{12}{\tau^2} + \ncmt{10} \big)\frac{\eta^2\delta_k}{\kappa},
\end{array} 
\end{equation*}
which proves \eqref{eq:A2_key_estimate2}.

Finally, from \eqref{eq:A2_Lk_func} and \eqref{eq:A2_Qk_lower_bound}, we can show that
\begin{equation*}
\arraycolsep=0.2em
\begin{array}{lcl}
\Lc_k &\geq & \Qc_k +  \frac{8(1-\tau)^2\eta^2}{\tau^2}\norms{e^{k-1}}^2 \geq \frac{3}{4}\norms{y^k - x^{\star}}^2,
\end{array} 
\end{equation*}
which proves \eqref{eq:A2_Lk_lower_bound}.
\Eproof

\beforesec
\section{Appendix: The Proof of Proposition~\ref{prop:PE_cohypomonotone} in Section~\ref{sec:numerical_experiments}}\label{apdx:sec:numerical_experiments}
\aftersec
	First, since $\hat{A}$ is full-row rank, and $\hat{C} \succ 0$, we have $\mu := \lambda_{\min}(\hat{A}^{\top}\hat{C}^{-1}\hat{A}) \geq  \frac{\sigma_{\min}(\hat{A})^2}{\lambda_{\max}(\hat{C})} > 0$ and $c := \lambda_{\min}(\hat{C}) > 0$.
	Next, let $\theta, \theta' \in \R^d$, $\xi \in \partial f(\theta)$, and $\xi' \in \partial f(\theta')$ be arbitrarily.
	By \cite{fan2001variable}, $f$ is $\frac{1}{a-1}$-weakly convex.
	Hence, we get
	\begin{equation*}
		\begin{array}{lcl}
			\iprods{\xi - \xi', \theta - \theta'} \geq -\frac{1}{a-1}\norms{\theta - \theta'}^2.
		\end{array}
	\end{equation*}
	Let $x = [\theta, w]$, $x' = [\theta', w']$, $u \in \Phi x$, and $u' \in \Phi x'$.
	Then, we can compute that
	\begin{equation*}
		u - u' = \begin{bmatrix}
			[u - u']_{\theta} \\ [u - u']_{w}
		\end{bmatrix} 
		= \begin{bmatrix}
			-\hat{A}^\top (w - w') + \nu(\xi - \xi') \\
			\hat{A}(\theta - \theta') + \hat{C}(w - w')
		\end{bmatrix}.
	\end{equation*}
	In this case, by taking the inner product of $u - u'$ and $x - x'$, the last relation gives
	\begin{equation}\label{eq:prop_cohypomonotone_proof1}
		\begin{array}{lcl}
			\iprods{u - u', x - x'} &=& \nu \iprods{\xi - \xi', \theta - \theta'} + \iprods{\hat{C}(w - w'), w - w'}  
			\geq -\frac{\nu}{a-1}\norms{\theta - \theta'}^2 + \norms{\hat{C}^{1/2}(w-w')}^2.
		\end{array}
	\end{equation}
	We consider two cases.
	\begin{compactitem}
		\item[$\bullet$] \textit{Case 1.} 
		If $\iprods{u - u', x - x'} \geq 0$, then the desired co-hypomonotonicity of $\Phi$ holds for any $\rho \geq 0$.

		\item[$\bullet$] \textit{Case 2.} 
		If $\iprods{u - u', x - x'} < 0$, then we get from \eqref{eq:prop_cohypomonotone_proof1} that $\norms{\hat{C}^{1/2}(w-w')}^2 < \frac{\nu}{a-1}\norms{\theta - \theta'}^2$.
		Moreover, with $\mu := \lambda_{\min}(\hat{A}^\top \hat{C}^{-1}\hat{A})$, for any $s \in \R^d$, we have $\norms{\hat{C}^{-1/2}\hat{A}s}^2 = s^{\top} \hat{A}^\top \hat{C}^{-1/2}\hat{C}^{-1/2} \hat{A}s = s^{\top} \hat{A}^\top \hat{C}^{-1} \hat{A}s \geq \mu\norms{s}^2$.
		From the above two inequalities, and $\hat{C}^{-1/2} [u - u']_w = \hat{C}^{-1/2}\hat{A}(\theta - \theta') + \hat{C}^{1/2}(w - w')$, using the triangle inequality, and $\nu < (a-1)\mu$, we can derive that
		\begin{equation*} 
			\begin{array}{lcl}
				\norms{\hat{C}^{-1/2} [u - u']_w} &\geq& \norms{\hat{C}^{-1/2}\hat{A}(\theta - \theta')} - \norms{\hat{C}^{1/2}(w - w')} \vspace{1ex}\\
				&\geq& \big(\sqrt{\mu} - \sqrt{\frac{\nu}{a-1}}\big)\norms{\theta - \theta'} \geq 0.
			\end{array}
		\end{equation*}
		However, for $c = \lambda_{\min}(\hat{C})$, we also have
		\begin{equation*} 
			\begin{array}{lcl}
				\norms{[u - u']_w} = \norms{\hat{C}^{1/2}\big(\hat{C}^{-1/2}[u - u']_w\big)}  \geq \sqrt{c}\norms{\hat{C}^{-1/2}[u - u']_w}.
			\end{array}
		\end{equation*}
		Combining the last two relations, we obtain
		\begin{equation*} 
			\begin{array}{lcl}
				\norms{\theta - \theta'}^2 \leq \frac{1}{c\big(\sqrt{\mu} - \sqrt{\frac{\nu}{a-1}}\big)^2}\norms{[u - u']_w}^2 \leq \frac{1}{c\big(\sqrt{\mu} - \sqrt{\frac{\nu}{a-1}}\big)^2}\norms{u - u'}^2.
			\end{array}
		\end{equation*}
		Finally, dropping the last term on the right-hand side of \eqref{eq:prop_cohypomonotone_proof1} and then substituting the last relation into the resulting inequality, we arrive at
		\begin{equation*}
			\begin{array}{lcl}
				\iprods{u - u', x - x'} &\geq& -\frac{\nu}{(a-1)c\big(\sqrt{\mu} - \sqrt{\frac{\nu}{a-1}}\big)^2}\norms{u - u'}^2.
			\end{array}
		\end{equation*}
		This shows that $\Phi$ is $\rho$-co-hypomonotone with $\rho := \frac{\nu}{(a-1)c\big(\sqrt{\mu} - \sqrt{\frac{\nu}{a-1}}\big)^2} \geq 0$.
	\end{compactitem}
	From both cases, we conclude that $\Phi$ is $\rho$-co-hypomonotone. 
\Eproof

\bibliographystyle{plain}

\begin{thebibliography}{10}

\bibitem{alacaoglu2021stochastic}
A.~Alacaoglu and Y.~Malitsky.
\newblock Stochastic variance reduction for variational inequality methods.
\newblock In {\em Conference on Learning Theory}, pages 778--816. PMLR, 2022.

\bibitem{alacaoglu2021forward}
A.~Alacaoglu, Y.~Malitsky, and V.~Cevher.
\newblock Forward-reflected-backward method with variance reduction.
\newblock {\em Comput. Optim. Appl.}, 80(2):321--346, 2021.

\bibitem{arjovsky2017wasserstein}
M.~Arjovsky, S.~Chintala, and L.~Bottou.
\newblock Wasserstein generative adversarial networks.
\newblock In {\em International Conference on Machine Learning}, pages
  214--223, 2017.

\bibitem{azar2017minimax}
M.~G. Azar, I.~Osband, and R.~Munos.
\newblock Minimax regret bounds for reinforcement learning.
\newblock In {\em International Conference on Machine Learning}, pages
  263--272. PMLR, 2017.

\bibitem{Bauschke2011}
H.~H. Bauschke and P.~Combettes.
\newblock {\em {C}onvex {A}nalysis and {M}onotone {O}perators {T}heory in
  {H}ilbert {S}paces}.
\newblock Springer-Verlag, 2nd edition, 2017.

\bibitem{bauschke2020generalized}
H.~H. Bauschke, W.~M. Moursi, and X.~Wang.
\newblock Generalized monotone operators and their averaged resolvents.
\newblock {\em Math. Program.}, 189(Series B):55--74, 2021.

\bibitem{Ben-Tal2009}
A.~Ben-Tal, L.~El~Ghaoui, and A.~Nemirovski.
\newblock {\em {R}obust optimization}.
\newblock Princeton University Press, 2009.

\bibitem{Bertsimas2011}
D.B. Bertsimas, D.~Brown and C.~Caramanis.
\newblock {T}heory and {A}pplications of {R}obust {O}ptimization.
\newblock {\em SIAM Review}, 53(3):464--501, 2011.

\bibitem{beznosikov2023stochastic}
A.~Beznosikov, E.~Gorbunov, H.~Berard, and N.~Loizou.
\newblock Stochastic gradient descent-ascent: Unified theory and new efficient
  methods.
\newblock In {\em International Conference on Artificial Intelligence and
  Statistics}, pages 172--235. PMLR, 2023.

\bibitem{bohm2022solving}
A.~B{\"o}hm.
\newblock Solving nonconvex-nonconcave min-max problems exhibiting weak {M}inty
  solutions.
\newblock {\em Transactions on Machine Learning Research}, 2022.

\bibitem{bot2019forward}
R.~I. Bot, P.~Mertikopoulos, M.~Staudigl, and P.~T. Vuong.
\newblock Forward-backward-forward methods with variance reduction for
  stochastic variational inequalities.
\newblock {\em arXiv preprint arXiv:1902.03355}, 2019.

\bibitem{boct2021minibatch}
R.~I. Bo{\c{t}}, P.~Mertikopoulos, M.~Staudigl, and P.~T. Vuong.
\newblock Minibatch forward-backward-forward methods for solving stochastic
  variational inequalities.
\newblock {\em Stochastic Systems}, 11(2):112--139, 2021.

\bibitem{cai2023variance}
X.~Cai, A.~Alacaoglu, and J.~Diakonikolas.
\newblock Variance reduced {H}alpern iteration for finite-sum monotone
  inclusions.
\newblock In {\em The 12th International Conference on Learning Representations
  (ICLR)}, pages 1--33, 2024.

\bibitem{cai2022stochastic}
X.~Cai, C.~Song, C.~Guzm\'{a}n, and J.~Diakonikolas.
\newblock A stochastic {H}alpern iteration with variance reduction for
  stochastic monotone inclusion problems.
\newblock In {\em Proceedings of the 12th International Conference on Learning
  Representations (ICLR 2022)}, 2022.

\bibitem{cai2022tight}
Y.~Cai, A.~Oikonomou, and W.~Zheng.
\newblock Tight last-iterate convergence of the extragradient and the
  optimistic gradient descent-ascent algorithm for constrained monotone
  variational inequalities.
\newblock {\em arXiv preprint arXiv:2204.09228}, 2022.

\bibitem{carmon2019variance}
Y.~Carmon, Y.~Jin, A.~Sidford, and K.~Tian.
\newblock Variance reduction for matrix games.
\newblock {\em Advances in Neural Information Processing Systems}, 32, 2019.

\bibitem{chavdarova2019reducing}
T.~Chavdarova, G.~Gidel, F.~Fleuret, and S.~Lacoste-Julien.
\newblock Reducing noise in gan training with variance reduced extragradient.
\newblock {\em Advances in Neural Information Processing Systems}, 32:393--403,
  2019.

\bibitem{combettes2004proximal}
P.~L. Combettes and T.~Pennanen.
\newblock Proximal methods for cohypomonotone operators.
\newblock {\em SIAM J. Control Optim.}, 43(2):731--742, 2004.

\bibitem{combettes2015stochastic}
P.~L. Combettes and J.-C. Pesquet.
\newblock Stochastic quasi-{F}ej{\'e}r block-coordinate fixed point iterations
  with random sweeping.
\newblock {\em SIAM J. Optim.}, 25(2):1221--1248, 2015.

\bibitem{cui2021analysis}
S.~Cui and U.V. Shanbhag.
\newblock On the analysis of variance-reduced and randomized projection
  variants of single projection schemes for monotone stochastic variational
  inequality problems.
\newblock {\em Set-Valued and Variational Analysis}, 29(2):453--499, 2021.

\bibitem{Cutkosky2019}
A.~Cutkosky and F.~Orabona.
\newblock Momentum-based variance reduction in non-convex {SGD}.
\newblock In {\em Advances in Neural Information Processing Systems}, pages
  15210--15219, 2019.

\bibitem{dafermos1980traffic}
Stella Dafermos.
\newblock Traffic equilibrium and variational inequalities.
\newblock {\em Transportation Science}, 14(1):42--54, 1980.

\bibitem{daskalakis2018training}
C.~Daskalakis, A.~Ilyas, V.~Syrgkanis, and H.~Zeng.
\newblock Training {GANs} with {O}ptimism.
\newblock In {\em International Conference on Learning Representations (ICLR
  2018)}, pages 1--9, 2018.

\bibitem{daskalakis2018limit}
C.~Daskalakis and I.~Panageas.
\newblock The limit points of (optimistic) gradient descent in min-max
  optimization.
\newblock {\em Advances in neural information processing systems}, 31, 2018.

\bibitem{davis2022variance}
D.~Davis.
\newblock Variance reduction for root-finding problems.
\newblock {\em Math. Program.}, pages 1--36, 2022.

\bibitem{Defazio2014}
A.~Defazio, F.~Bach, and S.~Lacoste-Julien.
\newblock {SAGA}: {A} fast incremental gradient method with support for
  non-strongly convex composite objectives.
\newblock In {\em Advances in Neural Information Processing Systems (NIPS)},
  pages 1646--1654, 2014.

\bibitem{diakonikolas2021efficient}
J.~Diakonikolas, C.~Daskalakis, and M.~Jordan.
\newblock Efficient methods for structured nonconvex-nonconcave min-max
  optimization.
\newblock In {\em International Conference on Artificial Intelligence and
  Statistics}, pages 2746--2754. PMLR, 2021.

\bibitem{du2017stochastic}
S.~S. Du, J.~Chen, L.~Li, L.~Xiao, and D.~Zhou.
\newblock Stochastic variance reduction methods for policy evaluation.
\newblock In {\em International Conference on Machine Learning}, pages
  1049--1058. PMLR, 2017.

\bibitem{du2021fairness}
W.~Du, D.~Xu, X.~Wu, and H.~Tong.
\newblock Fairness-aware agnostic federated learning.
\newblock In {\em Proceedings of the 2021 SIAM International Conference on Data
  Mining (SDM)}, pages 181--189. SIAM, 2021.

\bibitem{Facchinei2003}
F.~Facchinei and J.-S. Pang.
\newblock {\em Finite-dimensional variational inequalities and complementarity
  problems}, volume 1-2.
\newblock Springer-Verlag, 2003.

\bibitem{fan2001variable}
J.~Fan and R.~Li.
\newblock Variable selection via nonconcave penalized likelihood and its oracle
  properties.
\newblock {\em Journal of the American statistical Association},
  96(456):1348--1360, 2001.

\bibitem{fang2018spider}
C.~Fang, C.~J. Li, Z.~Lin, and T.~Zhang.
\newblock {SPIDER}: {N}ear-optimal non-convex optimization via stochastic path
  integrated differential estimator.
\newblock In {\em Advances in Neural Information Processing Systems}, pages
  689--699, 2018.

\bibitem{goodfellow2014generative}
I.~Goodfellow, J.~Pouget-Abadie, M.~Mirza, B.~Xu, D.~Warde-Farley, S.~Ozair,
  A.~Courville, and Y.~Bengio.
\newblock Generative adversarial nets.
\newblock In {\em Advances in neural information processing systems}, pages
  2672--2680, 2014.

\bibitem{gorbunov2022stochastic}
E.~Gorbunov, H.~Berard, G.~Gidel, and N.~Loizou.
\newblock Stochastic extragradient: General analysis and improved rates.
\newblock In {\em International Conference on Artificial Intelligence and
  Statistics}, pages 7865--7901. PMLR, 2022.

\bibitem{gorbunov2022extragradient}
E.~Gorbunov, N.~Loizou, and G.~Gidel.
\newblock Extragradient method: $\mathcal{O} (1/k)$ last-iterate convergence
  for monotone variational inequalities and connections with cocoercivity.
\newblock In {\em International Conference on Artificial Intelligence and
  Statistics}, pages 366--402. PMLR, 2022.

\bibitem{gorbunov2022last}
E.~Gorbunov, A.~Taylor, and G.~Gidel.
\newblock Last-iterate convergence of optimistic gradient method for monotone
  variational inequalities.
\newblock {\em Advances in neural information processing systems},
  35:21858--21870, 2022.

\bibitem{huang2022accelerated}
K.~Huang, N.~Wang, and S.~Zhang.
\newblock {A}n {A}ccelerated {V}ariance {R}educed {E}xtra-{P}oint {A}pproach to
  {F}inite-{S}um {VI} and {O}ptimization.
\newblock {\em SIAM J. Optim.}, 36(2):1154--1181, 2026.

\bibitem{iusem2017extragradient}
A.~N. Iusem, A.~Jofr{\'e}, R.~I. Oliveira, and P.~Thompson.
\newblock Extragradient method with variance reduction for stochastic
  variational inequalities.
\newblock {\em SIAM J. Optim.}, 27(2):686--724, 2017.

\bibitem{johnson2013accelerating}
R.~Johnson and T.~Zhang.
\newblock Accelerating stochastic gradient descent using predictive variance
  reduction.
\newblock In {\em Advances in Neural Information Processing Systems (NIPS)},
  pages 315--323, 2013.

\bibitem{SVRG}
Rie Johnson and Tong Zhang.
\newblock Accelerating stochastic gradient descent using predictive variance
  reduction.
\newblock In {\em NIPS}, pages 315--323, 2013.

\bibitem{juditsky2011solving}
A.~Juditsky, A.~Nemirovski, and C.~Tauvel.
\newblock Solving variational inequalities with stochastic mirror-prox
  algorithm.
\newblock {\em Stochastic Systems}, 1(1):17--58, 2011.

\bibitem{kannan2019optimal}
A.~Kannan and U.~V. Shanbhag.
\newblock Optimal stochastic extragradient schemes for pseudomonotone
  stochastic variational inequality problems and their variants.
\newblock {\em Comput. Optim. Appl.}, 74(3):779--820, 2019.

\bibitem{Konnov2001}
I.V. Konnov.
\newblock {\em Combined relaxation methods for variational inequalities.}
\newblock Springer-Verlag, 2001.

\bibitem{korpelevich1976extragradient}
G.M. Korpelevich.
\newblock The extragradient method for finding saddle points and other
  problems.
\newblock {\em Matecon}, 12:747--756, 1976.

\bibitem{kovalev2019don}
D.~Kovalev, S.~Horvath, and P.~Richtarik.
\newblock {D}on't jump through hoops and remove those loops: {SVRG} and
  {K}atyusha are better without the outer loop.
\newblock In {\em Algorithmic Learning Theory}, pages 451--467. PMLR, 2020.

\bibitem{lan2020first}
G.~Lan.
\newblock {\em First-order and Stochastic Optimization Methods for Machine
  Learning}.
\newblock Springer, 2020.

\bibitem{levy2020large}
D.~Levy, Y.~Carmon, J.~C. Duchi, and A.~Sidford.
\newblock Large-scale methods for distributionally robust optimization.
\newblock {\em Advances in Neural Information Processing Systems},
  33:8847--8860, 2020.

\bibitem{li2022convergence}
C.~J. Li, Y.~Yu, N.~Loizou, G.~Gidel, Y.~Ma, N.~Le Roux, and M.~Jordan.
\newblock On the convergence of stochastic extragradient for bilinear games
  using restarted iteration averaging.
\newblock In {\em International Conference on Artificial Intelligence and
  Statistics}, pages 9793--9826. PMLR, 2022.

\bibitem{li2020page}
Z.~Li, H.~Bao, X.~Zhang, and P.~Richt{\'a}rik.
\newblock {PAGE}: {A} simple and optimal probabilistic gradient estimator for
  nonconvex optimization.
\newblock {\em International conference on machine learning}, pages 6286--6295,
  2021.

\bibitem{luo2022last}
Y.~Luo and Q.~Tran-Dinh.
\newblock Extragradient-type methods for co-monotone root-finding problems.
\newblock {\em (UNC-STOR Technical Report)}, 2022.

\bibitem{madry2018towards}
A.~Madry, A.~Makelov, L.~Schmidt, D.~Tsipras, and A.~Vladu.
\newblock Towards deep learning models resistant to adversarial attacks.
\newblock In {\em International Conference on Learning Representations}, 2018.

\bibitem{malitsky2015projected}
Y.~Malitsky.
\newblock Projected reflected gradient methods for monotone variational
  inequalities.
\newblock {\em SIAM J. Optim.}, 25(1):502--520, 2015.

\bibitem{malitsky2019golden}
Y.~Malitsky.
\newblock Golden ratio algorithms for variational inequalities.
\newblock {\em Math. Program.}, 184(1--2):383--410, 2020.

\bibitem{malitsky2020forward}
Y.~Malitsky and M.~K. Tam.
\newblock A forward-backward splitting method for monotone inclusions without
  cocoercivity.
\newblock {\em SIAM J. Optim.}, 30(2):1451--1472, 2020.

\bibitem{mertikopoulos2019optimistic}
P.~Mertikopoulos, B.~Lecouat, H.~Zenati, C.-S. Foo, V.~Chandrasekhar, and
  G.~Piliouras.
\newblock Optimistic mirror descent in saddle-point problems: {G}oing the extra
  (gradient) mile.
\newblock In {\em International Conference on Learning Representations (ICLR)},
  pages 1--23, 2019.

\bibitem{mishchenko2020random}
K.~Mishchenko, A.~Khaled, and P.~Richt{\'a}rik.
\newblock Random reshuffling: Simple analysis with vast improvements.
\newblock {\em Advances in Neural Information Processing Systems},
  33:17309--17320, 2020.

\bibitem{mishchenko2020revisiting}
K.~Mishchenko, D.~Kovalev, E.~Shulgin, P.~Richt{\'a}rik, and Y.~Malitsky.
\newblock Revisiting stochastic extragradient.
\newblock In {\em International Conference on Artificial Intelligence and
  Statistics}, pages 4573--4582. PMLR, 2020.

\bibitem{Nemirovski2009}
A.~Nemirovski, A.~Juditsky, G.~Lan, and A.~Shapiro.
\newblock Robust stochastic approximation approach to stochastic programming.
\newblock {\em SIAM J. on Optimization}, 19(4):1574--1609, 2009.

\bibitem{nguyen2017sarah}
L.~M. Nguyen, J.~Liu, K.~Scheinberg, and M.~Tak{\'a}{\v{c}}.
\newblock {SARAH}: A novel method for machine learning problems using
  stochastic recursive gradient.
\newblock In {\em Proceedings of the 34th International Conference on Machine
  Learning}, pages 2613--2621, 2017.

\bibitem{noor2003extragradient}
M.~A. Noor.
\newblock Extragradient methods for pseudomonotone variational inequalities.
\newblock {\em J. Optim. Theory Appl.}, 117(3):475--488, 2003.

\bibitem{noor1999wiener}
M.~A. Noor and E.A. Al-Said.
\newblock {W}iener--{H}opf equations technique for quasimonotone variational
  inequalities.
\newblock {\em J. Optim. Theory Appl.}, 103:705--714, 1999.

\bibitem{palaniappan2016stochastic}
B.~Palaniappan and F.~Bach.
\newblock Stochastic variance reduction methods for saddle-point problems.
\newblock In {\em Advances in Neural Information Processing Systems}, pages
  1416--1424, 2016.

\bibitem{pethick2023solving}
T.~Pethick, O.~Fercoq, P.~Latafat, P.~Patrinos, and V.~Cevher.
\newblock Solving stochastic weak {M}inty variational inequalities without
  increasing batch size.
\newblock In {\em Proceedings of International Conference on Learning
  Representations (ICLR)}, pages 1--34, 2023.

\bibitem{phelps2009convex}
R.~R. Phelps.
\newblock {\em Convex functions, monotone operators and differentiability},
  volume 1364.
\newblock Springer, 2009.

\bibitem{popov1980modification}
L.~D. Popov.
\newblock A modification of the {A}rrow-{H}urwicz method for search of saddle
  points.
\newblock {\em Math. notes of the Academy of Sciences of the USSR},
  28(5):845--848, 1980.

\bibitem{Reddi2016a}
Sashank~J. Reddi, Ahmed Hefny, Suvrit Sra, Barnab{\'{a}}s P{\'{o}}czos, and
  Alexander~J. Smola.
\newblock Stochastic variance reduction for nonconvex optimization.
\newblock In {\em ICML}, pages 314--323, 2016.

\bibitem{robbins1971convergence}
H.~Robbins and D.~Siegmund.
\newblock A convergence theorem for non negative almost supermartingales and
  some applications.
\newblock In {\em Optimizing methods in statistics}, pages 233--257. Elsevier,
  1971.

\bibitem{robbins1951stochastic}
Herbert Robbins and Sutton Monro.
\newblock A stochastic approximation method.
\newblock {\em The Annals of Mathematical Statistics}, 22(3):400--407, 1951.

\bibitem{Rockafellar2004}
R.~Rockafellar and R.~Wets.
\newblock {\em {V}ariational {A}nalysis}, volume 317.
\newblock Springer, 2004.

\bibitem{Rockafellar1976b}
R.T. Rockafellar.
\newblock Monotone operators and the proximal point algorithm.
\newblock {\em SIAM J. Control Optim.}, 14:877--898, 1976.

\bibitem{Rockafellar1997}
R.T. Rockafellar and R.~J-B. Wets.
\newblock {\em {V}ariational {A}nalysis}.
\newblock Springer-Verlag, 1997.

\bibitem{ryu2016primer}
E.~K. Ryu and S.~Boyd.
\newblock Primer on monotone operator methods.
\newblock {\em Appl. Comput. Math}, 15(1):3--43, 2016.

\bibitem{sutton2018reinforcement}
R.~S. Sutton and A.~G. Barto.
\newblock {\em {R}einforcement {L}earning: {A}n {I}ntroduction}.
\newblock Adaptive Computation and Machine Learning series. MIT Press,
  Cambridge, MA, second edition, 2018.

\bibitem{TranDinh2025a}
Q.~Tran-Dinh.
\newblock {VFOSA}: {V}ariance-{R}educed {F}ast {O}perator {S}plitting {M}ethods
  for {S}tochastic {G}eneralized {E}quations.
\newblock {\em J. Mach. Learn. Res. $($JMLR$)$}, 16:1--68, 2025.

\bibitem{tran2022accelerated}
Q.~Tran-Dinh and Y.~Luo.
\newblock {R}andomized {B}lock-{C}oordinate {O}ptimistic {G}radient
  {A}lgorithms for {R}oot-{F}inding {P}roblems.
\newblock {\em Math. Oper. Res.}, 51(1):746--782, 2025.

\bibitem{tran2025vfog}
Q.~Tran-Dinh and N.~Nguyen-Trung.
\newblock {VFOG}: {V}ariance-reduced fast optimistic gradient methods for a
  class of nonmonotone generalized equations.
\newblock {\em arXiv preprint arXiv:2508.16791}, 2025.

\bibitem{tran2024revisiting}
Q.~Tran-Dinh and N.~Nguyen-Trung.
\newblock Revisiting {E}xtragradient-type methods--{P}art 1: {G}eneralizations
  and sublinear convergence rates.
\newblock {\em Comput. Optim. Appl.}, 94(1):141--210, 2026.

\bibitem{Tran-Dinh2019a}
Q.~Tran-Dinh, N.~H. Pham, D.~T. Phan, and L.~M. Nguyen.
\newblock A hybrid stochastic optimization framework for stochastic composite
  nonconvex optimization.
\newblock {\em Math. Program.}, 191:1005--1071, 2022.

\bibitem{TranDinh2024}
Quoc Tran-Dinh.
\newblock {V}ariance-{R}educed {F}orward-{R}eflected-{B}ackward {S}plitting
  {M}ethods for {N}onmonotone {G}eneralized {E}quations.
\newblock {\em Forty-Second International Conference on Machine Learning
  (ICML)}, 2025.

\bibitem{tseng2000modified}
P.~Tseng.
\newblock A modified forward-backward splitting method for maximal monotone
  mappings.
\newblock {\em SIAM J. Control and Optim.}, 38(2):431--446, 2000.

\bibitem{vuong2018weak}
V.~Phan Tu.
\newblock On the weak convergence of the extragradient method for solving
  pseudo-monotone variational inequalities.
\newblock {\em J. Optim. Theory Appl.}, 176(2):399--409, 2018.

\bibitem{yang2020global}
J.~Yang, N.~Kiyavash, and N.~He.
\newblock Global convergence and variance-reduced optimization for a class of
  nonconvex-nonconcave minimax problems.
\newblock {\em arXiv preprint arXiv:2002.09621}, 2020.

\bibitem{ying2016stochastic}
Y.~Ying, L.~Wen, and S.~Lyu.
\newblock {S}tochastic {O}nline {AUC} {M}aximization.
\newblock In D.~Lee, M.~Sugiyama, U.~Luxburg, I.~Guyon, and R.~Garnett,
  editors, {\em Advances in Neural Information Processing Systems}, volume~29.
  Curran Associates, Inc., 2016.

\bibitem{yousefian2018stochastic}
F.~Yousefian, A.~Nedi{\'c}, and U.~V. Shanbhag.
\newblock On stochastic mirror-prox algorithms for stochastic cartesian
  variational inequalities: {R}andomized block coordinate and optimal averaging
  schemes.
\newblock {\em Set-Valued and Variational Analysis}, 26:789--819, 2018.

\bibitem{yu2022fast}
Y.~Yu, T.~Lin, E.~V. Mazumdar, and M.~Jordan.
\newblock Fast distributionally robust learning with variance-reduced min-max
  optimization.
\newblock In {\em International Conference on Artificial Intelligence and
  Statistics}, pages 1219--1250. PMLR, 2022.

\bibitem{zhang2021multi}
K.~Zhang, Z.~Yang, and T.~Ba{\c{s}}ar.
\newblock Multi-agent reinforcement learning: {A} selective overview of
  theories and algorithms.
\newblock {\em Handbook of reinforcement learning and control}, pages 321--384,
  2021.

\end{thebibliography}

\end{document}